\documentclass{amsart} 
\usepackage{amssymb}
\usepackage{graphicx}

\usepackage{subcaption} 

\newtheorem{theorem}{Theorem}[section]
\newtheorem{corollary}[theorem]{Corollary}
\newtheorem{lemma}[theorem]{Lemma}
\newtheorem{proposition}[theorem]{Proposition}

\newtheorem{assumption}[theorem]{Assumption}

\theoremstyle{definition}
\newtheorem{definition}[theorem]{Definition}

\theoremstyle{remark}
\newtheorem{remark}[theorem]{Remark}

\numberwithin{equation}{section}

\usepackage{enumerate}
\usepackage[shortlabels]{enumitem}

\usepackage[hidelinks]{hyperref}

\hypersetup{colorlinks,linkcolor={blue},citecolor={red}}

\usepackage{color,soul}
\soulregister\cite7

\usepackage{cleveref}

\usepackage{makecell}  

\usepackage{multirow}

\usepackage{longtable,etoolbox}

\makeatletter
\patchcmd{\LT@makecaption}
  {\sbox}
  {\normalsize\sbox}
  {}{}
\makeatother

\usepackage{threeparttablex}

\let\oldtocsubsection=\tocsubsection
\renewcommand{\tocsubsection}[2]{\hspace{1.8em}\oldtocsubsection{#1}{#2}}

\usepackage{csquotes}
\renewcommand{\mkbegdispquote}[2]{\openautoquote}

\allowdisplaybreaks

\usepackage{mathtools} 
\newcommand{\defeq}{\coloneqq}

\newcommand{\NN}{\mathbb{N}} 

\newcommand{\RR}{\mathbb{R}}
\newcommand{\CC}{\mathbb{C}}

\newcommand{\abs}[1]{\left\lvert #1 \right\rvert}
\newcommand{\norm}[1]{\left\lVert #1 \right\rVert} 
\newcommand{\maxnorm}[1]{{\left\lVert #1 \right\rVert}_{\max}} 
\newcommand{\Fnorm}[1]{{\left\lVert #1 \right\rVert}_{\mathrm{F}}} 
\newcommand{\onenorm}[1]{{\left\lVert #1 \right\rVert}_1} 
\newcommand{\twonorm}[1]{{\left\lVert #1 \right\rVert}_2} 
\newcommand{\infnorm}[1]{{\left\lVert #1 \right\rVert}_{\infty}} 

\newcommand{\card}[1]{\left\lvert #1 \right\rvert}

\newcommand{\transp}[1]{#1^{\mathrm{T}}} 
\newcommand{\inv}[1]{#1^{-1}} 
\newcommand{\pseudoinv}[1]{#1^{\dagger}} 
\newcommand{\rank}[1]{\operatorname{rank}(#1)} 
\newcommand{\tr}[1]{\operatorname{tr}(#1)} 
\newcommand{\determ}[1]{\operatorname{det}(#1)} 

\newcommand{\minimize}{\operatorname*{minimize}}

\newcommand{\subjectto}{\operatorname{subject~to}}

\begin{document}

\title[Learning Asymptotics with Convergence-Rate Guarantees using LLSQ]{Learning Asymptotics with Convergence-Rate Guarantees using Linear Least Squares}


\author{Christos N. Efrem}
\address{National Technical University of Athens, Greece}
\email{\{the first two letters of the author's first name followed by the first three letters of the author's last name (all in lowercase)\}@mail.ntua.gr}

\subjclass[2020]{Primary 05-08, 05A16; Secondary 68W40, 76M45, 82B26, 82B27.}



\keywords{Asymptotics learning theory, optimization, asymptotic analysis, linear least squares, Tikhonov regularization, convergence analysis, rate of convergence, analytic combinatorics, asymptotic enumeration, ratio method}

\begin{abstract}
We introduce a new research area that is called \emph{Asymptotics Learning Theory (ALT)} and combines optimization with asymptotic analysis. In particular, ALT provides a unified approach for computing unknown constants/parameters in proven asymptotic expansions using optimization theory. In this paper, we focus on a general asymptotic form which includes a broad class of asymptotics. Furthermore, we study two powerful numerical methods, namely, \emph{sliding Linear Least Squares (sLLSQ)} and \emph{sliding Tikhonov Linear Least Squares (sT-LLSQ)}. For these techniques we rigorously prove asymptotic estimates that lead to sufficient conditions for convergence (to the correct values of unknown parameters) and convergence-rate guarantees. Despite their strengths, both methods have also limitations, e.g., slow convergence---or even, counterintuitively, divergence---in some cases. Moreover, we present fundamental applications in analytic combinatorics, a beautiful field of mathematics that deals with asymptotic enumeration of discrete structures using complex analysis. The proposed techniques complement existing approaches, such as the ratio method and its variants. Numerical examples also verify the theoretical results. Finally, we discuss interesting research directions in ALT.  
\end{abstract}

\maketitle

\setcounter{tocdepth}{2} 
\tableofcontents \label{toc}

\section{Introduction}

In this paper, we deal with a \emph{numerical problem of asymptotic nature}: given an asymptotic expansion with a finite number of unknown constants, design numerical methods---based on available data (i.e., the function's values) and optimization techniques---in order to compute these parameters; preferably with convergence-rate guarantees. This problem can be concisely stated as \emph{``learning asymptotics from data''} and is driven by the general questions: 

\vspace{1.5mm}
\noindent\fbox{%
	\parbox{\textwidth - 2\fboxsep}{%
		\centering\emph{When infinity reveals its form, can we compute its parameters? If so, how fast?}
	}%
}

\subsection{Motivation} 
\label{subsec:Motivation}

This paper is primarily inspired by \emph{Analytic Combinatorics}, which studies the asymptotics of enumeration sequences by analyzing their generating functions in the complex plane. Analytic combinatorics has emerged from the \emph{Analysis of Algorithms}---a field of computer science initiated by Donald Knuth---and further developed by many researchers, especially Philippe Flajolet. Both disciplines were greatly influenced by the classic book of de~Bruijn~\cite{deBruijn_1958} on asymptotic methods. Indicative works include the comprehensive surveys by Bender~\cite{Bender_1974} and Odlyzko~\cite{Odlyzko_1995}, the seminal paper of Flajolet~and~Odlyzko~\cite{Flajolet-Odlyzko_1990}, the important contributions by Drmota~\cite{Drmota_1997} and Hwang~\cite{Hwang_1998}, the excellent books of Wilf~\cite{Wilf_1990}, Graham,~Knuth,~and~Patashnik~\cite{Graham-Knuth-Patashnik_1989}, Greene~and~Knuth~\cite{Greene-Knuth_1990}, Knuth~\cite{Knuth_1997,Knuth_1998a,Knuth_1998b}, Szpankowski~\cite{Szpankowski_2001}, Drmota~\cite{Drmota_2009}, Flajolet~and~Sedgewick~\cite{Flajolet-Sedgewick_2009}, and Pemantle,~Wilson,~and~Melczer~\cite{Pemantle-Wilson-Melczer_2024}, as well as the numerous references therein.     

In analytic combinatorics, we study the enumeration sequence $\{f(n)\}_{n=0}^{\infty}$ of a combinatorial structure in the limit as $n$ tends to infinity, where $f$ takes nonnegative integer values. Usually, it is more convenient to study this problem using a suitable generating function, e.g., $F(z) \defeq \sum_{n=0}^{\infty} f(n) z^n$ (ordinary generating function) or $F(z) \defeq \sum_{n=0}^{\infty} \frac{f(n)}{n!} z^n$ (exponential generating function, for labeled structures), $z \in \CC$. The most common and powerful tools for this purpose belong to complex analysis, including singularity analysis and the saddle-point method. In many cases, we are able to prove an asymptotic expansion of the form    
\begin{equation} \label{eq:Common_asymptotic_expansion}
	f(n) \;\; \mathrm{or} \;\; \frac{f(n)}{n!} = \widehat{f}(n;\boldsymbol{\alpha}) \left( 1 + O(g(n)) \right)  , \quad \mathrm{as}\ n \to \infty ,
\end{equation}
where $\widehat{f}(\cdot;\boldsymbol{\alpha}) \colon \NN \to \RR_{\geq 0}$, $\boldsymbol{\alpha}$ is a finite-dimensional vector of real constants (or parameters), and the function $g \colon \NN \to \RR$ tends to zero as $n \to \infty$. For example,   
\begin{equation} \label{eq:Common_asymptotic_form}
	\widehat{f}(n;\boldsymbol{\alpha}) \defeq \alpha_1  n^{\alpha_2}  \alpha_3^n , 
\end{equation}
where $\boldsymbol{\alpha} = \transp{[\alpha_1,\alpha_2,\alpha_3]}$, $n^{\alpha_2}$ is a sub-exponential term, and $\alpha_3^n$ is an exponential term. Note that $\alpha_1$ and $\alpha_3$ are positive real numbers, whereas $\alpha_2$ is a real number. 

In some cases (e.g., enumeration problems amenable to the saddle-point method), the constant vector~$\boldsymbol{\alpha}$ is \emph{explicitly} known. Nevertheless, in other cases (e.g., when generating functions satisfy complicated functional equations), complex-analytic techniques can prove \emph{only the existence of $\boldsymbol{\alpha}$ in a particular set}. In such circumstances, there is a \emph{lack of numerical methods} to compute the unknown constants (i.e., the vector $\boldsymbol{\alpha}$) \emph{with convergence-rate guarantees}.  

Furthermore, several researchers have discovered \emph{universal phenomena} in analytic combinatorics for various kinds of discrete structures, such as frequent exponents in asymptotics and Gaussian limit laws. Many of these combinatorial problems involve \emph{systems of functional equations} that are satisfied by generating functions (of several complex variables). The main tools required to prove such results are singularity analysis (the transfer lemma of Flajolet~and~Odlyzko~\cite{Flajolet-Odlyzko_1990}) and \emph{non-constructive} techniques, namely, the \emph{implicit function theorem} and the \emph{Weierstrass preparation theorem}. A pioneering work in this direction---concerning systems of finitely many equations and whose dependency graph is strongly connected---was carried out by Drmota~\cite{Drmota_1997} and, independently, by Lalley~\cite{Lalley_1993,Lalley_2001} and Woods~\cite{Woods_1997}. For the specific case of polynomial systems, this result is known as the \emph{Drmota-Lalley-Woods (DLW) theorem}~\cite[p.~489, Theorem~VII.6]{Flajolet-Sedgewick_2009}. The aforementioned works were later extended to systems of infinitely many equations by Lalley~\cite{Lalley_2004}, Morgenbesser~\cite{Morgenbesser_2010}, and Drmota,~Gittenberger,~and~Morgenbesser~\cite{Drmota-Gittenberger-Morgenbesser_2012}. Another remarkable extension of the DLW theorem, to polynomial systems whose dependency graph is \emph{not} strongly connected, was given by Banderier~and~Drmota~\cite{Banderier-Drmota_2015}. 

Recently, universal phenomena continue to be observed in other papers. For example, Drmota,~Noy,~and~Yu~\cite{Drmota-Noy-Yu_2022} as well as Drmota~and~Hainzl~\cite{Drmota-Hainzl_2023} studied functional equations with one \emph{catalytic} variable, which are frequently used in the enumeration of lattice paths and planar maps. In particular, under appropriate conditions, the number of such structures is asymptotically given by~\eqref{eq:Common_asymptotic_expansion}---without $n!$---and~\eqref{eq:Common_asymptotic_form}, where $n$ also belongs to a suitable residue class, $g(n) = n^{-1}$, and $\alpha_2 = -3/2$ (resp.~$\alpha_2 = -5/2$) for linear (resp.~non-linear) catalytic equations. Moreover, Castellv{\'i},~Drmota,~Noy,~and~Requil{\'e}~\cite{Castellvi-Drmota-Noy-Requile_2024} showed that, given any integers $t \geq 1$ and $0 \leq k \leq t$, the number of $k$-connected chordal graphs with $n$ labeled vertices and tree-width at most $t$, denoted by $f_{t,k}(n)$, admits the asymptotic expansion   
\begin{equation} \label{eq:Chordal_graphs_asymptotic_expansion}
	\frac{f_{t,k}(n)}{n!} = \alpha_{1,t,k} \, n^{-5/2} \, \alpha_{2,t,k}^n \left( 1 + O\left( \frac{1}{n} \right) \right)  , \quad \mathrm{as}\ n \to \infty ,
\end{equation}
for some real constants $\alpha_{1,t,k} > 0$ and $\alpha_{2,t,k} > 1$, which depend on $t$ and $k$. Observe that the exponent $-5/2$ is invariant, i.e., it remains the same for all $t$ and $k$.

However, the unknown constants appearing in such (universal) asymptotic expansions have \emph{not} been rigorously computed until now. In fact, mostly rough estimates were obtained.\footnote{A well-known exception is the special case of (computable) \emph{algebraic} constants, i.e., numbers that are solutions to polynomial equations with rational (or, equivalently, integer) coefficients. For example, see~\cite[p.~483, Theorem~VII.5]{Flajolet-Sedgewick_2009} about context-free structures.} The reason is that these constants are given as solutions to (possibly non-polynomial) systems of analytic equations as in~\cite[Theorem~1]{Drmota_1997}, or they are almost completely unknown (except for their existence) as in \cite[Theorem~1.1]{Castellvi-Drmota-Noy-Requile_2024}/\eqref{eq:Chordal_graphs_asymptotic_expansion}. In addition, existing works (e.g.,~\cite[Section~4.2]{Bodirsky-Fusy-Kang-Vigerske_2007}~and~\cite[Section~9]{Drmota-Fusy-Kang-Kraus-Rue_2011}) mainly use an iterative procedure, based on \emph{truncated Taylor series}, to approximate the exponential-growth constant (i.e., the parameter $\alpha_3$ in~\eqref{eq:Common_asymptotic_form}). The convergence of this approach has been observed numerically, but \emph{without theoretical guarantees} (i.e., convergence to the desired value as the order of truncation tends to infinity). Therefore, a unified methodology for the numerical computation of unknown parameters is an \emph{important open problem} that deserves further investigation. 

Last but not least, analytic combinatorics and optimization theory have been so far developed \emph{separately}, as two distinct fields of mathematics. As a result, their intersection remains a \emph{largely (if not completely) unexplored area}.

\subsection{Related Work}
\label{subsec:Related_work}

The most widely-used technique to compute unknown constants in asymptotic expansions is the \emph{Ratio Method (RM)}. Specifically, RM was invented by Domb~and~Sykes~\cite{Domb-Sykes_1956,Domb-Sykes_1957,Domb-Sykes_1961}, and therefore it is also referred to as the \emph{Domb-Sykes method}. This technique was originally used to numerically compute the radius of convergence of a power series $\sum_{n=0}^{\infty} c_n z^n$, $z \in \CC$, with nonnegative real coefficients $\{c_n\}_{n=0}^{\infty}$.\footnote{In general, the radius of convergence $R \in [0,\infty]$ of a power series $\sum_{n=0}^{\infty} c_n z^n$, with complex coefficients $\{c_n\}_{n=0}^{\infty}$, is given by the \emph{Cauchy-Hadamard formula}: $R^{-1} = \limsup\limits_{n \to \infty} \abs{c_n}^{1/n}$.} In particular, if the coefficients satisfy\footnote{In such a case, the radius of convergence is equal to the inverse of $\alpha_3$, i.e., $R = \alpha_3^{-1}$.} 
\begin{equation*}
	c_n \sim {\alpha_1} n^{\alpha_2} {\alpha_3^n} , \quad \mathrm{as}\ n \to \infty ,
\end{equation*}
for some constants $\alpha_1,\alpha_3 \in \RR_{>0}$ and $\alpha_2 \in \RR$ (which are called the \emph{critical parameters}), then the ratio 
\begin{equation*}
	r_n \defeq \frac{c_{n+1}}{c_n} \sim \left( 1 + \frac{1}{n} \right)^{\alpha_2} {\alpha_3} \sim \left( 1 + \frac{\alpha_2}{n} \right) {\alpha_3} , \quad \mathrm{as}\ n \to \infty .
\end{equation*}
In the second asymptotic equivalence, we have used the fact that $(1+x)^{\theta} = 1 + \theta x + O(x^2)$, as $x \to 0$, for every $\theta \in \RR$, which implies that $(1+x)^{\theta} \sim 1 + \theta x$, as $x \to 0$. As a result, if we plot the ratio $r_n$ versus $\frac{1}{n}$ (which is known as the \emph{Domb-Sykes plot} according to Hinch~\cite[\S{8.1}]{Hinch_1991}), then the resulting curve tends to a straight line as $n \to \infty$ or, equivalently, as $\frac{1}{n}\to 0$. The straight-line asymptote, $y = \alpha_3 + ({\alpha_2}{\alpha_3})x$ with $x = \frac{1}{n}$, intersects the vertical axis at $\alpha_3$ and has slope ${\alpha_2}{\alpha_3}$. The latter procedure can be used to (roughly) estimate the parameters $\alpha_2$ and $\alpha_3$. A more detailed analysis will be given in Section~\ref{sec:Ratio_method}. Despite its simplicity, RM and its variants were successfully applied in several areas, such as fluid mechanics (see the papers by Van~Dyke~\cite{VanDyke_1974}, Mercer~and~Roberts~\cite{Mercer-Roberts_1990}) and statistical physics (see the works of Gaunt~and~Guttmann~\cite[\S{II}]{Gaunt-Guttmann_1974}, Guttmann~\cite[\S{2}]{Guttmann_1989}).  

Numerical techniques---based on power series---appeared in the scientific literature with multiple but similar names: \emph{``asymptotic analysis of coefficients''} by Gaunt~and~Guttmann~\cite{Gaunt-Guttmann_1974}, \emph{``asymptotic analysis of power-series expansions''} by Guttmann~\cite{Guttmann_1989}, and \emph{``series analysis''} by Guttmann~and~Jensen~\cite{Guttmann-Jensen_2009}. Besides RM, other significant approaches include \emph{sequence extrapolation}, \emph{Pad{\'e} approximants}, \emph{differential approximants}, and \emph{variable transformations} \cite{Gaunt-Guttmann_1974,Guttmann_1989,Guttmann-Jensen_2009}. In addition, the \emph{direct fitting} (also known as multi-parameter fitting~\cite[\S{3.8}]{Guttmann_1989}) and \emph{series extension} (i.e., coefficient prediction) are powerful methods used by Guttmann~\cite{Guttmann_2015,Guttmann_2016} to analyze power series with non-algebraic singularities, involving a stretched-exponential term. All these techniques are based on \emph{unproven} asymptotic expansions (nature of singularities), hence their convergence analysis is impossible.\footnote{Nevertheless, with \emph{proven} asymptotics, RM is amenable to convergence analysis as will be shown in Section~\ref{sec:Ratio_method}.} On the one hand, they can provide strong intuition about asymptotics, which has been (in several cases) rigorously proved later. On the other hand, from a strictly mathematical point of view, they are heuristic techniques (i.e., \emph{without convergence guarantees}) that lead to \emph{conjectures} about asymptotic forms and \emph{rough estimations} of unknown constants (which may not exist after all). For example, see the papers by Defant,~Elvey~Price,~and~Guttmann~\cite{Defant-ElveyPrice-Guttmann_2021}, Guttmann~and~Jensen~\cite{Guttmann-Jensen_2022}, as well as Conway~and~Guttmann~\cite{Conway-Guttmann_2025}. 

Furthermore, Pivoteau~and~Salvy~\cite[Section~5]{Pivoteau-Salvy_2025} have recently proposed a numerical algorithm to compute the radius of convergence of generating functions that satisfy a system of equations, which are not necessarily polynomial. The algorithm takes as input the system of equations and uses an \emph{oracle} to make decisions. The main limitation is that its correctness relies on an \emph{unproven} hypothesis, i.e., that Schanuel's conjecture is true. However, this assumption is not required if the system does not include constructions of sets or cycles.

In general, it is of paramount importance to guarantee the convergence of a numerical method, otherwise it cannot be rigorously trusted. Motivated by recent advances in analytic combinatorics where asymptotics is \emph{provable in advance} (as described earlier), we propose a different approach. Based on a \emph{proven} asymptotic expansion with unknown constants, we first study their \emph{uniqueness}, and then design numerical methods to compute them. In this way, we are able to provide \emph{convergence-rate guarantees}, whenever possible. An advantage of the proposed methodology is its close connection with \emph{optimization theory}, a key characteristic that is entirely missing from current approaches. We also cover a broad spectrum of problems with a unified and flexible theory, instead of solving problems on a case-by-case basis.

\subsection{Summary of Contributions and Outline}

In this paper, we present a new research field---at the intersection of optimization and asymptotic analysis---that is called \emph{Asymptotics Learning Theory (ALT)}. Specifically, the objective of ALT is to efficiently compute unknown constants/parameters in proven asymptotic expansions using optimization techniques. Moreover, we study two numerical methods: \emph{sliding Linear Least Squares (sLLSQ)} and \emph{sliding Tikhonov Linear Least Squares (sT-LLSQ)}, which are applicable to a variety of asymptotic forms. The main results consist of Theorems~\ref{thm:The_sLLSQ_method}~and~\ref{thm:The_sT-LLSQ_method}, for each method respectively. The basic advantage of sLLSQ and sT-LLSQ is that every optimal solution is unique and admits a closed-form expression. In this way, we can derive rigorous asymptotic estimates, which are instrumental in the convergence analysis of both methods. We provide several applications in analytic combinatorics and emphasize the complementarity with well-known techniques, e.g., the ratio method and its variants. Interestingly, sLLSQ and sT-LLSQ have also their own limitations, despite remarkable success in most cases. For example, sLLSQ may not converge at all, or the convergence may be quite slow. In addition, numerical experiments confirm the theoretical findings.  

The paper is organized as shown in the~\hyperref[toc]{\contentsname}. Note that preliminary material (i.e., the required background) is given in Appendix~\ref{sec:Mathematical_background}.

\subsection{Notation}

We write $\defeq$ to state that the left-hand side is defined to be equal to the right-hand side. Italic lowercase and calligraphic uppercase letters represent real scalars and sets, respectively, e.g., $x$ and $\mathcal{S}$. Moreover, boldface lowercase (resp. uppercase) letters denote finite-dimensional real column-vectors (resp. matrices), for example, $\mathbf{x}$ and $\mathbf{A}$. Every component of a vector or matrix is represented by the corresponding (italic) lowercase letter with a subscript, e.g., $x_i$ is the $i$-th element of a vector $\mathbf{x}$ and $a_{i,j}$ is the $(i,j)$-th entry of a matrix $\mathbf{A}$. Given an $m \times k$ matrix $\mathbf{A}$, we use the shorthand for row selection, that is, $\mathbf{A}(i,:) \defeq [a_{i,1},\dots,a_{i,k}]$, for all $i \in \{1,\dots,m\}$. $\mathbf{I}$, $\mathbf{O}$, $\mathbf{1}$ and $\mathbf{0}$ are the identity and zero matrix, all-ones and zero vector, respectively, of appropriate dimensions. For a sequence of vectors $\{\mathbf{x}_n\}_{n = 0}^{\infty}$, we denote by $x_{i,n}$ the $i$-th component of the vector $\mathbf{x}_n$. In addition, $\abs{x}$ stands for the absolute value of a real number $x$, while $\card{\mathcal{S}}$ represents the cardinality of a set $\mathcal{S}$. The exponential function is denoted by $e^{\cdot}$ or $\exp(\cdot)$, and the natural logarithm by $\log(\cdot)$. In general, given an $m$-dimensional real vector $\mathbf{x}$, the component-wise logarithm is defined by $\log(\mathbf{x}) \defeq \transp{[\log(x_1),\dots,\log(x_m)]}$ (assuming $x_i > 0$, for all $i$) and, with a slight abuse of notation, $\log(1+\mathbf{x}) \defeq \log(\mathbf{1}+\mathbf{x}) = \transp{[\log(1+x_1),\dots,\log(1+x_m)]}$ (assuming $x_i > -1$, for all $i$). The Cartesian product of a finite collection of sets $\{\mathcal{S}_j\}_{j=1}^k$, $k \in \NN$, is denoted by $\prod_{j=1}^k {\mathcal{S}_j} \defeq {\mathcal{S}_1} \times \cdots \times {\mathcal{S}_k}$. 

The sets of real and complex numbers are denoted by $\RR$ and $\CC$, respectively. $\RR_{>0} \defeq \{x \in \RR : x > 0\}$, $\RR_{\geq 0} \defeq \{x \in \RR : x \geq 0\}$, $\NN \defeq \{1,2,\dots\}$ and $\NN_0 \defeq \NN \cup \{0\}$. Given any positive integers $m$ and $k$, $\RR^m$ (resp. $\RR^{m \times k}$) is the set of all $m$-dimensional real vectors (resp. real matrices with $m$ rows and $k$ columns). Also, $\tr{\cdot}$, $\determ{\cdot}$ and $\inv{(\cdot)}$ represent the trace, determinant and inverse of a square matrix, respectively. The rank of a matrix $\mathbf{A} \in \RR^{m \times k}$ is denoted by $\rank{\mathbf{A}}$, where $0 \leq \rank{\mathbf{A}} \leq \min(m,k)$. $\transp{\mathbf{A}}$ and $\pseudoinv{\mathbf{A}}$ denote the transpose and pseudoinverse (also known as Moore-Penrose inverse) of a matrix $\mathbf{A}$, respectively. In particular, when $\mathbf{A}$ has linearly independent columns (equivalently, $\transp{\mathbf{A}} \mathbf{A}$ is invertible) its pseudoinverse is given by $\pseudoinv{\mathbf{A}} = \inv{(\transp{\mathbf{A}} \mathbf{A})} \transp{\mathbf{A}} \in \RR^{k \times m}$; in such a case $\pseudoinv{\mathbf{A}}$ is a left inverse of $\mathbf{A}$, that is, $\pseudoinv{\mathbf{A}} \mathbf{A} = \mathbf{I}$. Also, $\norm{\cdot}$ denotes a vector/matrix norm, and $\sum_{i=l}^u {a_i} \defeq 0$, $\prod_{i=l}^u {a_i} \defeq 1$ whenever $l > u$. 

Furthermore, we use the following asymptotic notation given any functions \linebreak $f,g \colon \NN \to \RR$. The $O$-notation: $f(n) = O(g(n))$ as $n \to \infty$ if, and only if, there exist numbers $c \in \RR_{>0}$ and $n_0 \in \NN$, both independent of $n$, such that $\abs{f(n)} \leq c \abs{g(n)}$ for all integers $n \geq n_0$.\footnote{The $O$-notation can be extended to any functions $f,g \colon \RR \to \RR$ and any (finite) limiting value $L \in \RR$: $f(x) = O(g(x))$ as $x \to L$ if, and only if, there exist numbers $c,\delta \in \RR_{>0}$, both independent of $x$, such that $0 < \abs{x-L} < \delta \implies \abs{f(x)} \leq c \abs{g(x)}$.} The $\Omega$-notation: $f(n) = \Omega(g(n))$ if, and only if, $g(n) = O(f(n))$. The $\Theta$-notation: $f(n) = \Theta(g(n))$ as $n \to \infty$ if, and only if, there exist numbers $c_1,c_2 \in \RR_{>0}$ and $n_0 \in \NN$, all independent of $n$, such that $c_1 \abs{g(n)} \leq \abs{f(n)} \leq c_2 \abs{g(n)}$ for all integers $n \geq n_0$; this is equivalent to $f(n) = \Omega(g(n))$ and $f(n) = O(g(n))$. The $o$-notation: $f(n) = o(g(n))$ as $n \to \infty$ if, and only if, $\lim_{n \to \infty} \frac{f(n)}{g(n)} = 0$. Note that $f(n) = o(g(n))$ implies $f(n) = O(g(n))$. We say that a function $f$ \emph{vanishes asymptotically} if $f(n) = o(1)$, i.e., it converges to zero as $n \to \infty$. The asymptotic equivalence: $f(n) \sim g(n)$ as $n \to \infty$ if, and only if, $\lim_{n \to \infty} \frac{f(n)}{g(n)} = 1$; this can be equivalently written as $f(n) = g(n)(1+o(1))$, provided that $g(n) \neq 0$ for sufficiently large $n$. To indicate (component-wise) lower order terms, we write $\mathbf{r}_n = \transp{[s_1(n),\dots,s_k(n)]} + \mathbf{l.o.t.}$ whenever $\mathbf{r}_n = \transp{[s_1(n)(1+o(1)),\dots,s_k(n)(1+o(1))]}$ as $n \to \infty$; when $k=1$, we simply write $\mathrm{l.o.t.}$ in normal font.   

Finally, asymptotic notation appearing on both sides of an equation has the following meaning, as $n \to \infty$. Given any functions $h_1,h_2,h_3,h_4 \colon \NN \to \RR$, the equation $h_1(n) + O(h_2(n)) = h_3(n) + O(h_4(n))$ means that for every function $f(n) = O(h_2(n))$, there exists a function $g(n) = O(h_4(n))$ such that $h_1(n) + f(n) = h_3(n) + g(n)$, for all $n \in \NN$. We have similar interpretations if we replace an $O$-term with an $\Omega$/$\Theta$/$o$-term. In general, equations with multiple asymptotic terms on each side are treated as follows. Each asymptotic term on the \emph{left-hand side} of an equation is interpreted as \emph{``for every function''} (i.e., using a universal quantifier $\forall$), whereas each one on the \emph{right-hand side} as \emph{``there exists a function''} (i.e., using an existential quantifier $\exists$), so that the resulting equation (after all function substitutions) holds for all $n \in \NN$. In other words, the left-hand side provides a more accurate estimate than the right-hand side.

\section{Asymptotics Learning Theory (ALT)}
\label{sec:Asymptotics_Learning_Theory}

\subsection{The General Problem}

Let us suppose that we have a \emph{proven asymptotic expansion} of order $p \in \NN_0$, that is, 
\begin{equation} \label{eq:Proven_asymptotic_expansion}
	f(n) = \widehat{f}(n;\boldsymbol{\alpha}) \left( 1 + \sum_{l=1}^p \beta_l g_l(n) + O(g_{p+1}(n)) \right)  , \quad \mathrm{as}\ n \to \infty ,
\end{equation}
where the functions $f \colon \NN \to \RR_{\geq 0}$ and $\widehat{f} \colon {\NN \times \mathcal{S}} \to \RR_{\geq 0}$ are \emph{asymptotically positive} (i.e., $f(n), \widehat{f}(n;\cdot) > 0$ for sufficiently large $n$), and $\boldsymbol{\alpha} = \transp{[\alpha_1,\dots,\alpha_k]} \in \mathcal{S}$ is the vector of \emph{unknown} parameters/constants (\emph{independent} of $n$) with $k \in \NN$. The coefficients $\beta_l \in \RR$, $l \in \{1,\dots,p\}$, are also \emph{independent} of $n$ (only their existence is necessary).\footnote{The coefficients $\{\beta_l\}_{l=1}^p$ are treated as \emph{symbolic real constants} in our analysis, e.g., when computing Puiseux series (indicative examples will be given in Section~\ref{sec:Fundamental_applications_in_analytic_combinatorics}).} Moreover, the functions $\{g_l \colon \NN \to \RR\}_{l=1}^{p+1}$ constitute a \emph{decreasing asymptotic scale}  as $n \to \infty$, that is, $g_{l+1}(n) = o(g_l(n))$, for all $l \in \{1,\dots,p\}$, with $g_1(n) = o(1)$.\footnote{All functions may be defined for large enough integer $n$, instead of all $n \in \NN$; this does not affect the asymptotic analysis.} Observe that $g_l(n) = o(1)$, for all $l \in \{1,\dots,p+1\}$, and therefore $f(n) = \widehat{f}(n;\boldsymbol{\alpha}) (1+o(1))$ or, equivalently, $f(n) \sim \widehat{f}(n;\boldsymbol{\alpha})$ as $n \to \infty$. For this reason, the leading/dominant function $\widehat{f}$ is called an \emph{asymptotic form} of $f$. Note that the numbers $p \in \NN_0$, $k \in \NN$, and the \emph{nonempty} set $\mathcal{S} \subseteq \RR^k$ (not necessarily compact, i.e., closed and bounded) are all given. It is also emphasized that the existence of $\boldsymbol{\alpha} \in \mathcal{S}$ is \emph{guaranteed in advance}, so that \eqref{eq:Proven_asymptotic_expansion} is satisfied.\footnote{For example, the existence of $\boldsymbol{\alpha}$ may be proved using techniques from complex analysis, as mentioned in Section~\ref{subsec:Motivation}.} The simplest case is when $p=0$, that is, $f(n) = \widehat{f}(n;\boldsymbol{\alpha}) \left( 1 + O(g_1(n)) \right)$.

Table~\ref{tbl:Asymptotics_learning_theory_General_problem} summarizes the general context of Asymptotics Learning Theory (ALT), thus presenting the scope of this work at a glance. In simple words, the problem of learning the asymptotics of $f$ is how, and how fast, we can compute the unknown vector $\boldsymbol{\alpha}$, given the infinite sequence $\{f(n)\}_{n \in \NN}$ and the proven asymptotic expansion \eqref{eq:Proven_asymptotic_expansion}. ALT can be considered as an \emph{inverse problem involving asymptotics}: we try to find/reconstruct the unknown parameters, which are not directly measurable, by observing the function's values (which play the role of indirect measurements). 

In practice, however, we only know a \emph{finite} number of values $\{f(1),\dots,f(N)\}$, for some $N \in \NN$, due to computational limitations (e.g., finite time and space resources). In theory, we assume that an infinite sequence is available in order to study the convergence/divergence (and possibly the convergence rate) of a numerical method as $n \to \infty$. Moreover, the data sequence $\{f(n)\}_{n \in \NN}$ is assumed to be  \emph{exact} (e.g., using symbolic arithmetic), or \emph{of sufficiently high accuracy} (e.g., using variable/arbitrary precision arithmetic) so that the theoretical analysis is preserved. 

\begin{remark}
	The data sequence $\{f(n)\}_{n \in \NN}$ can be obtained from: 1) \emph{explicit formulas}, 2) \emph{recursive formulas}, 3) coefficient extraction from \emph{explicit generating functions}, or 4) iteration of \emph{(systems of) functional equations} that are satisfied by generating functions; see the works of~Pivoteau,~Salvy,~and~Soria~\cite[Sections~3~and~4]{Pivoteau-Salvy-Soria_2008}, \cite[Algorithm~newtonSeries]{Pivoteau-Salvy-Soria_2012}. The computational complexity of each procedure is crucial and depends on the problem at hand. Normally, higher number of terms requires more computational resources (i.e., time and/or space). However, the data sequence is \emph{given} in ALT, which primarily focuses on convergence analysis.   
\end{remark}

\renewcommand{\arraystretch}{1.5}
\begin{table}[t!]
	\centering
	\caption{The general problem in Asymptotics Learning Theory. Additional assumptions may be required to ensure the uniqueness of $\boldsymbol{\alpha}$ and the applicability of a numerical method.} 
	\label{tbl:Asymptotics_learning_theory_General_problem}
	\resizebox{\textwidth}{!}{%
		\begin{tabular}{|l l|} 
			\hline 
			\textbf{Input data (\emph{given}):} & \makecell[l]{A sequence $\{f(n)\}_{n \in \NN}$, $p \in \NN_0$, $k \in \NN$, a nonempty \\ set $\mathcal{S} \subseteq \RR^k$, functions $\widehat{f}$ and $\{g_l\}_{l=1}^{p+1}$, and fixed real \\ coefficients  $\{\beta_l\}_{l=1}^{p+1}$ (only their existence is necessary).}   \\
			\hline
			
			\makecell[l]{\textbf{Asymptotic expansion of} \\ \textbf{the function $f$ (\emph{proven}):}} & \makecell[l]{$f(n) = \widehat{f}(n;\boldsymbol{\alpha}) \left( 1 + \sum_{l=1}^p \beta_l g_l(n) + O(g_{p+1}(n)) \right)$, \\ as $n \to \infty$, where the vector $\boldsymbol{\alpha} \in \mathcal{S}$ is \emph{guaranteed} \\ \emph{to exist}, and the functions $\{g_l\}_{l=1}^{p+1}$ constitute a \\ \emph{decreasing asymptotic scale}, i.e., $g_{l+1}(n) = o(g_l(n))$, \\ for all $l \in \{1,\dots,p\}$, with $g_1(n) = o(1)$.}  \\
			\hline
			
			\makecell[l]{\textbf{Finite-dimensional vector} \\ \textbf{of \emph{unknown} parameters:}} & $\boldsymbol{\alpha} = \transp{[\alpha_1,\dots,\alpha_k]} \in \mathcal{S}$, which is \emph{independent} of $n$.  \\
			\hline
			
			\textbf{Goal/Objective:} & \makecell[l]{Efficient computation of the unknown vector $\boldsymbol{\alpha}$, \\ either completely (i.e., all its components) or \\ partially (i.e., some of its components, but not all), \\ with \emph{convergence-rate guarantees}. Preferably, \\ complete computation of $\boldsymbol{\alpha}$ with low complexity \\ (per iteration) and fast convergence.}  \\
			\hline
		\end{tabular}%
	}
\end{table}

Furthermore, ALT is intimately connected with \emph{optimization}, which is considered its cornerstone. In particular, the function $\widehat{f} (n;\cdot) \colon {\mathcal{S} \to \RR_{\geq 0}}$ can be seen as an estimator of $\widehat{f}(n;\boldsymbol{\alpha})$, for every $n \in \NN$. Given a fixed $m \in \NN$, which is called the \emph{length of the sliding window} or the \textit{number of samples per iteration}, we define the $m$-dimensional vector function $\widehat{\mathbf{f}}(n;\mathbf{x}) \defeq \transp{[\widehat{f}(n;\mathbf{x}),\dots,\widehat{f}(n+m-1;\mathbf{x})]}$ of the variable $\mathbf{x} \in \mathcal{S}$ and the $m$-dimensional vector $\mathbf{f}(n) \defeq \transp{[f(n),\dots,f(n+m-1)]}$, for each $n \in \NN$. Then, by choosing a suitable objective function $F_n(\mathbf{x}) = d \left( \widehat{\mathbf{f}}(n;\mathbf{x}),\mathbf{f}(n) \right)$, which is called a \emph{loss function} and quantifies the divergence (or relative distance) of $\widehat{\mathbf{f}}(n;\mathbf{x})$ with respect to $\mathbf{f}(n)$, we formulate a sequence of optimization problems: 
\begin{align} \label{eq:General_optimization_problem}
	\begin{split}
		& \minimize_{\mathbf{x}} \;\; F_n(\mathbf{x}) = d \left( \widehat{\mathbf{f}}(n;\mathbf{x}),\mathbf{f}(n) \right)   \\
		& \subjectto \;\; \mathbf{x} \in \mathcal{S}  .  
	\end{split} 
\end{align}
Provided that a global minimizer exists for sufficiently large $n$ (i.e., there is an $\mathbf{x}_n^* \in \mathcal{S}$ such that $F_n(\mathbf{x}_n^*) \leq F_n(\mathbf{x})$ for all $\mathbf{x} \in \mathcal{S}$), we study the generated sequence of optimal solutions $\{\mathbf{x}_n^*\}_{n \in \NN}$. The ultimate goal is to guarantee that $\mathbf{x}_n^*$ converges to the unknown vector $\boldsymbol{\alpha}$, i.e., $\lim_{n \to \infty} \mathbf{x}_n^* = \boldsymbol{\alpha}$. The \emph{convergence rate} as well as the \emph{iteration complexity} (i.e., the complexity of computing the optimal solution $\mathbf{x}_n^*$) play important roles in ALT. Preferably, the former should be high, whereas the latter should be low. The choice of the loss function $F_n$ definitely affects these factors, and therefore it must be selected carefully based on the specific form of $\widehat{f}$. 

In general, the \emph{driving force} in ALT consists of the following research questions:  
\begin{itemize}
	\item \emph{Uniqueness}---Given that the unknown parameters exist, are they unique?  
	
	\item \emph{Design of computational techniques}---Given that the unknown parameters exist and are unique, can we design numerical methods (i.e., iterative algorithms based on optimization) to compute these parameters? 
	
	\item \emph{Analysis of computational techniques}---Can these algorithms achieve \emph{provable convergence} to the correct values of parameters? If so, what is their speed of convergence (\emph{convergence-rate guarantees})? Do they have low computational complexity (\emph{algorithmic efficiency})? What are their \emph{fundamental limitations} (i.e., when do they fail)? How can we improve them? Such questions are of the utmost importance.
\end{itemize}

\subsection{Perspective}

A surprising fact about ALT is that the existence of constants in an asymptotic expansion is initially proved using \emph{non-constructive} techniques (e.g., the implicit function theorem and the Weierstrass preparation theorem) but, based on their proven existence, we can usually compute them, i.e., design sequences that converge to them. Roughly speaking, we can turn implicit constants into ``explicit'' ones by applying a \emph{constructive} numerical method to the function's values (i.e., learning asymptotics from data). In general, this is a \emph{rare phenomenon}: non-constructive existence proofs may have constructive outcomes by using them in an entirely different way, through the lens of computation.

\section{Theoretical Analysis and Main Results}

In this paper, we study the special case where the asymptotic form $\widehat{f}$ in \eqref{eq:Proven_asymptotic_expansion} can be written as a product of exponential functions, that is,   
\begin{equation} \label{eq:Proven_asymptotic_form}
	\widehat{f}(n;\mathbf{x}) \defeq \prod_{j=1}^k \exp\left( \varphi_j(n) u_j(x_j) \right) = \exp\left( \sum_{j=1}^k \varphi_j(n) u_j(x_j) \right)  ,
\end{equation}
where $\varphi_j \colon \NN \to \RR$, $u_j \colon \mathcal{S}_j \to \RR$, and $\mathcal{S}_j \subseteq \RR$ is a \emph{nonempty} set, for all $j \in \{1,\dots,k\}$. We also define the vector function $\mathbf{u} \colon \mathcal{S} \to \RR^k$ by 
\begin{equation} \label{eq:Vector_function_u}
	\mathbf{u}(\mathbf{x}) \defeq \transp{[u_1(x_1),\dots,u_k(x_k)]} \in \RR^k ,
\end{equation}
for all $\mathbf{x} \in \mathcal{S}$, where the set $\mathcal{S} \defeq \prod_{j=1}^k {\mathcal{S}_j} \subseteq \RR^k$ is \emph{nonempty}. 

Observe that $\widehat{f}(n;\mathbf{x}) > 0$ for all $n \in \NN$ and $\mathbf{x} \in \mathcal{S}$, which implies that $f(n) > 0$ for sufficiently large $n$, since $f(n) \sim \widehat{f}(n;\boldsymbol{\alpha})$ as $n \to \infty$.\footnote{As will be shown in Section~\ref{sec:Fundamental_applications_in_analytic_combinatorics}, \eqref{eq:Proven_asymptotic_form} covers a \textit{broad class of asymptotics} (especially in analytic combinatorics), despite its seemingly restrictive nature.}  

The asymptotic form \eqref{eq:Proven_asymptotic_form} evaluated at $\mathbf{x} = \boldsymbol{\alpha}$ can be expressed as 
\begin{equation} \label{eq:Proven_asymptotic_form_with_gamma}
	\widehat{f}(n;\boldsymbol{\alpha}) = \exp\left( \sum_{j=1}^k \varphi_j(n) u_j(\alpha_j) \right) = \exp\left( \sum_{j=1}^k \varphi_j(n) \gamma_j \right)  ,
\end{equation}
where $\gamma_j \defeq u_j(\alpha_j) \in \RR$, for all $j \in \{1,\dots,k\}$, or equivalently 
\begin{equation*}
	\boldsymbol{\gamma} = \transp{[\gamma_1,\dots,\gamma_k]} \defeq \mathbf{u}(\boldsymbol{\alpha}) = \transp{[u_1(\alpha_1),\dots,u_k(\alpha_k)]} \in \RR^k .
\end{equation*}

Throughout this section, we will need the following \emph{standing assumptions}. Note that these hypotheses are quite general, since they hold very often (especially in analytic combinatorics).

\begin{assumption} \label{assum:Asymptotic_expansion_and_form}
	The function $f \colon \NN \to \RR_{\geq 0}$ admits the asymptotic expansion~\eqref{eq:Proven_asymptotic_expansion}, where its asymptotic form $\widehat{f} \colon {\NN \times \mathcal{S}} \to \RR_{\geq 0}$ is given by \eqref{eq:Proven_asymptotic_form}.  
\end{assumption}

\begin{assumption} \label{assum:Functions_varphi_j}
	The functions $\{\varphi_j \colon \NN \to \RR\}_{j=1}^k$ constitute an \emph{increasing asymptotic scale}, that is, $\varphi_j(n) = o(\varphi_{j+1}(n))$ for all $j \in \{1,\dots,k-1\}$, and $\varphi_1(n) = \Omega(1)$ as $n \to \infty$. 
\end{assumption}

\begin{assumption} \label{assum:Functions_u_j}
	Every function $u_j \colon \mathcal{S}_j \to \RR$, $j \in \{1,\dots,k\}$ and $\mathcal{S}_j \subseteq \RR$, is \emph{bijective} (thus \emph{invertible}) and also \emph{differentiable} (hence \emph{continuous}) with \emph{nonzero derivative} on its domain, i.e., $\frac{\mathrm{d} u_j}{\mathrm{d} x} (x') \neq 0$ for all $x' \in \mathcal{S}_j$.   
\end{assumption}

\begin{remark} \label{rem:Functions_u_j}
	Under Assumption~\ref{assum:Functions_u_j}, the inverse function of $u_j$ exists (since $u_j$ is bijective) and is denoted by $u_j^{-1} \colon \RR \to \mathcal{S}_j$. The inverse function of $\mathbf{u} \colon \mathcal{S} \to \RR^k$, where $\mathcal{S} \defeq \prod_{j=1}^k {\mathcal{S}_j} \subseteq \RR^k$, is denoted by $\mathbf{u}^{-1} \colon \RR^k \to \mathcal{S}$ and is given by
	\begin{equation*}
		\mathbf{u}^{-1}(\mathbf{y}) \defeq \transp{\left[ u_1^{-1}(y_1),\dots,u_k^{-1}(y_k) \right]}  \in \mathcal{S} ,
	\end{equation*}
	for all $\mathbf{y} \in \RR^k$. In addition, the differentiation rule of inverse function yields 
	\begin{equation} \label{eq:Derivative_of_inverse_u_j}
		\frac{\mathrm{d} u_j^{-1}}{\mathrm{d} y} (y') = \frac{1}{\frac{\mathrm{d} u_j}{\mathrm{d} x} \left( u_j^{-1}(y') \right)} ,
	\end{equation}
	for all $y' \in \RR$; note that $y = u_j(x)$ if and only if $x = u_j^{-1}(y)$. As a result, the inverse function $u_j^{-1}$ is \emph{differentiable} and thus \emph{continuous} on $\RR$, for all $j \in \{1,\dots,k\}$. The continuity of $u_j^{-1}$ is very important for \emph{numerical stability} of the inverse variable transformation $\mathbf{x} = \mathbf{u}^{-1}(\mathbf{y})$, which will be used later to compute the sequence of optimal solutions $\{\mathbf{x}_n^*\}_{n \in \NN}$.  
\end{remark}

\begin{remark}
	Assumption~\ref{assum:Functions_u_j} is satisfied when, for example, $u_j(x) \defeq x$ with $\mathcal{S}_j = \RR$, or $u_j(x) \defeq \log(x)$ with $\mathcal{S}_j = \RR_{>0}$, for all $j \in \{1,\dots,k\}$.   
\end{remark}

\subsection{Uniqueness of the Unknown Parameters}

Given the existence of $\boldsymbol{\alpha} \in \mathcal{S}$ and thus of $\boldsymbol{\gamma} \in \RR^k$, we can now prove their uniqueness. 

\begin{proposition}[Uniqueness of $\boldsymbol{\alpha}$ and $\boldsymbol{\gamma}$] 
	\label{prop:Uniqueness_of_alpha_and_gamma}
	Suppose that Assumptions~\labelcref{assum:Asymptotic_expansion_and_form,assum:Functions_varphi_j,assum:Functions_u_j} are satisfied. Then, the vectors $\boldsymbol{\alpha} = \transp{[\alpha_1,\dots,\alpha_k]} \in \mathcal{S}$ and $\boldsymbol{\gamma} = \transp{[\gamma_1,\dots,\gamma_k]} \in \RR^k$ are \emph{unique}.  
\end{proposition}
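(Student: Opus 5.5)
We argue by contradiction: suppose two vectors $\boldsymbol{\alpha},\boldsymbol{\alpha}' \in \mathcal{S}$ both satisfy \eqref{eq:Proven_asymptotic_expansion} with $\widehat{f}$ given by \eqref{eq:Proven_asymptotic_form}, possibly with different coefficients $\{\beta_l\}$, $\{\beta_l'\}$. Set $\boldsymbol{\gamma} = \mathbf{u}(\boldsymbol{\alpha})$ and $\boldsymbol{\gamma}' = \mathbf{u}(\boldsymbol{\alpha}')$. We first show $\boldsymbol{\gamma} = \boldsymbol{\gamma}'$; uniqueness of $\boldsymbol{\alpha}$ then follows at once, because $\mathbf{u}$ is injective by Assumption~\ref{assum:Functions_u_j} (componentwise bijectivity of each $u_j$), so $\boldsymbol{\alpha} = \mathbf{u}^{-1}(\boldsymbol{\gamma})$.

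To compare $\boldsymbol{\gamma}$ and $\boldsymbol{\gamma}'$, divide the two expansions. Since both equal $f(n)$, and both bracketed factors are $1+o(1)$ (because every $g_l(n)=o(1)$), we get
\[
\exp\Bigl( \sum_{j=1}^k \varphi_j(n)(\gamma_j-\gamma_j') \Bigr) = \frac{1+o(1)}{1+o(1)} \to 1 .
\]
Taking logarithms, which is legitimate for large $n$ since $f(n)>0$ there, gives
\[
h(n) \defeq \sum_{j=1}^k \varphi_j(n)\,\delta_j = o(1), \qquad \delta_j \defeq \gamma_j-\gamma_j' .
\]

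Now suppose $\boldsymbol{\delta}\neq\mathbf{0}$, and let $J$ be the largest index with $\delta_J\neq 0$. By Assumption~\ref{assum:Functions_varphi_j}, $\varphi_j(n)=o(\varphi_J(n))$ for every $j<J$, using transitivity of the little-$o$ chain. Hence
\[
h(n) = \varphi_J(n)\bigl(\delta_J + o(1)\bigr),
\]
and so $\abs{h(n)} \geq \tfrac{1}{2}\abs{\delta_J}\,\abs{\varphi_J(n)}$ for large $n$.

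The main step is to show that $\abs{\varphi_J(n)}$ stays bounded away from zero. For $J=1$ this is exactly $\varphi_1(n)=\Omega(1)$. For $J>1$, the relation $\varphi_1 = o(\varphi_J)$ gives $\abs{\varphi_J(n)} \geq \abs{\varphi_1(n)} \geq c>0$ for large $n$. Here we need $\varphi_J(n)\neq 0$ eventually, which the little-$o$ definition (as a ratio limit) implicitly presumes.

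Therefore $\abs{h(n)}$ is bounded below by a positive constant for all large $n$, contradicting $h(n)=o(1)$. Hence $\boldsymbol{\delta}=\mathbf{0}$, so $\boldsymbol{\gamma}$ is unique, and by the first paragraph $\boldsymbol{\alpha}$ is unique as well.
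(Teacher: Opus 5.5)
Your proof is correct and follows essentially the same route as the paper: take logarithms of the ratio of the two asymptotic forms to get $\sum_{j} \varphi_j(n)\delta_j = o(1)$, isolate the largest index with $\delta_J \neq 0$, reach a contradiction with the increasing-scale hypothesis and $\varphi_1(n)=\Omega(1)$, and then transfer uniqueness from $\boldsymbol{\gamma}$ to $\boldsymbol{\alpha}$ via the bijectivity of $\mathbf{u}$. The only cosmetic difference is that you factor out $\varphi_J(n)$ and bound $\abs{h(n)}$ from below in one step for all $J$, whereas the paper solves for $\varphi_{k^*}(n)$ and treats the cases $k^*=1$ and $k^* \geq 2$ separately.
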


\begin{proof}
	Firstly, we will show the uniqueness of $\boldsymbol{\gamma} \in \RR^k$ by contradiction. Let us suppose that there are two such vectors $\boldsymbol{\gamma}, \boldsymbol{\gamma}' \in \RR^k$ with $\boldsymbol{\gamma} \neq \boldsymbol{\gamma}'$, i.e., there is at least one index $j \in \{1,\dots,k\}$ such that $\gamma_j \neq \gamma'_j$. Therefore, we can define the index  
	\begin{equation*}
		k^* \defeq \max\left\{ j \in \{1,\dots,k\} : \gamma_j \neq \gamma'_j \right\}  ,
	\end{equation*}
	and so $\gamma_j = \gamma'_j$ for all $j \in \{k^*+1,\dots,k\}$. By Assumption~\ref{assum:Asymptotic_expansion_and_form} and \eqref{eq:Proven_asymptotic_form_with_gamma}, we have $f(n) \sim \exp\left( \sum_{j=1}^k \varphi_j(n) \gamma_j \right)$ and $f(n) \sim \exp\left( \sum_{j=1}^k \varphi_j(n) \gamma'_j \right)$, hence 
	\begin{equation*}
		\exp\left( \sum_{j=1}^k \varphi_j(n) \gamma_j \right) \sim \exp\left( \sum_{j=1}^k \varphi_j(n) \gamma'_j \right)  \iff  \exp\left( \sum_{j=1}^k  \varphi_j(n) (\gamma_j - \gamma'_j) \right) \sim 1 
	\end{equation*}
	as $n \to \infty$. Since the logarithm function is continuous, we obtain 
	\begin{align} 
		\sum_{j=1}^k  \varphi_j(n) (\gamma_j - \gamma'_j) = o(1)  \implies  \sum_{j=1}^{k^*}  \varphi_j(n) (\gamma_j - \gamma'_j) = o(1)    \nonumber  \\ 
		\overset{\gamma_{k^*} \neq \gamma'_{k^*}}{\implies} \varphi_{k^*}(n) = \frac{1}{\gamma_{k^*} - \gamma'_{k^*}} \sum_{j=1}^{k^*-1}  \varphi_j(n) (\gamma'_j - \gamma_j) + o(1)  .  \label{eq:Varphi_k_star}
	\end{align}
	In the first implication we have used the fact that $\sum_{j=1}^k = \sum_{j=1}^{k^*} + \sum_{j=k^*+1}^k$ and the definition of $k^*$. Now, we consider two cases for the integer $k^* \geq 1$: a) $k^* = 1$ and b) $k^* \geq 2$. In the former case, \eqref{eq:Varphi_k_star} yields $\varphi_1(n) = o(1)$, which contradicts the fact that $\varphi_1(n) = \Omega(1)$ according to Assumption~\ref{assum:Functions_varphi_j}. In the latter case, \eqref{eq:Varphi_k_star} and Assumption~\ref{assum:Functions_varphi_j} give
	\begin{equation*}
		\varphi_{k^*}(n) = O(\varphi_{k^*-1}(n)) + O(1) = O(\varphi_{k^*-1}(n)) + O(\varphi_1(n)) = O(\varphi_{k^*-1}(n)) ,
	\end{equation*}
	since $o(1) = O(1)$ and $\varphi_1(n) = \Omega(1)$ is equivalent to $1 = O(\varphi_1(n))$. The estimate $\varphi_{k^*}(n) = O(\varphi_{k^*-1}(n))$ leads to a contradiction, because $\varphi_{k^*-1}(n) = o(\varphi_{k^*}(n))$ by Assumption~\ref{assum:Functions_varphi_j}.
	
	Secondly, the uniqueness of $\boldsymbol{\alpha} = \mathbf{u}^{-1}(\boldsymbol{\gamma}) \in \mathcal{S}$ follows from the uniqueness of $\boldsymbol{\gamma} \in \RR^k$ and the fact that $\mathbf{u}^{-1} \colon \RR^k \to \mathcal{S}$ exists due to Assumption~\ref{assum:Functions_u_j}; see also Remark~\ref{rem:Functions_u_j}. 
\end{proof}

\subsection{Asymptotic Invariants under Variable Transformations}

In ALT, as described in Section~\ref{sec:Asymptotics_Learning_Theory}, we construct a sequence of optimal solutions $\{\mathbf{x}_n^*\}_{n \in \NN}$ that ideally converges to the vector $\boldsymbol{\alpha}$. For the asymptotic  form~\eqref{eq:Proven_asymptotic_form}, it is more convenient to transform the original optimization problem~\eqref{eq:General_optimization_problem} into an equivalent problem (which is easier to solve) by applying a \emph{variable transformation}, that is, 
\begin{equation*}
	\mathbf{y} = \mathbf{u}(\mathbf{x}) \iff  y_j = u_j(x_j) , \;\;  \forall j \in \{1,\dots,k\} .  
\end{equation*}
Note that the above transformation is \emph{bijective}, and so \emph{invertible}, because of Assumption~\ref{assum:Functions_u_j}. First we compute a sequence of optimal solutions $\{\mathbf{y}_n^*\}_{n \in \NN}$ to the equivalent problems, and then apply the \emph{inverse variable transformation}, $\mathbf{x} = \mathbf{u}^{-1}(\mathbf{y})$, to obtain the sequence $\{\mathbf{x}_n^*\}_{n \in \NN}$.\footnote{Further details about this approach will be given in Sections~\ref{subsec:sLLSQ}~and~\ref{subsec:sT-LLSQ}.}

Due to the continuity of the functions $\{u_j,u_j^{-1}\}_{j=1}^k$, based on Assumption~\ref{assum:Functions_u_j} and Remark~\ref{rem:Functions_u_j}, it is easy to observe that $\lim_{n \to \infty} \mathbf{y}_n^* = \boldsymbol{\gamma}$ if and only if $\lim_{n \to \infty} \mathbf{x}_n^* = \boldsymbol{\alpha}$. Nevertheless, we should ensure not only the convergence to the desired vector, but also the convergence rate. Specifically, we introduce and prove three \emph{asymptotic invariants}, i.e., properties that remain unchanged under the variable transformation $\mathbf{y} = \mathbf{u}(\mathbf{x})$ as $n \to \infty$.

\begin{proposition}[Asymptotic invariants] \label{prop:Asymptotic_invariants}
	If Assumptions~\labelcref{assum:Asymptotic_expansion_and_form,assum:Functions_varphi_j,assum:Functions_u_j} hold, then we have the following invariants, as $n \to \infty$ and for all $j \in \{1,\dots,k\}$:  
	\begin{enumerate}[label={\arabic*.}]	
		\item \emph{Convergence-rate invariant:} the variable transformation and its inverse preserve the convergence rate, whenever convergence is achieved, that is, 
		\begin{align}
			x_{j,n}^* - \alpha_j = o(1)  \implies  y_{j,n}^* - \gamma_j = \Theta(x_{j,n}^* - \alpha_j) = o(1)  ,  \label{eq:Convergence-rate_invariant_x_to_y}   \\
			y_{j,n}^* - \gamma_j = o(1)  \implies  x_{j,n}^* - \alpha_j = \Theta(y_{j,n}^* - \gamma_j) = o(1)  . \label{eq:Convergence-rate_invariant_y_to_x}  
		\end{align}		
		As a result, it holds that   
		\begin{align} \label{eq:Convergence-rate_invariant_y_to_x_with_h}
			\begin{split}
				& { y_{j,n}^* - \gamma_j = O(h_j(n)) \;\; \mathrm{and} \;\; h_j(n) = o(1) }   \\ 
				\implies & { x_{j,n}^* - \alpha_j = \Theta(y_{j,n}^* - \gamma_j) = O(h_j(n)) = o(1) } . 
			\end{split}
		\end{align}
		
		\item \emph{Divergence invariant:} If $y_{j,n}^*$ does not converge to $\gamma_j$, then $x_{j,n}^*$ does not converge to $\alpha_j$, and vice versa. Symbolically, 		
		\begin{equation} \label{eq:Divergence_invariant}
			y_{j,n}^* - \gamma_j \neq o(1) \iff x_{j,n}^* - \alpha_j \neq o(1) .
		\end{equation}
		
		\item \emph{Boundedness invariant:} optimal solutions $y_{j,n}^*$ with bounded distance from $\gamma_j$ are mapped to optimal solutions $x_{j,n}^*$ with bounded distance from $\alpha_j$, i.e., 
		\begin{equation} \label{eq:Boundedness_invariant}
			y_{j,n}^* - \gamma_j = O(1) \implies x_{j,n}^* - \alpha_j = O(1) .
		\end{equation}
	\end{enumerate}
\end{proposition}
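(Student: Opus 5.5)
The plan is to work one coordinate $j$ at a time and use only the pointwise differentiability of $u_j$ at $\alpha_j$ and of $u_j^{-1}$ at $\gamma_j$. Assumption~\ref{assum:Functions_u_j} and Remark~\ref{rem:Functions_u_j} supply both, together with
\[
\frac{\mathrm{d} u_j}{\mathrm{d} x}(\alpha_j) \neq 0, \qquad \frac{\mathrm{d} u_j^{-1}}{\mathrm{d} y}(\gamma_j) = \frac{1}{\frac{\mathrm{d} u_j}{\mathrm{d} x}(\alpha_j)} \neq 0 .
\]
Throughout, $y_{j,n}^* = u_j(x_{j,n}^*)$ and $\gamma_j = u_j(\alpha_j)$. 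I will avoid the mean value theorem, because $\mathcal{S}_j$ need not be an interval. Instead I will argue directly with difference quotients.

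\textbf{Convergence-rate invariant.} Assume $x_{j,n}^* - \alpha_j = o(1)$. Define $q_n \defeq \frac{y_{j,n}^* - \gamma_j}{x_{j,n}^* - \alpha_j}$ whenever $x_{j,n}^* \neq \alpha_j$.
\begin{enumerate}[label={\roman*)}]
\item By differentiability of $u_j$ at $\alpha_j$, $q_n \to \frac{\mathrm{d} u_j}{\mathrm{d} x}(\alpha_j) =: d \neq 0$ along those indices.
\item Hence for large $n$ with $x_{j,n}^* \neq \alpha_j$ we get $\tfrac{|d|}{2} \le |q_n| \le 2|d|$.
\item For indices with $x_{j,n}^* = \alpha_j$, injectivity gives $y_{j,n}^* = \gamma_j$. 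The two-sided bound $c_1\cdot 0 \le 0 \le c_2 \cdot 0$ then holds trivially.
\end{enumerate}
Together these give $y_{j,n}^* - \gamma_j = \Theta(x_{j,n}^* - \alpha_j)$. This is $o(1)$ because $\Theta(\cdot)$ of an $o(1)$ quantity is $O$ of it.

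The statement \eqref{eq:Convergence-rate_invariant_y_to_x} is the same argument applied to $u_j^{-1}$ at $\gamma_j$, using the derivative formula \eqref{eq:Derivative_of_inverse_u_j}. Then \eqref{eq:Convergence-rate_invariant_y_to_x_with_h} follows: $y_{j,n}^* - \gamma_j = O(h_j(n))$ with $h_j(n) = o(1)$ forces $y_{j,n}^* - \gamma_j = o(1)$, so \eqref{eq:Convergence-rate_invariant_y_to_x} applies, and $\Theta(O(h_j)) = O(h_j) = o(1)$.

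\textbf{Divergence invariant.} This is the contrapositive of the equivalence $y_{j,n}^* \to \gamma_j \iff x_{j,n}^* \to \alpha_j$. That equivalence follows from continuity of $u_j$ at $\alpha_j$ and of $u_j^{-1}$ at $\gamma_j$. Alternatively, it can be read off directly from \eqref{eq:Convergence-rate_invariant_x_to_y}--\eqref{eq:Convergence-rate_invariant_y_to_x}.

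\textbf{Boundedness invariant.} Suppose $|y_{j,n}^* - \gamma_j| \le C$ for $n \ge n_0$. Then $y_{j,n}^*$ lies in the compact interval $[\gamma_j - C, \gamma_j + C]$. Since $u_j^{-1}$ is continuous on all of $\RR$, it is bounded on this interval, by $M$ say. Therefore $|x_{j,n}^* - \alpha_j| \le M + |\alpha_j|$, which is $O(1)$.

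This is the step where the direction of the implication matters. The converse can fail: for $u_j = \log$ on $\RR_{>0}$, a bounded $x_{j,n}^* \to 0^+$ gives unbounded $y_{j,n}^*$. So the proof must use that $u_j^{-1}$ is defined and continuous on the whole real line.

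\textbf{Main obstacle.} The delicate point is the $\Theta$ claim: making it rigorous without assuming $\mathcal{S}_j$ is an interval, and handling indices where the difference vanishes. The difference-quotient argument above, combined with the separate treatment of the zero case via injectivity, is how I intend to get past it.
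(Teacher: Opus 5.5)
Your argument is essentially the paper's own proof. The difference quotient tends to $\frac{\mathrm{d} u_j}{\mathrm{d} x}(\alpha_j)\neq 0$ (or to its reciprocal for $u_j^{-1}$, via \eqref{eq:Derivative_of_inverse_u_j}), the divergence invariant is read off as a contrapositive, and boundedness follows from continuity of $u_j^{-1}$ on a compact interval. Your subsequence treatment of indices with $x_{j,n}^*=\alpha_j$ is in fact slightly more complete than the paper's, which only treats the cases ``eventually $x_{j,n}^*\neq\alpha_j$'' and ``eventually $x_{j,n}^*=\alpha_j$''.

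One caveat concerns your stated motivation for avoiding the mean value theorem. Like the paper, you import from Remark~\ref{rem:Functions_u_j} that $u_j^{-1}$ is continuous on $\RR$. That already forces $\mathcal{S}_j=u_j^{-1}(\RR)$ to be connected, i.e.\ an interval. For a non-interval domain such as $\mathcal{S}_j=(-\infty,0]\cup(1,\infty)$, with $u_j(x)=x$ on the first piece and $u_j(x)=x-1$ on the second, $u_j^{-1}$ jumps at $0$: the sequence $y=1/n\to 0$ is mapped to $x=1+1/n\not\to 0$. So the extra generality beyond interval domains is not actually available to either proof.
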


\begin{proof}
	First of all, the vectors $\boldsymbol{\alpha}$ and $\boldsymbol{\gamma}$ are unique by Proposition~\ref{prop:Uniqueness_of_alpha_and_gamma}. Now, based on Assumption~\ref{assum:Functions_u_j}, if $x_{j,n}^* - \alpha_j = o(1)$ and $x_{j,n}^* \neq \alpha_j$ for sufficiently large $n$, then 
	\begin{equation*}
		\lim_{n \to \infty} \frac{y_{j,n}^* - \gamma_j}{x_{j,n}^* - \alpha_j} = \lim_{n \to \infty} \frac{u_j(x_{j,n}^*) - u_j(\alpha_j)}{x_{j,n}^* - \alpha_j} = \lim_{x \to \alpha_j} \frac{u_j(x) - u_j(\alpha_j)}{x - \alpha_j} = \frac{\mathrm{d} u_j}{\mathrm{d} x} (\alpha_j) \neq 0   
	\end{equation*}
	by the definition of derivative. Since the absolute-value function is continuous, we also have $\lim_{n \to \infty} \abs{ \frac{y_{j,n}^* - \gamma_j}{x_{j,n}^* - \alpha_j} } = \abs{ \frac{\mathrm{d} u_j}{\mathrm{d} x} (\alpha_j) } > 0$. By the $(\varepsilon,N)$-definition of limit, we obtain $y_{j,n}^* - \gamma_j = \Theta(x_{j,n}^* - \alpha_j) = o(1)$ as $n \to \infty$. Observe that the $\Theta$-estimate holds even if $x_{j,n}^* = \alpha_j$ for sufficiently large $n$, because this implies that $y_{j,n}^* = u_j(x_{j,n}^*) = u_j(\alpha_j) = \gamma_j$ for sufficiently large $n$. Therefore, \eqref{eq:Convergence-rate_invariant_x_to_y} has been proved.
	
	Conversely, given Assumption~\ref{assum:Functions_u_j}, if $y_{j,n}^* - \gamma_j = o(1)$ and $y_{j,n}^* \neq \gamma_j$ for large enough $n$, then 
	\begin{align*}
		\lim_{n \to \infty} \frac{x_{j,n}^* - \alpha_j}{y_{j,n}^* - \gamma_j} & = \lim_{n \to \infty} \frac{u_j^{-1}(y_{j,n}^*) - u_j^{-1}(\gamma_j)}{y_{j,n}^* - \gamma_j} = \lim_{y \to \gamma_j} \frac{u_j^{-1}(y) - u_j^{-1}(\gamma_j)}{y - \gamma_j}   \\ 
		& = \frac{\mathrm{d} u_j^{-1}}{\mathrm{d} y} (\gamma_j) = \left( \frac{\mathrm{d} u_j}{\mathrm{d} x} (\alpha_j) \right)^{-1} \neq 0  
	\end{align*}
	by the definition of derivative and the differentiation rule of inverse function~\eqref{eq:Derivative_of_inverse_u_j}. Due to the continuity of absolute-value function, we also have $\lim_{n \to \infty} \abs{ \frac{x_{j,n}^* - \alpha_j}{y_{j,n}^* - \gamma_j} } = \abs{ \frac{\mathrm{d} u_j}{\mathrm{d} x} (\alpha_j) }^{-1} > 0$. By the $(\varepsilon,N)$-definition of limit, we get $x_{j,n}^* - \alpha_j = \Theta(y_{j,n}^* - \gamma_j) = o(1)$ as $n \to \infty$. The $\Theta$-estimate holds even if $y_{j,n}^* = \gamma_j$ for large enough $n$, since this implies that $x_{j,n}^* = u_j^{-1}(y_{j,n}^*) = u_j^{-1}(\gamma_j) = \alpha_j$ for large enough $n$. Hence, we have derived \eqref{eq:Convergence-rate_invariant_y_to_x}. Moreover, \eqref{eq:Convergence-rate_invariant_y_to_x_with_h} is easily obtained from~\eqref{eq:Convergence-rate_invariant_y_to_x}. 
	
	The divergence invariant~\eqref{eq:Divergence_invariant} is a direct consequence of the convergence-rate invariant. In particular, we have 
	\begin{equation*}
		y_{j,n}^* - \gamma_j \neq o(1)  \overset{\eqref{eq:Convergence-rate_invariant_x_to_y}}{\implies}  x_{j,n}^* - \alpha_j \neq o(1)  
	\end{equation*}
	and
	\begin{equation*}
		x_{j,n}^* - \alpha_j \neq o(1)  \overset{\eqref{eq:Convergence-rate_invariant_y_to_x}}{\implies}  y_{j,n}^* - \gamma_j \neq o(1)   .
	\end{equation*} 
	
	Finally, regarding the boundedness invariant, $y_{j,n}^* - \gamma_j = O(1)$ as $n \to \infty$ means that there exist a real constant $c > 0$ and an integer $n_0 \geq 1$ such that $\abs{y_{j,n}^* - \gamma_j} \leq c$, or equivalently $y_{j,n}^* \in [\gamma_j - c , \gamma_j + c] \subset \RR$, for all integers $n \geq n_0$. By the \emph{Weierstrass boundedness theorem}\footnote{Let $f$ be a real-valued continuous function on a closed (and bounded) interval $[a,b] \subset \RR$, with $-\infty < a \leq b < \infty$. Then, $f$ is bounded on this interval, i.e., there exists an $M \in \RR_{\geq 0}$ such that $\abs{f(x)} \leq M$, for all $x \in [a,b]$.} and the continuity of $u_j^{-1}$ on $\RR$ (see Remark~\ref{rem:Functions_u_j}), there exists a real constant $c' > 0$ such that $\abs{x_{j,n}^* - \alpha_j} = \abs{u_j^{-1}(y_{j,n}^*) - \alpha_j} \leq c'$ for all integers $n \geq n_0$. In other words, $x_{j,n}^* - \alpha_j = O(1)$ as $n \to \infty$, thus establishing \eqref{eq:Boundedness_invariant}. 
\end{proof}

\begin{remark}[Based on the proof of boundedness invariant \eqref{eq:Boundedness_invariant}]
	Despite the continuity of the function $u_j$ on its domain $\mathcal{S}_j$, the converse statement of \eqref{eq:Boundedness_invariant}, that is,
	\begin{equation} \label{eq:Boundedness_invariant_converse}
		x_{j,n}^* - \alpha_j = O(1) \implies y_{j,n}^* - \gamma_j = O(1) ,
	\end{equation}
	is \emph{not} necessarily true. This is due to the fact that the intersection $[\alpha_j - c, \alpha_j + c] \cap \mathcal{S}_j$ may not be closed (since $\mathcal{S}_j$ is \emph{not} necessarily closed), and hence the \emph{Weierstrass boundedness theorem} is generally not applicable. Nevertheless, if $\mathcal{S}_j$ is \emph{closed}, then \eqref{eq:Boundedness_invariant_converse} is true as well; recall that the intersection of closed sets is closed, so the Weierstrass boundedness theorem can be applied.
\end{remark}

\subsection{Sliding Linear Least Squares (sLLSQ)}
\label{subsec:sLLSQ}

Given a fixed $m \in \NN$ (the length of the sliding window), let us choose the loss function to be the logarithmic relative distance between $\widehat{\mathbf{f}}(n;\mathbf{x})$ and $\mathbf{f}(n)$, that is,  
\begin{align} \label{eq:Loss_function_F_n}
	\begin{split}
		F_n(\mathbf{x}) \defeq & \sum_{i=0}^{m-1} \left( \log\left( \frac{\widehat{f}(n+i;\mathbf{x})}{f(n+i)} \right) \right)^2  \\ 
		= & \sum_{i=0}^{m-1} \left( \log\left( \widehat{f}(n+i;\mathbf{x}) \right) - \log\left( f(n+i) \right) \right)^2  \\
		= & \sum_{i=0}^{m-1} \left( \sum_{j=1}^k {\varphi_j(n+i) u_j(x_j)} - \log\left( f(n+i) \right) \right)^2  ,
	\end{split}
\end{align}
where the last expression is due to \eqref{eq:Proven_asymptotic_form}. Observe that $F_n$ is well-defined as $n \to \infty$, because $f(n), \widehat{f}(n;\cdot) > 0$ for sufficiently large $n$.\footnote{An alternative choice of the loss function would be: $F_n(\mathbf{x}) \defeq \sum_{i=0}^{m-1} \left(  \frac{\widehat{f}(n+i;\mathbf{x})}{f(n+i)} - 1 \right)^2$. However, the latter is not suitable for the particular form of $\widehat{f}$ in \eqref{eq:Proven_asymptotic_form}.} 

In addition, the function $F_n$ can be written in matrix form as 
\begin{equation*}
	F_n(\mathbf{x}) = \twonorm{\mathbf{A} \mathbf{u}(\mathbf{x}) - \mathbf{b}}^2  ,
\end{equation*}
where 
\begin{equation} \label{eq:Matrix_A}
	\mathbf{A} = \mathbf{A}(n) \defeq [\boldsymbol{\varphi}_1,\dots,\boldsymbol{\varphi}_k]  =
	\begin{bmatrix}
		\varphi_1(n) & \cdots & \varphi_k(n) \\ 
		\vdots & \ddots & \vdots \\
		\varphi_1(n+m-1) & \cdots & \varphi_k(n+m-1)
	\end{bmatrix}
	\in \RR^{m \times k} ,
\end{equation}
with $\boldsymbol{\varphi}_j = \boldsymbol{\varphi}_j(n) \defeq \transp{[\varphi_j(n),\dots,\varphi_j(n+m-1)]} \in \RR^m$, for all $j \in \{1,\dots,k\}$, the vector function $\mathbf{u}$ is given by~\eqref{eq:Vector_function_u}, and  
\begin{equation} \label{eq:Vector_b}
	\mathbf{b} = \mathbf{b}(n) \defeq \transp{[\log(f(n)),\dots,\log(f(n+m-1))]}  \in \RR^m  .
\end{equation}
Based on \eqref{eq:General_optimization_problem}, the original (constrained) optimization problem is now defined by  
\begin{align} \label{eq:Original_optimization_problem}
	\begin{split}
		& \minimize_{\mathbf{x}} \;\; F_n(\mathbf{x}) = \twonorm{\mathbf{A} \mathbf{u}(\mathbf{x}) - \mathbf{b}}^2   \\
		& \subjectto \;\; \mathbf{x} \in \mathcal{S}  .  
	\end{split}
\end{align}
Note that the constraint $\mathbf{x} \in \mathcal{S}$ can be decomposed into separable constraints, i.e., $x_j \in {\mathcal{S}_j}$, for all $j \in \{1,\dots,k\}$, because $\mathcal{S} \defeq {\prod_{j=1}^k {\mathcal{S}_j}}$. A vector $\mathbf{x}_n^* \in \mathcal{S}$ is called a \emph{globally optimal solution} to (or a \emph{global minimizer} of) problem~\eqref{eq:Original_optimization_problem} if, and only if, $F_n(\mathbf{x}_n^*) \leq F_n(\mathbf{x})$ for all $\mathbf{x} \in \mathcal{S}$. When an optimal solution $\mathbf{x}_n^*$ exists, we denote the achieved objective value by $F_n^*$, i.e., $F_n^* \defeq F_n(\mathbf{x}_n^*)$, which is called the \emph{global optimum/minimum} of problem~\eqref{eq:Original_optimization_problem}.    

By applying the \emph{bijective variable transformation} $\mathbf{y} = \mathbf{u}(\mathbf{x}) \iff \mathbf{x} = \mathbf{u}^{-1}(\mathbf{y})$, problem~\eqref{eq:Original_optimization_problem} can be transformed into an unconstrained optimization problem, that is, 
\begin{equation} \label{eq:sLLSQ_problem} 
	\minimize_{\mathbf{y} \in \RR^k} \; F_n(\mathbf{u}^{-1}(\mathbf{y})) = \twonorm{\mathbf{A} \mathbf{y} - \mathbf{b}}^2  . 
\end{equation}
The above problem is a classical \emph{Linear Least Squares (LLSQ)} problem as shown in \eqref{eq:LLSQ_problem}. The corresponding method is called \emph{``sliding'' LLSQ (sLLSQ)} because, for every $n \in \NN$, problem~\eqref{eq:sLLSQ_problem} depends on the values $\{f(n),\dots,f(n+m-1)\}$. The global minimum of problem~\eqref{eq:sLLSQ_problem} is equal to $F_n^*$, provided that an optimal solution exists. Strictly speaking, problems \eqref{eq:Original_optimization_problem} and \eqref{eq:sLLSQ_problem} are \emph{equivalent} in the following sense.

\begin{proposition}[Equivalence of problems~\eqref{eq:Original_optimization_problem}~and~\eqref{eq:sLLSQ_problem}]
	\label{prop:Equivalence_of_optimization_problems}
	Suppose that Assumption~\ref{assum:Functions_u_j} holds. If $\mathbf{x}_n^*$ is an optimal solution to problem~\eqref{eq:Original_optimization_problem}, then $\mathbf{y}_n^* \defeq \mathbf{u}(\mathbf{x}_n^*)$ is an optimal solution to problem~\eqref{eq:sLLSQ_problem}. Conversely, if $\mathbf{y}_n^*$ is an optimal solution to problem~\eqref{eq:sLLSQ_problem}, then $\mathbf{x}_n^* \defeq \mathbf{u}^{-1}(\mathbf{y}_n^*)$ is an optimal solution to problem~\eqref{eq:Original_optimization_problem}. 
\end{proposition}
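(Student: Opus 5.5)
The plan is to use nothing beyond the bijectivity of $\mathbf{u} \colon \mathcal{S} \to \RR^k$ from Assumption~\ref{assum:Functions_u_j} (see also Remark~\ref{rem:Functions_u_j}). The key identity is that the composite objective satisfies
\begin{equation*}
	F_n(\mathbf{u}^{-1}(\mathbf{y})) = \twonorm{\mathbf{A} \mathbf{u}(\mathbf{u}^{-1}(\mathbf{y})) - \mathbf{b}}^2 = \twonorm{\mathbf{A}\mathbf{y} - \mathbf{b}}^2
\end{equation*}
for every $\mathbf{y} \in \RR^k$. Together with $F_n(\mathbf{x}) = \twonorm{\mathbf{A}\mathbf{u}(\mathbf{x}) - \mathbf{b}}^2$ for $\mathbf{x} \in \mathcal{S}$, this says the two objectives agree along the correspondence $\mathbf{y} = \mathbf{u}(\mathbf{x})$. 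I would state this first, noting that $\mathbf{u}^{-1}$ maps $\RR^k$ onto $\mathcal{S}$ because each $u_j$ is a bijection $\mathcal{S}_j \to \RR$ and $\mathcal{S} = \prod_j \mathcal{S}_j$.

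For the forward direction, let $\mathbf{x}_n^*$ be optimal for \eqref{eq:Original_optimization_problem} and set $\mathbf{y}_n^* \defeq \mathbf{u}(\mathbf{x}_n^*) \in \RR^k$. Take an arbitrary $\mathbf{y} \in \RR^k$ and put $\mathbf{x} \defeq \mathbf{u}^{-1}(\mathbf{y}) \in \mathcal{S}$, which is feasible. Then
\begin{equation*}
	\twonorm{\mathbf{A}\mathbf{y}_n^* - \mathbf{b}}^2 = F_n(\mathbf{x}_n^*) \leq F_n(\mathbf{x}) = \twonorm{\mathbf{A}\mathbf{y} - \mathbf{b}}^2 ,
\end{equation*}
so $\mathbf{y}_n^*$ is optimal for \eqref{eq:sLLSQ_problem}.

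For the converse, let $\mathbf{y}_n^*$ be optimal for \eqref{eq:sLLSQ_problem} and set $\mathbf{x}_n^* \defeq \mathbf{u}^{-1}(\mathbf{y}_n^*) \in \mathcal{S}$. For any $\mathbf{x} \in \mathcal{S}$, put $\mathbf{y} \defeq \mathbf{u}(\mathbf{x})$. Then $F_n(\mathbf{x}_n^*) = \twonorm{\mathbf{A}\mathbf{y}_n^* - \mathbf{b}}^2 \leq \twonorm{\mathbf{A}\mathbf{y} - \mathbf{b}}^2 = F_n(\mathbf{x})$, which gives optimality in \eqref{eq:Original_optimization_problem}. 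As a byproduct, both problems attain the same optimum $F_n^*$.

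There is no real obstacle here. The only point needing care is that surjectivity of $\mathbf{u}$ onto all of $\RR^k$ is used in the forward direction (so every competitor $\mathbf{y}$ comes from a feasible $\mathbf{x}$). Injectivity is used implicitly to write $\mathbf{u}(\mathbf{u}^{-1}(\mathbf{y})) = \mathbf{y}$ and $\mathbf{u}^{-1}(\mathbf{u}(\mathbf{x})) = \mathbf{x}$. Differentiability is not needed for this statement.
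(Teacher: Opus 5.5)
Your proof is correct and takes the same approach as the paper, which simply says the result is a direct consequence of $\mathbf{u}$ being a bijection between $\mathcal{S}$ and $\RR^k$; you have written out the two inequality chains that this one-line justification leaves implicit. One minor imprecision: your argument only ever uses $\mathbf{u}(\mathbf{u}^{-1}(\mathbf{y}))=\mathbf{y}$, so injectivity never actually enters, but this changes nothing.
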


\begin{proof}
	Direct consequence of the fact that the variable transformation $\mathbf{u}$ is a \emph{bijection} between $\mathcal{S}$ and $\RR^k$. See also Remark~\ref{rem:Functions_u_j}. 
\end{proof}

As a result, problems~\eqref{eq:Original_optimization_problem}~and~\eqref{eq:sLLSQ_problem} have \emph{equinumerous sets of optimal solutions}, i.e., with the same cardinality. Subsequently, we will need an \emph{asymptotic rank assumption} to ensure that problem~\eqref{eq:sLLSQ_problem} admits a \emph{unique} (globally) optimal solution, for large enough $n$; the same applies to problem~\eqref{eq:Original_optimization_problem}.

\begin{assumption}[Asymptotic rank assumption]
	\label{assum:Asymptotic_rank_assumption}
	The matrix $\mathbf{A} = \mathbf{A}(n) \in \RR^{m \times k}$ given by \eqref{eq:Matrix_A} has full column rank, that is, $\rank{\mathbf{A}} = k$, for sufficiently large $n$ (thus $m \geq k$). 
\end{assumption}

\begin{assumption}[Equivalent asymptotic rank assumption]
	\label{assum:Equivalent_asymptotic_rank_assumption}
	There exists a function $h \colon \NN \to \RR_{>0}$ such that $\determ{\transp{\mathbf{A}} \mathbf{A}} = \Omega(h(n))$, as $n \to \infty$. 
\end{assumption}

\begin{proposition}
	\label{prop:Equivalence_of_asymptotic_rank_assumption}
	Assumption~\ref{assum:Asymptotic_rank_assumption} is equivalent to Assumption~\ref{assum:Equivalent_asymptotic_rank_assumption}. 
\end{proposition}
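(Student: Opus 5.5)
Both directions rest on the standard fact that a real matrix $\mathbf{A} \in \RR^{m \times k}$ has full column rank if and only if the Gram matrix $\transp{\mathbf{A}} \mathbf{A}$ is invertible, i.e., $\determ{\transp{\mathbf{A}} \mathbf{A}} \neq 0$. We will also use that $\transp{\mathbf{A}} \mathbf{A}$ is symmetric positive semidefinite, so its determinant (the product of its nonnegative eigenvalues) is always $\geq 0$. The kernel argument is one line: $\transp{\mathbf{A}} \mathbf{A} \mathbf{v} = \mathbf{0}$ implies $\twonorm{\mathbf{A}\mathbf{v}}^2 = \transp{\mathbf{v}} \transp{\mathbf{A}} \mathbf{A} \mathbf{v} = 0$, hence $\mathbf{A}\mathbf{v} = \mathbf{0}$. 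So the two matrices have the same null space, and $\rank{\mathbf{A}} = \rank{\transp{\mathbf{A}}\mathbf{A}}$.

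For Assumption~\ref{assum:Asymptotic_rank_assumption} $\implies$ Assumption~\ref{assum:Equivalent_asymptotic_rank_assumption}, take $n_0$ such that $\rank{\mathbf{A}(n)} = k$ for all $n \geq n_0$. Then $\determ{\transp{\mathbf{A}(n)}\mathbf{A}(n)} > 0$ for $n \geq n_0$. Define $h(n) \defeq \determ{\transp{\mathbf{A}(n)}\mathbf{A}(n)}$ for $n \geq n_0$ and $h(n) \defeq 1$ for $n < n_0$. This $h$ maps $\NN$ to $\RR_{>0}$, and trivially $\determ{\transp{\mathbf{A}}\mathbf{A}} = \Omega(h(n))$ with constant $c = 1$.

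For the converse, $\determ{\transp{\mathbf{A}}\mathbf{A}} = \Omega(h(n))$ means $h(n) = O(\determ{\transp{\mathbf{A}}\mathbf{A}})$. So there exist $c > 0$ and $n_0$ with $0 < h(n) \leq c\,\abs{\determ{\transp{\mathbf{A}}\mathbf{A}}}$ for $n \geq n_0$. Hence the determinant is nonzero for all $n \geq n_0$, so $\transp{\mathbf{A}}\mathbf{A}$ is invertible there, and by the rank identity $\rank{\mathbf{A}} = k$ for sufficiently large $n$. Here $m \geq k$ follows automatically from $\rank{\mathbf{A}} \leq \min(m,k)$.

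There is no real obstacle. The only point needing care is the trivial construction of a strictly positive $h$ on all of $\NN$ in the first direction, since the determinant may vanish for small $n$; this is why $h$ is patched to $1$ below $n_0$. The rest is the Gram-matrix rank fact.
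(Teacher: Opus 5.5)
Your proof is correct and follows the paper's own argument. The paper likewise uses $\rank{\mathbf{A}} = k \iff \determ{\transp{\mathbf{A}}\mathbf{A}} > 0$, sets $h(n) \defeq \determ{\transp{\mathbf{A}}\mathbf{A}}$ for $n \geq n_0$ and $h(n) \defeq 1$ below $n_0$, and gets the converse directly from the definition of $\Omega$. Your kernel argument and explicit unpacking of the $\Omega$-bound just fill in standard details that the paper leaves implicit.
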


\begin{proof}
	From linear algebra, we know that the condition $\rank{\mathbf{A}} = k$ can be written as $\determ{\transp{\mathbf{A}} \mathbf{A}} \neq 0$, or equivalently as $\determ{\transp{\mathbf{A}} \mathbf{A}} > 0$ (because $\transp{\mathbf{A}} \mathbf{A}$ is symmetric and positive semi-definite and so $\determ{\transp{\mathbf{A}} \mathbf{A}} \geq 0$).
	
	If Assumption~\ref{assum:Asymptotic_rank_assumption} is true, then there exists an $n_0 \in \NN$ such that $\determ{\transp{\mathbf{A}} \mathbf{A}} > 0$, for all integers $n \geq n_0$. We can choose $h(n) \defeq \determ{\transp{\mathbf{A}} \mathbf{A}}$ for $n \geq n_0$, and $h(n) \defeq 1$ for $n \in \{1,\dots,n_0-1\}$. Therefore, Assumption~\ref{assum:Equivalent_asymptotic_rank_assumption} is true as well. 
	
	The converse statement follows directly from the definition of $\Omega$-notation. 	
\end{proof}

\begin{remark}
	If Assumption~\ref{assum:Asymptotic_rank_assumption}/\ref{assum:Equivalent_asymptotic_rank_assumption} holds for some $m \in \NN$, then it also holds for all integers $m' > m$. This is because the columns of matrix $\mathbf{A}' = \mathbf{A}'(n) \in \RR^{m' \times k}$ remain linearly independent, for large enough $n$.   
\end{remark}

The main theorem about the sLLSQ method is given below, including asymptotic estimates, sufficient conditions for convergence, and guarantees on the rate of convergence (whenever this is achieved).  

\begin{theorem} [The sLLSQ method]
	\label{thm:The_sLLSQ_method}
	Suppose that Assumptions~\labelcref{assum:Asymptotic_expansion_and_form,assum:Functions_varphi_j,assum:Functions_u_j} and \ref{assum:Asymptotic_rank_assumption}/\ref{assum:Equivalent_asymptotic_rank_assumption} are satisfied. Then, we have the following statements: 
	\begin{enumerate}[label={\arabic*.}]
		\item Problem~\eqref{eq:sLLSQ_problem} has a \emph{unique} (globally) optimal solution for sufficiently large~$n$, that is,    
		\begin{equation} \label{eq:y_n_star_sLLSQ}
			\mathbf{y}_n^* = \inv{(\transp{\mathbf{A}} \mathbf{A})} \transp{\mathbf{A}} \mathbf{b} = \pseudoinv{\mathbf{A}} \mathbf{b}    ,
		\end{equation}
		where the matrix $\mathbf{A}$ and vector $\mathbf{b}$ are given by~\eqref{eq:Matrix_A}~and~\eqref{eq:Vector_b}, respectively, and $\pseudoinv{\mathbf{A}} \defeq \inv{(\transp{\mathbf{A}} \mathbf{A})} \transp{\mathbf{A}} \in \RR^{k \times m}$ is the pseudoinverse of $\mathbf{A}$ (defined for large enough $n$). In addition, problem~\eqref{eq:Original_optimization_problem} admits a \emph{unique} optimal solution, for sufficiently large $n$, which is given by
		\begin{equation} \label{eq:x_n_star}
			\mathbf{x}_n^* = \mathbf{u}^{-1}(\mathbf{y}_n^*)  \iff  x_{j,n}^* = u_j^{-1}(y_{j,n}^*)  , \;\;  \forall j \in \{1,\dots,k\} . 
		\end{equation} 
		
		\item The global minimum $F_n^*$ of problem~\eqref{eq:sLLSQ_problem} vanishes asymptotically with decay rate $O( \norm{\mathbf{g}_1}^2 )$, that is, 
		\begin{equation} \label{eq:F_n_star_asymptotic_estimate_sLLSQ}
			F_n^* = O( \norm{\mathbf{g}_1}^2 ) = o(1) ,  \quad \mathrm{as}\ n \to \infty .
		\end{equation} 		
		Moreover, the optimal solution $\mathbf{y}_n^*$ of problem~\eqref{eq:sLLSQ_problem} satisfies the asymptotic estimate, as $n \to \infty$, 
		\begin{equation} \label{eq:y_n_star_asymptotic_estimate_sLLSQ}
			\norm{\mathbf{y}_n^* - \boldsymbol{\gamma}} = O( \eta(n) )  , 
		\end{equation}
		where 
		\begin{equation} \label{eq:Function_eta}
			\eta(n) \defeq \max\left( \norm{\pseudoinv{\mathbf{A}} \log\left( 1 + \sum_{l=1}^p {\beta_l \mathbf{g}_l} \right)}, \norm{\pseudoinv{\mathbf{A}}}  \norm{\mathbf{g}_{p+1}}  \right)   
		\end{equation}
		and 
		\begin{equation} \label{eq:Vector_g_l}
			\mathbf{g}_l = \mathbf{g}_l(n) \defeq \transp{[g_l(n),\dots,g_l(n+m-1)]} \in \RR^m , \;\; \forall l \in \{1,\dots,p+1\}    .
		\end{equation}
		Specifically, 
		\begin{equation} \label{eq:y_jn_star_asymptotic_estimate_sLLSQ}
			y_{j,n}^* - \gamma_j = O( \vartheta_j(n) )   , 
		\end{equation}
		where 
		\begin{equation} \label{eq:Function_vartheta_j}
			\vartheta_j(n) \defeq \max\left( \abs{\pseudoinv{\mathbf{A}}(j,:) \log\left( 1 + \sum_{l=1}^p {\beta_l \mathbf{g}_l} \right)}, \norm{\pseudoinv{\mathbf{A}}(j,:)} \norm{\mathbf{g}_{p+1}}  \right)  ,
		\end{equation} 
		for all $j \in \{1,\dots,k\}$. As a result, if $\eta(n) = o(1)$ (resp.~$\vartheta_j(n) = o(1)$), then $\lim_{n \to \infty} \mathbf{y}_n^* = \boldsymbol{\gamma}$ (resp.~$\lim_{n \to \infty} y_{j,n}^* = \gamma_j$) and the convergence rate is given by~\eqref{eq:y_n_star_asymptotic_estimate_sLLSQ} (resp.~\eqref{eq:y_jn_star_asymptotic_estimate_sLLSQ}).  
		
		\item The optimal solution $\mathbf{x}_n^*$ of problem~\eqref{eq:Original_optimization_problem}, given by~\eqref{eq:x_n_star}, satisfies the following implications, as $n \to \infty$ and for all $j \in \{1,\dots,k\}$,  
		\begin{equation} \label{eq:Implication_o_sLLSQ}
				\vartheta_j(n) = o(1)  \implies  x_{j,n}^* - \alpha_j = \Theta( y_{j,n}^* - \gamma_j ) = O( \vartheta_j(n) ) = o(1)   ,
		\end{equation} 
		\begin{equation} \label{eq:Implication_O_sLLSQ}
			\vartheta_j(n) = O(1)  \implies  x_{j,n}^* - \alpha_j = O(1)  .
		\end{equation}    
	\end{enumerate}
\end{theorem}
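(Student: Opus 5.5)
Part 1 follows from standard linear least-squares theory. By Assumption~\ref{assum:Asymptotic_rank_assumption} (equivalently Assumption~\ref{assum:Equivalent_asymptotic_rank_assumption}, via Proposition~\ref{prop:Equivalence_of_asymptotic_rank_assumption}), $\transp{\mathbf{A}}\mathbf{A}$ is positive definite for large $n$. Hence $\twonorm{\mathbf{A}\mathbf{y}-\mathbf{b}}^2$ is strictly convex in $\mathbf{y}$. Its unique minimizer solves the normal equations $\transp{\mathbf{A}}\mathbf{A}\mathbf{y}=\transp{\mathbf{A}}\mathbf{b}$, which gives \eqref{eq:y_n_star_sLLSQ}. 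Uniqueness and the formula \eqref{eq:x_n_star} for problem~\eqref{eq:Original_optimization_problem} then follow from Proposition~\ref{prop:Equivalence_of_optimization_problems}, because $\mathbf{u}$ is a bijection.

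The core of Part 2 is an exact decomposition of $\mathbf{b}$. Take logarithms in \eqref{eq:Proven_asymptotic_expansion} using \eqref{eq:Proven_asymptotic_form_with_gamma}, and write the $O$-term as a function $r(n)$ with $\abs{r(n)}\le c\abs{g_{p+1}(n)}$. This gives
\[
\mathbf{b}=\mathbf{A}\boldsymbol{\gamma}+\log\Bigl(1+\sum_{l=1}^p\beta_l\mathbf{g}_l+\mathbf{r}\Bigr),
\]
valid for large $n$ because all $g_l=o(1)$, so the argument is eventually positive. I then split the logarithm as
\[
\log\Bigl(1+\sum_{l}\beta_l\mathbf{g}_l+\mathbf{r}\Bigr)=\log\Bigl(1+\sum_{l}\beta_l\mathbf{g}_l\Bigr)+\log\Bigl(1+\frac{\mathbf{r}}{1+\sum_l\beta_l\mathbf{g}_l}\Bigr),
\]
taken componentwise. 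The second term is $O(\mathbf{g}_{p+1})$ componentwise, since $\log(1+t)=O(t)$ as $t\to0$ and the denominator tends to $1$. The window length $m$ is fixed, so componentwise bounds transfer to any norm, and the bound becomes $O(\norm{\mathbf{g}_{p+1}})$.

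Next I apply $\pseudoinv{\mathbf{A}}$ and use $\pseudoinv{\mathbf{A}}\mathbf{A}=\mathbf{I}$. This yields
\[
\mathbf{y}_n^*-\boldsymbol{\gamma}=\pseudoinv{\mathbf{A}}\log\Bigl(1+\sum_l\beta_l\mathbf{g}_l\Bigr)+\pseudoinv{\mathbf{A}}\,O(\mathbf{g}_{p+1}).
\]
The triangle inequality and the bound $\norm{\pseudoinv{\mathbf{A}}\mathbf{v}}\le\norm{\pseudoinv{\mathbf{A}}}\norm{\mathbf{v}}$ then give $O(\eta(n))$, since a sum of two terms is at most twice their maximum. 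Repeating the argument row by row with $\pseudoinv{\mathbf{A}}(j,:)$ gives \eqref{eq:y_jn_star_asymptotic_estimate_sLLSQ}.

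For the estimate on $F_n^*$, I avoid $\pseudoinv{\mathbf{A}}$ entirely. Optimality gives $F_n^*\le F_n(\boldsymbol{\alpha})=\twonorm{\mathbf{A}\boldsymbol{\gamma}-\mathbf{b}}^2$. By the decomposition above, this equals the squared norm of the log vector, which is $O(\norm{\mathbf{g}_1}^2)$. The reason is that each entry is $O(\abs{g_1(n+i)})$: the scale is decreasing and $\log(1+t)=O(t)$. The bound is $o(1)$ because $m$ is fixed and $g_1=o(1)$. Shifted arguments $n+i$ cause no trouble, since every $O$/$o$ statement at $n+i$ with $i<m$ still holds for large $n$.

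Part 3 is immediate from Proposition~\ref{prop:Asymptotic_invariants}. The implication \eqref{eq:Implication_o_sLLSQ} is the convergence-rate invariant \eqref{eq:Convergence-rate_invariant_y_to_x_with_h} applied with $h_j=\vartheta_j$. The implication \eqref{eq:Implication_O_sLLSQ} is the boundedness invariant \eqref{eq:Boundedness_invariant}, because $y_{j,n}^*-\gamma_j=O(\vartheta_j)=O(1)$.

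The main obstacle is bookkeeping rather than anything conceptual. The delicate step is the exact logarithmic decomposition: the $O(g_{p+1})$ remainder has to be separated cleanly from the explicit expansion terms. The estimate must keep $\pseudoinv{\mathbf{A}}$ applied to the explicit log vector rather than bounding it crudely, since that is what $\eta$ and $\vartheta_j$ require. Alongside this, I must ensure that all componentwise $O$-constants are uniform over the finite window.
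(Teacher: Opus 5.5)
Your proposal is correct and follows essentially the same route as the paper. Part~1 uses the closed-form LLSQ solution together with Proposition~\ref{prop:Equivalence_of_optimization_problems}. Part~2 uses the bound $F_n^* \le F_n(\boldsymbol{\alpha})$ and the decomposition $\mathbf{b} = \mathbf{A}\boldsymbol{\gamma} + \log(1+\sum_l \beta_l \mathbf{g}_l) + \mathbf{z}$ with $\norm{\mathbf{z}} = O(\norm{\mathbf{g}_{p+1}})$, followed by $\pseudoinv{\mathbf{A}}\mathbf{A} = \mathbf{I}$. Part~3 uses the convergence-rate and boundedness invariants of Proposition~\ref{prop:Asymptotic_invariants}.

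The differences are cosmetic. You derive uniqueness from strict convexity instead of citing Lemma~\ref{lem:Sufficient_condition_for_unique_solution_LLSQ}. Also, for an arbitrary (non-induced) norm the step $\norm{\pseudoinv{\mathbf{A}}\mathbf{v}} \le \norm{\pseudoinv{\mathbf{A}}}\norm{\mathbf{v}}$ holds only up to a constant, which is exactly what Lemma~\ref{lem:Universal_sub-multiplicative_property} supplies.
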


\begin{proof}
	See Appendix~\ref{subsec:Proof_of_the_sLLSQ_theorem}. 	
\end{proof}

\begin{remark}
	The implication's hypothesis in~\eqref{eq:Implication_o_sLLSQ}, i.e., $\vartheta_j(n) = o(1)$, is a \emph{sufficient condition for convergence} of $x_{j,n}^*$ to $\alpha_j$. When this condition is satisfied, the $O$-estimate in~\eqref{eq:Implication_o_sLLSQ} is essentially a convergence-rate guarantee. 
\end{remark}

\begin{definition} \label{def:Asymptotic_regularity}
	We say that a function $g \colon \NN \to \RR$ is \emph{asymptotically regular} if, and only if, 
	\begin{equation*}
		g(n+i) = O(g(n))  \quad \mathrm{as}\ n \to \infty , \quad \forall i \in \NN_0 .
	\end{equation*}
\end{definition}

\begin{remark} \label{rem:Examples_of_asymptotically_regular_functions}
	For example, the function $g(n) \defeq n^{-\sigma} = o(1)$ or $g(n) \defeq (-1)^n \, n^{-\sigma} = o(1)$, where $\sigma \in \RR_{>0}$, is asymptotically regular. In particular, for every $i \in \NN_0$, we have $\lim_{n \to \infty} \frac{\abs{g(n+i)}}{\abs{g(n)}} = \lim_{n \to \infty} \left( 1 + \frac{i}{n} \right)^{-\sigma} = 1$ and, by the limit definition, $g(n+i) = \Theta(g(n)) = O(g(n))$ as $n \to \infty$. 
	
	The same holds for $g(n) \defeq \tau^{-n} = o(1)$ or $g(n) \defeq (-1)^n \, \tau^{-n} = o(1)$, where $\tau \in \RR$ with $\tau > 1$. Specifically, $\lim_{n \to \infty} \frac{\abs{g(n+i)}}{\abs{g(n)}} = \tau^{-i} > 0$, which again gives $g(n+i) = \Theta(g(n)) = O(g(n))$ as $n \to \infty$, for all $i \in \NN_0$. 
\end{remark}

\begin{lemma} \label{lem:Asymptotic_regularity}
	If the function $g_l \colon \NN \to \RR$, $l \in \{1,\dots,p+1\}$, is asymptotically regular, then   
	\begin{equation*} 
		\norm{\mathbf{g}_l} = \Theta(g_l(n)) , \quad \mathrm{as}\ n \to \infty  ,
	\end{equation*}
	where $\mathbf{g}_l$ is given by~\eqref{eq:Vector_g_l} with fixed $m \in \NN$. 
\end{lemma}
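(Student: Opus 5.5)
Since $m$ is fixed, all norms on $\RR^m$ are equivalent. So we may work with the max-norm
\begin{equation*}
	\infnorm{\mathbf{g}_l} = \max_{0 \leq i \leq m-1} \abs{g_l(n+i)}
\end{equation*}
and transfer the resulting $\Theta$-estimate to an arbitrary norm $\norm{\cdot}$. Equivalence gives constants $c_1,c_2>0$, independent of $n$, with $c_1\infnorm{\mathbf{v}} \leq \norm{\mathbf{v}} \leq c_2 \infnorm{\mathbf{v}}$ for all $\mathbf{v}\in\RR^m$, and a $\Theta$-bound for $\infnorm{\mathbf{g}_l}$ then yields one for $\norm{\mathbf{g}_l}$ with modified constants. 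It therefore suffices to show $c\,\abs{g_l(n)} \leq \infnorm{\mathbf{g}_l} \leq C\,\abs{g_l(n)}$ for all sufficiently large $n$.

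\emph{Lower bound.} This is immediate: the first component of $\mathbf{g}_l$ is $g_l(n)$, so $\infnorm{\mathbf{g}_l} \geq \abs{g_l(n)}$ for every $n$. Hence $\infnorm{\mathbf{g}_l} = \Omega(g_l(n))$ with constant $1$, and no regularity is needed.

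\emph{Upper bound.} By asymptotic regularity (Definition~\ref{def:Asymptotic_regularity}), for each $i \in \{0,\dots,m-1\}$ there are $c^{(i)}>0$ and $n^{(i)}\in\NN$ such that $\abs{g_l(n+i)} \leq c^{(i)} \abs{g_l(n)}$ for all $n \geq n^{(i)}$. Set $C \defeq \max_i c^{(i)}$ and $n_0 \defeq \max_i n^{(i)}$. These maxima are over finitely many indices because $m$ is fixed, and this is exactly where fixedness of $m$ matters. For $n \geq n_0$ this gives $\infnorm{\mathbf{g}_l} \leq C\abs{g_l(n)}$, so $\infnorm{\mathbf{g}_l} = O(g_l(n))$.

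Combining the two bounds and applying norm equivalence gives $\norm{\mathbf{g}_l} = \Theta(g_l(n))$. No step is a genuine obstacle. The only points needing care are making the uniformity over $i$ explicit via the finite maximum, and noting that the $\Theta$-notation in the paper is stated in terms of absolute values, which matches the comparison with $\abs{g_l(n)}$ used above.
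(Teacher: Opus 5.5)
Your proof is correct and follows the paper's argument: you reduce to the max-norm via norm equivalence, get the upper bound from asymptotic regularity applied to the finitely many shifts $i \in \{0,\dots,m-1\}$, and get the lower bound from the first component (the definition of the max-norm). The only difference is that you make the constants and the uniformity over $i$ explicit, which the paper leaves implicit in its nested $O$-notation.
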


\begin{proof}
	By the norm equivalence (Lemma~\ref{lem:Norm_equivalence}) and the asymptotic regularity of $g_l$, we obtain 
	\begin{align*}
		\norm{\mathbf{g}_l} & = O\left( \maxnorm{\mathbf{g}_l} \right) = O\left( \max\limits_{0 \leq i \leq m-1} {\abs{g_l(n+i)}} \right)   \\
		& = O\left( \max\limits_{0 \leq i \leq m-1} O(g_l(n)) \right) = O(g_l(n))  . 
	\end{align*} 
	In addition, it holds that $\norm{\mathbf{g}_l} = \Omega\left( \maxnorm{\mathbf{g}_l} \right) = \Omega(g_l(n))$ due to the norm equivalence (Lemma~\ref{lem:Norm_equivalence}) and the definition of maximum norm.
\end{proof}

\begin{remark} \label{rem:Asymptotic_regularity}
	If the function $g_{p+1}$ is asymptotically regular, then the term $\norm{\mathbf{g}_{p+1}}$ can be replaced by $\abs{g_{p+1}(n)}$ in~\eqref{eq:Function_eta}~and~\eqref{eq:Function_vartheta_j}, without affecting the asymptotic estimates due to Lemma~\ref{lem:Asymptotic_regularity}. To be more precise, in this case we have, as $n \to \infty$, 
		\begin{align*}
			\eta(n) & = \Theta\left( \max\left( \norm{\pseudoinv{\mathbf{A}} \log\left( 1 + \sum_{l=1}^p {\beta_l \mathbf{g}_l} \right)}, \norm{\pseudoinv{\mathbf{A}}} \abs{g_{p+1}(n)} \right) \right)   ,   \\
			\vartheta_j(n) & = \Theta\left( \max\left( \abs{\pseudoinv{\mathbf{A}}(j,:) \log\left( 1 + \sum_{l=1}^p {\beta_l \mathbf{g}_l} \right)}, \norm{\pseudoinv{\mathbf{A}}(j,:)} \abs{g_{p+1}(n)} \right) \right)   .
		\end{align*} 
		Similar conclusion applies to the term $\norm{\mathbf{g}_1}$ in~\eqref{eq:F_n_star_asymptotic_estimate_sLLSQ}, i.e., replacement with $\abs{g_1(n)}$.  	
\end{remark}

If we have information about higher-order asymptotics (i.e., larger $p$), then it may be possible to: 1) derive sharper asymptotic bounds, and therefore weaker sufficient conditions for convergence, 2) provide faster convergence rates, 3) or even prove the convergence of sLLSQ, especially when this is not possible with knowledge of lower-order asymptotics. 

\begin{proposition}[Impact of the asymptotics order]
	\label{prop:Impact_of_the_asymptotics_order}
	Let us define the functions 
	\begin{equation*} 
		\eta(n;p) \defeq \max\left( \norm{\pseudoinv{\mathbf{A}} \log\left( 1 + \sum_{l=1}^p {\beta_l \mathbf{g}_l} \right)}, \norm{\pseudoinv{\mathbf{A}}} \norm{\mathbf{g}_{p+1}} \right) ,  
	\end{equation*}
	\begin{equation*}
		\vartheta_j(n;p) \defeq \max\left( \abs{\pseudoinv{\mathbf{A}}(j,:) \log\left( 1 + \sum_{l=1}^p {\beta_l \mathbf{g}_l} \right)}, \norm{\pseudoinv{\mathbf{A}}(j,:)} \norm{\mathbf{g}_{p+1}}  \right)   .
	\end{equation*}
	Based on~\eqref{eq:Function_eta}~and~\eqref{eq:Function_vartheta_j}, we have just included the asymptotics order $p \in \NN_0$ as an extra argument in the functions $\eta$ and $\vartheta_j$. Then we have, as $n \to \infty$, 
	\begin{equation*}
		\eta(n;p) = O(\eta(n;p-1)) ,
	\end{equation*}
	\begin{equation*}
		\vartheta_j(n;p) = O(\vartheta_j(n;p-1)) ,
	\end{equation*}
	for all $p \in \NN$ and $j \in \{1,\dots,k\}$.  
\end{proposition}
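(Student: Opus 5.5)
We compare the two maxima term by term, writing $\mathbf{s}_p \defeq \sum_{l=1}^p \beta_l \mathbf{g}_l$, so that $\mathbf{s}_p = \mathbf{s}_{p-1} + \beta_p \mathbf{g}_p$. The plan is to bound each of the two entries of $\eta(n;p)$ by $O(\eta(n;p-1))$; the case of $\vartheta_j(n;p)$ is identical with $\pseudoinv{\mathbf{A}}$ replaced by the row $\pseudoinv{\mathbf{A}}(j,:)$ and the outer norm by the absolute value. Everything is for fixed $m$ and large $n$, so Assumption~\ref{assum:Asymptotic_rank_assumption} makes $\pseudoinv{\mathbf{A}}$ well defined.

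\textbf{Second entry.} Since $g_{p+1}(n+i) = o(g_p(n+i))$ for each fixed $i \in \{0,\dots,m-1\}$, each component of $\mathbf{g}_{p+1}$ is eventually bounded in absolute value by the corresponding component of $\mathbf{g}_p$. Using the norm equivalence (Lemma~\ref{lem:Norm_equivalence}) and the max norm, this gives $\norm{\mathbf{g}_{p+1}} = O(\norm{\mathbf{g}_p})$. Hence $\norm{\pseudoinv{\mathbf{A}}}\norm{\mathbf{g}_{p+1}} = O(\norm{\pseudoinv{\mathbf{A}}}\norm{\mathbf{g}_p}) = O(\eta(n;p-1))$.

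\textbf{First entry.} By the mean value theorem (or $\log(1+t)-\log(1+s) = (t-s)/(1+\xi)$ with $\xi$ between $s$ and $t$), and since all $g_l(n+i) = o(1)$ so that $\mathbf{s}_p,\mathbf{s}_{p-1} \to \mathbf{0}$, we get, componentwise and for large $n$,
\begin{equation*}
\log(1+\mathbf{s}_p) = \log(1+\mathbf{s}_{p-1}) + \beta_p \mathbf{D}_n \mathbf{g}_p ,
\end{equation*}
where $\mathbf{D}_n$ is a diagonal matrix with entries in $[1/2,2]$. Then, by the triangle inequality,
\begin{equation*}
\norm{\pseudoinv{\mathbf{A}}\log(1+\mathbf{s}_p)} \le \norm{\pseudoinv{\mathbf{A}}\log(1+\mathbf{s}_{p-1})} + \abs{\beta_p}\,\norm{\pseudoinv{\mathbf{A}}}\,\norm{\mathbf{D}_n}\,\norm{\mathbf{g}_p} .
\end{equation*}
With a submultiplicative (induced) norm and $\norm{\mathbf{D}_n}$ bounded, or simply by invoking norm equivalence for any other norm, the right-hand side is $O(\eta(n;p-1))$. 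The row version works the same way, since $\abs{\pseudoinv{\mathbf{A}}(j,:)\mathbf{D}_n\mathbf{g}_p} \le \norm{\pseudoinv{\mathbf{A}}(j,:)}\,\norm{\mathbf{D}_n\mathbf{g}_p}$ up to constants, and $\norm{\mathbf{D}_n\mathbf{g}_p} = O(\norm{\mathbf{g}_p})$.

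Taking the maximum of the two bounds gives $\eta(n;p) = O(\eta(n;p-1))$, and likewise for $\vartheta_j$. The main (mild) obstacle is the norm bookkeeping: making sure the constants from norm equivalence and from the mean-value factor $\mathbf{D}_n$ are independent of $n$, and handling the case $p=1$, where $\mathbf{s}_0 = \mathbf{0}$ and the first entry of $\eta(n;0)$ is zero.
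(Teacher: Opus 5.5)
Your proposal is correct and follows essentially the same route as the paper: you bound $\norm{\mathbf{g}_{p+1}}$ by $O(\norm{\mathbf{g}_p})$ via the max norm, split $\log(1+\mathbf{s}_p)$ into $\log(1+\mathbf{s}_{p-1})$ plus a remainder that is componentwise $O(g_p(n+i))$, and conclude with the triangle inequality and (universal) sub-multiplicativity. The only difference is cosmetic: you control the remainder with the mean value theorem (your diagonal factor $\mathbf{D}_n$), whereas the paper writes each remainder component as $\log\bigl(1 + \beta_p g_p(n+i)/(1+s_{p-1,i})\bigr)$ and applies $\log(1+x)=O(x)$.
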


\begin{proof}
	Since all norms are equivalent (Lemma~\ref{lem:Norm_equivalence}) and $g_{p+1}(n) = o(g_p(n)) = O(g_p(n))$, we deduce that 
	\begin{equation*}
		\norm{\mathbf{g}_{p+1}} = O(\maxnorm{\mathbf{g}_{p+1}}) = O(\maxnorm{\mathbf{g}_p}) = O(\norm{\mathbf{g}_p})  , 
	\end{equation*}
	\begin{equation*}
		\log\left( 1 + \sum_{l=1}^p {\beta_l \mathbf{g}_l} \right) = \log\left( 1 + \sum_{l=1}^{p-1} {\beta_l \mathbf{g}_l} + \beta_p \mathbf{g}_p \right) = \log\left( 1 + \sum_{l=1}^{p-1} {\beta_l \mathbf{g}_l} \right) + \mathbf{r}  , 
	\end{equation*}
	where the vector $\mathbf{r} \in \RR^m$ is given by
	\begin{align*}
		\mathbf{r} & = \transp{\left[ \log\left( 1 + \frac{\beta_p g_p(n)}{1 + \sum_{l=1}^{p-1} {\beta_l g_l(n)}} \right) , \dots, \log\left( 1 + \frac{\beta_p g_p(n+m-1)}{1 + \sum_{l=1}^{p-1} {\beta_l g_l(n+m-1)}} \right) \right]}    \\
		& = \transp{\left[ \log\left( 1 + \frac{O(g_p(n))}{\Omega(1)} \right) , \dots, \log\left( 1 + \frac{O(g_p(n+m-1))}{\Omega(1)} \right) \right]}  \\
		& = \transp{\left[ \log\left( 1 + O(g_p(n)) \right) , \dots, \log\left( 1 + O(g_p(n+m-1)) \right) \right]}  \\
		& = \transp{\left[ O(g_p(n)), \dots, O(g_p(n+m-1)) \right]}  .    
	\end{align*}
	In the last equality, we have used the asymptotic expansion $\log(1+x) = O(x)$ as $x \to 0$. Hence, $\norm{\mathbf{r}} = O(\maxnorm{\mathbf{r}}) = O(\maxnorm{\mathbf{g}_p}) = O(\norm{\mathbf{g}_p})$ and, by the triangle inequality, we obtain  
	\begin{align*}
		\norm{\pseudoinv{\mathbf{A}} \log\left( 1 + \sum_{l=1}^p {\beta_l \mathbf{g}_l} \right)} & = \norm{\pseudoinv{\mathbf{A}} \log\left( 1 + \sum_{l=1}^{p-1} {\beta_l \mathbf{g}_l} \right) + \pseudoinv{\mathbf{A}} \mathbf{r}}   \\
		& \leq \norm{\pseudoinv{\mathbf{A}} \log\left( 1 + \sum_{l=1}^{p-1} {\beta_l \mathbf{g}_l} \right)} + \norm{\pseudoinv{\mathbf{A}} \mathbf{r}}   \\
		& = \norm{\pseudoinv{\mathbf{A}} \log\left( 1 + \sum_{l=1}^{p-1} {\beta_l \mathbf{g}_l} \right)} + O\left( \norm{\pseudoinv{\mathbf{A}}} \norm{\mathbf{r}} \right)  \\
		& = \norm{\pseudoinv{\mathbf{A}} \log\left( 1 + \sum_{l=1}^{p-1} {\beta_l \mathbf{g}_l} \right)} + O\left( \norm{\pseudoinv{\mathbf{A}}} \norm{\mathbf{g}_p} \right)   \\
		& = O\left( \max\left( \norm{\pseudoinv{\mathbf{A}} \log\left( 1 + \sum_{l=1}^{p-1} {\beta_l \mathbf{g}_l} \right)}, \norm{\pseudoinv{\mathbf{A}}} \norm{\mathbf{g}_p} \right) \right)   .
	\end{align*}
	By combining the above, we conclude that 
	\begin{align*}
		& \max\left( \norm{\pseudoinv{\mathbf{A}} \log\left( 1 + \sum_{l=1}^p {\beta_l \mathbf{g}_l} \right)}, \norm{\pseudoinv{\mathbf{A}}} \norm{\mathbf{g}_{p+1}} \right)   \\ = & \, O\left( \max\left( \norm{\pseudoinv{\mathbf{A}} \log\left( 1 + \sum_{l=1}^{p-1} {\beta_l \mathbf{g}_l} \right)}, \norm{\pseudoinv{\mathbf{A}}} \norm{\mathbf{g}_p} \right) \right)  .
	\end{align*}
	In other words, $\eta(n;p) = O(\eta(n;p-1))$. Recall that the matrix $\mathbf{A}$, and so $\pseudoinv{\mathbf{A}}$, is independent of $p$ and the functions $\{g_l\}_{l=1}^{p+1}$. The per-component estimate involving the function $\vartheta_j$ can be derived analogously, i.e., just replace $\pseudoinv{\mathbf{A}}$ with $\pseudoinv{\mathbf{A}}(j,:)$. 
\end{proof}

\begin{corollary}
	Recursively, we obtain 
	\begin{equation*}
		\eta(n;p) = O(\eta(n;0)) = O\left( \norm{\pseudoinv{\mathbf{A}}} \norm{\mathbf{g}_1} \right) ,
	\end{equation*}
	\begin{equation*}
		\vartheta_j(n;p) = O(\vartheta_j(n;0)) = O\left( \norm{\pseudoinv{\mathbf{A}}(j,:)} \norm{\mathbf{g}_1} \right) ,
	\end{equation*}
	for all $p \in \NN$ and $j \in \{1,\dots,k\}$ (note that $\sum_{l=1}^0 {\beta_l \mathbf{g}_l} = \mathbf{0}$ by convention).
\end{corollary}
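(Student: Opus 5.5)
The corollary follows by induction on $p$ from Proposition~\ref{prop:Impact_of_the_asymptotics_order}, followed by evaluating the base case $p=0$ explicitly. Fix $j \in \{1,\dots,k\}$ and treat $\eta$ and $\vartheta_j$ in parallel; the argument for $\vartheta_j$ is identical after replacing $\pseudoinv{\mathbf{A}}$ by the row $\pseudoinv{\mathbf{A}}(j,:)$.

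For the induction, the claim $\eta(n;p) = O(\eta(n;0))$ is trivial for $p=0$. Assuming it for $p-1$, Proposition~\ref{prop:Impact_of_the_asymptotics_order} gives $\eta(n;p) = O(\eta(n;p-1))$, and transitivity of the $O$-relation yields $\eta(n;p) = O(\eta(n;0))$. The $O$-constants multiply along the chain, but only finitely many steps occur for each fixed $p$. The threshold indices $n_0$ are handled by taking their maximum. One subtlety to check is that the hypotheses of Proposition~\ref{prop:Impact_of_the_asymptotics_order} at level $p-1$ make sense when the expansion is only assumed at order $p$. The proposition is a statement about the explicitly defined functions $\eta(n;\cdot)$, using only that $\{g_l\}$ is a decreasing asymptotic scale, so it applies for every $p' \le p$.

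For the base case, set $p=0$. By the convention $\sum_{l=1}^0 \beta_l \mathbf{g}_l = \mathbf{0}$, we have $\log(\mathbf{1}+\mathbf{0}) = \mathbf{0}$, so the first argument of the max vanishes. This gives $\eta(n;0) = \max\bigl(0, \norm{\pseudoinv{\mathbf{A}}}\norm{\mathbf{g}_1}\bigr) = \norm{\pseudoinv{\mathbf{A}}}\norm{\mathbf{g}_1}$, and likewise $\vartheta_j(n;0) = \norm{\pseudoinv{\mathbf{A}}(j,:)}\norm{\mathbf{g}_1}$. These equalities hold exactly, hence in particular as $O$-estimates. Combining with the inductive step proves both displays.

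No real obstacle is expected. The only point needing care is making explicit that the chain of $O$-estimates has length $p$, which is fixed and independent of $n$, so the composite constant is finite.
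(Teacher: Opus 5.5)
Your proof is correct and follows the same route as the paper, which says only ``recursively'': iterate Proposition~\ref{prop:Impact_of_the_asymptotics_order} $p$ times and chain the estimates by transitivity of $O$. At $p=0$, the empty-sum convention gives $\log(\mathbf{1}+\mathbf{0})=\mathbf{0}$, so the first entry of each max vanishes and $\eta(n;0)=\norm{\pseudoinv{\mathbf{A}}}\norm{\mathbf{g}_1}$ and $\vartheta_j(n;0)=\norm{\pseudoinv{\mathbf{A}}(j,:)}\norm{\mathbf{g}_1}$ hold exactly.
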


Next, we study how the length of the sliding window, $m \in \NN$, affects the convergence of the sLLSQ method. In particular, given that sLLSQ converges to $\boldsymbol{\gamma}$ for some $m \in \NN$, does it remain convergent (to $\boldsymbol{\gamma}$) when $m$ increases to an $m' > m$? According to the following proposition, the answer is \emph{affirmative} provided that the function $g_1$ is asymptotically regular and $\norm{\pseudoinv{\mathbf{A}}} g_1(n) = o(1)$ as $n \to \infty$.   

\begin{proposition}[Impact of the sliding window length]
	\label{prop:Impact_of_the_sliding_window_length}
	Let $m,m' \in \NN$ be fixed, with $m' > m$. Let $\mathbf{A}_{\mathrm{new}} = \mathbf{A}_{\mathrm{new}}(n) \in \RR^{m' \times k}$ be analogously defined as $\mathbf{A} = \mathbf{A}(n) \in \RR^{m \times k}$ in~\eqref{eq:Matrix_A}. Suppose that Assumption~\ref{assum:Asymptotic_rank_assumption}/\ref{assum:Equivalent_asymptotic_rank_assumption} is satisfied. Then, we have 
	\begin{equation} \label{eq:O-estimate_on_A_new}
		\norm{\pseudoinv{\mathbf{A}_{\mathrm{new}}}} = O\left(  \norm{\pseudoinv{\mathbf{A}}} \right) ,  \quad \mathrm{as}\ n \to \infty .
	\end{equation}
	Suppose also that Assumptions~\labelcref{assum:Asymptotic_expansion_and_form,assum:Functions_varphi_j,assum:Functions_u_j} are satisfied, and the function $g_1$ is asymptotically regular (see Definition~\ref{def:Asymptotic_regularity}). Then, the following implication holds 
	\begin{equation} \label{eq:Implication_on_y_n_new_star}
		\norm{\pseudoinv{\mathbf{A}}} g_1(n) = o(1)  \implies  
		\begin{cases}
			\norm{\mathbf{y}_n^* - \boldsymbol{\gamma}} = O\left( \norm{\pseudoinv{\mathbf{A}}} g_1(n) \right) = o(1) \;\; \mathrm{and}  \\
			\norm{\mathbf{y}_{n,\mathrm{new}}^* - \boldsymbol{\gamma}} = O\left( \norm{\pseudoinv{\mathbf{A}_{\mathrm{new}}}} g_1(n) \right) = o(1)  ,
		\end{cases} 
	\end{equation} 
	as $n \to \infty$, where $\mathbf{y}_n^*$ and $\mathbf{y}_{n,\mathrm{new}}^*$ are the respective optimal solutions of sLLSQ. 
\end{proposition}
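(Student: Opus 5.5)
The plan has two parts: first bound the pseudoinverse of the enlarged matrix, then feed that bound into Theorem~\ref{thm:The_sLLSQ_method} through the corollary to Proposition~\ref{prop:Impact_of_the_asymptotics_order}.

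\textbf{Step 1: the bound~\eqref{eq:O-estimate_on_A_new}.} Choose the spectral norm; by Lemma~\ref{lem:Norm_equivalence} any other norm only changes constants. For a full-column-rank matrix $\mathbf{M}$, $\norm{\pseudoinv{\mathbf{M}}}_2 = 1/\sigma_{\min}(\mathbf{M})$, where $\sigma_{\min}(\mathbf{M})^2 = \lambda_{\min}(\transp{\mathbf{M}}\mathbf{M})$. The first $m$ rows of $\mathbf{A}_{\mathrm{new}}$ are exactly $\mathbf{A}$; write $\mathbf{B}$ for the remaining $m'-m$ rows. Then
\begin{equation*}
	\transp{\mathbf{A}_{\mathrm{new}}}\mathbf{A}_{\mathrm{new}} = \transp{\mathbf{A}}\mathbf{A} + \transp{\mathbf{B}}\mathbf{B} \succeq \transp{\mathbf{A}}\mathbf{A}.
\end{equation*}
By the Courant--Fischer characterization, $\lambda_{\min}(\transp{\mathbf{A}_{\mathrm{new}}}\mathbf{A}_{\mathrm{new}}) \geq \lambda_{\min}(\transp{\mathbf{A}}\mathbf{A}) > 0$ for large $n$. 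This uses Assumption~\ref{assum:Asymptotic_rank_assumption}, which also guarantees that $\mathbf{A}_{\mathrm{new}}$ has full column rank. It follows that $\norm{\pseudoinv{\mathbf{A}_{\mathrm{new}}}}_2 \leq \norm{\pseudoinv{\mathbf{A}}}_2$ for all large $n$. This is actually an inequality, which is stronger than the $O$-estimate claimed.

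\textbf{Step 2: the implication~\eqref{eq:Implication_on_y_n_new_star}.} Apply Theorem~\ref{thm:The_sLLSQ_method} together with the corollary to Proposition~\ref{prop:Impact_of_the_asymptotics_order}. This gives $\norm{\mathbf{y}_n^*-\boldsymbol{\gamma}} = O(\eta(n)) = O(\norm{\pseudoinv{\mathbf{A}}}\norm{\mathbf{g}_1})$. Since $g_1$ is asymptotically regular, Lemma~\ref{lem:Asymptotic_regularity} lets us replace $\norm{\mathbf{g}_1}$ by $\abs{g_1(n)}$, so $\norm{\mathbf{y}_n^*-\boldsymbol{\gamma}} = O(\norm{\pseudoinv{\mathbf{A}}}\, g_1(n))$, and this is $o(1)$ by hypothesis.

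The same argument applies verbatim to window length $m'$. The theorem holds for $\mathbf{A}_{\mathrm{new}}$ because its rank assumption holds (by the remark following Proposition~\ref{prop:Equivalence_of_asymptotic_rank_assumption}). Lemma~\ref{lem:Asymptotic_regularity} also holds for any fixed window length, so $\norm{\mathbf{g}_1'} = \Theta(g_1(n))$ for the $m'$-dimensional vector $\mathbf{g}_1'$. Together these give $\norm{\mathbf{y}_{n,\mathrm{new}}^*-\boldsymbol{\gamma}} = O(\norm{\pseudoinv{\mathbf{A}_{\mathrm{new}}}}\, g_1(n))$.

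Finally, Step~1 gives $\norm{\pseudoinv{\mathbf{A}_{\mathrm{new}}}}\, g_1(n) = O(\norm{\pseudoinv{\mathbf{A}}}\, g_1(n)) = o(1)$. Absolute values on $g_1$ are implicit throughout.

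\textbf{Main obstacle.} The one substantive step is Step~1, the monotonicity of the smallest singular value under appending rows. The rest is bookkeeping. The only delicate bookkeeping point is that the $O$-constants from Lemma~\ref{lem:Norm_equivalence} and from asymptotic regularity depend on $m'$; this is harmless because $m'$ is fixed.
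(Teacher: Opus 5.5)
Your proof is correct and follows the paper's argument. The paper likewise uses $\transp{\mathbf{A}_{\mathrm{new}}}\mathbf{A}_{\mathrm{new}} = \transp{\mathbf{A}}\mathbf{A} + \transp{(\mathbf{A}')}\mathbf{A}' \succeq \transp{\mathbf{A}}\mathbf{A}$ to obtain the inequality $\twonorm{\pseudoinv{\mathbf{A}_{\mathrm{new}}}} \leq \twonorm{\pseudoinv{\mathbf{A}}}$, phrasing it via Loewner-order reversal of the inverses and $\lambda_{\max}$ instead of your equivalent $\lambda_{\min}$/Courant--Fischer step, and then combines the sLLSQ estimate, the corollary $\eta(n;p) = O(\norm{\pseudoinv{\mathbf{A}}}\norm{\mathbf{g}_1})$ and Lemma~\ref{lem:Asymptotic_regularity} exactly as you do.
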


\begin{proof}
	See Appendix~\ref{subsec:Proof_of_proposition_for_sliding_window_length}.
\end{proof}

Nevertheless, when $\norm{\pseudoinv{\mathbf{A}}} g_1(n) \neq o(1)$ as $n \to \infty$, it is natural to ask whether in general the following estimate holds 
\begin{equation*}
	\eta_{\mathrm{new}}(n)  \overset{?}{=}  O( \eta(n) ) , \quad \mathrm{as}\ n \to \infty ,
\end{equation*} 
where 
\begin{equation*}
	\eta_{\mathrm{new}}(n) \defeq \max\left( \norm{\pseudoinv{\mathbf{A}_{\mathrm{new}}} \log\left( 1 + \sum_{l=1}^p {\beta_l \mathbf{g}_{l,\mathrm{new}}} \right)}, \norm{\pseudoinv{\mathbf{A}_{\mathrm{new}}}} \norm{\mathbf{g}_{p+1,\mathrm{new}}} \right)  ,
\end{equation*}
$\mathbf{g}_{l,\mathrm{new}} = \mathbf{g}_{l,\mathrm{new}}(n) \defeq \transp{[g_l(n),\dots,g_l(n+m'-1)]} \in \RR^{m'}$, for all $l \in \{1,\dots,p+1\}$, and the function $\eta$ is defined by~\eqref{eq:Function_eta}. Counterintuitively, the above $O$-estimate is \emph{not} always true. The next remark provides an interesting \emph{counterexample}.

\begin{remark}
Let $k = 2$, $p = 1$, $\beta_1 \neq 0$, $[\varphi_1(n),\varphi_2(n)] \defeq [1,n]$, and $[g_1(n),g_2(n)] \defeq \left[ \frac{(-1)^n}{n},\frac{1}{n^2} \right]$. Observe that the functions $g_1$ and $g_2$ are asymptotically regular, according to Remark~\ref{rem:Examples_of_asymptotically_regular_functions}, and by Lemma~\ref{lem:Asymptotic_regularity} we obtain 
\begin{align*}
	\eta(n) & = \Theta\left( \max\left( \norm{\pseudoinv{\mathbf{A}} \log\left( 1 + {\beta_1 \mathbf{g}_1} \right)}, \norm{\pseudoinv{\mathbf{A}}} \abs{g_2(n)} \right)  \right)  ,  \\
	\eta_{\mathrm{new}}(n) & = \Theta\left( \max\left( \norm{\pseudoinv{\mathbf{A}_{\mathrm{new}}} \log\left( 1 + {\beta_1 \mathbf{g}_{1,\mathrm{new}}} \right)}, \norm{\pseudoinv{\mathbf{A}_{\mathrm{new}}}} \abs{g_2(n)} \right)  \right)   .
\end{align*} 
We examine two cases, namely, 
\begin{itemize}
\item $m = 3$: $\determ{\transp{\mathbf{A}} \mathbf{A}} = 6 = \Omega(1)$ (therefore Assumption~\ref{assum:Asymptotic_rank_assumption}/\ref{assum:Equivalent_asymptotic_rank_assumption} holds), $\norm{\pseudoinv{\mathbf{A}}} = \Theta(n)$ (thus $\norm{\pseudoinv{\mathbf{A}}} g_1(n) = \Theta(1) \neq o(1)$), and $\pseudoinv{\mathbf{A}} \log\left( 1 + {\beta_1 \mathbf{g}_1} \right) = {(-1)^n} \frac{\beta_1}{n} \transp{\left[ \frac{4}{3}, -\frac{1}{n} \right]} + \mathbf{l.o.t.} = \transp{\left[ \Theta\left( \frac{1}{n} \right), \Theta\left( \frac{1}{n^2} \right) \right]}$ $\implies$ $\norm{\pseudoinv{\mathbf{A}} \log\left( 1 + {\beta_1 \mathbf{g}_1} \right)} = \Theta\left( \frac{1}{n} \right) = o(1)$. Based on~\eqref{eq:y_n_star_asymptotic_estimate_sLLSQ}, the convergence of sLLSQ is \emph{guaranteed}, i.e., $\lim_{n \to \infty} \mathbf{y}_n^* = \boldsymbol{\gamma}$, since 
\begin{equation*}
\eta(n) = \Theta\left( \max\left( \norm{\pseudoinv{\mathbf{A}} \log\left( 1 + {\beta_1 \mathbf{g}_1} \right)}, \norm{\pseudoinv{\mathbf{A}}} \abs{g_2(n)} \right) \right)  = \Theta\left( \frac{1}{n} \right) = o(1) .
\end{equation*}

\item $m' = 4$: $\determ{\transp{\mathbf{A}_{\mathrm{new}}} \mathbf{A}_{\mathrm{new}}} = 20 = \Omega(1)$ (thus Assumption~\ref{assum:Asymptotic_rank_assumption}/\ref{assum:Equivalent_asymptotic_rank_assumption} is satisfied), $\norm{\pseudoinv{\mathbf{A}_{\mathrm{new}}}} = \Theta(n)$, and $\pseudoinv{\mathbf{A}_{\mathrm{new}}} \log\left( 1 + {\beta_1 \mathbf{g}_{1,\mathrm{new}}} \right) = {(-1)^n} \frac{2\beta_1}{5} \transp{\left[ 1, -\frac{1}{n} \right]} + \mathbf{l.o.t.} = \transp{\left[ \Theta(1), \Theta\left( \frac{1}{n} \right) \right]}$ $\implies$ $\norm{\pseudoinv{\mathbf{A}_{\mathrm{new}}} \log\left( 1 + {\beta_1 \mathbf{g}_{1,\mathrm{new}}} \right)} = \Theta(1) \neq o(1)$. The convergence of sLLSQ is \emph{not} ensured now, despite the increase of $m$ to $m'$, because 
\begin{equation*}
\eta_{\mathrm{new}}(n) = \Theta\left( \max\left( \norm{\pseudoinv{\mathbf{A}_{\mathrm{new}}} \log\left( 1 + {\beta_1 \mathbf{g}_{1,\mathrm{new}}} \right)}, \norm{\pseudoinv{\mathbf{A}_{\mathrm{new}}}} \abs{g_2(n)} \right) \right)  = \Theta(1) \neq o(1) .
\end{equation*}
\end{itemize}
As a result, we deduce that 
\begin{equation*}
\Theta(1) = \eta_{\mathrm{new}}(n)  \neq  O( \eta(n) ) = O\left( \frac{1}{n} \right) = o(1) .
\end{equation*}
\end{remark}

\subsection{Sliding Tikhonov Linear Least Squares (sT-LLSQ)}
\label{subsec:sT-LLSQ}

In order to avoid infinite-norm limit points of the sequence $\{\mathbf{y}_n^*\}_{n \in \NN}$, we resort to a well-known technique in optimization theory that is called \emph{``regularization''}. In particular, by adding a \emph{penalty term} (involving some norm) to the objective function, we can ensure that the norm of optimal solution  $\mathbf{y}_n^*$ remains bounded as $n \to \infty$. To this end, we need an additional hypothesis and an auxiliary lemma, which is used as a stepping stone to the main theorem. 

\begin{assumption} \label{assum:sT-LLSQ_when_k_equals_1}
	If $k = 1$, then $\lim_{n \to \infty} \abs{\varphi_1(n)} = \infty$. 
\end{assumption}

\begin{remark}
	The condition $\lim_{n \to \infty} \abs{\varphi_1(n)} = \infty$ implies that $\varphi_1(n) = \Omega(1)$ as $n \to \infty$. Hence, Assumption~\ref{assum:sT-LLSQ_when_k_equals_1} does not contradict Assumption~\ref{assum:Functions_varphi_j}. 
\end{remark}

\begin{lemma}
	\label{lem:Regularization_with_arbitrary_norm}
	Suppose that Assumptions~\labelcref{assum:Functions_varphi_j,assum:sT-LLSQ_when_k_equals_1} are true. Let $\widetilde{\boldsymbol{\gamma}} \in \RR^k$ be independent of $n$, and $\mathbf{w} = \mathbf{w}(n) \in \RR^m$ be such that $\norm{\mathbf{w}} = O(1)$ as $n \to \infty$. Consider the regularized optimization problem  
	\begin{equation} \label{eq:Regularized_optimization_problem}
		\minimize_{\mathbf{y} \in \RR^k} \; R_n(\mathbf{y}) \defeq \twonorm{\mathbf{A} \mathbf{y} - \widetilde{\mathbf{b}}}^2 + \mu \norm{\mathbf{y}}^{\nu}   , 
	\end{equation}
	where $\mathbf{A} = \mathbf{A}(n) \in \RR^{m \times k}$ is given by~\eqref{eq:Matrix_A}, $\widetilde{\mathbf{b}} = \widetilde{\mathbf{b}}(n) \defeq \mathbf{A} \widetilde{\boldsymbol{\gamma}} + \mathbf{w} \in \RR^m$, $\mu \in \RR_{>0}$ is the regularization parameter (independent of $n$), $\norm{\cdot}$ is an arbitrary vector norm, and $\nu \geq 1$ is a fixed real exponent. Then, we have the following statements: 
	\begin{enumerate}[label={\arabic*.}] 
		\item For every $n \in \NN$, \eqref{eq:Regularized_optimization_problem} is a \emph{convex optimization problem} and there exists at least one (globally) optimal solution, that is, a $\mathbf{y}_n^* \in \RR^k$ such that $R_n(\mathbf{y}_n^*) \leq R_n(\mathbf{y})$ for all $\mathbf{y} \in \RR^k$. Let $R_n^* \defeq R_n(\mathbf{y}_n^*)$ be the global minimum of problem~\eqref{eq:Regularized_optimization_problem}. Furthermore, as $n \to \infty$, 
		\begin{equation*}  
			R_n^* = O(1), \quad \norm{\mathbf{y}_n^*} = O(1),  
		\end{equation*}
		and 
		\begin{equation*}
			\norm{\mathbf{y}_n^* - \widetilde{\boldsymbol{\gamma}}} = O(1) ,  \quad  y_{j,n}^* - \widetilde{\gamma}_j = O(1) ,
		\end{equation*}
		for every sequence of optimal solutions $\{\mathbf{y}_n^*\}_{n \in \NN}$ and for all $j \in \{1,\dots,k\}$. In particular, for $j=k$, we obtain the \textbf{universal convergence rate}\footnote{We call it \emph{``universal''} because this convergence rate holds for \emph{every} parameter $\mu \in \RR_{>0}$, \emph{every} vector norm $\norm{\cdot}$, and \emph{every} real exponent $\nu \geq 1$ in the regularization term $\mu \norm{\mathbf{y}}^{\nu}$.}   
		\begin{equation} \label{eq:Universal_convergence_rate_for_index_k}
				y_{k,n}^* - \widetilde{\gamma}_k = O\left( \frac{\varphi_{k-1}(n)}{\varphi_k(n)} \right) = o(1)  , \quad \mathrm{as}\ n \to \infty  , 
		\end{equation}
		with the convention $\varphi_0(n) \defeq 1$ when $k=1$.
		
		\item Assume that the vector norm in the regularization term $\mu \norm{\mathbf{y}}^{\nu}$ is the Euclidean norm $\twonorm{\cdot}$ and the exponent $\nu = 2$ (i.e., Tikhonov regularization). Then, problem~\eqref{eq:Regularized_optimization_problem} has a unique optimal solution given by 
		\begin{equation*} 
			\mathbf{y}_n^* = \inv{(\transp{\mathbf{A}} \mathbf{A} + \mu \mathbf{I})} \transp{\mathbf{A}} \widetilde{\mathbf{b}} = \mathbf{C} \widetilde{\mathbf{b}} ,
		\end{equation*}
		where 
		\begin{equation} \label{eq:Matrix_C}
			\mathbf{C} = \mathbf{C}(n) \defeq \inv{(\transp{\mathbf{A}} \mathbf{A} + \mu \mathbf{I})} \transp{\mathbf{A}} = \mathbf{D} \transp{\mathbf{A}} \in \RR^{k \times m} ,
		\end{equation}
		\begin{equation} \label{eq:Matrix_D}
			\mathbf{D} = \mathbf{D}(n) \defeq \inv{(\transp{\mathbf{A}} \mathbf{A} + \mu \mathbf{I})} \in \RR^{k \times k}  .
		\end{equation} 
		In addition, the above matrices admit the asymptotic estimates, as $n \to \infty$, 
		\begin{equation*}
			\norm{\mathbf{C}} = O(1) , \quad \norm{\mathbf{D}} = O(1) ,
		\end{equation*}
		and 
		\begin{equation*}
			\norm{\mathbf{C}(j,:)} = O(1) , \quad \norm{\mathbf{D}(j,:)} = O(1) ,
		\end{equation*}
		for all $j \in \{1,\dots,k\}$. Specifically, for $j=k$, we have
		\begin{equation*}
			\norm{\mathbf{C}(k,:)} = O\left( \frac{\varphi_{k-1}(n)}{\varphi_k(n)} \right) = o(1)   \quad \mathrm{and} \quad \norm{\mathbf{D}(k,:)} = O\left( \frac{\varphi_{k-1}(n)}{\varphi_k(n)} \right) = o(1)  ,
		\end{equation*}
		under the same convention regarding $\varphi_0$.
	\end{enumerate}
\end{lemma}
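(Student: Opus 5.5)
The proof will compare every feasible value of $R_n$ with the value at the fixed point $\widetilde{\boldsymbol{\gamma}}$.

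\emph{Part 1: convexity, existence and boundedness.} Convexity is immediate. The map $\mathbf{y}\mapsto\twonorm{\mathbf{A}\mathbf{y}-\widetilde{\mathbf{b}}}^2$ is a convex quadratic. The map $\mathbf{y}\mapsto\norm{\mathbf{y}}^{\nu}$ is the composition of the convex function $\norm{\cdot}\geq 0$ with the nondecreasing convex function $t\mapsto t^{\nu}$ on $\RR_{\geq 0}$ (here $\nu\geq 1$ is used). For existence, $R_n$ is continuous and coercive, since $R_n(\mathbf{y})\geq\mu\norm{\mathbf{y}}^{\nu}\to\infty$. Hence its sublevel sets are compact, and a global minimizer exists by Weierstrass.

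For the bounds, evaluate at $\widetilde{\boldsymbol{\gamma}}$:
\begin{equation*}
R_n^*\leq R_n(\widetilde{\boldsymbol{\gamma}})=\twonorm{\mathbf{w}}^2+\mu\norm{\widetilde{\boldsymbol{\gamma}}}^{\nu}=O(1),
\end{equation*}
using norm equivalence (Lemma~\ref{lem:Norm_equivalence}) and $\norm{\mathbf{w}}=O(1)$. From $\mu\norm{\mathbf{y}_n^*}^{\nu}\leq R_n^*$ we get $\norm{\mathbf{y}_n^*}=O(1)$. Since $\widetilde{\boldsymbol{\gamma}}$ is fixed, $\norm{\mathbf{y}_n^*-\widetilde{\boldsymbol{\gamma}}}=O(1)$, and componentwise bounds follow by norm equivalence.

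Next, set $\boldsymbol{\delta}_n\defeq\mathbf{y}_n^*-\widetilde{\boldsymbol{\gamma}}$. The residual bound $\twonorm{\mathbf{A}\mathbf{y}_n^*-\widetilde{\mathbf{b}}}^2\leq R_n^*$ gives $\twonorm{\mathbf{A}\boldsymbol{\delta}_n-\mathbf{w}}=O(1)$, hence $\twonorm{\mathbf{A}\boldsymbol{\delta}_n}=O(1)$. In particular the first row of $\mathbf{A}\boldsymbol{\delta}_n$ satisfies
\begin{equation*}
\sum_{j=1}^k\varphi_j(n)\,\delta_{j,n}=O(1).
\end{equation*}
Isolating the last term gives
\begin{equation*}
\varphi_k(n)\,\delta_{k,n}=O(1)-\sum_{j<k}\varphi_j(n)\,\delta_{j,n}=O(1)+O\Bigl(\sum_{j<k}\abs{\varphi_j(n)}\Bigr),
\end{equation*}
because each $\delta_{j,n}=O(1)$. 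By Assumption~\ref{assum:Functions_varphi_j}, $\varphi_j=O(\varphi_{k-1})$ for $j<k$ and $1=O(\varphi_1)=O(\varphi_{k-1})$. Dividing by $\varphi_k(n)$, which is nonzero for large $n$ since $\varphi_k$ eventually dominates $\varphi_1=\Omega(1)$, yields $\delta_{k,n}=O(\varphi_{k-1}(n)/\varphi_k(n))$. This is $o(1)$ by the increasing scale.

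When $k=1$, the sum is empty and we get $\delta_{1,n}=O(1/\varphi_1(n))$. This is $o(1)$ exactly by Assumption~\ref{assum:sT-LLSQ_when_k_equals_1}, which is the reason that hypothesis is needed. The convention $\varphi_0\defeq 1$ makes the two cases uniform.

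\emph{Part 2: Tikhonov case.} With the Euclidean norm and $\nu=2$, the function $R_n$ is strictly convex, because its Hessian is $2(\transp{\mathbf{A}}\mathbf{A}+\mu\mathbf{I})\succ 0$. Setting the gradient to zero gives the closed form $\mathbf{y}_n^*=\mathbf{C}\widetilde{\mathbf{b}}$.

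For $\mathbf{D}$, the eigenvalues of $\transp{\mathbf{A}}\mathbf{A}+\mu\mathbf{I}$ are at least $\mu$, so $\twonorm{\mathbf{D}}\leq 1/\mu$. For $\mathbf{C}$, use the SVD of $\mathbf{A}$: the singular values of $\mathbf{C}$ are $\sigma/(\sigma^2+\mu)\leq 1/(2\sqrt{\mu})$. Both bounds transfer to any norm and to rows by norm equivalence.

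The estimates for the last rows are the main obstacle. I plan to avoid a direct matrix computation and reuse Part 1 with suitable test vectors $\mathbf{w}$.
\begin{itemize}
\item For $\mathbf{C}$: take $\widetilde{\boldsymbol{\gamma}}=\mathbf{0}$ and $\mathbf{w}=\mathbf{e}_i$, the $i$-th standard basis vector of $\RR^m$, for $i=1,\dots,m$. Then $\mathbf{y}_n^*=\mathbf{C}\mathbf{e}_i$, and the universal rate \eqref{eq:Universal_convergence_rate_for_index_k} gives $c_{k,i}=O(\varphi_{k-1}/\varphi_k)$. Since $m$ is fixed, this yields $\norm{\mathbf{C}(k,:)}=O(\varphi_{k-1}/\varphi_k)$.
\item For $\mathbf{D}$: use the identity $\mathbf{D}(\transp{\mathbf{A}}\mathbf{A}+\mu\mathbf{I})=\mathbf{I}$, i.e.\ $\mu\mathbf{D}=\mathbf{I}-\mathbf{C}\mathbf{A}$. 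Take $\widetilde{\boldsymbol{\gamma}}=\mathbf{e}_i\in\RR^k$ and $\mathbf{w}=\mathbf{0}$. Then $\mathbf{y}_n^*=\mathbf{C}\mathbf{A}\mathbf{e}_i$, and \eqref{eq:Universal_convergence_rate_for_index_k} gives $(\mathbf{C}\mathbf{A})_{k,i}-\delta_{k,i}=O(\varphi_{k-1}/\varphi_k)$, where $\delta_{k,i}$ is the Kronecker delta. Hence $\mu\,d_{k,i}=\delta_{k,i}-(\mathbf{C}\mathbf{A})_{k,i}=O(\varphi_{k-1}/\varphi_k)$.
\end{itemize}
The point to check here is that the $O$-constants in Part 1 depend only on $\widetilde{\boldsymbol{\gamma}}$ and on the $O(1)$ bound for $\mathbf{w}$. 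Since only finitely many test vectors are used, uniformity is not an issue.
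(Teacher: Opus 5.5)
Your proposal is correct and follows the paper's proof essentially step by step. You compare with $R_n(\widetilde{\boldsymbol{\gamma}})$, use the first row of $\mathbf{A}(\mathbf{y}_n^*-\widetilde{\boldsymbol{\gamma}})$ and divide by $\varphi_k(n)$, and use the test vectors $\widetilde{\boldsymbol{\gamma}}=\mathbf{0}$, $\mathbf{w}=\boldsymbol{\varepsilon}_i$ (resp.\ $\mathbf{w}=\mathbf{0}$, $\widetilde{\boldsymbol{\gamma}}=\boldsymbol{\varepsilon}_{j'}$) for the last rows of $\mathbf{C}$ and $\mathbf{D}$; your identity $\mu\mathbf{D}=\mathbf{I}-\mathbf{C}\mathbf{A}$ is just the paper's relation $\mathbf{y}_n^*-\widetilde{\boldsymbol{\gamma}}=\mathbf{C}\mathbf{w}-\mu\mathbf{D}\widetilde{\boldsymbol{\gamma}}$ with $\mathbf{w}=\mathbf{0}$. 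The only deviation is that you bound $\norm{\mathbf{D}}$ and $\norm{\mathbf{C}}$ directly by $1/\mu$ and $1/(2\sqrt{\mu})$ in the spectral norm, valid for every $n$ and every $\mathbf{A}$; the paper instead derives these $O(1)$ bounds indirectly from Statement~1, using the same test vectors for all rows.
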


\begin{proof}
	See Appendix~\ref{subsec:Proof_of_the_sT-LLSQ_lemma}. 
\end{proof}

Next, the \emph{sliding Tikhonov LLSQ (sT-LLSQ)} problem is formulated as follows  
\begin{equation} \label{eq:sT-LLSQ_problem}
	\minimize_{\mathbf{y} \in \RR^k} \; G_n(\mathbf{y}) \defeq \twonorm{\mathbf{A} \mathbf{y} - \mathbf{b}}^2 + \mu \twonorm{\mathbf{y}}^2  , 
\end{equation}
where the matrix $\mathbf{A}$ and vector $\mathbf{b}$ are given by \eqref{eq:Matrix_A} and \eqref{eq:Vector_b}, respectively. Moreover, $\mu \in \RR_{>0}$ is the \emph{regularization parameter}, which is independent of $n$. Note that $\mu = 0$ corresponds to sLLSQ, without regularization. Based on Lemma~\ref{lem:Regularization_with_arbitrary_norm}, we can now establish the main result for sT-LLSQ.

\begin{theorem} [The sT-LLSQ method]
	\label{thm:The_sT-LLSQ_method}
	Let Assumptions~\labelcref{assum:Asymptotic_expansion_and_form,assum:Functions_varphi_j,assum:Functions_u_j} and \ref{assum:sT-LLSQ_when_k_equals_1} be satisfied. Then, we have the following statements: 
	\begin{enumerate}[label={\arabic*.}]
		\item For every $n \in \NN$, problem~\eqref{eq:sT-LLSQ_problem} has a \emph{unique} (globally) optimal solution, which is given by     
		\begin{equation} \label{eq:y_n_star_sT-LLSQ}
			\mathbf{y}_n^* = \inv{(\transp{\mathbf{A}} \mathbf{A} + \mu \mathbf{I})} \transp{\mathbf{A}} \mathbf{b} = \mathbf{D} \transp{\mathbf{A}} \mathbf{b} = \mathbf{C} \mathbf{b} ,
		\end{equation}
		where the matrices $\mathbf{C}$ and $\mathbf{D}$ are defined by \eqref{eq:Matrix_C} and \eqref{eq:Matrix_D}, respectively. 
		
		\item The global minimum $G_n^* \defeq G_n(\mathbf{y}_n^*)$ of problem~\eqref{eq:sT-LLSQ_problem} remains bounded asymptotically, that is, 
		\begin{equation} \label{eq:G_n_star_asymptotic_estimate_sT-LLSQ}
			G_n^* = O(1)  , \quad \mathrm{as}\ n \to \infty .
		\end{equation}  
		In addition, the optimal solution $\mathbf{y}_n^*$ of problem~\eqref{eq:sT-LLSQ_problem} satisfies the asymptotic estimate, as $n \to \infty$, 
		\begin{equation} \label{eq:y_n_star_asymptotic_estimate_sT-LLSQ}
			\norm{\mathbf{y}_n^* - \boldsymbol{\gamma}} = O(\chi(n)) = O(1)   , 
		\end{equation}
		where 
		\begin{equation} \label{eq:Function_chi}
			\chi(n) \defeq \max\left( \norm{\mathbf{C} \log\left( 1 + \sum_{l=1}^p {\beta_l \mathbf{g}_l} \right)}, \norm{\mathbf{C}} \norm{\mathbf{g}_{p+1}}  , \norm{\mathbf{D}}  \right) = O(1) 
		\end{equation}
		and the vectors $\{\mathbf{g}_l = \mathbf{g}_l(n)\}_{l=1}^{p+1}$ are defined by~\eqref{eq:Vector_g_l}.  
		In particular, 
		\begin{equation} \label{eq:y_jn_star_asymptotic_estimate_sT-LLSQ}
			y_{j,n}^* - \gamma_j = O(\psi_j(n)) = O(1)  ,     
		\end{equation}
		where 
		\begin{equation} \label{eq:Function_psi_j}
			\psi_j(n) \defeq \max\left( \abs{\mathbf{C}(j,:) \log\left( 1 + \sum_{l=1}^p {\beta_l \mathbf{g}_l} \right)}, \norm{\mathbf{C}(j,:)} \norm{\mathbf{g}_{p+1}} , \norm{\mathbf{D}(j,:)}  \right) = O(1) ,
		\end{equation} 
		for all $j \in \{1,\dots,k\}$. More specifically, for $j=k$, it holds that $\psi_k(n) = O\left( \frac{\varphi_{k-1}(n)}{\varphi_k(n)} \right) = o(1)$, thus yielding the convergence rate  
		\begin{equation} \label{eq:y_kn_star_asymptotic_estimate_sT-LLSQ}
			y_{k,n}^* - \gamma_k = O(\psi_k(n)) = O\left( \frac{\varphi_{k-1}(n)}{\varphi_k(n)} \right) = o(1)  , 
		\end{equation}
		with the convention $\varphi_0(n) \defeq 1$ when $k=1$. 
		
		\item Let $\mathbf{x}_n^*$ be defined by~\eqref{eq:x_n_star}, where $\mathbf{y}_n^*$ is given by~\eqref{eq:y_n_star_sT-LLSQ}. Then, as $n \to \infty$ and for all $j \in \{1,\dots,k\}$, we have    
		\begin{equation} \label{eq:x_jn_star_asymptotic_estimate_sT-LLSQ}
			x_{j,n}^* - \alpha_j = O(1)  ,
		\end{equation} 
		\begin{equation} \label{eq:Implication_o_sT-LLSQ}
			\psi_j(n) = o(1)  \implies  x_{j,n}^* - \alpha_j = \Theta( y_{j,n}^* - \gamma_j ) = O( \psi_j(n) ) = o(1)  ,
		\end{equation} 
		and, specifically for $j = k$, 
		\begin{equation} \label{eq:x_kn_star_asymptotic_estimate_sT-LLSQ}
			x_{k,n}^* - \alpha_k = \Theta( y_{k,n}^* - \gamma_k ) = O( \psi_k(n) ) = O\left( \frac{\varphi_{k-1}(n)}{\varphi_k(n)} \right) = o(1)   , 
		\end{equation}  
		under the same convention about $\varphi_0$. 
	\end{enumerate}
\end{theorem}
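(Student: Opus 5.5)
The plan is to reduce everything to Lemma~\ref{lem:Regularization_with_arbitrary_norm} by writing $\mathbf{b}$ in the form $\mathbf{A}\boldsymbol{\gamma} + \mathbf{w}$ with $\norm{\mathbf{w}} = O(1)$. Taking logarithms in Assumption~\ref{assum:Asymptotic_expansion_and_form} together with \eqref{eq:Proven_asymptotic_form_with_gamma} gives, componentwise for $i=0,\dots,m-1$,
\begin{equation*}
\log f(n+i) = \sum_{j=1}^k \varphi_j(n+i)\gamma_j + \log\Bigl(1 + \sum_{l=1}^p \beta_l g_l(n+i) + O(g_{p+1}(n+i))\Bigr).
\end{equation*}
I would split the last logarithm as
\begin{equation*}
\log\Bigl(1+\sum_{l=1}^p\beta_l g_l(n+i)\Bigr) + \log\Bigl(1 + \tfrac{O(g_{p+1}(n+i))}{1+\sum_l \beta_l g_l(n+i)}\Bigr).
\end{equation*}
This is the same manipulation used in the proof of Proposition~\ref{prop:Impact_of_the_asymptotics_order}. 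The denominator is $\Omega(1)$, and $\log(1+x)=O(x)$ as $x \to 0$, so the second term is $O(g_{p+1}(n+i))$.

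Hence $\mathbf{b} = \mathbf{A}\boldsymbol{\gamma} + \mathbf{w}$ with
\begin{equation*}
\mathbf{w} = \log\Bigl(1+\sum_{l=1}^p\beta_l\mathbf{g}_l\Bigr) + \mathbf{e}, \qquad \norm{\mathbf{e}} = O(\norm{\mathbf{g}_{p+1}}).
\end{equation*}
By norm equivalence (Lemma~\ref{lem:Norm_equivalence}) and $g_l = o(1)$, we get $\norm{\mathbf{w}} = o(1) = O(1)$. The vector $\boldsymbol{\gamma}$ is independent of $n$ and unique by Proposition~\ref{prop:Uniqueness_of_alpha_and_gamma}. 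Thus Lemma~\ref{lem:Regularization_with_arbitrary_norm} applies with $\widetilde{\boldsymbol{\gamma}} = \boldsymbol{\gamma}$, the Euclidean norm, and $\nu = 2$.

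Part~2 of that lemma gives statement~1: $\transp{\mathbf{A}}\mathbf{A} + \mu\mathbf{I}$ is positive definite for every $n$, so the minimizer is unique and equals $\mathbf{C}\mathbf{b}$. Part~1 of the lemma gives $G_n^* = R_n^* = O(1)$.

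For the error estimate I will use the identity
\begin{equation*}
\mathbf{C}\mathbf{A} = \mathbf{D}\transp{\mathbf{A}}\mathbf{A} = \mathbf{D}(\transp{\mathbf{A}}\mathbf{A}+\mu\mathbf{I}) - \mu\mathbf{D} = \mathbf{I} - \mu\mathbf{D},
\end{equation*}
which yields
\begin{equation*}
\mathbf{y}_n^* - \boldsymbol{\gamma} = \mathbf{C}\log\Bigl(1+\sum_{l=1}^p\beta_l\mathbf{g}_l\Bigr) + \mathbf{C}\mathbf{e} - \mu\mathbf{D}\boldsymbol{\gamma}.
\end{equation*}
The triangle inequality, the consistency bound $\norm{\mathbf{C}\mathbf{e}} = O(\norm{\mathbf{C}}\norm{\mathbf{e}})$ (valid up to constants for any norms) and $\norm{\mu\mathbf{D}\boldsymbol{\gamma}} = O(\norm{\mathbf{D}})$ (with $\boldsymbol{\gamma}$ fixed) give \eqref{eq:y_n_star_asymptotic_estimate_sT-LLSQ} with $\chi$ as in \eqref{eq:Function_chi}. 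Taking row $j$ instead gives \eqref{eq:y_jn_star_asymptotic_estimate_sT-LLSQ} with $\psi_j$. The bounds $\chi, \psi_j = O(1)$ follow from $\norm{\mathbf{C}}, \norm{\mathbf{D}}, \norm{\mathbf{C}(j,:)}, \norm{\mathbf{D}(j,:)} = O(1)$ in Lemma~\ref{lem:Regularization_with_arbitrary_norm}, together with $\norm{\log(1+\sum\beta_l\mathbf{g}_l)} = o(1)$ and $\norm{\mathbf{g}_{p+1}} = o(1)$.

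For $j = k$, the lemma gives $\norm{\mathbf{C}(k,:)}, \norm{\mathbf{D}(k,:)} = O(\varphi_{k-1}/\varphi_k)$. The three terms of $\psi_k$ are then bounded by this quantity times $O(1)$ factors, which gives \eqref{eq:y_kn_star_asymptotic_estimate_sT-LLSQ}. As a consistency check, the same rate also follows directly from the universal rate \eqref{eq:Universal_convergence_rate_for_index_k}.

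Statement~3 is then immediate from Proposition~\ref{prop:Asymptotic_invariants}:
\begin{itemize}
\item the boundedness invariant \eqref{eq:Boundedness_invariant}, applied to $y_{j,n}^*-\gamma_j = O(1)$, gives \eqref{eq:x_jn_star_asymptotic_estimate_sT-LLSQ};
\item \eqref{eq:Convergence-rate_invariant_y_to_x_with_h} with $h_j = \psi_j$ gives \eqref{eq:Implication_o_sT-LLSQ};
\item specializing the latter to $j=k$, where $\psi_k = o(1)$ is already established, gives \eqref{eq:x_kn_star_asymptotic_estimate_sT-LLSQ}.
\end{itemize}

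The only delicate point is the bookkeeping of the $O(g_{p+1})$ term inside the logarithm, where the ratio must be shown to be $o(1)$ before expanding. Everything genuinely hard, namely the row-$k$ estimates on $\mathbf{C}$ and $\mathbf{D}$, is inherited from Lemma~\ref{lem:Regularization_with_arbitrary_norm}. I therefore expect no real obstacle beyond verifying that its hypotheses (Assumptions~\labelcref{assum:Functions_varphi_j,assum:sT-LLSQ_when_k_equals_1} and $\norm{\mathbf{w}}=O(1)$) hold.
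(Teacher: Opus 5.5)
Your proposal is correct and follows essentially the same route as the paper: you decompose $\mathbf{b} = \mathbf{A}\boldsymbol{\gamma} + \log(1+\sum_l \beta_l \mathbf{g}_l) + \mathbf{e}$ exactly as in \eqref{eq:Vector_b_asymptotics}, apply Lemma~\ref{lem:Regularization_with_arbitrary_norm} with $\widetilde{\boldsymbol{\gamma}} = \boldsymbol{\gamma}$ and $\mathbf{w} = \mathbf{b} - \mathbf{A}\boldsymbol{\gamma}$, and use $\mathbf{C}\mathbf{A} = \mathbf{I} - \mu\mathbf{D}$ to obtain the rearrangement \eqref{eq:y_n_star_sT-LLSQ_rearranged} and the bounds via $\chi$ and $\psi_j$. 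You then finish Statement~3 with the boundedness and convergence-rate invariants of Proposition~\ref{prop:Asymptotic_invariants}; the only cosmetic difference is that you get uniqueness through part~2 of Lemma~\ref{lem:Regularization_with_arbitrary_norm}, which itself rests on Lemma~\ref{lem:Sufficient_condition_for_unique_solution_T-LLSQ}, the result the paper cites directly.
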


\begin{proof}
	See Appendix~\ref{subsec:Proof_of_the_sT-LLSQ_theorem}. 
\end{proof}

\begin{remark} \label{rem:No_improvement_in_convergence_rate}
	If $\norm{\mathbf{D}(k,:)} = \Omega\left( \frac{\varphi_{k-1}(n)}{\varphi_k(n)} \right)$,\footnote{This is equivalent to $\norm{\mathbf{D}(k,:)} = \Theta\left( \frac{\varphi_{k-1}(n)}{\varphi_k(n)} \right)$, since $\norm{\mathbf{D}(k,:)} = O\left( \frac{\varphi_{k-1}(n)}{\varphi_k(n)} \right)$ according to Lemma~\ref{lem:Regularization_with_arbitrary_norm}.} then $\psi_k(n) = \Theta\left( \frac{\varphi_{k-1}(n)}{\varphi_k(n)} \right)$. The proof is simple: we already know that $\psi_k(n) = O\left( \frac{\varphi_{k-1}(n)}{\varphi_k(n)} \right)$ and, based on~\eqref{eq:Function_psi_j}, we also have $\psi_k(n) \geq \norm{\mathbf{D}(k,:)} = \Omega\left( \frac{\varphi_{k-1}(n)}{\varphi_k(n)} \right)$. In this case, $\psi_k(n)$ does not provide any improvement in convergence rate compared to the ratio $\frac{\varphi_{k-1}(n)}{\varphi_k(n)}$, regardless of $p$. 
\end{remark}
 
\begin{remark}
	Nevertheless, $\psi_k(n)$ can sometimes give a \emph{faster} rate of convergence than the ratio $\frac{\varphi_{k-1}(n)}{\varphi_k(n)}$. For example, let us consider the following case: $m = k = 3$, $\mu = 1$, and $[\varphi_1(n), \varphi_2(n), \varphi_3(n)] = [n, n^2, n^3]$. As a result, we have $\frac{\varphi_{k-1}(n)}{\varphi_k(n)} = \frac{\varphi_2(n)}{\varphi_3(n)} = \frac{1}{n}$, $\norm{\mathbf{D}(k,:)} = \norm{\mathbf{D}(3,:)} = \Theta\left( \frac{1}{n^2} \right) = o\left( \frac{1}{n} \right) \neq \Omega\left( \frac{1}{n} \right)$, and $\norm{\mathbf{C}(k,:)} = \norm{\mathbf{C}(3,:)} = \Theta\left( \frac{1}{n^2} \right) = o\left( \frac{1}{n} \right)$. Therefore, we obtain 
	\begin{align*}
		\psi_3(n) & = O\left( \max\left( \norm{\mathbf{C}(3,:)} , \norm{\mathbf{D}(3,:)} \right) \right) = O\left( \frac{1}{n^2} \right)  = o\left( \frac{1}{n} \right) .
	\end{align*}
\end{remark}

\begin{remark} \label{rem:sT-LLSQ_for_single_parameter_and_unit_varphi_1}
	Suppose that $k=1$ and $\varphi_1(n) \defeq 1 = O(1)$, thus Assumption~\ref{assum:sT-LLSQ_when_k_equals_1} is false. Then, for every $m \in \NN$, we have $\mathbf{D} = (m + \mu)^{-1}$ and $\norm{\mathbf{C}(1,:)} = \Theta(1) = O(1)$. From~\eqref{eq:y_n_star_sT-LLSQ_rearranged}, we obtain
	\begin{align*}
		y_{1,n}^* & = \gamma_1 + \mathbf{C}(1,:) \log\left( 1 + \sum_{l=1}^p {\beta_l \mathbf{g}_l} \right) + \mathbf{C}(1,:) \mathbf{z} - \mu \mathbf{D}(1,:) \gamma_1  \\
		& = \frac{m}{m + \mu} \gamma_1 + \mathbf{C}(1,:) \left( \log\left( 1 + \sum_{l=1}^p {\beta_l \mathbf{g}_l} \right) + \mathbf{z} \right) ,
	\end{align*}
	thus giving  
	\begin{equation*}
		y_{1,n}^* - \frac{m}{m + \mu} \gamma_1 = O(\norm{\mathbf{C}(1,:)} \norm{\mathbf{g}_1}) = O(\norm{\mathbf{g}_1}) = o(1)   ,
	\end{equation*}  
	because $\norm{\log\left( 1 + \sum_{l=1}^p {\beta_l \mathbf{g}_l} \right) + \mathbf{z}} = O(\norm{\mathbf{g}_1})$. Alternatively, we can write  
	\begin{equation*}
		\widetilde{y}_{1,n}^* - \gamma_1 = O(\norm{\mathbf{g}_1}) = o(1) ,
	\end{equation*} 
	where $\widetilde{y}_{1,n}^* \defeq \frac{m + \mu}{m} y_{1,n}^*$. 
\end{remark}

Subsequently, we give a result that is analogous to Proposition~\ref{prop:Impact_of_the_asymptotics_order}. For the sake of brevity, the proof is omitted since it follows very similar steps.  

\begin{proposition}
Let us define the functions 
\begin{equation*} 
	\chi(n;p) \defeq \max\left( \norm{\mathbf{C} \log\left( 1 + \sum_{l=1}^p {\beta_l \mathbf{g}_l} \right)}, \norm{\mathbf{C}} \norm{\mathbf{g}_{p+1}}  , \norm{\mathbf{D}}  \right) ,  
\end{equation*}
\begin{equation*}
	\psi_j(n;p) \defeq \max\left( \abs{\mathbf{C}(j,:) \log\left( 1 + \sum_{l=1}^p {\beta_l \mathbf{g}_l} \right)}, \norm{\mathbf{C}(j,:)} \norm{\mathbf{g}_{p+1}} , \norm{\mathbf{D}(j,:)}  \right)   .
\end{equation*}
Based on~\eqref{eq:Function_chi}~and~\eqref{eq:Function_psi_j}, we have just included the asymptotics order $p \in \NN_0$ as an extra argument in the functions $\chi$ and $\psi_j$. Then we have, as $n \to \infty$,
\begin{equation*}
	\chi(n;p) = O(\chi(n;p-1)) ,
\end{equation*}
\begin{equation*}
	\psi_j(n;p) = O(\psi_j(n;p-1)) ,
\end{equation*}
for all $p \in \NN$ and $j \in \{1,\dots,k\}$.  
\end{proposition}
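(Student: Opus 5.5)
We follow the proof of Proposition~\ref{prop:Impact_of_the_asymptotics_order}. We replace $\pseudoinv{\mathbf{A}}$ by $\mathbf{C}$ (respectively $\mathbf{C}(j,:)$), and we handle the extra entry $\norm{\mathbf{D}}$ (respectively $\norm{\mathbf{D}(j,:)}$) of the maximum separately. The key point is that $\mathbf{C}$ and $\mathbf{D}$ depend only on $\mathbf{A}$ and $\mu$. They are therefore independent of $p$ and of the functions $\{g_l\}_{l=1}^{p+1}$, just as $\pseudoinv{\mathbf{A}}$ was.

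First, by norm equivalence (Lemma~\ref{lem:Norm_equivalence}) and $g_{p+1}(n) = o(g_p(n))$, we get $\norm{\mathbf{g}_{p+1}} = O(\norm{\mathbf{g}_p})$. Hence
\begin{equation*}
\norm{\mathbf{C}}\norm{\mathbf{g}_{p+1}} = O\left(\norm{\mathbf{C}}\norm{\mathbf{g}_p}\right),
\end{equation*}
and the right-hand side is the second entry of $\chi(n;p-1)$. Next, as in the earlier proof, we split
\begin{equation*}
\log\left(1+\sum_{l=1}^p \beta_l \mathbf{g}_l\right) = \log\left(1+\sum_{l=1}^{p-1} \beta_l \mathbf{g}_l\right) + \mathbf{r}.
\end{equation*}
The $i$-th component of $\mathbf{r}$ is $\log\bigl(1+\beta_p g_p(n+i)/(1+\sum_{l<p}\beta_l g_l(n+i))\bigr)$. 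The denominator is $1+o(1)=\Omega(1)$, so each component is $O(g_p(n+i))$, and therefore $\norm{\mathbf{r}} = O(\norm{\mathbf{g}_p})$. The triangle inequality and submultiplicativity then give
\begin{equation*}
\norm{\mathbf{C}\log\left(1+\sum_{l=1}^p \beta_l \mathbf{g}_l\right)} \le \norm{\mathbf{C}\log\left(1+\sum_{l=1}^{p-1} \beta_l \mathbf{g}_l\right)} + O\left(\norm{\mathbf{C}}\norm{\mathbf{g}_p}\right).
\end{equation*}
The right-hand side is $O$ of the maximum of the first two entries of $\chi(n;p-1)$.

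The third entry $\norm{\mathbf{D}}$ is identical in $\chi(n;p)$ and $\chi(n;p-1)$, so it is trivially $O(\chi(n;p-1))$. Taking the maximum of the three bounds yields $\chi(n;p) = O(\chi(n;p-1))$.

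For $\psi_j$, we repeat the same steps with $\mathbf{C}(j,:)$ in place of $\mathbf{C}$. The first entry now involves an absolute value of a scalar, and we bound $\abs{\mathbf{C}(j,:)\mathbf{r}} \le \norm{\mathbf{C}(j,:)}\,\norm{\mathbf{r}}$, using a compatible pair of norms and norm equivalence in finite dimensions. The entry $\norm{\mathbf{D}(j,:)}$ is again unchanged between $p$ and $p-1$.

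The only point needing care is the estimate on $\mathbf{r}$, namely that the denominators stay bounded away from zero. This follows from $g_l(n+i) = o(1)$ for each fixed $i \in \{0,\dots,m-1\}$ and fixed $m$. We do not need any of the bounds $\norm{\mathbf{C}} = O(1)$ from Lemma~\ref{lem:Regularization_with_arbitrary_norm}, since the argument is purely comparative.
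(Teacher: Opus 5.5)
Your proposal is correct and follows exactly the route the paper intends: it omits this proof because it is the argument of Proposition~\ref{prop:Impact_of_the_asymptotics_order} with $\pseudoinv{\mathbf{A}}$ replaced by $\mathbf{C}$ (resp.\ $\mathbf{C}(j,:)$), and the extra entry $\norm{\mathbf{D}}$ (resp.\ $\norm{\mathbf{D}(j,:)}$) is identical on both sides and therefore trivially dominated. You are also right that the argument is purely comparative, so no bound such as $\norm{\mathbf{C}} = O(1)$ from Lemma~\ref{lem:Regularization_with_arbitrary_norm} is needed.
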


\begin{corollary}
	Recursively, we obtain 
	\begin{equation*}
		\chi(n;p) = O(\chi(n;0)) = O\left( \max\left( \norm{\mathbf{C}}  \norm{\mathbf{g}_1} , \norm{\mathbf{D}} \right) \right)  ,
	\end{equation*}
	\begin{equation*}
		\psi_j(n;p) = O(\psi_j(n;0)) = O\left( \max\left( \norm{\mathbf{C}(j,:)} \norm{\mathbf{g}_1} , \norm{\mathbf{D}(j,:)} \right) \right)  ,
	\end{equation*}
	for all $p \in \NN$ and $j \in \{1,\dots,k\}$ (note that $\sum_{l=1}^0 {\beta_l \mathbf{g}_l} = \mathbf{0}$ by convention).
\end{corollary}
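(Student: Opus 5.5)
The corollary follows by iterating the preceding proposition (the sT-LLSQ analogue of Proposition~\ref{prop:Impact_of_the_asymptotics_order}) and then evaluating the base case $p=0$ directly.

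\emph{Step 1 (induction on $p$).} For fixed $j$, the proposition gives $\psi_j(n;q) = O(\psi_j(n;q-1))$ for every $q \in \NN$. Composing these $p$ estimates, for $q = p, p-1, \dots, 1$, yields
\[
\psi_j(n;p) = O(\psi_j(n;0)),
\]
and likewise $\chi(n;p) = O(\chi(n;0))$. This is legitimate because $p$ is fixed, so only finitely many $O$-constants are multiplied together. We also need every $\psi_j(n;q)$ with $q<p$ to be defined. It is, since the truncated expansions of orders $q<p$ are implied by the order-$p$ one: we have $\beta_{q+1} g_{q+1}+\dots+\beta_p g_p + O(g_{p+1}) = O(g_{q+1})$ by the decreasing scale. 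The functions $\psi_j(n;q)$ themselves depend only on $\mathbf{C}$, $\mathbf{D}$ and the $\mathbf{g}_l$, and these do not depend on $q$.

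\emph{Step 2 (base case).} Set $p=0$. Since $\sum_{l=1}^0 \beta_l \mathbf{g}_l = \mathbf{0}$, we have $\log(\mathbf{1}+\mathbf{0}) = \mathbf{0}$, so the first entry of each maximum vanishes. Hence
\[
\chi(n;0) = \max\bigl(\norm{\mathbf{C}}\norm{\mathbf{g}_1}, \norm{\mathbf{D}}\bigr),
\qquad
\psi_j(n;0) = \max\bigl(\norm{\mathbf{C}(j,:)}\norm{\mathbf{g}_1}, \norm{\mathbf{D}(j,:)}\bigr).
\]
These two expressions are exactly the right-hand sides claimed in the corollary.

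\emph{Main obstacle.} The only point needing care is whether the proposition applies uniformly at each order $q$. It does, because its proof (the analogue of Proposition~\ref{prop:Impact_of_the_asymptotics_order}, with $\pseudoinv{\mathbf{A}}$ replaced by $\mathbf{C}$) uses only $g_{q+1} = o(g_q)$ and $\log(1+x) = O(x)$. The $\norm{\mathbf{D}}$ term is unchanged at every step, so it carries through the induction without modification.
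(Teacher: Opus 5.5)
Your proposal is correct and follows the paper's own argument: iterate the sT-LLSQ analogue of Proposition~\ref{prop:Impact_of_the_asymptotics_order} from order $p$ down to $0$, which multiplies only finitely many $O$-constants, and then use $\log(\mathbf{1}+\mathbf{0})=\mathbf{0}$ to remove the first entry of each maximum. Your remark about lower-order truncated expansions is harmless but not needed. The quantities $\chi(n;q)$ and $\psi_j(n;q)$ are defined directly from $\mathbf{C}$, $\mathbf{D}$, $\beta_l$ and $g_l$, and the proposition's proof uses only the decreasing scale and $\log(1+x)=O(x)$, never the expansion of $f$ itself.
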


\begin{remark}
The \emph{Least Absolute Shrinkage and Selection Operator (LASSO)} regularization method \cite{Santosa-Symes_1986,Tibshirani_1996} is similar to the Tikhonov regularization, but its regularization term is based on the $1$-norm instead of the $2$-norm. In particular, the \emph{sliding LASSO-LLSQ} problem (in Lagrangian form) is given by 
\begin{equation} \label{eq:sliding_LASSO-LLSQ_problem} 
	\minimize_{\mathbf{y} \in \RR^k} \; L_n(\mathbf{y}) \defeq \twonorm{\mathbf{A} \mathbf{y} - \mathbf{b}}^2 + \mu \onenorm{\mathbf{y}}  . 
\end{equation}
Based on the uniqueness of $\boldsymbol{\gamma}$ by Proposition~\ref{prop:Uniqueness_of_alpha_and_gamma}, we first choose $\widetilde{\boldsymbol{\gamma}} \defeq \boldsymbol{\gamma}$ and $\mathbf{w} \defeq \mathbf{b} - \mathbf{A} \boldsymbol{\gamma}$ (satisfying $\norm{\mathbf{w}} = o(1) = O(1)$ as $n \to \infty$, according to~\eqref{eq:Vector_b_asymptotics}), and then $\widetilde{\mathbf{b}} \defeq \mathbf{A} \widetilde{\boldsymbol{\gamma}} + \mathbf{w} = \mathbf{b}$. In this way, problem~\eqref{eq:sliding_LASSO-LLSQ_problem} is a special case of the regularized problem~\eqref{eq:Regularized_optimization_problem}, where the general vector norm $\norm{\cdot}$ is replaced by the 1-norm $\onenorm{\cdot}$, and $\nu = 1$. Therefore, Lemma~\ref{lem:Regularization_with_arbitrary_norm} is applicable. Specifically, problem~\eqref{eq:sliding_LASSO-LLSQ_problem} is convex (although its objective function is not differentiable) and has an optimal solution $\mathbf{y}_n^*$, with its global minimum $L_n^* \defeq L_n(\mathbf{y}_n^*)$ satisfying $L_n^* = O(1)$ as $n \to \infty$. Interestingly, the universal convergence rate~\eqref{eq:Universal_convergence_rate_for_index_k} still holds, despite the fact that $\mathbf{y}_n^*$ does \emph{not} have an explicit/closed-form expression (in contrast to sT-LLSQ). Consequently, we can use the sliding LASSO-LLSQ method to compute $\gamma_k$, as an alternative to sT-LLSQ. Note that every convex optimization problem, and thus~\eqref{eq:sliding_LASSO-LLSQ_problem}, can be globally solved in polynomial time using iterative techniques, such as interior-point methods~\cite{Boyd-Vandenberghe_2004}.    
\end{remark}

\subsection{Overview of Theoretical Results}

Table~\ref{tbl:Summary_of_theoretical_results_for_sLLSQ_and_sT-LLSQ} provides a bird's-eye view of the theoretical findings. Clearly, sLLSQ and sT-LLSQ are complementary approaches.

Based on the discussion of Lemmas~\ref{lem:Sufficient_condition_for_unique_solution_LLSQ}~and~\ref{lem:Sufficient_condition_for_unique_solution_T-LLSQ}, the iteration complexities of sLLSQ and sT-LLSQ are $O(k^2 m)$ and $O(k^2 \max(m,k))$, respectively, assuming a uniform/unit-cost model of computation. It is important to note that we can use \emph{symbolic matrix operations} (e.g., addition, multiplication, and inversion) to compute and simplify the matrices $\pseudoinv{\mathbf{A}} = \pseudoinv{\mathbf{A}}(n)$ and $\mathbf{C} = \mathbf{C}(n)$ \emph{only once}. In each iteration we calculate their final symbolic expressions for a particular value of $n$, and then multiply with the data vector $\mathbf{b} = \mathbf{b}(n)$ to get the optimal solution $\mathbf{y}_n^*$. In this way, we can reduce the iteration complexity of sLLSQ and sT-LLSQ in terms of time, at the expense of space complexity.

\begin{remark}
	On the one hand, according to Theorem~\ref{thm:The_sLLSQ_method} for sLLSQ, if the functions $\{g_l\}_{l=1}^{p+1}$ tend to zero \emph{sufficiently fast} so that $\vartheta_j(n) = o(1)$ as $n \to \infty$, then the convergence of $x_{j,n}^*$ to $\alpha_j$ is guaranteed; cf.~\eqref{eq:Function_vartheta_j} and \eqref{eq:Implication_o_sLLSQ}. On the other hand, based on Theorem~\ref{thm:The_sT-LLSQ_method} for sT-LLSQ, the sequence $x_{k,n}^*$ converges to $\alpha_k$ \emph{regardless} of how fast the functions $\{g_l\}_{l=1}^{p+1}$ vanish asymptotically, although the convergence rate depends on these functions through $\psi_k$; cf.~\eqref{eq:Function_psi_j}, for $j=k$, and \eqref{eq:x_kn_star_asymptotic_estimate_sT-LLSQ}. As a result, when sLLSQ cannot provide a convergence guarantee for $x_{k,n}^*$, we can compute the \emph{dominant critical parameter} $\alpha_k$ using the sT-LLSQ technique. 
\end{remark}

\begin{remark}
	A \emph{byproduct} of theoretical analysis is that the proposed methods can be useful even if the constants in an asymptotic expansion are \emph{explicitly} known, e.g., by using the saddle-point method. In such a case, if $x_j^*$ provably converges to $\alpha_j$ and the latter is a number-theoretic constant (e.g., $\pi$ and $e$), then we have automatically a numerical method to compute it.  
\end{remark}

\renewcommand{\arraystretch}{1.5}
\begin{table}[t!]
	\centering
	\caption{Summary of theoretical results.}
	\label{tbl:Summary_of_theoretical_results_for_sLLSQ_and_sT-LLSQ} 
	
	\resizebox{\textwidth}{!}{%
		\begin{tabular}{|l|l|}
		\hline
		
		\makecell[l]{\textbf{General hypotheses}} & \makecell[l]{Assumptions~\labelcref{assum:Asymptotic_expansion_and_form,assum:Functions_varphi_j,assum:Functions_u_j}}   \\
		\hline 
		
		\makecell[l]{\textbf{Uniqueness of the unknown parameters} \\ \textbf{(i.e., the vectors $\boldsymbol{\alpha} \in \mathcal{S}$ and $\boldsymbol{\gamma} \in \RR^k$)}} & \makecell[l]{Proposition~\ref{prop:Uniqueness_of_alpha_and_gamma}}  \\
		\hline
		
		\makecell[l]{\textbf{Asymptotic invariants under the} \\ \textbf{variable transformation $\mathbf{y} = \mathbf{u}(\mathbf{x})$}} & \makecell[l]{Proposition~\ref{prop:Asymptotic_invariants}}   \\
		\hline 	
		
		\makecell[l]{\textbf{Additional hypotheses}} & \makecell[l]{sLLSQ: Assumption~\ref{assum:Asymptotic_rank_assumption}/\ref{assum:Equivalent_asymptotic_rank_assumption} (see also Proposition~\ref{prop:Equivalence_of_asymptotic_rank_assumption})  \vspace{1mm} \\  sT-LLSQ: Assumption~\ref{assum:sT-LLSQ_when_k_equals_1} (otherwise, see Remark~\ref{rem:sT-LLSQ_for_single_parameter_and_unit_varphi_1})}   \\
		\hline 
		
		\makecell[l]{\textbf{Sequence of optimization problems}} & \makecell[l]{sLLSQ: $\underset{\mathbf{y} \in \RR^k}{\minimize} \; \twonorm{\mathbf{A} \mathbf{y} - \mathbf{b}}^2$ \vspace{1mm} \\ 
		sT-LLSQ: $\underset{\mathbf{y} \in \RR^k}{\minimize} \; \twonorm{\mathbf{A} \mathbf{y} - \mathbf{b}}^2 + \mu \twonorm{\mathbf{y}}^2$}   \\
		\hline 
		
		\makecell[l]{\textbf{Sequence of (unique) optimal solutions}} & \makecell[l]{sLLSQ: $\mathbf{y}_n^* = \inv{(\transp{\mathbf{A}} \mathbf{A})} \transp{\mathbf{A}} \mathbf{b} = \pseudoinv{\mathbf{A}} \mathbf{b}$ \vspace{1mm} \\ 
			sT-LLSQ: $\mathbf{y}_n^* = \inv{(\transp{\mathbf{A}} \mathbf{A} + \mu \mathbf{I})} \transp{\mathbf{A}} \mathbf{b} = \mathbf{C} \mathbf{b}$  \vspace{1mm} \\  sLLSQ and sT-LLSQ: $\mathbf{x}_n^* = \mathbf{u}^{-1}(\mathbf{y}_n^*)$}   \\
		\hline
		
		\makecell[l]{\textbf{Sequence of global minima}} & \makecell[l]{sLLSQ: $F_n^* \defeq \twonorm{\mathbf{A} \mathbf{y}_n^* - \mathbf{b}}^2$ \vspace{1mm} \\ 
		sT-LLSQ: $G_n^* \defeq \twonorm{\mathbf{A} \mathbf{y}_n^* - \mathbf{b}}^2 + \mu \twonorm{\mathbf{y}_n^*}^2$}   \\
		\hline 
		
		\makecell[l]{\textbf{Main theorems}} & \makecell[l]{sLLSQ: Theorem~\ref{thm:The_sLLSQ_method}  \vspace{1mm} \\ 
		sT-LLSQ: Theorem~\ref{thm:The_sT-LLSQ_method}}    \\
		\hline	
	\end{tabular}%
	}
\end{table}

\section{Fundamental Applications in Analytic Combinatorics}
\label{sec:Fundamental_applications_in_analytic_combinatorics}

Next, we examine some important applications of sLLSQ and sT-LLSQ to asymptotic forms that are frequently found in analytic combinatorics and are given in Table~\ref{tbl:Common_asymptotic_forms}.\footnote{Those are just indicative examples, and therefore constitute a non-exhaustive list.} The first asymptotic form (AF-1) is inspired by the asymptotics of the factorial function, $n!$ (see Section~\ref{subsec:Factorial_function}). Also, the quantity $\alpha_j^{n^{\xi}}$ is a \emph{stretched-exponential term}, where $\xi \in \RR_{>0}$ with $\xi \neq 1$; typically, $\xi \in \left\{\frac{2}{3}, \frac{1}{2}, \frac{1}{3} \right\}$ as in AF-3--AF-5 and AF-7--AF-9.\footnote{For example, stretched-exponential terms appear in several types of integer partitions \cite[\S{VIII.6}, Figure~VIII.8]{Flajolet-Sedgewick_2009}, where all constants can be explicitly determined by the saddle-point method. See also the paper of Price,~Fang,~and~Wallner~\cite{Price-Fang-Wallner_2021} as well as the work of Wallner~\cite{Wallner_2026}.}

\renewcommand{\arraystretch}{1.5}
\begin{table}[t!]
	\centering
	\caption{Common asymptotic forms that satisfy \eqref{eq:Proven_asymptotic_form_with_gamma}, i.e., $\widehat{f}(n;\boldsymbol{\alpha}) = \exp\left( \sum_{j=1}^k \varphi_j(n) u_j(\alpha_j) \right) = \exp\left( \sum_{j=1}^k \varphi_j(n) \gamma_j \right)$.} 
	\label{tbl:Common_asymptotic_forms}
	
	\resizebox{\textwidth}{!}{%
		\begin{tabular}{|l|l|l|l|l|l|} 
			\hline 
			\makecell[c]{\textbf{Acronym of} \\ \textbf{asymptotic} \\ \textbf{form}} & \makecell[c]{$k$} & \makecell[c]{$\widehat{f}(n;\boldsymbol{\alpha})$} & \makecell[c]{\textbf{Conditions on} \\ $\boldsymbol{\alpha} = \transp{[\alpha_1,\dots,\alpha_k]}$ \\ \textbf{($\alpha_j \in \mathcal{S}_j ,\ \forall j \in \{1,\dots,k\}$)}} & \makecell[c]{$\boldsymbol{\gamma} = \transp{[\gamma_1,\dots,\gamma_k]}$} & \makecell[c]{$\boldsymbol{\varphi}(n) = \transp{[\varphi_1(n),\dots,\varphi_k(n)]}$} \\  
			\hline
			
			AF-1 & 4 & $\alpha_1 n^{\alpha_2} \alpha_3^n n^{\alpha_4 n}$ & $\alpha_1, \alpha_3 \in \RR_{>0}$ and $\alpha_2, \alpha_4 \in \RR$ & $\transp{[\log(\alpha_1), \alpha_2, \log(\alpha_3), \alpha_4]}$ & $\transp{[1,\log(n),n,n\log(n)]}$   \\
			\hline
			
			AF-2 & 3 & $\alpha_1 n^{\alpha_2} \alpha_3^n$ & \multirow{4}*{$\alpha_1, \alpha_3 \in \RR_{>0}$ and $\alpha_2 \in \RR$} & \multirow{4}*{$\transp{[\log(\alpha_1), \alpha_2, \log(\alpha_3)]}$} & $\transp{[1,\log(n),n]}$   \\
			\cline{1-3}\cline{6-6}
			
			AF-3 & 3 & $\alpha_1 n^{\alpha_2} \alpha_3^{n^{2/3}}$ &  &  & $\transp{[1,\log(n), n^{2/3}]}$   \\
			\cline{1-3}\cline{6-6}
			
			AF-4 & 3 & $\alpha_1 n^{\alpha_2} \alpha_3^{n^{1/2}}$ &  &  & $\transp{[1,\log(n), n^{1/2}]}$   \\
			\cline{1-3}\cline{6-6}
			
			AF-5 & 3 & $\alpha_1 n^{\alpha_2} \alpha_3^{n^{1/3}}$ &  &  & $\transp{[1,\log(n), n^{1/3}]}$   \\
			\hline
			
			AF-6 & 2 & $\alpha_1 \alpha_2^n$ & \multirow{4}*{$\alpha_1,\alpha_2 \in \RR_{>0}$} & \multirow{4}*{$\transp{[\log(\alpha_1), \log(\alpha_2)]}$} & $\transp{[1,n]}$   \\
			\cline{1-3}\cline{6-6}
			
			AF-7 & 2 & $\alpha_1 \alpha_2^{n^{2/3}}$ &  &  & $\transp{[1,n^{2/3}]}$   \\
			\cline{1-3}\cline{6-6}
			
			AF-8 & 2 & $\alpha_1 \alpha_2^{n^{1/2}}$ &  &  & $\transp{[1,n^{1/2}]}$   \\
			\cline{1-3}\cline{6-6}
			
			AF-9 & 2 & $\alpha_1 \alpha_2^{n^{1/3}}$ &  &  & $\transp{[1,n^{1/3}]}$   \\
			\hline
			
			AF-10 & 2 & $\alpha_1 n^{\alpha_2}$ & $\alpha_1 \in \RR_{>0}$ and $\alpha_2 \in \RR$ & $\transp{[\log(\alpha_1), \alpha_2]}$ & $\transp{[1,\log(n)]}$   \\
			\hline
			
			AF-11 & 1 & $\alpha_1$ & $\alpha_1 \in \RR_{>0}$ & $\transp{[\log(\alpha_1)]}$ & $\transp{[1]}$   \\
			\hline			
		\end{tabular}%
	}
\end{table}

Furthermore, the asymptotic estimates achieved by sLLSQ are shown in Tables~\ref{tbl:Fundamental_results_for_common_asymptotic_forms_sLLSQ}~and~\ref{tbl:Fundamental_results_for_common_asymptotic_forms_sLLSQ_continued}, while those by sT-LLSQ are listed in Table~\ref{tbl:Fundamental_results_for_common_asymptotic_forms_sT-LLSQ}. Recall that the convergence-rate invariant~\eqref{eq:Convergence-rate_invariant_y_to_x_with_h} ensures that: if $y_{j,n}^* - \gamma_j = O(h_j(n))$ and $h_j(n) = o(1)$, then $x_{j,n}^* - \alpha_j = \Theta(y_{j,n}^* - \gamma_j) = O(h_j(n)) = o(1)$. Note that the results have been obtained using a computer algebra system that can compute the \emph{Puiseux series} of symbolic expressions of $n$, as $n \to \infty$; the real coefficients $\{\beta_l\}_{l=1}^p$ and the regularization parameter $\mu \in \RR_{>0}$ are treated as \emph{symbolic constants}.\footnote{The Puiseux series is a generalization of Laurent series, where the exponents of the variable are allowed to be rational numbers with a common denominator (instead of only being integers).} In this way, we can find simple $\Theta$-estimates about the expressions contained in the $\max$-operation of the sLLSQ and sT-LLSQ asymptotics (see Theorems~\ref{thm:The_sLLSQ_method}~and~\ref{thm:The_sT-LLSQ_method}). In particular, if $y_{j,n}^* - \gamma_j = O\left( \max_i {\abs{\theta_i(n)}} \right)$ and $\theta_i(n) = \Theta\left( h_i(n) \right)$ for all $i$, then $y_{j,n}^* - \gamma_j = O\left( \max_i {\abs{h_i(n)}} \right)$, where each function $h_i$ is usually simpler than $\theta_i$.

\renewcommand{\arraystretch}{1.5}
\begin{table}[t!]
	\centering
	\caption{Fundamental results of sLLSQ for common asymptotic forms. See also Tables~\ref{tbl:Asymptotics_learning_theory_General_problem},~\ref{tbl:Summary_of_theoretical_results_for_sLLSQ_and_sT-LLSQ}~and~\ref{tbl:Common_asymptotic_forms} (the asymptotic forms AF-1 to AF-11 are given in Table~\ref{tbl:Common_asymptotic_forms}). For simplicity, we assume that $g_{p+1}$ is asymptotically regular (see Definition~\ref{def:Asymptotic_regularity} and Remark~\ref{rem:Asymptotic_regularity}).}
	\label{tbl:Fundamental_results_for_common_asymptotic_forms_sLLSQ} 

\resizebox{\textwidth}{!}{%
\begin{tabular}{|l|l|l|l|}
\hline 
\makecell[c]{\multirow{3}{*}{\makecell[c]{\textbf{Asymptotic} \\ \textbf{form}}}} & \multicolumn{3}{c|}{\makecell[c]{\textbf{sLLSQ with $m=k$} \\ \textbf{(asymptotic estimates as $n \to \infty$)}}}  \\
\cline{2-4}
& \makecell[c]{$\determ{\transp{\mathbf{A}} \mathbf{A}}$} & \makecell[c]{$p = 0$, $g_1(n) = o(1)$.} & \makecell[c]{$p = 1$, $g_1(n) = \frac{1}{n}$, $g_2(n) = o\left( \frac{1}{n} \right)$.}   \\
\hline

\makecell[l]{AF-1} & \makecell[l]{$\Theta\left( \frac{1}{n^8} \right)$} & \makecell[l]{$\norm{\mathbf{y}_n^* - \boldsymbol{\gamma}} = O\left( n^3 \log(n) g_1(n) \right)$, \vspace{1mm} \\ $y_{1,n}^* - \gamma_1 = O\left( n^3 \log(n) g_1(n) \right)$, \\ $y_{2,n}^* - \gamma_2 = O\left( n^3 g_1(n) \right)$, \\ $y_{3,n}^* - \gamma_3 = O\left( n^2 \log(n) g_1(n) \right)$, \\ 
$y_{4,n}^* - \gamma_4 = O\left( n^2 g_1(n) \right)$.} & \makecell[l]{$\norm{\mathbf{y}_n^* - \boldsymbol{\gamma}} = O\left( \log(n) \max\left( \tfrac{1}{n}, n^3 \abs{g_2(n)} \right) \right)$, \vspace{1mm} \\ $y_{1,n}^* - \gamma_1 = O\left( \log(n) \max\left( \tfrac{1}{n}, n^3 \abs{g_2(n)} \right) \right)$, \\ $y_{2,n}^* - \gamma_2 = O\left( \max\left( \tfrac{1}{n}, n^3 \abs{g_2(n)} \right) \right)$, \\ $y_{3,n}^* - \gamma_3 = O\left( \log(n) \max\left( \tfrac{1}{n^2}, n^2 \abs{g_2(n)} \right) \right)$,  \\ 	$y_{4,n}^* - \gamma_4 = O\left( \max\left( \tfrac{1}{n^2}, n^2 \abs{g_2(n)} \right) \right)$.}  \\
\hline 

\makecell[l]{AF-2} & \makecell[l]{$\Theta\left( \frac{1}{n^4} \right)$} & \makecell[l]{$\norm{\mathbf{y}_n^* - \boldsymbol{\gamma}} = O\left( n^2 \log(n) g_1(n) \right)$, \vspace{1mm} \\ $y_{1,n}^* - \gamma_1 = O\left( n^2 \log(n) g_1(n) \right)$, \\ $y_{2,n}^* - \gamma_2 = O\left( n^2 g_1(n) \right)$, \\ $y_{3,n}^* - \gamma_3 = O\left( n g_1(n) \right)$.} & \makecell[l]{$\norm{\mathbf{y}_n^* - \boldsymbol{\gamma}}  = O\left( \log(n) \max\left( \tfrac{1}{n}, n^2 \abs{g_2(n)} \right) \right)$, \vspace{1mm} \\ $y_{1,n}^* - \gamma_1 = O\left( \log(n) \max\left( \frac{1}{n}, n^2 \abs{g_2(n)} \right) \right)$, \\ $y_{2,n}^* - \gamma_2  = O\left( \max\left( \tfrac{1}{n}, n^2 \abs{g_2(n)} \right) \right)$, \\ $y_{3,n}^* - \gamma_3 = O\left( \max\left( \tfrac{1}{n^2}, n \abs{g_2(n)} \right) \right) = o(1)$.}  \\
\hline 

\makecell[l]{AF-3} & \makecell[l]{$\Theta\left( \frac{1}{n^{14/3}} \right)$} & \makecell[l]{$\norm{\mathbf{y}_n^* - \boldsymbol{\gamma}} = O\left( n^2 \log(n) g_1(n) \right)$, \vspace{1mm} \\ $y_{1,n}^* - \gamma_1 = O\left( n^2 \log(n) g_1(n) \right)$, \\ $y_{2,n}^* - \gamma_2 = O\left( n^2 g_1(n) \right)$, \\ $y_{3,n}^* - \gamma_3 = O\left( n^{4/3} g_1(n) \right)$.} & \makecell[l]{$\norm{\mathbf{y}_n^* - \boldsymbol{\gamma}} = O\left( \log(n) \max\left( \tfrac{1}{n}, n^2 \abs{g_2(n)} \right) \right)$, \vspace{1mm} \\ $y_{1,n}^* - \gamma_1 = O\left( \log(n) \max\left( \tfrac{1}{n}, n^2 \abs{g_2(n)} \right) \right)$, \\ $y_{2,n}^* - \gamma_2 = O\left( \max\left( \tfrac{1}{n}, n^2 \abs{g_2(n)} \right) \right)$, \\ $y_{3,n}^* - \gamma_3 = O\left( \max\left( \tfrac{1}{n^{5/3}}, n^{4/3} \abs{g_2(n)} \right) \right)$.}  \\
\hline

\makecell[l]{AF-4} & \makecell[l]{$\Theta\left( \frac{1}{n^5} \right)$} & \makecell[l]{$\norm{\mathbf{y}_n^* - \boldsymbol{\gamma}} = O\left( n^2 \log(n) g_1(n) \right)$, \vspace{1mm} \\ $y_{1,n}^* - \gamma_1 = O\left( n^2 \log(n) g_1(n) \right)$, \\ $y_{2,n}^* - \gamma_2 = O\left( n^2 g_1(n) \right)$, \\ $y_{3,n}^* - \gamma_3 = O\left( n^{3/2} g_1(n) \right)$.} & \makecell[l]{$\norm{\mathbf{y}_n^* - \boldsymbol{\gamma}} = O\left( \log(n) \max\left( \tfrac{1}{n}, n^2 \abs{g_2(n)} \right) \right)$, \vspace{1mm} \\ $y_{1,n}^* - \gamma_1 = O\left( \log(n) \max\left( \tfrac{1}{n}, n^2 \abs{g_2(n)} \right) \right)$, \\ $y_{2,n}^* - \gamma_2 = O\left( \max\left( \tfrac{1}{n}, n^2 \abs{g_2(n)} \right) \right)$, \\ $y_{3,n}^* - \gamma_3 = O\left( \max\left( \tfrac{1}{n^{3/2}}, n^{3/2} \abs{g_2(n)} \right) \right)$.}  \\
\hline

\makecell[l]{AF-5} & \makecell[l]{$\Theta\left( \frac{1}{n^{16/3}} \right)$} & \makecell[l]{$\norm{\mathbf{y}_n^* - \boldsymbol{\gamma}} = O\left( n^2 \log(n) g_1(n) \right)$, \vspace{1mm} \\ $y_{1,n}^* - \gamma_1 = O\left( n^2 \log(n) g_1(n) \right)$, \\ $y_{2,n}^* - \gamma_2 = O\left( n^2 g_1(n) \right)$, \\ $y_{3,n}^* - \gamma_3 = O\left( n^{5/3} g_1(n) \right)$.} & \makecell[l]{$\norm{\mathbf{y}_n^* - \boldsymbol{\gamma}} = O\left( \log(n) \max\left( \tfrac{1}{n}, n^2 \abs{g_2(n)} \right) \right)$, \vspace{1mm} \\ $y_{1,n}^* - \gamma_1 = O\left( \log(n) \max\left( \tfrac{1}{n}, n^2 \abs{g_2(n)} \right) \right)$, \\ $y_{2,n}^* - \gamma_2 = O\left( \max\left( \tfrac{1}{n}, n^2 \abs{g_2(n)} \right) \right)$, \\ $y_{3,n}^* - \gamma_3 = O\left( \max\left( \tfrac{1}{n^{4/3}}, n^{5/3} \abs{g_2(n)} \right) \right)$.}  \\
\hline 

\makecell[l]{AF-6} & \makecell[l]{$1 = \Theta(1)$} & \makecell[l]{$\norm{\mathbf{y}_n^* - \boldsymbol{\gamma}} = O\left( n g_1(n) \right)$, \vspace{1mm} \\ $y_{1,n}^* - \gamma_1 = O\left( n g_1(n) \right)$, \\ $y_{2,n}^* - \gamma_2 = O\left( g_1(n) \right) = o(1)$.} & \makecell[l]{$\norm{\mathbf{y}_n^* - \boldsymbol{\gamma}} = O\left( \max\left(\tfrac{1}{n}, n \abs{g_2(n)} \right) \right) = o(1)$, \vspace{1mm} \\ $y_{1,n}^* - \gamma_1 = O\left( \max\left(\tfrac{1}{n}, n \abs{g_2(n)} \right) \right) = o(1)$, \\ $y_{2,n}^* - \gamma_2 =  O\left( \max\left( \tfrac{1}{n^2}, \abs{g_2(n)} \right) \right) = o(1)$.}  \\
\hline 

\makecell[l]{AF-7} & \makecell[l]{$\Theta\left( \frac{1}{n^{2/3}} \right)$} & \makecell[l]{$\norm{\mathbf{y}_n^* - \boldsymbol{\gamma}} = O\left( n g_1(n) \right)$, \vspace{1mm} \\ $y_{1,n}^* - \gamma_1 = O\left( n g_1(n) \right)$, \\ $y_{2,n}^* - \gamma_2 = O\left( n^{1/3} g_1(n) \right)$.}  & \makecell[l]{$\norm{\mathbf{y}_n^* - \boldsymbol{\gamma}} = O\left( \max\left(\tfrac{1}{n}, n \abs{g_2(n)} \right) \right) = o(1)$, \vspace{1mm} \\ $y_{1,n}^* - \gamma_1 = O\left( \max\left(\tfrac{1}{n}, n \abs{g_2(n)} \right) \right) = o(1)$, \\ $y_{2,n}^* - \gamma_2 =  O\left( \max\left( \tfrac{1}{n^{5/3}}, n^{1/3} \abs{g_2(n)} \right) \right) = o(1)$.} \\
\hline 

\makecell[l]{AF-8} & \makecell[l]{$\Theta\left( \frac{1}{n} \right)$} & \makecell[l]{$\norm{\mathbf{y}_n^* - \boldsymbol{\gamma}} = O\left( n g_1(n) \right)$, \vspace{1mm} \\ $y_{1,n}^* - \gamma_1 = O\left( n g_1(n) \right)$, \\ $y_{2,n}^* - \gamma_2 = O\left( n^{1/2} g_1(n) \right)$.}  & \makecell[l]{$\norm{\mathbf{y}_n^* - \boldsymbol{\gamma}} = O\left( \max\left(\tfrac{1}{n}, n \abs{g_2(n)} \right) \right) = o(1)$, \vspace{1mm} \\ $y_{1,n}^* - \gamma_1 = O\left( \max\left(\tfrac{1}{n}, n \abs{g_2(n)} \right) \right) = o(1)$, \\ $y_{2,n}^* - \gamma_2 =  O\left( \max\left( \tfrac{1}{n^{3/2}}, n^{1/2} \abs{g_2(n)} \right) \right) = o(1)$.} \\
\hline 

\makecell[l]{AF-9} & \makecell[l]{$\Theta\left( \frac{1}{n^{4/3}} \right)$} & \makecell[l]{$\norm{\mathbf{y}_n^* - \boldsymbol{\gamma}} = O\left( n g_1(n) \right)$, \vspace{1mm} \\ $y_{1,n}^* - \gamma_1 = O\left( n g_1(n) \right)$, \\ $y_{2,n}^* - \gamma_2 = O\left( n^{2/3} g_1(n) \right)$.}  & \makecell[l]{$\norm{\mathbf{y}_n^* - \boldsymbol{\gamma}} = O\left( \max\left(\tfrac{1}{n}, n \abs{g_2(n)} \right) \right) = o(1)$, \vspace{1mm} \\ $y_{1,n}^* - \gamma_1 = O\left( \max\left(\tfrac{1}{n}, n \abs{g_2(n)} \right) \right) = o(1)$, \\ $y_{2,n}^* - \gamma_2 = O\left( \max\left( \tfrac{1}{n^{4/3}}, n^{2/3} \abs{g_2(n)} \right) \right) = o(1)$.} \\
\hline  

\makecell[l]{AF-10} & \makecell[l]{$\Theta\left( \frac{1}{n^2} \right)$} & \makecell[l]{$\norm{\mathbf{y}_n^* - \boldsymbol{\gamma}} = O\left( n \log(n) g_1(n) \right)$, \vspace{1mm} \\ $y_{1,n}^* - \gamma_1 = O\left( n \log(n) g_1(n) \right)$, \\ $y_{2,n}^* - \gamma_2 = O\left( n g_1(n) \right)$.} & \makecell[l]{$\norm{\mathbf{y}_n^* - \boldsymbol{\gamma}} = O\left( \log(n) \max\left(\tfrac{1}{n}, n \abs{g_2(n)} \right) \right)$, \vspace{1mm} \\ $y_{1,n}^* - \gamma_1 = O\left( \log(n) \max\left(\tfrac{1}{n}, n \abs{g_2(n)} \right) \right)$, \\ $y_{2,n}^* - \gamma_2 =  O\left( \max\left( \tfrac{1}{n}, n \abs{g_2(n)} \right) \right) = o(1)$.}  \\
\hline 

\makecell[l]{AF-11} & \makecell[l]{$1 = \Theta(1)$} & \makecell[l]{$y_{1,n}^* - \gamma_1 = O(g_1(n)) = o(1)$}  & \makecell[l]{$y_{1,n}^* - \gamma_1 = O\left( \max\left( \frac{1}{n}, \abs{g_2(n)} \right) \right) = O\left( \frac{1}{n} \right) = o(1)$} \\
\hline  
\end{tabular}%
}
\end{table}

\renewcommand{\arraystretch}{1.5}
\begin{table}[t!]
	\centering
	\caption{Table~\ref{tbl:Fundamental_results_for_common_asymptotic_forms_sLLSQ} (continued).} 
	\label{tbl:Fundamental_results_for_common_asymptotic_forms_sLLSQ_continued}
		
	\resizebox{\textwidth}{!}{%
		\begin{tabular}{|l|l|l|}
		\hline 
		
		\multirow{2}{*}{\makecell[c]{\textbf{Asymptotic} \\ \textbf{form}}} & \multicolumn{2}{c|}{\makecell[c]{\textbf{sLLSQ with $m=k$} \\ \textbf{(asymptotic estimates as $n \to \infty$)}}}  \\
		\cline{2-3}
		& \makecell[c]{$p = 2$, $g_1(n) = \frac{1}{n}$, $g_2(n) = \frac{1}{n^2}$, $g_3(n) = o\left( \frac{1}{n^2} \right)$.} & \makecell[c]{$p = 3$, $g_1(n) = \frac{1}{n}$, $g_2(n) = \frac{1}{n^2}$, $g_3(n) = \frac{1}{n^3}$, $g_4(n) = o\left( \frac{1}{n^3} \right)$.}   \\
		\hline
		
		\makecell[l]{AF-1} & \makecell[l]{$\norm{\mathbf{y}_n^* - \boldsymbol{\gamma}} = O\left( \log(n) \max\left( \tfrac{1}{n}, n^3 \abs{g_3(n)} \right) \right)$, \vspace{1mm} \\ $y_{1,n}^* - \gamma_1 = O\left( \log(n) \max\left( \tfrac{1}{n}, n^3 \abs{g_3(n)} \right) \right)$, \\ $y_{2,n}^* - \gamma_2 = O\left( \max\left( \tfrac{1}{n}, n^3 \abs{g_3(n)} \right) \right)$, \\ $y_{3,n}^* - \gamma_3 = O\left( \log(n) \max\left( \tfrac{1}{n^2}, n^2 \abs{g_3(n)} \right) \right)$,  \\ 	$y_{4,n}^* - \gamma_4 = O\left( \max\left( \tfrac{1}{n^2}, n^2 \abs{g_3(n)} \right) \right) = o(1)$.} & \makecell[l]{$\norm{\mathbf{y}_n^* - \boldsymbol{\gamma}} = O\left( \log(n) \max\left( \tfrac{1}{n}, n^3 \abs{g_4(n)} \right) \right)$, \vspace{1mm} \\ $y_{1,n}^* - \gamma_1 = O\left( \log(n) \max\left( \tfrac{1}{n}, n^3 \abs{g_4(n)} \right) \right)$, \\ $y_{2,n}^* - \gamma_2 = O\left( \max\left( \tfrac{1}{n}, n^3 \abs{g_4(n)} \right) \right) = o(1)$, \\ $y_{3,n}^* - \gamma_3 = O\left( \log(n) \max\left( \tfrac{1}{n^2}, n^2 \abs{g_4(n)} \right) \right) = o(1)$,  \\ 	$y_{4,n}^* - \gamma_4 = O\left( \max\left( \tfrac{1}{n^2}, n^2 \abs{g_4(n)} \right) \right) = o(1)$.}  \\
		\hline 
		
		\makecell[l]{AF-2} & \makecell[l]{$				\norm{\mathbf{y}_n^* - \boldsymbol{\gamma}} = O\left( \log(n) \max\left( \tfrac{1}{n}, n^2 \abs{g_3(n)} \right) \right)$, \vspace{1mm} \\ $y_{1,n}^* - \gamma_1 = O\left( \log(n) \max\left( \frac{1}{n}, n^2 \abs{g_3(n)} \right) \right)$, \\ $y_{2,n}^* - \gamma_2  = O\left( \max\left( \tfrac{1}{n}, n^2 \abs{g_3(n)} \right) \right) = o(1)$, \\ $y_{3,n}^* - \gamma_3 = O\left( \max\left( \tfrac{1}{n^2}, n \abs{g_3(n)} \right) \right) = o(1)$.} & \makecell[l]{$\norm{\mathbf{y}_n^* - \boldsymbol{\gamma}} = O\left( \log(n) \max\left( \tfrac{1}{n}, n^2 \abs{g_4(n)} \right) \right) = O\left( \tfrac{\log(n)}{n} \right) = o(1)$, \vspace{1mm} \\ $y_{1,n}^* - \gamma_1 = O\left( \log(n) \max\left( \frac{1}{n}, n^2 \abs{g_4(n)} \right) \right) = O\left( \tfrac{\log(n)}{n} \right) = o(1)$, \\ $y_{2,n}^* - \gamma_2  = O\left( \max\left( \tfrac{1}{n}, n^2 \abs{g_4(n)} \right) \right) = O\left( \tfrac{1}{n} \right) = o(1)$, \\ $y_{3,n}^* - \gamma_3 = O\left( \max\left( \tfrac{1}{n^2}, n \abs{g_4(n)} \right) \right) = O\left( \tfrac{1}{n^2} \right) = o(1)$.}  \\
		\hline 
		
		\makecell[l]{AF-3} & \makecell[l]{$				\norm{\mathbf{y}_n^* - \boldsymbol{\gamma}} = O\left( \log(n) \max\left( \tfrac{1}{n}, n^2 \abs{g_3(n)} \right) \right)$,  \vspace{1mm} \\ $y_{1,n}^* - \gamma_1 = O\left( \log(n) \max\left( \tfrac{1}{n}, n^2 \abs{g_3(n)} \right) \right)$, \\ $y_{2,n}^* - \gamma_2 = O\left( \max\left( \tfrac{1}{n}, n^2 \abs{g_3(n)} \right) \right) =  o(1)$, \\ $y_{3,n}^* - \gamma_3 = O\left( \max\left( \tfrac{1}{n^{5/3}}, n^{4/3} \abs{g_3(n)} \right) \right) =  o(1)$.} & \makecell[l]{$\norm{\mathbf{y}_n^* - \boldsymbol{\gamma}} = O\left( \log(n) \max\left( \tfrac{1}{n}, n^2 \abs{g_4(n)} \right) \right) = O\left( \tfrac{\log(n)}{n} \right) =  o(1)$, \vspace{1mm} \\ $y_{1,n}^* - \gamma_1 = O\left( \log(n) \max\left( \tfrac{1}{n}, n^2 \abs{g_4(n)} \right) \right) = O\left( \tfrac{\log(n)}{n} \right) =  o(1)$, \\ $y_{2,n}^* - \gamma_2 = O\left( \max\left( \tfrac{1}{n}, n^2 \abs{g_4(n)} \right) \right) = O\left( \tfrac{1}{n} \right) =  o(1)$, \\ $y_{3,n}^* - \gamma_3 = O\left( \max\left( \tfrac{1}{n^{5/3}}, n^{4/3} \abs{g_4(n)} \right) \right) = O\left( \tfrac{1}{n^{5/3}} \right) =  o(1)$.}  \\
		\hline
		
		\makecell[l]{AF-4} & \makecell[l]{$				\norm{\mathbf{y}_n^* - \boldsymbol{\gamma}} = O\left( \log(n) \max\left( \tfrac{1}{n}, n^2 \abs{g_3(n)} \right) \right)$, \vspace{1mm} \\ $y_{1,n}^* - \gamma_1 = O\left( \log(n) \max\left( \tfrac{1}{n}, n^2 \abs{g_3(n)} \right) \right)$, \\ $y_{2,n}^* - \gamma_2 = O\left( \max\left( \tfrac{1}{n}, n^2 \abs{g_3(n)} \right) \right) = o(1)$, \\ $y_{3,n}^* - \gamma_3 = O\left( \max\left( \tfrac{1}{n^{3/2}}, n^{3/2} \abs{g_3(n)} \right) \right) = o(1)$.} & \makecell[l]{$\norm{\mathbf{y}_n^* - \boldsymbol{\gamma}} = O\left( \log(n) \max\left( \tfrac{1}{n}, n^2 \abs{g_4(n)} \right) \right) = O\left( \tfrac{\log(n)}{n} \right) = o(1)$, \vspace{1mm} \\ $y_{1,n}^* - \gamma_1 = O\left( \log(n) \max\left( \tfrac{1}{n}, n^2 \abs{g_4(n)} \right) \right) = O\left( \tfrac{\log(n)}{n} \right) = o(1)$, \\ $y_{2,n}^* - \gamma_2 = O\left( \max\left( \tfrac{1}{n}, n^2 \abs{g_4(n)} \right) \right) = O\left( \tfrac{1}{n} \right) = o(1)$, \\ $y_{3,n}^* - \gamma_3 = O\left( \max\left( \tfrac{1}{n^{3/2}}, n^{3/2} \abs{g_4(n)} \right) \right) = O\left( \tfrac{1}{n^{3/2}} \right) = o(1)$.}  \\
		\hline
		
		\makecell[l]{AF-5} & \makecell[l]{$				\norm{\mathbf{y}_n^* - \boldsymbol{\gamma}} = O\left( \log(n) \max\left( \tfrac{1}{n}, n^2 \abs{g_3(n)} \right) \right)$, \vspace{1mm} \\ $y_{1,n}^* - \gamma_1 = O\left( \log(n) \max\left( \tfrac{1}{n}, n^2 \abs{g_3(n)} \right) \right)$, \\ $y_{2,n}^* - \gamma_2 = O\left( \max\left( \tfrac{1}{n}, n^2 \abs{g_3(n)} \right) \right) = o(1)$, \\ $y_{3,n}^* - \gamma_3 = O\left( \max\left( \tfrac{1}{n^{4/3}}, n^{5/3} \abs{g_3(n)} \right) \right) = o(1)$.} & \makecell[l]{$\norm{\mathbf{y}_n^* - \boldsymbol{\gamma}} = O\left( \log(n) \max\left( \tfrac{1}{n}, n^2 \abs{g_4(n)} \right) \right) = O\left( \tfrac{\log(n)}{n} \right) = o(1)$, \vspace{1mm} \\ $y_{1,n}^* - \gamma_1 = O\left( \log(n) \max\left( \tfrac{1}{n}, n^2 \abs{g_4(n)} \right) \right) = O\left( \tfrac{\log(n)}{n} \right) = o(1)$, \\ $y_{2,n}^* - \gamma_2 = O\left( \max\left( \tfrac{1}{n}, n^2 \abs{g_4(n)} \right) \right) = O\left( \tfrac{1}{n} \right) = o(1)$, \\ $y_{3,n}^* - \gamma_3 = O\left( \max\left( \tfrac{1}{n^{4/3}}, n^{5/3} \abs{g_4(n)} \right) \right) = O\left( \tfrac{1}{n^{4/3}} \right) = o(1)$.}  \\
		\hline 
		
		\makecell[l]{AF-6} & \makecell[l]{$\norm{\mathbf{y}_n^* - \boldsymbol{\gamma}} = O\left( \max\left(\tfrac{1}{n}, n \abs{g_3(n)} \right) \right) = O\left( \tfrac{1}{n} \right) = o(1)$, \vspace{1mm} \\ $y_{1,n}^* - \gamma_1 = O\left( \max\left(\tfrac{1}{n}, n \abs{g_3(n)} \right) \right) = O\left( \tfrac{1}{n} \right) = o(1)$, \\ $y_{2,n}^* - \gamma_2 =  O\left( \max\left( \tfrac{1}{n^2}, \abs{g_3(n)} \right) \right) = O\left( \tfrac{1}{n^2} \right) = o(1)$.} & \makecell[l]{$\norm{\mathbf{y}_n^* - \boldsymbol{\gamma}} = O\left( \max\left(\tfrac{1}{n}, n \abs{g_4(n)} \right) \right) = O\left( \tfrac{1}{n} \right) = o(1)$, \vspace{1mm} \\ $y_{1,n}^* - \gamma_1 = O\left( \max\left(\tfrac{1}{n}, n \abs{g_4(n)} \right) \right) = O\left( \tfrac{1}{n} \right) = o(1)$, \\ $y_{2,n}^* - \gamma_2 =  O\left( \max\left( \tfrac{1}{n^2}, \abs{g_4(n)} \right) \right) = O\left( \tfrac{1}{n^2} \right) = o(1)$.}  \\
		\hline 
		
		\makecell[l]{AF-7} & \makecell[l]{$\norm{\mathbf{y}_n^* - \boldsymbol{\gamma}} = O\left( \max\left(\tfrac{1}{n}, n \abs{g_3(n)} \right) \right) = O\left( \tfrac{1}{n} \right) = o(1)$, \vspace{1mm} \\ $y_{1,n}^* - \gamma_1 = O\left( \max\left(\tfrac{1}{n}, n \abs{g_3(n)} \right) \right) = O\left( \tfrac{1}{n} \right) = o(1)$, \\ $y_{2,n}^* - \gamma_2 =  O\left( \max\left( \tfrac{1}{n^{5/3}}, n^{1/3} \abs{g_3(n)} \right) \right) = O\left( \tfrac{1}{n^{5/3}} \right) = o(1)$.}  & \makecell[l]{$\norm{\mathbf{y}_n^* - \boldsymbol{\gamma}} = O\left( \max\left(\tfrac{1}{n}, n \abs{g_4(n)} \right) \right) = O\left( \tfrac{1}{n} \right) = o(1)$, \vspace{1mm} \\ $y_{1,n}^* - \gamma_1 = O\left( \max\left(\tfrac{1}{n}, n \abs{g_4(n)} \right) \right) = O\left( \tfrac{1}{n} \right) = o(1)$, \\ $y_{2,n}^* - \gamma_2 =  O\left( \max\left( \tfrac{1}{n^{5/3}}, n^{1/3} \abs{g_4(n)} \right) \right) = O\left( \tfrac{1}{n^{5/3}} \right) = o(1)$.} \\
		\hline 
		
		\makecell[l]{AF-8} & \makecell[l]{$\norm{\mathbf{y}_n^* - \boldsymbol{\gamma}} = O\left( \max\left(\tfrac{1}{n}, n \abs{g_3(n)} \right) \right) = O\left( \tfrac{1}{n} \right) = o(1)$, \vspace{1mm} \\ $y_{1,n}^* - \gamma_1 = O\left( \max\left(\tfrac{1}{n}, n \abs{g_3(n)} \right) \right) = O\left( \tfrac{1}{n} \right) = o(1)$, \\ $y_{2,n}^* - \gamma_2 =  O\left( \max\left( \tfrac{1}{n^{3/2}}, n^{1/2} \abs{g_3(n)} \right) \right) = O\left( \tfrac{1}{n^{3/2}} \right) = o(1)$.}  & \makecell[l]{$\norm{\mathbf{y}_n^* - \boldsymbol{\gamma}} = O\left( \max\left(\tfrac{1}{n}, n \abs{g_4(n)} \right) \right) = O\left( \tfrac{1}{n} \right) = o(1)$, \vspace{1mm} \\ $y_{1,n}^* - \gamma_1 = O\left( \max\left(\tfrac{1}{n}, n \abs{g_4(n)} \right) \right) = O\left( \tfrac{1}{n} \right) = o(1)$, \\ $y_{2,n}^* - \gamma_2 =  O\left( \max\left( \tfrac{1}{n^{3/2}}, n^{1/2} \abs{g_4(n)} \right) \right) = O\left( \tfrac{1}{n^{3/2}} \right) = o(1)$.} \\
		\hline 
		
		\makecell[l]{AF-9} & \makecell[l]{$\norm{\mathbf{y}_n^* - \boldsymbol{\gamma}} = O\left( \max\left(\tfrac{1}{n}, n \abs{g_3(n)} \right) \right) = O\left( \tfrac{1}{n} \right) = o(1)$, \vspace{1mm} \\ $y_{1,n}^* - \gamma_1 = O\left( \max\left(\tfrac{1}{n}, n \abs{g_3(n)} \right) \right) = O\left( \tfrac{1}{n} \right) = o(1)$, \\ $y_{2,n}^* - \gamma_2 = O\left( \max\left( \tfrac{1}{n^{4/3}}, n^{2/3} \abs{g_3(n)} \right) \right) = O\left( \tfrac{1}{n^{4/3}} \right) = o(1)$.}  & \makecell[l]{$\norm{\mathbf{y}_n^* - \boldsymbol{\gamma}} = O\left( \max\left(\tfrac{1}{n}, n \abs{g_4(n)} \right) \right) = O\left( \tfrac{1}{n} \right) = o(1)$, \vspace{1mm} \\ $y_{1,n}^* - \gamma_1 = O\left( \max\left(\tfrac{1}{n}, n \abs{g_4(n)} \right) \right) = O\left( \tfrac{1}{n} \right) = o(1)$, \\ $y_{2,n}^* - \gamma_2 = O\left( \max\left( \tfrac{1}{n^{4/3}}, n^{2/3} \abs{g_4(n)} \right) \right) = O\left( \tfrac{1}{n^{4/3}} \right) = o(1)$.} \\
		\hline  
		
		\makecell[l]{AF-10} & \makecell[l]{$\norm{\mathbf{y}_n^* - \boldsymbol{\gamma}} = O\left( \log(n) \max\left(\tfrac{1}{n}, n \abs{g_3(n)} \right) \right) = O\left( \tfrac{\log(n)}{n} \right) = o(1)$, \vspace{1mm} \\ $y_{1,n}^* - \gamma_1 = O\left( \log(n) \max\left(\tfrac{1}{n}, n \abs{g_3(n)} \right) \right) = O\left( \tfrac{\log(n)}{n} \right) = o(1)$, \\ $y_{2,n}^* - \gamma_2 =  O\left( \max\left( \tfrac{1}{n}, n \abs{g_3(n)} \right) \right) = O\left( \tfrac{1}{n} \right) = o(1)$.} & \makecell[l]{$\norm{\mathbf{y}_n^* - \boldsymbol{\gamma}} = O\left( \log(n) \max\left(\tfrac{1}{n}, n \abs{g_4(n)} \right) \right) = O\left( \tfrac{\log(n)}{n} \right) = o(1)$, \vspace{1mm} \\ $y_{1,n}^* - \gamma_1 = O\left( \log(n) \max\left(\tfrac{1}{n}, n \abs{g_4(n)} \right) \right) = O\left( \tfrac{\log(n)}{n} \right) = o(1)$, \\ $y_{2,n}^* - \gamma_2 =  O\left( \max\left( \tfrac{1}{n}, n \abs{g_4(n)} \right) \right) = O\left( \tfrac{1}{n} \right) = o(1)$.}  \\
		\hline 
		
		\makecell[l]{AF-11} & \makecell[l]{$y_{1,n}^* - \gamma_1 = O\left( \max\left( \frac{1}{n}, \abs{g_3(n)} \right) \right) = O\left( \frac{1}{n} \right) = o(1)$}  & \makecell[l]{$y_{1,n}^* - \gamma_1 = O\left( \max\left( \frac{1}{n}, \abs{g_4(n)} \right) \right) = O\left( \frac{1}{n} \right) = o(1)$} \\
		\hline   
	\end{tabular}%
} 
\end{table}

\renewcommand{\arraystretch}{1.5}
\begin{table}[t!]
	\centering
	\caption{Fundamental results of sT-LLSQ for common asymptotic forms. See also Tables~\ref{tbl:Asymptotics_learning_theory_General_problem},~\ref{tbl:Summary_of_theoretical_results_for_sLLSQ_and_sT-LLSQ}~and~\ref{tbl:Common_asymptotic_forms} (the asymptotic forms AF-1 to AF-11 are given in Table~\ref{tbl:Common_asymptotic_forms}) as well as Remark~\ref{rem:No_improvement_in_convergence_rate}. In all the considered cases (except for AF-11), we also have $y_{j,n}^* - \gamma_j = O(1)$, as $n \to \infty$, for all $j \in \{1,\dots,k-1\}$. Specifically, $y_{j,n}^* = o(1)$ because $y_{j,n}^* = \gamma_j - \mu \mathbf{D}(j,:) \boldsymbol{\gamma} + \mathbf{C}(j,:) \left( \log\left( 1 + \sum_{l=1}^p {\beta_l \mathbf{g}_l} \right) + \mathbf{z} \right) = \gamma_j - \mu \mathbf{D}(j,:) \boldsymbol{\gamma} + O\left( \norm{\mathbf{C}(j,:)} \right)$ with $\gamma_j - \mu \mathbf{D}(j,:) \boldsymbol{\gamma} = o(1)$ and $\norm{\mathbf{C}(j,:)} = o(1)$, for all $j \in \{1,\dots,k-1\}$.}
	\label{tbl:Fundamental_results_for_common_asymptotic_forms_sT-LLSQ} 
	
	\footnotesize
		\begin{tabular}{|l|l|l|}
			\hline 
			\multirow{3}{*}{\makecell[c]{\textbf{Asymptotic} \\ \textbf{form}}} & \multicolumn{2}{c|}{\makecell[c]{\textbf{sT-LLSQ with $m=k$ and fixed $\mu \in \RR_{>0}$} \\ \textbf{(asymptotic estimate of $\norm{\mathbf{D}(k,:)}$} \\ \textbf{and convergence rate of $y_{k,n}^*$ as $n \to \infty$)}}} \\  
			\cline{2-3}
			& \makecell[c]{$\norm{\mathbf{D}(k,:)}$} & \makecell[c]{$p \in \NN_0$, $g_1(n) = o(1)$ and \\ $g_{l+1}(n) = o(g_l(n))$, $\forall l \in \{1,\dots,p\}$.}   \\
			\hline
			
			\makecell[l]{AF-1} & \makecell[l]{$\Theta\left( \frac{1}{\log(n)} \right)$} & \makecell[l]{$y_{4,n}^* - \gamma_4 = O\left( \frac{1}{\log(n)} \right) = o(1)$}    \\
			\hline
			
			\makecell[l]{AF-2} & \makecell[l]{$\Theta\left( \frac{\log(n)}{n} \right)$} & \makecell[l]{$y_{3,n}^* - \gamma_3 = O\left( \frac{\log(n)}{n} \right) = o(1)$}    \\
			\hline 
			
			\makecell[l]{AF-3} & \makecell[l]{$\Theta\left( \frac{\log(n)}{n^{2/3}} \right)$} & \makecell[l]{$y_{3,n}^* - \gamma_3 = O\left( \frac{\log(n)}{n^{2/3}} \right) = o(1)$}    \\
			\hline 
			
			\makecell[l]{AF-4} & \makecell[l]{$\Theta\left( \frac{\log(n)}{n^{1/2}} \right)$} & \makecell[l]{$y_{3,n}^* - \gamma_3 = O\left( \frac{\log(n)}{n^{1/2}} \right) = o(1)$}    \\
			\hline 
			
			\makecell[l]{AF-5} & \makecell[l]{$\Theta\left( \frac{\log(n)}{n^{1/3}} \right)$} & \makecell[l]{$y_{3,n}^* - \gamma_3 = O\left( \frac{\log(n)}{n^{1/3}} \right) = o(1)$}    \\
			\hline 
			
			\makecell[l]{AF-6} & \makecell[l]{$\Theta\left( \frac{1}{n} \right)$} & \makecell[l]{$y_{2,n}^* - \gamma_2 = O\left( \frac{1}{n} \right) = o(1)$}    \\
			\hline
			
			\makecell[l]{AF-7} & \makecell[l]{$\Theta\left( \frac{1}{n^{2/3}} \right)$} & \makecell[l]{$y_{2,n}^* - \gamma_2 = O\left( \frac{1}{n^{2/3}} \right) = o(1)$}    \\
			\hline
			
			\makecell[l]{AF-8} & \makecell[l]{$\Theta\left( \frac{1}{n^{1/2}} \right)$} & \makecell[l]{$y_{2,n}^* - \gamma_2 = O\left( \frac{1}{n^{1/2}} \right) = o(1)$}    \\
			\hline
			
			\makecell[l]{AF-9} & \makecell[l]{$\Theta\left( \frac{1}{n^{1/3}} \right)$} & \makecell[l]{$y_{2,n}^* - \gamma_2 = O\left( \frac{1}{n^{1/3}} \right) = o(1)$}   \\
			\hline
			
			\makecell[l]{AF-10} & \makecell[l]{$\Theta\left( \frac{1}{\log(n)} \right)$} & \makecell[l]{$y_{2,n}^* - \gamma_2 = O\left( \frac{1}{\log(n)} \right) = o(1)$}   \\
			\hline
			
			\makecell[l]{AF-11} & \makecell[l]{$\frac{1}{1 + \mu} = \Theta(1)$} & \makecell[l]{Not directly applicable, since \\ Assumption~\ref{assum:sT-LLSQ_when_k_equals_1} does not hold \\ ($k=1$ and $\varphi_1(n) \defeq 1 = O(1)$). \\ However, Remark~\ref{rem:sT-LLSQ_for_single_parameter_and_unit_varphi_1} gives \\ $\widetilde{y}_{1,n}^* - \gamma_1 = O(\norm{\mathbf{g}_1}) = o(1)$, \\ where $\widetilde{y}_{1,n}^* \defeq (1 + \mu) y_{1,n}^*$.}   \\
			\hline  
		\end{tabular}%
\end{table}

An \emph{important observation} is that information about higher-order asymptotics is \emph{crucial} in order to guarantee the convergence of sLLSQ. In other words, when sLLSQ is actually convergent, we are able to rigorously prove its convergence only if the order of asymptotic expansion is \emph{sufficiently high}. This is clearly explained in the following remarks.

\begin{remark} \label{rem:AF-1_sLLSQ_convergence}
	For the asymptotic form AF-1 in Table~\ref{tbl:Common_asymptotic_forms} with $p=3$ and $g_l(n) = \frac{1}{n^l}$ for all $l \in \{1,2,3,4\}$, i.e., when 
	\begin{equation*}
		f(n) = \alpha_1 n^{\alpha_2} \alpha_3^n n^{\alpha_4 n} \left( 1 + \sum_{l=1}^3 \frac{\beta_l}{n^l} + O\left( \frac{1}{n^4} \right) \right)   , 
	\end{equation*} 
	we obtain the following convergence rates of sLLSQ with $m=k=4$ (see Table~\ref{tbl:Fundamental_results_for_common_asymptotic_forms_sLLSQ_continued}): 
	\begin{align*}
		\norm{\mathbf{y}_n^* - \boldsymbol{\gamma}} & = O\left( \log(n) \max\left( \tfrac{1}{n}, n^3 \abs{g_4(n)} \right) \right) = O\left( \frac{\log(n)}{n} \right) = o(1)  ,  \\
		y_{1,n}^* - \gamma_1 & = O\left( \log(n) \max\left( \tfrac{1}{n}, n^3 \abs{g_4(n)} \right) \right) = O\left( \frac{\log(n)}{n} \right) = o(1)  , \\
		y_{2,n}^* - \gamma_2 & = O\left( \max\left( \tfrac{1}{n}, n^3 \abs{g_4(n)} \right) \right) = O\left( \frac{1}{n} \right) = o(1) , \\
		y_{3,n}^* - \gamma_3 & = O\left( \log(n) \max\left( \tfrac{1}{n^2}, n^2 \abs{g_4(n)} \right) \right) = O\left( \frac{\log(n)}{n^2} \right) = o(1)  ,  \\
		y_{4,n}^* - \gamma_4 & = O\left( \max\left( \tfrac{1}{n^2}, n^2 \abs{g_4(n)} \right) \right) = O\left( \frac{1}{n^2} \right) = o(1) .
	\end{align*}
	In other words, the sLLSQ method is \emph{provably convergent} in such a case, with given third-order (or, higher-order) asymptotics. Table~\ref{tbl:Fundamental_results_for_common_asymptotic_forms_sT-LLSQ} gives the convergence rate $y_{4,n}^* - \gamma_4 = O\left( \frac{1}{\log(n)} \right) = o(1)$ for sT-LLSQ, which is much lower compared to sLLSQ because $\log(n) = o\left( n^2 \right)$.
	
	However, if only the zeroth/first/second-order asymptotics is given (i.e., $p=0,1,2$), then the sLLSQ convergence is \emph{not} provable. According to Table~\ref{tbl:Fundamental_results_for_common_asymptotic_forms_sLLSQ_continued}, for AF-1 with $p = 2$ and $g_l(n) = \frac{1}{n^l}$ for all $l \in \{1,2,3\}$, we obtain
	\begin{align*}
		\norm{\mathbf{y}_n^* - \boldsymbol{\gamma}} & = O\left( \log(n) \max\left( \tfrac{1}{n}, n^3 \abs{g_3(n)} \right) \right) =  O\left( \log(n) \right)  ,  \\
		y_{1,n}^* - \gamma_1 & = O\left( \log(n) \max\left( \tfrac{1}{n}, n^3 \abs{g_3(n)} \right) \right) =  O\left( \log(n) \right)  , \\
		y_{2,n}^* - \gamma_2 & = O\left( \max\left( \tfrac{1}{n}, n^3 \abs{g_3(n)} \right) \right) = O\left( 1 \right) , \\
		y_{3,n}^* - \gamma_3 & = O\left( \log(n) \max\left( \tfrac{1}{n^2}, n^2 \abs{g_3(n)} \right) \right) = O\left( \frac{\log(n)}{n} \right) = o(1)  ,  \\
		y_{4,n}^* - \gamma_4 & = O\left( \max\left( \tfrac{1}{n^2}, n^2 \abs{g_3(n)} \right) \right) = O\left( \frac{1}{n} \right) = o(1) .
	\end{align*}
	Furthermore, for AF-1 with $p = 1$ and $g_l(n) = \frac{1}{n^l}$ for all $l \in \{1,2\}$, Table~\ref{tbl:Fundamental_results_for_common_asymptotic_forms_sLLSQ} yields
	\begin{align*}
		\norm{\mathbf{y}_n^* - \boldsymbol{\gamma}} & = O\left( \log(n) \max\left( \tfrac{1}{n}, n^3 \abs{g_2(n)} \right) \right) = O\left( n \log(n) \right) ,  \\
		y_{1,n}^* - \gamma_1 & = O\left( \log(n) \max\left( \tfrac{1}{n}, n^3 \abs{g_2(n)} \right) \right) = O\left( n \log(n) \right)  , \\
		y_{2,n}^* - \gamma_2 & = O\left( \max\left( \tfrac{1}{n}, n^3 \abs{g_2(n)} \right) \right) = O\left( n \right) , \\
		y_{3,n}^* - \gamma_3 & = O\left( \log(n) \max\left( \tfrac{1}{n^2}, n^2 \abs{g_2(n)} \right) \right) = O\left( \log(n) \right)  ,  \\
		y_{4,n}^* - \gamma_4 & = O\left( \max\left( \tfrac{1}{n^2}, n^2 \abs{g_2(n)} \right) \right) = O\left( 1 \right) .
	\end{align*}
	Finally, for AF-1 with $p = 0$ and $g_1(n) = \frac{1}{n}$, we have from Table~\ref{tbl:Fundamental_results_for_common_asymptotic_forms_sLLSQ}
	\begin{align*}
		\norm{\mathbf{y}_n^* - \boldsymbol{\gamma}} & = O\left( n^3 \log(n) g_1(n) \right) = O\left( n^2 \log(n) \right) ,  \\
		y_{1,n}^* - \gamma_1 & = O\left( n^3 \log(n) g_1(n) \right) = O\left( n^2 \log(n) \right)  , \\
		y_{2,n}^* - \gamma_2 & = O\left( n^3 g_1(n) \right) = O\left( n^2 \right) , \\
		y_{3,n}^* - \gamma_3 & = O\left( n^2 \log(n) g_1(n) \right) = O\left( n \log(n) \right)  ,  \\
		y_{4,n}^* - \gamma_4 & = O\left( n^2 g_1(n) \right) = O\left( n \right) .
	\end{align*}
\end{remark}

\begin{remark} \label{rem:AF-2_sLLSQ_convergence}
	For the asymptotic form AF-2 in Table~\ref{tbl:Common_asymptotic_forms} with $p = 2$ and $g_l(n) = \frac{1}{n^l}$ for all $l \in \{1,2,3\}$, i.e., when 
	\begin{equation*}
		f(n) = \alpha_1 n^{\alpha_2} \alpha_3^n \left( 1 + \sum_{l=1}^2 \frac{\beta_l}{n^l} + O\left( \frac{1}{n^3} \right) \right)   , 
	\end{equation*}
	we obtain the following convergence rates of sLLSQ with $m=k=3$ (see Table~\ref{tbl:Fundamental_results_for_common_asymptotic_forms_sLLSQ_continued}):
	\begin{align*}
		\norm{\mathbf{y}_n^* - \boldsymbol{\gamma}} & = O\left( \log(n) \max\left( \frac{1}{n}, n^2 \abs{g_3(n)} \right) \right) = O\left( \frac{\log(n)}{n} \right) = o(1) , \\
		y_{1,n}^* - \gamma_1 & = O\left( \log(n) \max\left( \frac{1}{n}, n^2 \abs{g_3(n)} \right) \right) = O\left( \frac{\log(n)}{n} \right) = o(1) ,  \\
		y_{2,n}^* - \gamma_2  & = O\left( \max\left( \tfrac{1}{n}, n^2 \abs{g_3(n)} \right) \right) = O\left( \frac{1}{n} \right) = o(1)  , \\
		y_{3,n}^* - \gamma_3 & = O\left( \max\left( \tfrac{1}{n^2}, n \abs{g_3(n)} \right) \right) = O\left( \frac{1}{n^2} \right) = o(1) .
	\end{align*}
	In other words, the sLLSQ method is \emph{provably convergent} in such a case, with given second-order (or, higher-order) asymptotics. Table~\ref{tbl:Fundamental_results_for_common_asymptotic_forms_sT-LLSQ} yields the convergence rate $y_{3,n}^* - \gamma_3 = O\left( \frac{\log(n)}{n} \right) = o(1)$ for sT-LLSQ, which is much lower than sLLSQ since $\frac{n}{\log(n)} = o\left( n^2 \right)$. 
	
	On the other hand, if only the zeroth/first-order asymptotics is given (i.e., $p=0,1$), then the sLLSQ convergence is \emph{not} provable. According to Table~\ref{tbl:Fundamental_results_for_common_asymptotic_forms_sLLSQ}, for AF-2 with $p = 1$, $g_1(n) = \frac{1}{n}$ and $g_2(n) = \frac{1}{n^2}$, we obtain  
	\begin{align*}
		\norm{\mathbf{y}_n^* - \boldsymbol{\gamma}} & = O\left( \log(n) \max\left( \frac{1}{n}, n^2 \abs{g_2(n)} \right) \right) =  O\left( \log(n) \right) , \\
		y_{1,n}^* - \gamma_1 & = O\left( \log(n) \max\left( \frac{1}{n}, n^2 \abs{g_2(n)} \right) \right) =  O\left( \log(n) \right) ,  \\
		y_{2,n}^* - \gamma_2 & = O\left( \max\left( \tfrac{1}{n}, n^2 \abs{g_2(n)} \right) \right) = O\left( 1 \right)  , \\
		y_{3,n}^* - \gamma_3 & = O\left( \max\left( \tfrac{1}{n^2}, n \abs{g_2(n)} \right) \right) = O\left( \frac{1}{n} \right) = o(1) .
	\end{align*}
	Moreover, for AF-2 with $p = 0$ and $g_1(n) = \frac{1}{n}$, we get from Table~\ref{tbl:Fundamental_results_for_common_asymptotic_forms_sLLSQ}
	\begin{align*}
		\norm{\mathbf{y}_n^* - \boldsymbol{\gamma}} & = O\left( n^2 \log(n) g_1(n) \right) = O\left( n \log(n) \right) , \\
		y_{1,n}^* - \gamma_1 & = O\left( n^2 \log(n) g_1(n) \right) = O\left( n \log(n) \right) ,  \\
		y_{2,n}^* - \gamma_2 & = O\left( n^2 g_1(n) \right) = O\left( n \right)  , \\
		y_{3,n}^* - \gamma_3 & = O\left( n g_1(n) \right) = O\left( 1 \right) .
	\end{align*}
\end{remark}

\begin{remark} 
	For the asymptotic form AF-6 in Table~\ref{tbl:Common_asymptotic_forms} with $p = 1$ and $g_l(n) = \frac{1}{n^l}$ for all $l \in \{1,2\}$, i.e., when 
	\begin{equation*}
		f(n) = \alpha_1 \alpha_2^n \left( 1 + \frac{\beta_1}{n} + O\left( \frac{1}{n^2} \right) \right)   , 
	\end{equation*}
	we obtain the following convergence rates of sLLSQ with $m=k=2$ (see Table~\ref{tbl:Fundamental_results_for_common_asymptotic_forms_sLLSQ}):
	\begin{align*}
		\norm{\mathbf{y}_n^* - \boldsymbol{\gamma}} & = O\left( \max\left(\frac{1}{n}, n \abs{g_2(n)} \right) \right) = O\left( \frac{1}{n} \right) = o(1) , \\
		y_{1,n}^* - \gamma_1 & = O\left( \max\left(\frac{1}{n}, n \abs{g_2(n)} \right) \right) = O\left( \frac{1}{n} \right) = o(1) ,  \\
		y_{2,n}^* - \gamma_2  & = O\left( \max\left(\frac{1}{n^2}, \abs{g_2(n)} \right) \right) = O\left( \frac{1}{n^2} \right) = o(1)  .
	\end{align*}
	In other words, the sLLSQ method is \emph{provably convergent} in such a case, with given first-order (or, higher-order) asymptotics. Table~\ref{tbl:Fundamental_results_for_common_asymptotic_forms_sT-LLSQ} yields the convergence rate $y_{2,n}^* - \gamma_2 = O\left( \frac{1}{n} \right) = o(1)$ for sT-LLSQ, which is much lower than sLLSQ since $n = o\left( n^2 \right)$.
	
	Nevertheless, if only the zeroth-order asymptotics is given (i.e., $p=0$), then the sLLSQ convergence is \emph{not} provable. According to Table~\ref{tbl:Fundamental_results_for_common_asymptotic_forms_sLLSQ}, for AF-6 with $p = 0$ and $g_1(n) = \frac{1}{n}$, we obtain
	\begin{align*}
		\norm{\mathbf{y}_n^* - \boldsymbol{\gamma}} & = O\left( n g_1(n) \right) = O\left( 1 \right) , \\
		y_{1,n}^* - \gamma_1 & = O\left( n g_1(n) \right) = O\left( 1 \right) ,  \\
		y_{2,n}^* - \gamma_2  & = O\left( g_1(n) \right) = O\left( \frac{1}{n} \right) = o(1)  .
	\end{align*}	
\end{remark}

\section{Comparison with the Ratio Method and its Variants}
\label{sec:Ratio_method}

For simplicity of presentation and ease of comparison, we will thoroughly examine the \emph{Ratio Method (RM)} by considering only $0$-order asymptotics (i.e., $p = 0$).\footnote{The general case with $p$-order asymptotics, $p \in \NN$, can be treated in a similar way, although cumbersome since it requires appropriate truncation of the complete asymptotic series: $(1+x)^{\theta} \simeq \sum_{\nu = 0}^{\infty} {\binom{\theta}{\nu} x^{\nu}}$ as $x \to 0$. Specifically, it holds that $(1+x)^{\theta} = \sum_{\nu = 0}^m {\binom{\theta}{\nu} x^{\nu}} + O(x^{m+1})$, as $x \to 0$, for all $\theta \in \RR$ and $m \in \NN_0$, where $\binom{\theta}{\nu} \defeq \frac{\prod_{i=0}^{\nu-1} (\theta-i)}{\nu !}$ is the generalized binomial coefficient defined for all $\theta \in \RR$ and $\nu \in \NN_0$ with $\binom{\theta}{0} \defeq 1$.} The following result summarizes its asymptotic properties. 

\begin{proposition}[The ratio method] 
	\label{prop:Ratio_method_asymptotic_estimates}
	Suppose that 
	\begin{equation*}
		f(n) = {\alpha_1} n^{\alpha_2} {\alpha_3^n} \left( 1 + O(g_1(n)) \right) , \quad \mathrm{as}\ n \to \infty ,
	\end{equation*}
	where $\alpha_1,\alpha_3 \in \RR_{>0}$, $\alpha_2 \in \RR$, and $g_1(n) = o(1)$ with $g_1(n \pm 1) = O(g_1(n))$ as $n \to \infty$.\footnote{The uniqueness of parameters $\alpha_1$, $\alpha_2$, and $\alpha_3$ is guaranteed by Proposition~\ref{prop:Uniqueness_of_alpha_and_gamma}.} Let us consider the following sequences 
	\begin{align*}
		r_n & \defeq \frac{f(n+1)}{f(n)} , \\
		\zeta_n & \defeq n r_n - (n-1) r_{n-1} , \\
		\kappa_n & \defeq n^2 \left( 1 - \frac{r_n}{r_{n-1}} \right) , \\
		\zeta'_n & \defeq \frac{n r_n}{n + \alpha_2} ,  \\
		\kappa'_n & \defeq n \left( \frac{r_n}{\alpha_3} - 1 \right) .
	\end{align*}
	Note that $\zeta'_n$ (resp. $\kappa'_n$) can be defined only if the parameter $\alpha_2$ (resp. $\alpha_3$) is explicitly known. Then, we have the asymptotic estimates, as $n \to \infty$,
	\begin{align*}
		r_n - \alpha_3 & = O\left( \max\left( \frac{1}{n}, \abs{g_1(n)} \right) \right) = o(1) , \\
		\zeta_n - \alpha_3 & = O\left( \max\left( \frac{1}{n^2}, n \abs{g_1(n)} \right) \right) , \\
		\kappa_n - \alpha_2 & = O\left( \max\left( \frac{1}{n^2}, n^2 \abs{g_1(n)} \right) \right) , \\
		\zeta'_n - \alpha_3 & = O\left( \max\left( \frac{1}{n^2}, \abs{g_1(n)} \right) \right) = o(1) , \\
		\kappa'_n - \alpha_2 & = O\left( \max\left( \frac{1}{n}, n \abs{g_1(n)} \right) \right) . 
	\end{align*}
\end{proposition}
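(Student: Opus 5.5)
We expand the ratio $r_n$ directly from the asymptotic expansion and then propagate the resulting expansion through each of the derived sequences. Write $f(n) = \alpha_1 n^{\alpha_2}\alpha_3^n(1+\epsilon_n)$ with $\epsilon_n = O(g_1(n))$. Then
\begin{equation*}
	r_n = \alpha_3 \left(1+\frac{1}{n}\right)^{\alpha_2} \frac{1+\epsilon_{n+1}}{1+\epsilon_n}.
\end{equation*}
Two expansions drive everything. First, $(1+1/n)^{\alpha_2} = 1 + \frac{\alpha_2}{n} + \frac{c}{n^2} + O(n^{-3})$ with $c = \binom{\alpha_2}{2}$. Second, using $g_1(n\pm1)=O(g_1(n))$,
\begin{equation*}
	\frac{1+\epsilon_{n+1}}{1+\epsilon_n} = 1 + \delta_n, \qquad \delta_n \defeq \epsilon_{n+1}-\epsilon_n + O(\epsilon_n^2) = O(g_1(n)).
\end{equation*}
This gives
\begin{equation*}
	r_n = \alpha_3\left(1+\frac{\alpha_2}{n}+\frac{c}{n^2}+O(n^{-3})\right)(1+\delta_n).
\end{equation*}
The first estimate, $r_n-\alpha_3 = O(\max(1/n,|g_1(n)|))$, is then immediate. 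For $\zeta'_n$, note that $\frac{n}{n+\alpha_2}(1+1/n)^{\alpha_2} = 1+O(n^{-2})$, since the $1/n$ terms cancel. Hence $\zeta'_n-\alpha_3 = O(\max(n^{-2},|g_1(n)|))$. For $\kappa'_n$, we have $r_n/\alpha_3 - 1 = \frac{\alpha_2}{n}+O(n^{-2})+O(g_1(n))$. Multiplying by $n$ gives $\kappa'_n-\alpha_2 = O(\max(1/n, n|g_1(n)|))$.

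For $\zeta_n$, write $n r_n = \alpha_3\left(n+\alpha_2+\frac{c}{n}+O(n^{-2})\right) + n\alpha_3\delta_n(1+O(1/n))$. Taking the backward difference, the constant $\alpha_3\alpha_2$ cancels and $n-(n-1)=1$ contributes $\alpha_3$. The term $c/n - c/(n-1)$ is $O(n^{-2})$, and the $O(n^{-2})$ remainders difference to $O(n^{-2})$. The $\delta$-part contributes $n\delta_n - (n-1)\delta_{n-1} + \dots$, which is $O(n|g_1(n)|)$ by asymptotic regularity ($g_1(n-1)=O(g_1(n))$). This yields $\zeta_n-\alpha_3 = O(\max(n^{-2}, n|g_1(n)|))$.

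For $\kappa_n$, take logarithms: $\log r_n - \log r_{n-1} = \alpha_2[\log(1+1/n)-\log(1+1/(n-1))] + \log(1+\delta_n)-\log(1+\delta_{n-1})$. The first bracket equals $-\frac{1}{n^2}+O(n^{-3})$, because $\log(1+1/x)$ has derivative $-1/x^2+O(x^{-3})$. The second is $O(g_1(n))$. Therefore $1-\frac{r_n}{r_{n-1}} = 1-\exp(\text{that}) = \frac{\alpha_2}{n^2}+O(n^{-3})+O(g_1(n))$. Here the exponential is linearized, which is legitimate since the exponent is $o(1)$; squared terms are $O(n^{-4}+g_1^2)$ and are absorbed. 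Multiplying by $n^2$ gives the stated bound $O(\max(1/n, n^2|g_1(n)|))$.

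The bound I actually expect from this route has $1/n$, not $1/n^2$, in the first slot for $\kappa_n$. Getting $1/n^2$ would require checking whether the $n^{-3}$ coefficient vanishes. A direct computation gives $-\frac1{n^2}+\frac1{n^3}-\frac1{(n-1)^2}\cdot(\dots)$, i.e., it does not vanish in general. I would therefore carefully redo this second-order expansion, including the $c$ term, and either confirm the paper's sharper claim or settle for $O(\max(1/n,n^2|g_1|))$. This cancellation bookkeeping for $\kappa_n$ (and, to a lesser degree, the differenced remainder terms in $\zeta_n$) is the main obstacle. The remaining steps are routine Taylor expansions combined with the regularity $g_1(n\pm1)=O(g_1(n))$, which is used exactly to replace shifted error terms by $O(g_1(n))$.
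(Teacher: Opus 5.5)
Your treatment of $r_n$, $\zeta_n$, $\zeta'_n$ and $\kappa'_n$ is correct and matches the paper's argument. Like the paper, you expand $r_n$ to second order in $1/n$ with a $1+O(g_1(n))$ factor, propagate the expansion, and use $g_1(n\pm1)=O(g_1(n))$ to absorb shifted error terms.

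The gap is in $\kappa_n$. As written, you only obtain $\kappa_n-\alpha_2=O(\max(1/n,\,n^2\abs{g_1(n)}))$, and you call this the stated bound. You then assert that the $n^{-3}$ coefficient does not vanish in general. So the proposal does not prove the claimed $O(\max(1/n^2,\,n^2\abs{g_1(n)}))$, and in fact doubts it. That assertion comes from an arithmetic slip in expanding $\log(1+1/n)$ and $\log(1+1/(n-1))$ separately. The missing observation is the exact identity
\[
\frac{1+1/n}{1+1/(n-1)}=\frac{(n+1)(n-1)}{n^2}=1-\frac{1}{n^2}.
\]
Hence your first bracket is exactly $\log(1-n^{-2})=-n^{-2}-\tfrac12 n^{-4}+O(n^{-6})$, which contains no $n^{-3}$ term.

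Equivalently, as in the paper,
\[
\frac{r_n}{r_{n-1}}=\Bigl(1-\frac{1}{n^2}\Bigr)^{\alpha_2}\bigl(1+O(g_1(n))\bigr)=1-\frac{\alpha_2}{n^2}+O\Bigl(\frac{1}{n^4}\Bigr)+O(g_1(n)).
\]
Multiplying $1-r_n/r_{n-1}$ by $n^2$ then gives $\kappa_n-\alpha_2=O(n^{-2})+O(n^2\abs{g_1(n)})$, which is the stated estimate. The same fix works inside your logarithmic route.

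This is not cosmetic. Whenever $g_1(n)=o(n^{-3})$, for instance exponentially small $g_1$, the stated bound is strictly sharper than the one you derived. Any $1/n$ behaviour of $\kappa_n-\alpha_2$ seen in examples such as the Catalan numbers comes from the correction terms hidden in $O(g_1(n))$, not from the factor $(1+1/n)^{\alpha_2}$.
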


\begin{proof}
	The ratio $r_n$ yields
	\begin{align*}
		r_n \defeq \frac{f(n+1)}{f(n)} & = {\alpha_3} \left( 1 + \frac{1}{n} \right)^{\alpha_2} \frac{1 + O(g_1(n+1))}{1 + O(g_1(n))}   \\
		& = {\alpha_3} \left( 1 + \frac{1}{n} \right)^{\alpha_2} \left( 1 + O(g_1(n)) \right) \left( 1 + O(g_1(n)) \right)    \\
		& = {\alpha_3} \left( 1 + \frac{1}{n} \right)^{\alpha_2} \left( 1 + O(g_1(n)) \right)     \\
		& = {\alpha_3} \left( 1 + O\left( \frac{1}{n} \right) \right) \left( 1 + O(g_1(n)) \right)  \\
		& = {\alpha_3} \left( 1 + O\left( \frac{1}{n} \right) + O(g_1(n)) \right)     \\
		& = {\alpha_3} \left( 1 + O\left( \max\left( \frac{1}{n}, \abs{g_1(n)} \right) \right) \right)   \\
		& = {\alpha_3} + O\left( \max\left( \frac{1}{n}, \abs{g_1(n)} \right) \right)   ,
	\end{align*}
	since $(1+x)^{-1} = 1 + O(x)$, as $x \to 0$. 
	
	Moreover, if we use the second-order asymptotic expansion of $\left( 1 + \frac{1}{n} \right)^{\alpha_2} = 1 + \frac{\alpha_2}{n} + \frac{\lambda}{n^2} + O\left( \frac{1}{n^3} \right)$, where $\lambda \defeq \binom{\alpha_2}{2} = \frac{\alpha_2 (\alpha_2-1)}{2}$, then $r_n$ can also be expressed as  
	\begin{align*}
		r_n & = {\alpha_3} \left( 1 + \frac{\alpha_2}{n} + \frac{\lambda}{n^2} + O\left( \frac{1}{n^3} \right) \right) \left( 1 + O(g_1(n)) \right)  \\
		& = {\alpha_3} \left( 1 + \frac{\alpha_2}{n} + \frac{\lambda}{n^2} + O\left( \max\left( \frac{1}{n^3}, \abs{g_1(n)} \right) \right) \right)   .
	\end{align*}
	We can now construct the sequence $\{\zeta_n\}_{n \in \NN}$ via linear extrapolation, i.e.,
	\begin{align*}
		\zeta_n \defeq n r_n - (n-1) r_{n-1} & = {\alpha_3} \left( 1 - \frac{\lambda}{n(n-1)} + O\left( \max\left( \frac{1}{n^2}, n \abs{g_1(n)} \right) \right) \right.   \\
		& \qquad \quad + \left. O\left( \max\left( \frac{1}{(n-1)^2}, (n-1) \abs{g_1(n-1)} \right) \right) \right)   \\
		& = {\alpha_3} \left( 1 + O\left( \frac{1}{n^2} \right) + O\left( \max\left( \frac{1}{n^2}, n \abs{g_1(n)} \right) \right) \right)     \\
		& = {\alpha_3} +  O\left( \max\left( \frac{1}{n^2}, n \abs{g_1(n)} \right) \right)    . 
	\end{align*}
	
	In addition, by using the asymptotic expansion $r_n = {\alpha_3} \left( \frac{n+1}{n} \right)^{\alpha_2} \left( 1 + O(g_1(n)) \right)$, we can define a new sequence 
	\begin{align*}
		\rho_n \defeq \frac{r_n}{r_{n-1}} & = \left( \frac{n+1}{n} \frac{n-1}{n} \right)^{\alpha_2} \frac{1 + O(g_1(n))}{1 + O(g_1(n-1))} \\
		& = \left( 1 - \frac{1}{n^2} \right)^{\alpha_2} \left( 1 + O(g_1(n)) \right)    \\
		& = \left( 1 - \frac{\alpha_2}{n^2} + O\left( \frac{1}{n^4} \right) \right) \left( 1 + O(g_1(n)) \right)   \\
		& = 1 - \frac{\alpha_2}{n^2} + O\left( \frac{1}{n^4} \right) + O(g_1(n))   \\
		& = 1 - \frac{\alpha_2}{n^2} + O\left( \max\left( \frac{1}{n^4}, \abs{g_1(n)} \right) \right)   . 
	\end{align*}
	Therefore, we obtain  
	\begin{equation*}
		\kappa_n \defeq n^2 (1 - \rho_n) = \alpha_2 + O\left( \max\left( \frac{1}{n^2}, n^2 \abs{g_1(n)} \right) \right)  . 
	\end{equation*}
	
	Furthermore, if $\alpha_2$ is explicitly known, then by using the first-order asymptotic expansion of $\left( 1 + \frac{1}{n} \right)^{\alpha_2} = 1 + \frac{\alpha_2}{n} + O\left( \frac{1}{n^2} \right)$ we have  
	\begin{align*}
		\zeta'_n \defeq \frac{n r_n}{n + \alpha_2} & = \frac{n}{n + \alpha_2} {\alpha_3} \left( 1 + \frac{\alpha_2}{n} + O\left( \max\left( \frac{1}{n^2}, \abs{g_1(n)} \right) \right) \right)   \\
		& = {\alpha_3} + \frac{n}{n + \alpha_2} O\left( \max\left( \frac{1}{n^2}, \abs{g_1(n)} \right) \right)   \\
		& = {\alpha_3} + O\left( \max\left( \frac{1}{n^2}, \abs{g_1(n)} \right) \right)    ,
	\end{align*}
	where the last equality follows from the fact that $\frac{n}{n + \alpha_2} = \Theta(1) = O(1)$. 
		
	Finally, if $\alpha_3$ is precisely known, then 
	\begin{align*}
		\kappa'_n \defeq n \left( \frac{r_n}{\alpha_3} - 1 \right) & = n \left(\frac{\alpha_2}{n} + O\left( \max\left( \frac{1}{n^2}, \abs{g_1(n)} \right) \right) \right)   \\
		& = {\alpha_2} + O\left( \max\left( \frac{1}{n}, n \abs{g_1(n)} \right) \right)  .
	\end{align*}
\end{proof}

\begin{remark}
	The function $g_1$ in the asymptotic expansion is almost always \emph{overlooked} in the existing literature. Nevertheless, this term is \emph{crucial} for establishing the convergence rate of numerical methods. In addition, it may conceal important information about $f$ (e.g., \emph{oscillating behavior}) that is responsible for the \emph{divergence} of a particular technique, thus hindering the computation of unknown parameters. See the \emph{counterintuitive} numerical example in Section~\ref{subsec:Counterintuitive_example}. 
\end{remark}

Table~\ref{tbl:Comparison_between_computational_methods} compares the asymptotic estimates achieved by RM, sLLSQ and sT-LLSQ. Firstly, for AF-2, we observe that $x_{3,n}^*$ (resp. $x_{2,n}^*$) of sLLSQ attains an asymptotic estimate which is at least as good as that of $\zeta_n$ (resp. $\kappa_n$), since $n \abs{g_1(n)} \leq \max\left( \frac{1}{n^2}, n \abs{g_1(n)} \right)$ (resp. $n^2 \abs{g_1(n)} \leq \max\left( \frac{1}{n^2}, n^2 \abs{g_1(n)} \right)$). The convergence rate of sT-LLSQ is complementary to that of $r_n$: if $g_1(n) = O\left( \frac{\log(n)}{n} \right)$ then $\max\left( \frac{1}{n}, \abs{g_1(n)} \right) = O\left( \max\left( \frac{1}{n}, \frac{\log(n)}{n} \right) \right) = O\left( \frac{\log(n)}{n} \right)$, whereas if $g_1(n) = \Omega\left( \frac{\log(n)}{n} \right)$ then $\max\left( \frac{1}{n}, \abs{g_1(n)} \right) \geq \abs{g_1(n)} = \Omega\left( \frac{\log(n)}{n} \right)$. Secondly, for AF-2 with known $\alpha_2 \in \RR$, we notice that $\zeta'_n$ provides an improvement compared to both $r_n$ and $\zeta_n$. Again, sLLSQ gives a better convergence rate than RM in terms of $\alpha_3$, because $\abs{g_1(n)} \leq \max\left( \frac{1}{n^2}, \abs{g_1(n)} \right)$, while sT-LLSQ complements RM: if $g_1(n) = O\left( \frac{1}{n} \right)$ then $\max\left( \frac{1}{n^2}, \abs{g_1(n)} \right) = O\left( \max\left( \frac{1}{n^2}, \frac{1}{n} \right) \right) = O\left( \frac{1}{n} \right)$, and if $g_1(n) = \Omega\left( \frac{1}{n} \right)$ then $\max\left( \frac{1}{n^2}, \abs{g_1(n)} \right) \geq \abs{g_1(n)} = \Omega\left( \frac{1}{n} \right)$. Thirdly, for AF-2 with known $\alpha_3 \in \RR_{>0}$, we observe that $\kappa'_n$ is not necessarily an improvement over $\kappa_n$. Now, regarding the unknown parameter $\alpha_2$, sLLSQ gives a better asymptotic estimate since $n \abs{g_1(n)} \leq \max\left( \frac{1}{n}, n \abs{g_1(n)} \right)$. The complementarity of sT-LLSQ and RM is also obvious: if $g_1(n) = O\left( \frac{1}{n \log(n)} \right)$ then $\max\left( \frac{1}{n}, n \abs{g_1(n)} \right) = O\left( \max\left( \frac{1}{n}, \frac{1}{\log(n)} \right) \right) = O\left( \frac{1}{\log(n)} \right)$, while if $g_1(n) = \Omega\left( \frac{1}{n \log(n)} \right)$ then $\max\left( \frac{1}{n}, n \abs{g_1(n)} \right) \geq n \abs{g_1(n)} = \Omega\left( \frac{1}{\log(n)} \right)$. Note that, in contrast to $x_{2,n}^*$ of sT-LLSQ, $\kappa'_n$ is not always guaranteed to converge to $\alpha_2$, e.g., when $g_1(n) = \frac{1}{\sqrt{n}}$ we just have $\kappa'_n - \alpha_2 = O\left( \max\left( \frac{1}{n}, n \abs{g_1(n)} \right) \right) = O(\sqrt{n})$.

\renewcommand{\arraystretch}{1.5}
\begin{table}[t!]
	\centering
	\caption{Comparison between the ratio method and the proposed techniques (sLLSQ, sT-LLSQ). For simplicity, we assume that \linebreak $p=0$ and $g_1(n) = o(1)$, with $g_1(n+i) = O(g_1(n))$ for all integers $i \geq -1$ (for the ratio method: $g_1(n \pm 1) = O(g_1(n))$, whereas for sLLSQ: $g_1(n+i) = O(g_1(n))$ for all $i \in \NN_0$; see Definition~\ref{def:Asymptotic_regularity} and Remark~\ref{rem:Asymptotic_regularity}). The asymptotic forms AF-2, AF-6 and AF-10 are given in Table~\ref{tbl:Common_asymptotic_forms}. The sequences $r_n$, $\zeta_n$, $\kappa_n$, $\zeta'_n$ and $\kappa'_n$ for the ratio method are defined in Proposition~\ref{prop:Ratio_method_asymptotic_estimates}. For sLLSQ, each asymptotic estimate $x_{j,n}^* - \alpha_j = O(\vartheta_j(n))$ is valid provided that $\vartheta_j(n) = o(1)$, according to~\eqref{eq:Implication_o_sLLSQ}.}
	\label{tbl:Comparison_between_computational_methods} 

\resizebox{\textwidth}{!}{%
\begin{tabular}{|l|l|l|l|}
\hline 
\makecell[c]{\textbf{Asymptotic form}} & \makecell[c]{\textbf{Ratio method}} & \makecell[c]{\textbf{sLLSQ ($m=k$)}} & \makecell[c]{\textbf{sT-LLSQ ($m=k$, $\mu \in \RR_{>0}$)}}  \\
\hline

\makecell[l]{AF-2} & \makecell[l]{$r_n - \alpha_3 = O\left( \max\left( \frac{1}{n}, \abs{g_1(n)} \right) \right) = o(1)$, \\ $\zeta_n - \alpha_3 = O\left( \max\left( \frac{1}{n^2}, n \abs{g_1(n)} \right) \right)$, \\ $\kappa_n - \alpha_2 = O\left( \max\left( \frac{1}{n^2}, n^2 \abs{g_1(n)} \right) \right)$.} & \makecell[l]{$x_{3,n}^* - \alpha_3 = O\left( n g_1(n) \right)$, \\ $x_{2,n}^* - \alpha_2 = O\left( n^2 g_1(n) \right)$, \\ $x_{1,n}^* - \alpha_1 = O\left( n^2 \log(n) g_1(n) \right)$.} & \makecell[l]{$x_{3,n}^* - \alpha_3 = O\left( \frac{\log(n)}{n} \right) = o(1)$}    \\
\hline  

\makecell[l]{AF-2 with \\ known $\alpha_2 \in \RR$} & \makecell[l]{$\zeta'_n - \alpha_3 = O\left( \max\left( \frac{1}{n^2}, \abs{g_1(n)} \right) \right) = o(1)$} & \makecell[l]{$x_{3,n}^* - \alpha_3 = O\left( g_1(n) \right) = o(1)$, \\ $x_{1,n}^* - \alpha_1 = O\left( n g_1(n) \right)$. \vspace{1mm} \\ (similar to AF-6 \\ with $f(n) \mapsto f(n)/{n^{\alpha_2}}$ \\ and $\widehat{f}(n;\boldsymbol{\alpha}) = \alpha_1 \alpha_3^n$)} & \makecell[l]{$x_{3,n}^* - \alpha_3 = O\left( \frac{1}{n} \right) = o(1)$ \vspace{1mm} \\ (similar to AF-6 \\ with $f(n) \mapsto f(n)/{n^{\alpha_2}}$ \\ and $\widehat{f}(n;\boldsymbol{\alpha}) = \alpha_1 \alpha_3^n$)}    \\
\hline

\makecell[l]{AF-2 with \\ known $\alpha_3 \in \RR_{>0}$} & \makecell[l]{$\kappa'_n - \alpha_2 = O\left( \max\left( \frac{1}{n}, n \abs{g_1(n)} \right) \right)$} & \makecell[l]{$x_{2,n}^* - \alpha_2 = O\left( n g_1(n) \right)$, \\ $x_{1,n}^* - \alpha_1 = O\left( n \log(n) g_1(n) \right)$. \vspace{1mm} \\ (similar to AF-10 \\ with $f(n) \mapsto f(n)/{\alpha_3^n}$ \\ and $\widehat{f}(n;\boldsymbol{\alpha}) = \alpha_1 n^{\alpha_2}$)} & \makecell[l]{$x_{2,n}^* - \alpha_2 = O\left( \frac{1}{\log(n)} \right) = o(1)$ \vspace{1mm} \\ (similar to AF-10 \\ with $f(n) \mapsto f(n)/{\alpha_3^n}$ \\ and $\widehat{f}(n;\boldsymbol{\alpha}) = \alpha_1 n^{\alpha_2}$)}  \\
\hline  
\end{tabular}%
} 
\end{table}

\begin{remark}
	A key advantage of sLLSQ, compared to RM, is the \emph{simultaneous computation} of all the unknown parameters (combined into a \emph{single vector}), without the need of constructing a separate sequence for each parameter; this is due to linear algebra. Another advantage of the proposed methods (both sLLSQ and sT-LLSQ) is their \emph{flexibility} to handle a broader class of asymptotic forms (besides AF-2, AF-6, and AF-10) together with a \emph{simplified and unified analysis of convergence}. Therefore, sLLSQ and sT-LLSQ are clearly \emph{complementary} to existing methods. 
\end{remark}

\begin{remark}
	Further comparison with other approaches (as mentioned in Section~\ref{subsec:Related_work}) is not followed herein, because their convergence analysis is missing from the existing literature and, in general, it is not straightforward. 	 
\end{remark}

\section{Numerical Examples}

In order to verify the theoretical results we consider enumeration sequences with \emph{proven} asymptotics and \emph{explicit}  parameters ($\boldsymbol{\gamma}$ or $\boldsymbol{\alpha}$), so that we are able to calculate the actual distance between $\mathbf{y}^*$ and $\boldsymbol{\gamma}$, or $\mathbf{x}^*$ and $\boldsymbol{\alpha}$ (i.e., through a norm of their difference). Nevertheless, remember that in ALT only the enumeration sequence and its asymptotic expansion are given, while the parameters in its asymptotic form are \emph{unknown} to us (see Table~\ref{tbl:Asymptotics_learning_theory_General_problem}).   

For comparison purposes we also include numerical results of RM, whenever applicable. In addition, for the sake of simplicity and legibility of figures, we just use $\mu = 1$ for the sT-LLSQ method. Different (fixed) values of $\mu \in \RR_{>0}$ do not explicitly affect the universal convergence rate $y_{k,n}^* - \gamma_k = O\left( \frac{\varphi_{k-1}(n)}{\varphi_k(n)} \right) = o(1)$, cf.~\eqref{eq:Universal_convergence_rate_for_index_k},~\eqref{eq:y_kn_star_asymptotic_estimate_sT-LLSQ}~and~\eqref{eq:x_kn_star_asymptotic_estimate_sT-LLSQ}, but only the hidden constant in the $O$-notation (i.e., the universal convergence rate is preserved up to a multiplicative constant).

\subsection{The Fibonacci Numbers}

The Fibonacci sequence is defined by the recurrence relation 
\begin{equation*}
	f(n) = f(n-1) + f(n-2) ,
\end{equation*}
for all integers $n \geq 2$ with initial conditions $f(0) = 0$ and $f(1) = 1$. This is a homogeneous second-order linear recurrence with constant coefficients, and therefore it admits a closed-form expression given by
\begin{equation*}
	f(n) = \frac{1}{\sqrt{5}} \left( \phi^n - \psi^n \right) = \frac{1}{\sqrt{5}} \phi^n \left( 1 - \left( \frac{\psi}{\phi} \right)^n \right) = \frac{1}{\sqrt{5}} \phi^n \left( 1 - \frac{(-1)^n}{\phi^{2n}} \right)  ,
\end{equation*}
for all $n \in \NN_0$, where $\phi = \frac{1+\sqrt{5}}{2} \approx 1.618$ is the golden ratio, and $\psi = \frac{1-\sqrt{5}}{2} \approx -0.618$. In particular, the constants $\phi$ and $\psi$ are the roots of its characteristic polynomial $x^2 - x - 1$, thus satisfying $\psi = 1-\phi = -\phi^{-1}$. Note that $\frac{(-1)^n}{\phi^{2n}} = \Theta\left( \frac{1}{\phi^{2n}} \right) = O\left( \frac{1}{\tau^n} \right) = o(1)$, as $n \to \infty$, for every $\tau \in \RR$ such that $1 < \tau \leq \phi^2$.

In ALT we want to compute the (unknown) parameters $\boldsymbol{\alpha} = \transp{[\alpha_1, \alpha_2]} = \transp{\left[ \frac{1}{\sqrt{5}}, \phi \right]}$, given the values $\{f(n)\}_{n \in \NN}$ and the asymptotic expansion   
\begin{equation*}
	f(n) = {\alpha_1} {\alpha_2^n} \left(1 + O(g_1(n)) \right)  , 
\end{equation*}
 where $g_1(n) \defeq \frac{1}{\tau^n} = o(1)$ for some (given) real constant $\tau \in (1,\phi^2]$. According to Tables~\ref{tbl:Fundamental_results_for_common_asymptotic_forms_sLLSQ}~and~\ref{tbl:Fundamental_results_for_common_asymptotic_forms_sT-LLSQ} for AF-6 with $p=0$, and the convergence-rate invariant~\eqref{eq:Convergence-rate_invariant_y_to_x_with_h}, the sLLSQ method achieves \emph{exponentially fast convergence}, i.e., 
\begin{align*}
	x_{1,n}^* - \alpha_1 & = O\left( n g_1(n) \right) = O\left( \frac{n}{\tau^n} \right) = o(1) , \\ 
	x_{2,n}^* - \alpha_2 & = O\left( g_1(n) \right) = O\left( \frac{1}{\tau^n} \right) = o(1)  ,
\end{align*}
whereas the sT-LLSQ technique exhibits a much slower (linear) convergence of $x_{2,n}^*$ with $x_{1,n}^*$ remaining bounded, i.e.,
\begin{equation*}
	x_{1,n}^* - \alpha_1 = O(1)  \quad \mathrm{and} \quad  x_{2,n}^* - \alpha_2 = O\left( \frac{1}{n} \right) = o(1) .
\end{equation*}
Based on Proposition~\ref{prop:Impact_of_the_sliding_window_length} and Theorem~\ref{thm:The_sT-LLSQ_method}, the above estimates hold for every $m \geq k = 2$. Moreover, RM gives the asymptotic estimate (due to the absence of a term $n^{\alpha}$ in the asymptotic form)
\begin{equation*}
	r_n - \alpha_2 = O\left( g_1(n) \right) = O\left( \frac{1}{\tau^n} \right) = o(1) ,
\end{equation*}
which coincides with that of sLLSQ.  

In Figure~\ref{fig:Fibonacci}, we present the numerical results for the Fibonacci numbers. Specifically, Figure~\ref{fig:Fibonacci_Error_x_star_sLLSQ_and_RM} shows the absolute errors of sLLSQ and RM that decay exponentially to zero; a few iterations are enough to achieve a very small error. In addition, Figure~\ref{fig:Fibonacci_Error_x_star_sTikhonov_LLSQ} is in agreement with the theoretical behavior of sT-LLSQ. Finally, Figure~\ref{fig:Fibonacci_Objective_values} verifies that the global minima of sLLSQ vanish asymptotically, i.e., $F_n^* = o(1)$, and those of sT-LLSQ remain bounded asymptotically, i.e., $G_n^* = O(1)$; see Theorems~\ref{thm:The_sLLSQ_method}~and~\ref{thm:The_sT-LLSQ_method}.

\begin{figure}[t!]
	\centering
	\begin{subfigure}[t]{0.5\textwidth}
		\centering
		\includegraphics[width=\textwidth]{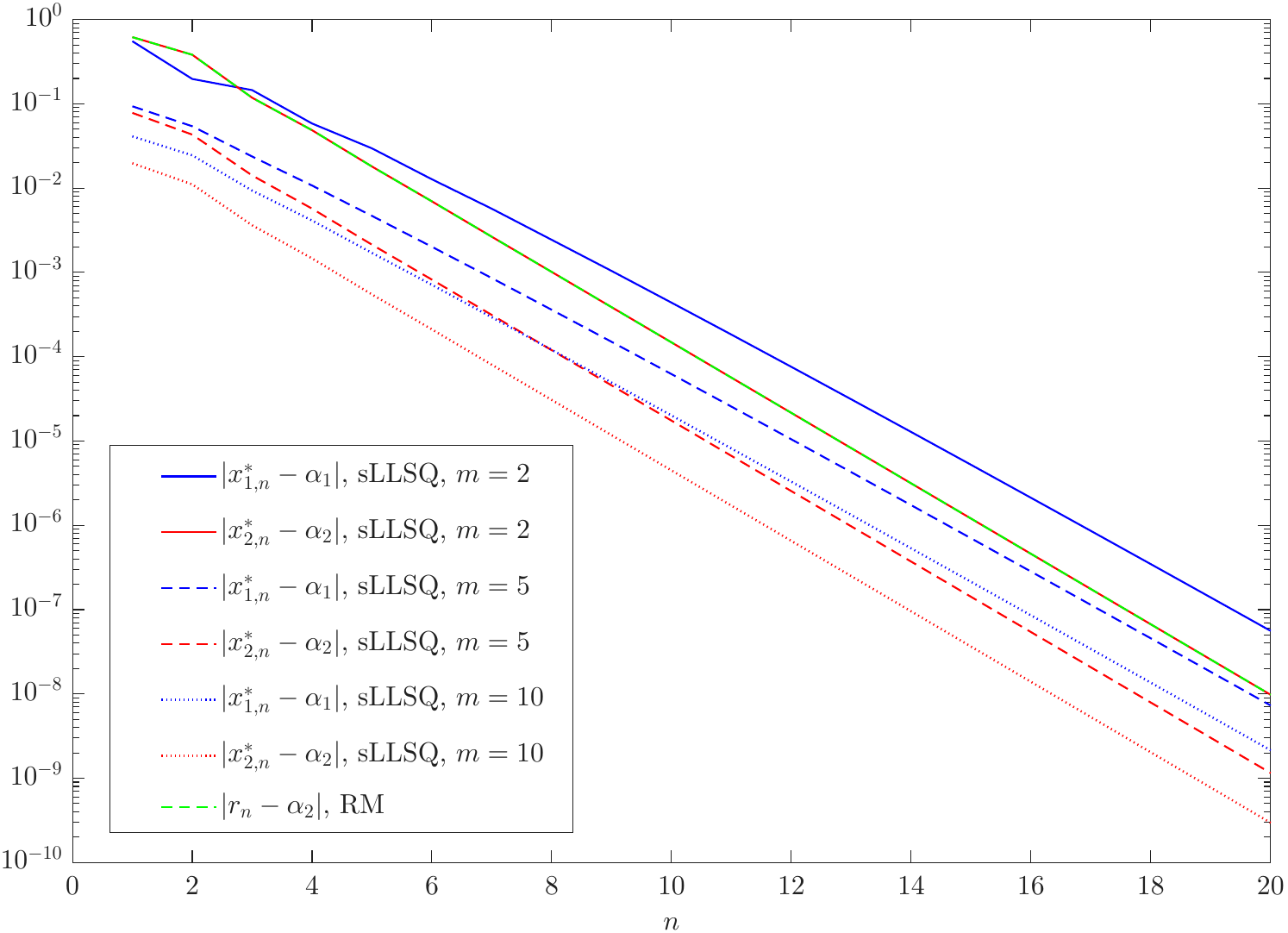}
		\caption{Absolute errors of sLLSQ and RM.}
		\label{fig:Fibonacci_Error_x_star_sLLSQ_and_RM}
	\end{subfigure}
	~
	\begin{subfigure}[t]{0.5\textwidth}
		\centering
		\includegraphics[width=\textwidth]{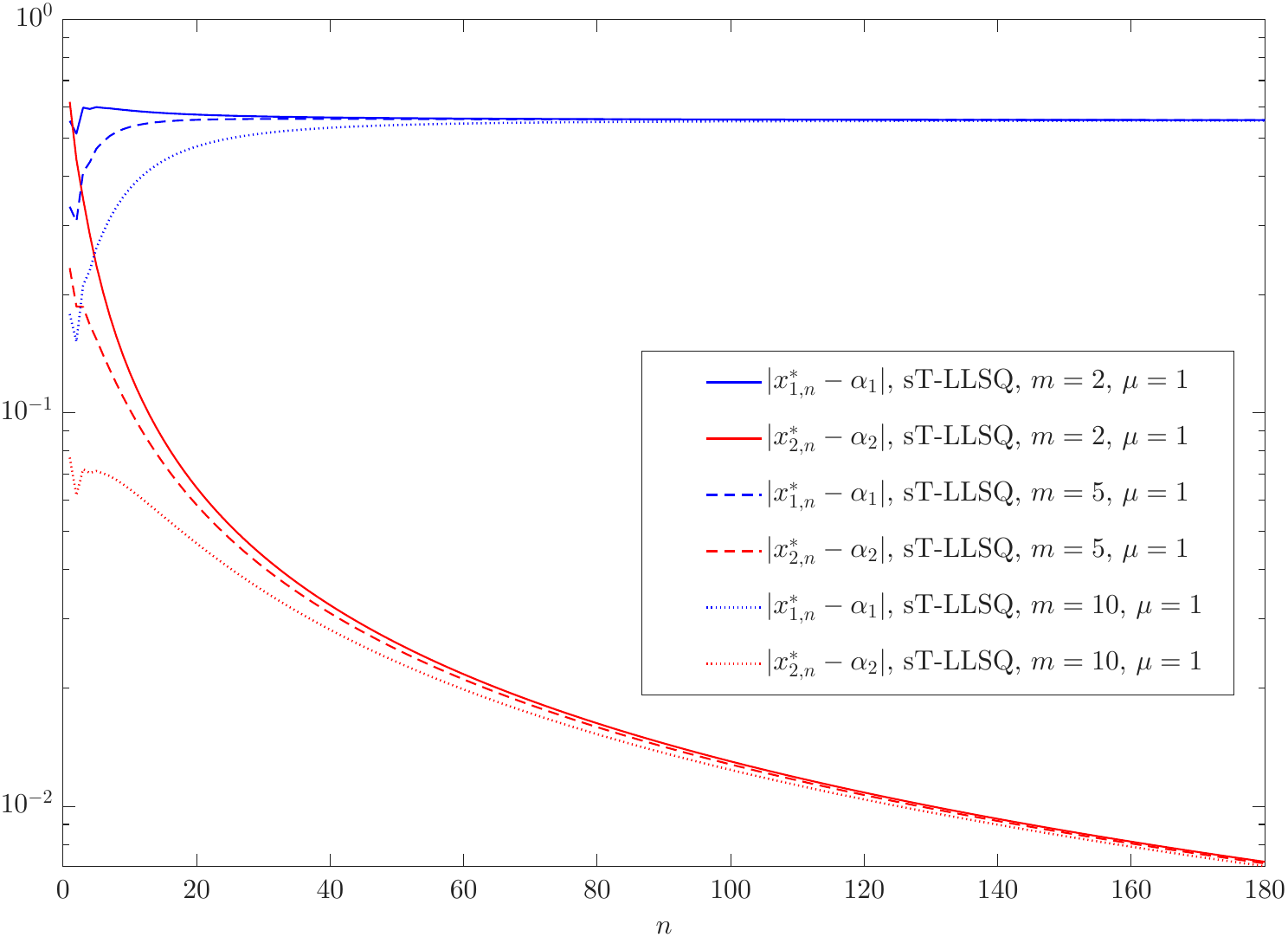}
		\caption{Absolute errors of sT-LLSQ.}
		\label{fig:Fibonacci_Error_x_star_sTikhonov_LLSQ}
	\end{subfigure}
	\\
	\begin{subfigure}[t]{0.5\textwidth}
		\centering
		\includegraphics[width=\textwidth]{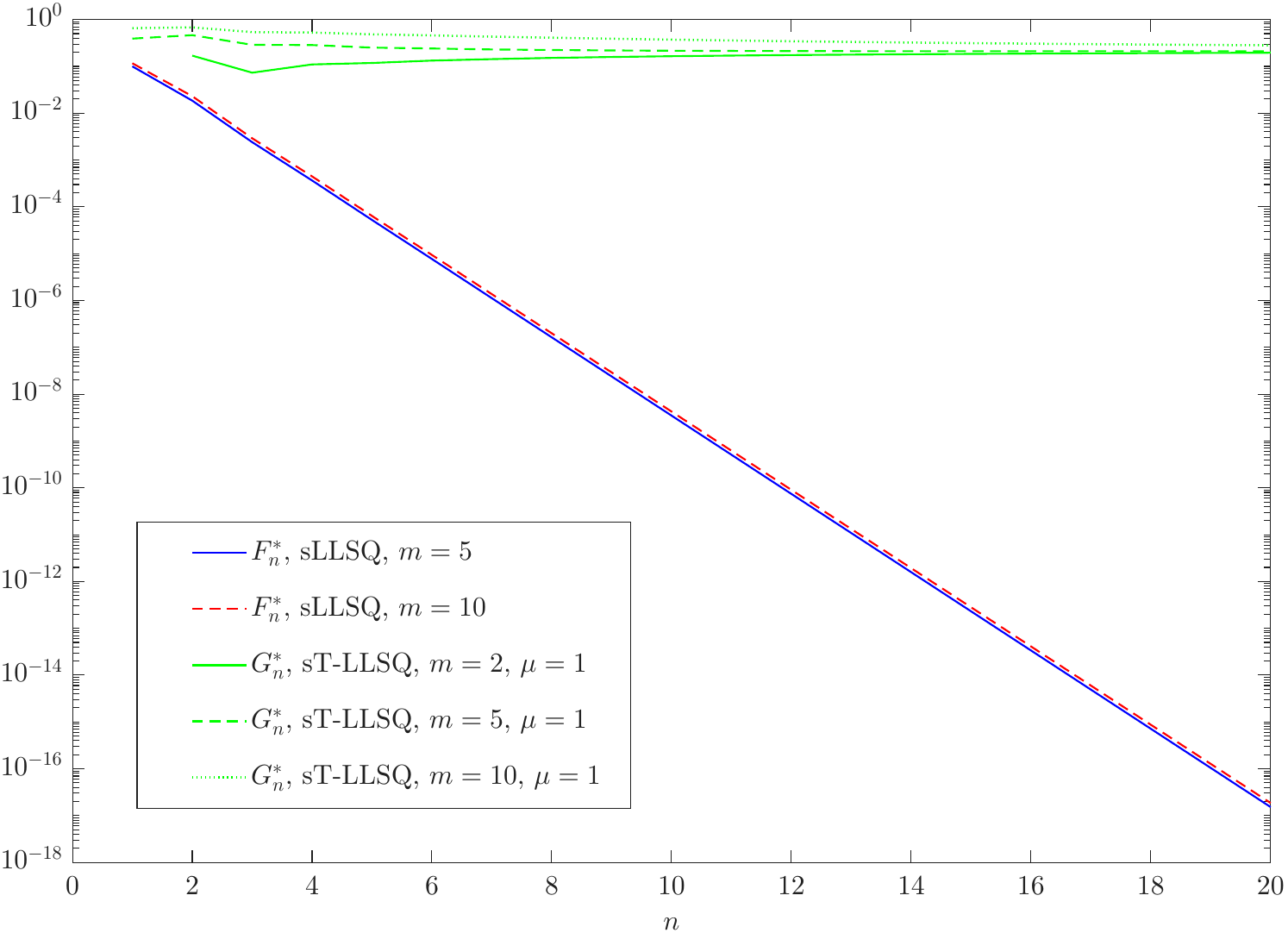}
		\caption{Global minima achieved by sLLSQ and sT-LLSQ. For sLLSQ with $m=k=2$, we have $F_n^* = 0$ for sufficiently large $n$ (this is not visible due to the logarithmic scale of the vertical axis).}
		\label{fig:Fibonacci_Objective_values}
	\end{subfigure}
	
	\caption{Application to the Fibonacci numbers ($k=2$).}
	\label{fig:Fibonacci}
\end{figure}

\subsection{The Catalan Numbers}

The $n$-th Catalan number is defined recursively by 
\begin{equation*}
	C_n = \frac{2(2n-1)}{n+1} C_{n-1} ,
\end{equation*}
for all $n \in \NN$ with initial condition $C_0 = 1$, or equivalently by the explicit formula
\begin{equation} \label{eq:Catalan_formula}
	C_n = \frac{1}{n+1} \binom{2n}{n} = \frac{(2n)!}{(n+1)! \, n!}  .
\end{equation}
By applying the de Moivre-Stirling approximation of the second order, 
\begin{equation*}
	n! = \sqrt{2 \pi n} \left( \frac{n}{e} \right)^n \left( 1 + \frac{1}{12n} + \frac{1}{288 n^2} + O\left( \frac{1}{n^3} \right) \right) ,
\end{equation*}
we find an asymptotic expansion for the central binomial coefficient as $n \to \infty$, i.e., 
\begin{align*}
	\binom{2n}{n} & = \frac{(2n)!}{(n!)^2} =
	\frac{2 \sqrt{\pi n} \left( \frac{2n}{e} \right)^{2n} \left( 1 + \frac{1}{24n} + \frac{1}{1152 n^2} + O\left( \frac{1}{n^3} \right) \right)}{2 \pi n \left( \frac{n}{e} \right)^{2n} \left( 1 + \frac{1}{12n} + \frac{1}{288 n^2} + O\left( \frac{1}{n^3} \right) \right)^2}  \\
	& = \frac{4^n}{\sqrt{\pi n}}  \left( 1 + \frac{1}{24n} + \frac{1}{1152 n^2} + O\left( \frac{1}{n^3} \right) \right) \cdot   \\
	& \quad\, \left( 1 - \frac{1}{6n} - \frac{1}{144 n^2} + O\left( \frac{1}{n^3} \right) + 3 \left( \frac{1}{12 n} + O\left( \frac{1}{n^2} \right) \right)^2  + O\left( \left( O\left( \frac{1}{n} \right) \right)^3 \right)  \right)  \\
	& = \frac{4^n}{\sqrt{\pi n}}  \left( 1 + \frac{1}{24n} + \frac{1}{1152 n^2} + O\left( \frac{1}{n^3} \right) \right)  \left( 1 - \frac{1}{6n} + \frac{1}{72 n^2} + O\left( \frac{1}{n^3} \right)   \right)   \\
	& = \frac{4^n}{\sqrt{\pi n}}  \left( 1 - \frac{1}{8n} + \frac{1}{128 n^2} + O\left( \frac{1}{n^3} \right) \right)   .
\end{align*}
Here, we have used the asymptotic expansion $(1+x)^{\theta} = 1 + \theta x + \binom{\theta}{2} x^2 + O(x^3)$ as $x \to 0$, which holds for all $\theta \in \RR$ (in particular, for $\theta = -2$). Hence, the $n$-th Catalan number has the following asymptotic expansion, as $n \to \infty$,
\begin{align*}
	C_n & = \frac{1}{n} \left( 1 + \frac{1}{n} \right)^{-1} \binom{2n}{n}   \\
	& = \frac{1}{n} \left( 1 - \frac{1}{n} + \frac{1}{n^2} + O\left( \frac{1}{n^3} \right) \right) \frac{4^n}{\sqrt{\pi n}} \left( 1 - \frac{1}{8n} + \frac{1}{128 n^2} + O\left( \frac{1}{n^3} \right) \right)   \\
	& = \frac{1}{\sqrt{\pi}} n^{-3/2} 4^n \left( 1 - \frac{9}{8n} + \frac{145}{128 n^2} + O\left( \frac{1}{n^3} \right) \right) .
\end{align*}
Observe that $C_n$ admits the asymptotic form AF-2 in Table~\ref{tbl:Common_asymptotic_forms} with $k=3$, $\boldsymbol{\alpha} = \transp{\left[ \frac{1}{\sqrt{\pi}}, -\frac{3}{2}, 4 \right]}$, $p = 2$ and $g_l(n) = \frac{1}{n^l}$, for all $l \in \{1,2,3\}$. As a result, based on Remark~\ref{rem:AF-2_sLLSQ_convergence}, the sLLSQ technique (with $m=k=3$) is provably convergent with 
\begin{align*}
	x_{1,n}^* - \alpha_1 & = O\left( \frac{\log(n)}{n} \right) = o(1) , \\
	x_{2,n}^* - \alpha_2  & = O\left( \frac{1}{n} \right) = o(1)  , \\
	x_{3,n}^* - \alpha_3 & = O\left( \frac{1}{n^2} \right) = o(1) ,
\end{align*}
where we have used the convergence-rate invariant~\eqref{eq:Convergence-rate_invariant_y_to_x_with_h}. Notice that $x_{1,n}^*$, $x_{2,n}^*$ and $x_{3,n}^*$ attain sub-linear, linear and quadratic convergence rates, respectively. 

According to Table~\ref{tbl:Fundamental_results_for_common_asymptotic_forms_sT-LLSQ} and the convergence-rate invariant~\eqref{eq:Convergence-rate_invariant_y_to_x_with_h}, the sT-LLSQ method provides the following estimates  
\begin{align*}
	x_{1,n}^* - \alpha_1 & = O\left( 1 \right)  , \\
	x_{2,n}^* - \alpha_2  & = O\left( 1 \right)  , \\
	x_{3,n}^* - \alpha_3 & = O\left( \frac{\log(n)}{n} \right) = o(1) .
\end{align*}
The convergence of $x_{3,n}^*$ is sub-linear, and therefore much slower than its sLLSQ counterpart. 

Furthermore, regarding RM, we can derive $\Theta$-asymptotics, as $n \to \infty$, by exploiting the exact formula \eqref{eq:Catalan_formula} of the Catalan numbers. Specifically,
\begin{align*}
	& r_n \defeq \frac{f(n+1)}{f(n)} = \frac{C_{n+1}}{C_n} = \frac{2(2n+1)}{n+2} = 4 - \frac{6}{n} + O\left( \frac{1}{n^2} \right)  \\
	\implies & r_n - \alpha_3 = r_n - 4 = \Theta\left( \frac{1}{n} \right) = o(1) ,  \\
	& \zeta_n \defeq n r_n - (n-1) r_{n-1} = 4 - \frac{12}{n^2} + O\left( \frac{1}{n^3} \right)  \\
	\implies & \zeta_n - \alpha_3 = \zeta_n - 4 = \Theta\left( \frac{1}{n^2} \right) = o(1)  ,  \\ 
	& \kappa_n \defeq n^2 \left( 1 - \frac{r_n}{r_{n-1}} \right) = -\frac{3}{2} + \frac{9}{4n} + O\left( \frac{1}{n^2} \right)  \\
	\implies & \kappa_n - \alpha_2 = \kappa_n - \left( -\frac{3}{2} \right) = \Theta\left( \frac{1}{n} \right) = o(1) .
\end{align*}
As a consequence, $\kappa_n$ and $\zeta_n$ have similar convergence rates with $x_{2,n}^*$ and $x_{3,n}^*$ of sLLSQ, respectively, whereas the convergence of $r_n$ is much slower compared to $\zeta_n$ and $x_{3,n}^*$ of sLLSQ.

Numerical results concerning the application of ALT to Catalan numbers are shown in Figure~\ref{fig:Catalan}. We can observe that numerical experiments confirm all the theoretical findings.

\begin{figure}[t!]
	\centering
	\begin{subfigure}[t]{0.5\textwidth}
		\centering
		\includegraphics[width=\textwidth]{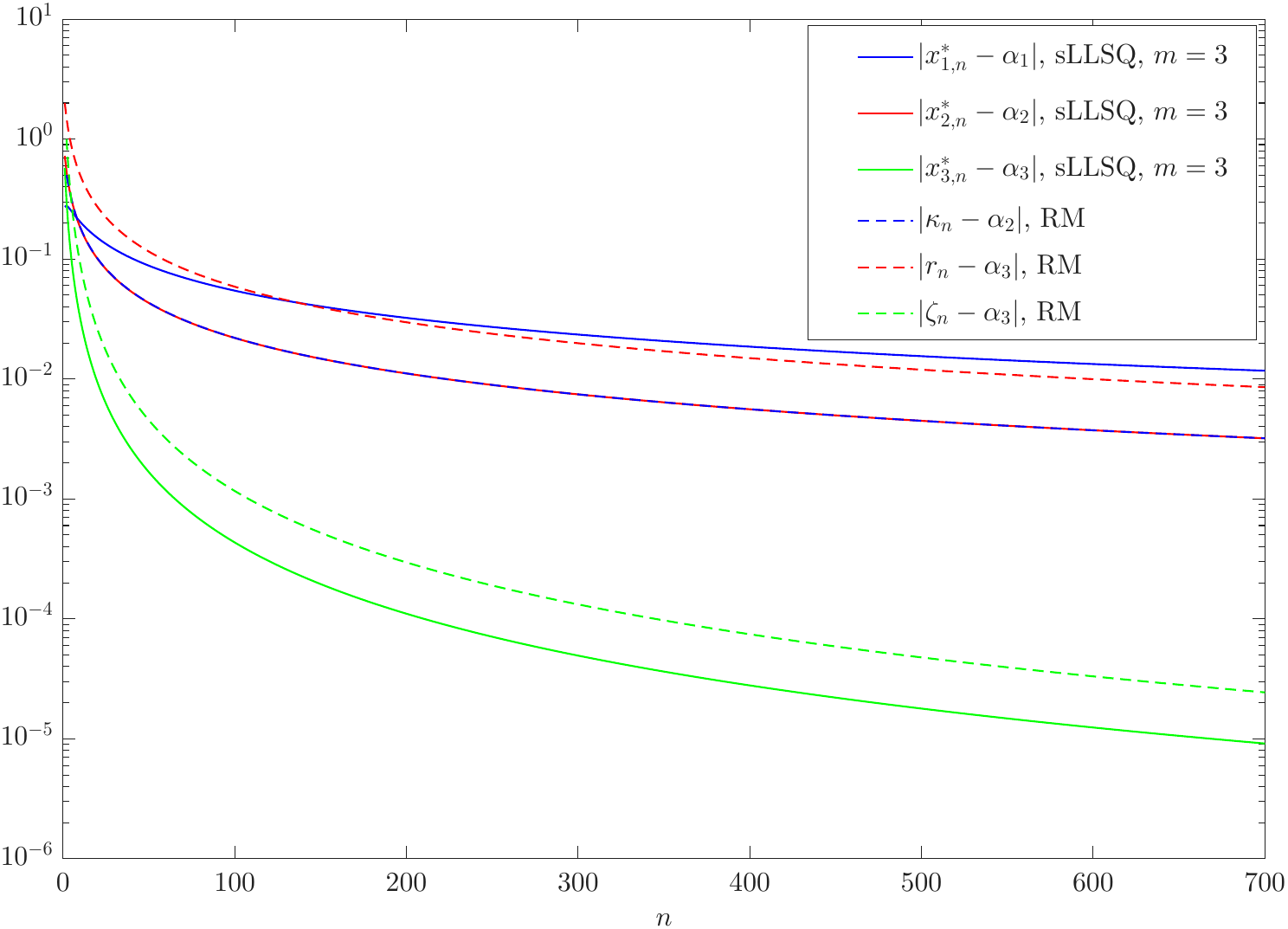}
		\caption{Absolute errors of sLLSQ and RM.}
		\label{fig:Catalan_Error_x_star_sLLSQ_and_RM}
	\end{subfigure}
	~
	\begin{subfigure}[t]{0.5\textwidth}
		\centering
		\includegraphics[width=\textwidth]{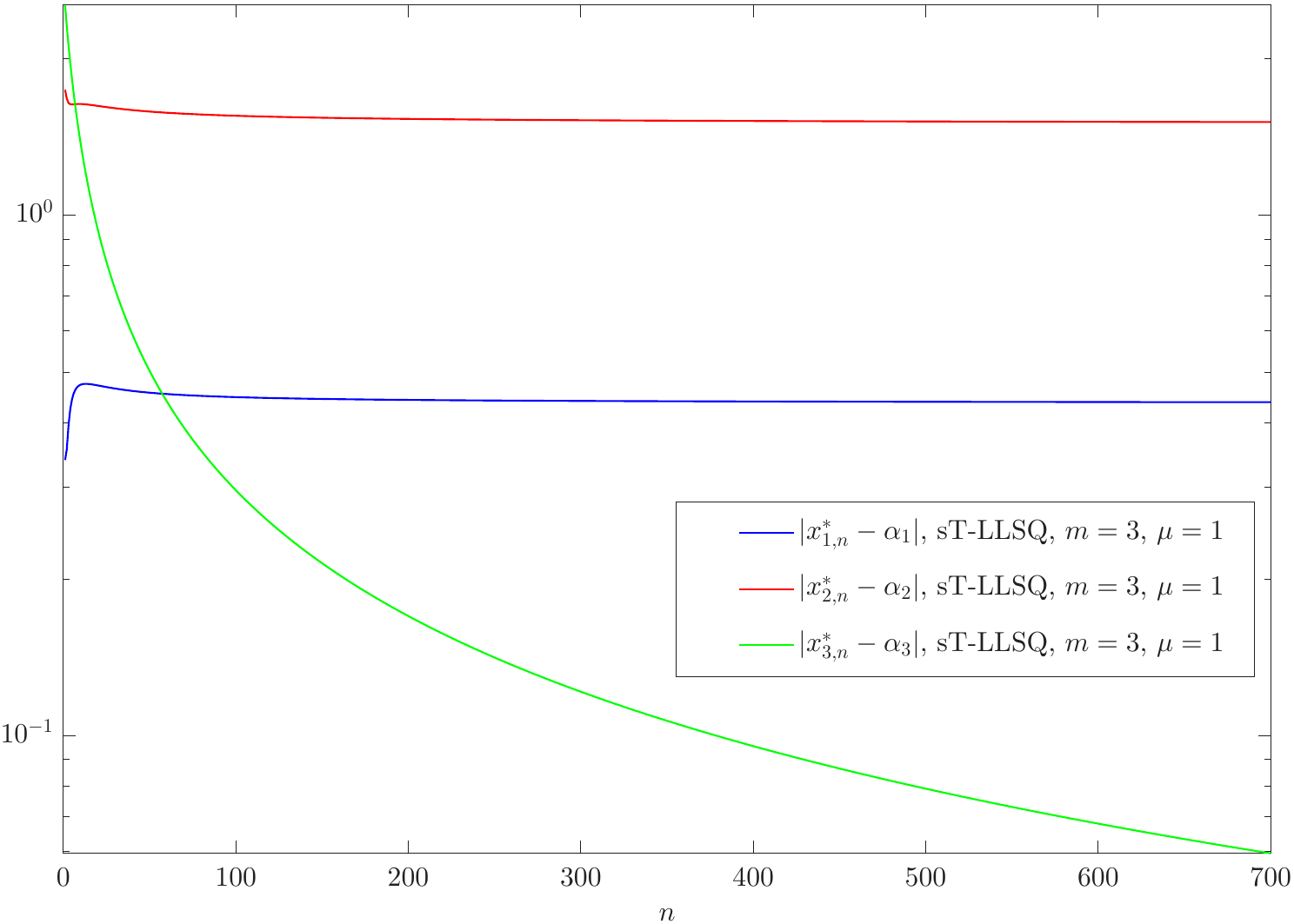}
		\caption{Absolute errors of sT-LLSQ.}
		\label{fig:Catalan_Error_x_star_sTikhonov_LLSQ}
	\end{subfigure}
	\\
	\begin{subfigure}[t]{0.5\textwidth}
		\centering
		\includegraphics[width=\textwidth]{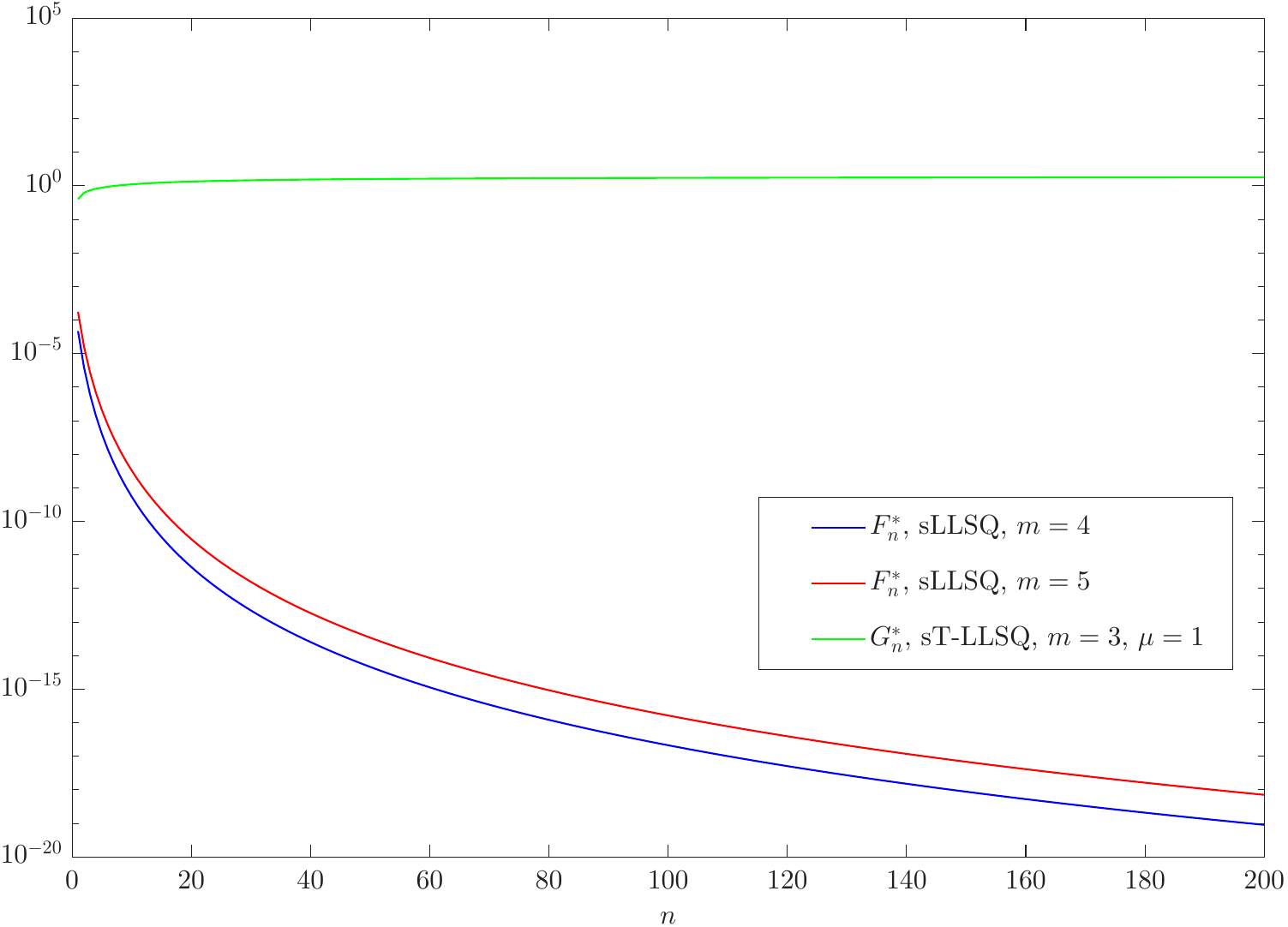}
		\caption{Global minima achieved by sLLSQ and sT-LLSQ. For sLLSQ with $m=k=3$, we have $F_n^* = 0$ for sufficiently large $n$ (this is not visible due to the logarithmic scale of the vertical axis).}
		\label{fig:Catalan_Objective_values}
	\end{subfigure}
	
	\caption{Application to the Catalan numbers ($k=3$).}
	\label{fig:Catalan}
\end{figure}

\subsection{The Factorial Function}
\label{subsec:Factorial_function}

The factorial of an integer $n \in \NN_0$ is defined by
\begin{equation*}
	f(n) = n! \defeq \prod_{i=1}^n i ,
\end{equation*}
which also satisfies the recurrence relation
\begin{equation*}
	f(n) = n f(n-1) ,
\end{equation*}
for all $n \in \NN$ with initial condition $f(0) = 1$.

Moreover, by extending the de Moivre-Stirling approximation, the factorial function has a third-order asymptotic expansion as $n \to \infty$,\footnote{In general, the factorial function admits a complete asymptotic series, that is, 
\begin{equation*}
	n! \simeq \sqrt{2 \pi n} \left( \frac{n}{e} \right)^n \left( 1 + \sum_{l=1}^{\infty} \frac{\beta_l}{n^l} \right)  , \quad \mathrm{as}\ n \to \infty ,
\end{equation*}
where $\{\beta_l\}_{l \in \NN}$ are real coefficients. For more details, see the papers of Brassesco~and~M{\'e}ndez~\cite{Brassesco-Mendez_2011},  G.~and~J.~Marsaglia~\cite{Marsaglia-Marsaglia_1990}, and Nemes~\cite{Nemes_2010}.}
\begin{equation*}
	n! = \sqrt{2 \pi n} \left( \frac{n}{e} \right)^n \left( 1 + \frac{1}{12n} + \frac{1}{288 n^2} - \frac{139}{51840 n^3} + O\left( \frac{1}{n^4} \right) \right) .
\end{equation*}
This expression is of the asymptotic form AF-1 in Table~\ref{tbl:Common_asymptotic_forms} with $k=4$, $\boldsymbol{\alpha} = \transp{\left[ \sqrt{2\pi}, \frac{1}{2}, \frac{1}{e}, 1 \right]}$, $p = 3$ and $g_l(n) = \frac{1}{n^l}$, for all $l \in \{1,2,3,4\}$. 

According to Remark~\ref{rem:AF-1_sLLSQ_convergence} and the convergence-rate invariant~\eqref{eq:Convergence-rate_invariant_y_to_x_with_h}, the sLLSQ technique with $m=k=4$ and knowledge of third-order asymptotics for $n!$ is provably convergent and yields the asymptotic estimates 
\begin{align*}
	x_{1,n}^* - \alpha_1 & = O\left( \frac{\log(n)}{n} \right) = o(1)  , \\
	x_{2,n}^* - \alpha_2 & = O\left( \frac{1}{n} \right) = o(1) , \\
	x_{3,n}^* - \alpha_3 & = O\left( \frac{\log(n)}{n^2} \right) = o(1)  ,  \\
	x_{4,n}^* - \alpha_4 & = O\left( \frac{1}{n^2} \right) = o(1) .
\end{align*}

Nevertheless, based on Table~\ref{tbl:Fundamental_results_for_common_asymptotic_forms_sT-LLSQ} and the convergence-rate invariant~\eqref{eq:Convergence-rate_invariant_y_to_x_with_h}, the sT-LLSQ method exhibits a very slow convergence of $x_{4,n}^*$, that is, 
\begin{equation*}
	x_{4,n}^* - \alpha_4 = O\left( \frac{1}{\log(n)} \right) = o(1) ,
\end{equation*}
as well as bounded absolute errors for the remaining components, i.e., $x_{j,n}^* - \alpha_j = O(1)$ for all $j \in \{1,2,3\}$.

Finally, Figure~\ref{fig:Factorial} presents numerical experiments regarding the application of sLLSQ and sT-LLSQ to factorial asymptotics. Note that RM is not included, since it is \emph{not} directly applicable in this case. The theoretical analysis is again consistent with the numerical results.

\begin{figure}[t!]
	\centering
	\begin{subfigure}[t]{0.5\textwidth}
		\centering
		\includegraphics[width=\textwidth]{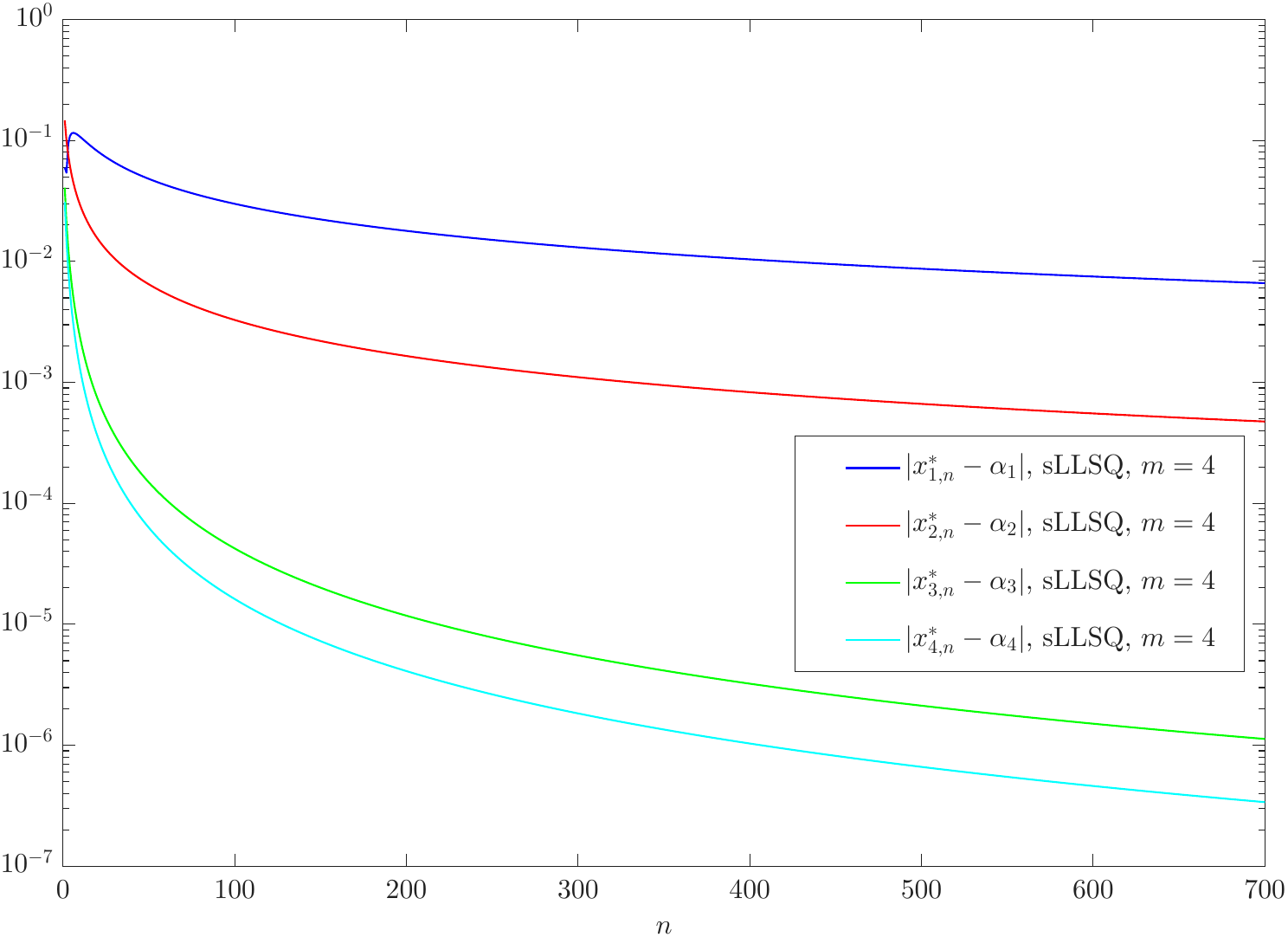}
		\caption{Absolute errors of sLLSQ.}
		\label{fig:Factorial_Error_x_star_sLLSQ}
	\end{subfigure}
	~
	\begin{subfigure}[t]{0.5\textwidth}
		\centering
		\includegraphics[width=\textwidth]{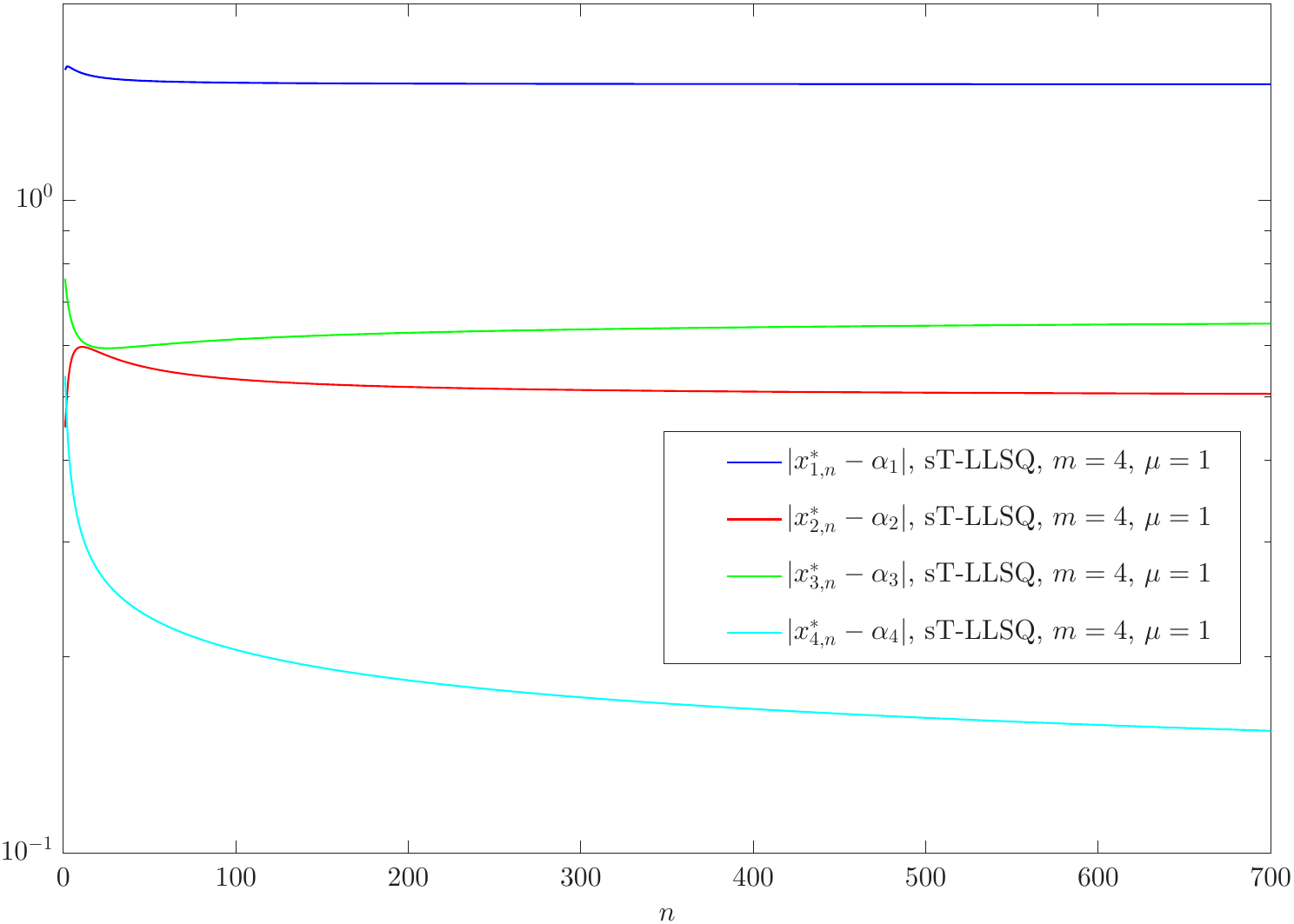}
		\caption{Absolute errors of sT-LLSQ.}
		\label{fig:Factorial_Error_x_star_sTikhonov_LLSQ}
	\end{subfigure}
	\\
	\begin{subfigure}[t]{0.5\textwidth}
		\centering
		\includegraphics[width=\textwidth]{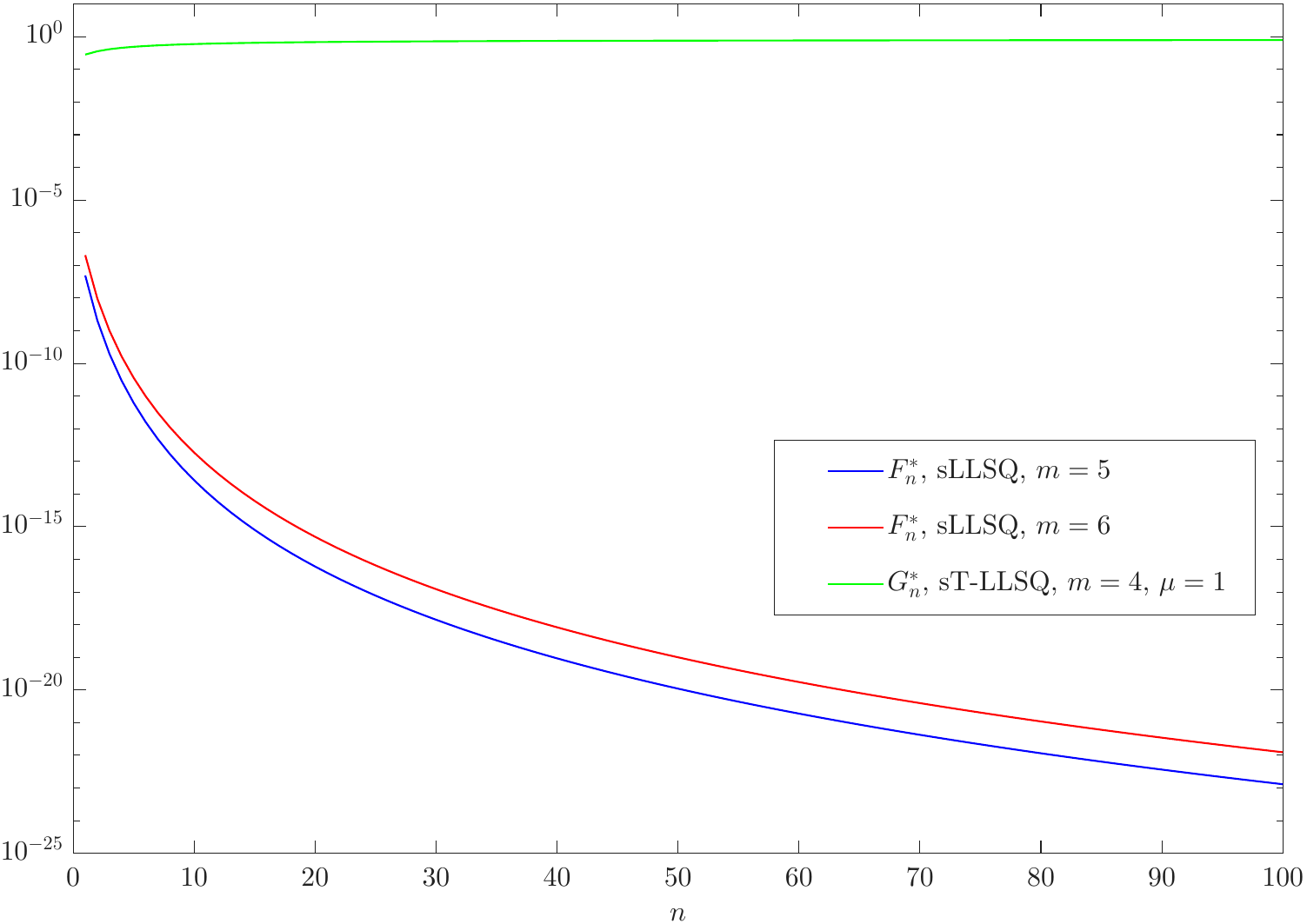}
		\caption{Global minima achieved by sLLSQ and sT-LLSQ. For sLLSQ with $m=k=4$, we have $F_n^* = 0$ for sufficiently large $n$ (this is not visible due to the logarithmic scale of the vertical axis).}
		\label{fig:Factorial_Objective_values}
	\end{subfigure}
	
	\caption{Application to the factorial function ($k=4$).}
	\label{fig:Factorial}
\end{figure}

\subsection{A Counterintuitive Example of Complete Divergence of sLLSQ}
\label{subsec:Counterintuitive_example}

Afterwards, we study a \emph{counterexample} to the assertion that ``sLLSQ can always compute the unknown parameters'', thus revealing its \emph{fundamental limitations} as a numerical method. Interestingly, we will see that sLLSQ may not converge at all to the desired values of parameters. In other words, it may be the case of \emph{complete divergence}, that is, $y_{j,n}^* - \gamma_j \neq o(1)$ for all $j \in \{1,\dots,k\}$. Contrary to naive intuition, this can happen \emph{even when} the global minima achieved by sLLSQ are \emph{identically zero} for sufficiently large $n$, e.g., when $m=k$ (cf.~Lemma~\ref{lem:Sufficient_condition_for_unique_solution_LLSQ}).

First of all, let us consider the simple sequence $\{f(n)\}_{n \in \NN}$ defined by 
\begin{equation*}
	f(n) \defeq \alpha_1 n^{\alpha_2} \alpha_3^n \left( 1 + h(n) \right) = \alpha_1 n^{\alpha_2} \alpha_3^n \left( 1 + O(g_1(n)) \right)  ,
\end{equation*} 
where $\boldsymbol{\alpha} = \transp{[\alpha_1,\alpha_2,\alpha_3]} = \transp{\left[ 7,\frac{1}{3},\frac{5}{2} \right]}$, $h(n) \defeq \frac{(-1)^n}{\sqrt{n}} = O(g_1(n))$ and $g_1(n) \defeq \frac{1}{\sqrt{n}} = o(1)$. According to Table~\ref{tbl:Common_asymptotic_forms}, the function $f$ admits the asymptotic form AF-2 with $k=3$ and $p=0$. Note that the function $h$ is \emph{unknown}, whereas the function $g_1$ is \emph{known} to us when applying the sLLSQ and sT-LLSQ techniques. In the following analysis, we provide $\Theta$-estimates based on the explicit expression of $h$, whereas $O$-estimates are solely based on the function $g_1$. Normally, in ALT we can only derive $O$-estimates. However, we give both estimates in order to evaluate the tightness/sharpness of the latter and to establish the convergence or divergence of numerical methods.  

In general, the sLLSQ analysis (cf.~Theorem~\ref{thm:The_sLLSQ_method} and \eqref{eq:Vector_b_asymptotics}) gives  
\begin{equation*}
	\mathbf{y}_n^* = \pseudoinv{\mathbf{A}} \mathbf{b} = \pseudoinv{\mathbf{A}} \left( \mathbf{A} \boldsymbol{\gamma} + \mathbf{w} \right)  =  \boldsymbol{\gamma} + \pseudoinv{\mathbf{A}} \mathbf{w} \overset{m=k}{=} \boldsymbol{\gamma} + \inv{\mathbf{A}} \mathbf{w}  ,
\end{equation*} 
where $\mathbf{w} = \mathbf{w}(n) \defeq \transp{[\log(1+h(n)),\dots,\log(1+h(n+m-1))]}$. For the particular example with $m=k=3$ and by using a symbolic computation toolbox, we have   
\begin{align*}
	\mathbf{y}_n^* - \boldsymbol{\gamma} = \inv{\mathbf{A}} \mathbf{w} & = {(-1)^n} 4 \sqrt{n} \, \transp{[n\log(n), -n, 1]} + \mathbf{l.o.t.}  \\
	& = \transp{[\Theta(n^{3/2} \log(n)), \Theta(n^{3/2}), \Theta(\sqrt{n})]} , 
\end{align*}
as $n \to \infty$. Specifically, we have
\begin{align*}
\norm{\mathbf{y}_n^* - \boldsymbol{\gamma}} & = \Theta(n^{3/2} \log(n)) \neq o(1) ,  \\
y_{1,n}^* - \gamma_1 & = \Theta(n^{3/2} \log(n))  \neq o(1) , \\
y_{2,n}^* - \gamma_2 & = \Theta(n^{3/2}) \neq o(1) , \\
y_{3,n}^* - \gamma_3 & = \Theta(\sqrt{n}) \neq o(1) ,
\end{align*}
and therefore $\abs{y_{j,n}^* - \gamma_j} \to \infty$, as $n \to \infty$, for all $j \in \{1,\dots,k\}$. The complete divergence of sLLSQ occurs \emph{despite the fact that its global minima $\{F_n^*\}_{n \in \NN}$ are identically zero (i.e., perfect curve fitting) for large enough $n$}. In fact, the \emph{oscillating nature} of the function $h$ is responsible for the sLLSQ divergence. Moreover, from Table~\ref{tbl:Fundamental_results_for_common_asymptotic_forms_sLLSQ} we obtain 
\begin{align*}
	\norm{\mathbf{y}_n^* - \boldsymbol{\gamma}} & = O\left( n^2 \log(n) g_1(n) \right) = O(n^{3/2} \log(n)) , \\
	y_{1,n}^* - \gamma_1 & = O\left( n^2 \log(n) g_1(n) \right) = O(n^{3/2} \log(n)) , \\ 
	y_{2,n}^* - \gamma_2 & = O\left( n^2 g_1(n) \right) = O(n^{3/2}) , \\
	y_{3,n}^* - \gamma_3 & = O\left( n g_1(n) \right) = O(\sqrt{n}) ,
\end{align*}
all of them being in agreement with the previous $\Theta$-estimates; in fact, they are tight. The numerical results shown in Figure~\ref{fig:Counterexample_Error_y_star_sLLSQ} support these observations. For better visualization, we have plotted the absolute errors with respect to $\mathbf{y}_n^*$ instead of $\mathbf{x}_n^*$, because $\abs{x_{1,n}^* - \alpha_1} = \abs{e^{y_{1,n}^*} - \alpha_1} = \alpha_1 \abs{e^{(-1)^n 4 n^{3/2} \log(n) + \mathrm{l.o.t.}} - 1}$ and $\abs{x_{3,n}^* - \alpha_3} = \abs{e^{y_{3,n}^*} - \alpha_3} = \alpha_3 \abs{e^{(-1)^n 4 \sqrt{n} + \mathrm{l.o.t.}} - 1}$ exhibit increasing fluctuations as $n \to \infty$; although, $\abs{x_{2,n}^* - \alpha_2} = \abs{y_{2,n}^* - \gamma_2} \sim 4 n^{3/2}$ does not. Specifically, we have $\lim_{\substack{n \to \infty \\ n \: \mathrm{even}}} \abs{x_{j,n}^* - \alpha_j} = \infty$ and $\lim_{\substack{n \to \infty \\ n \: \mathrm{odd}}} \abs{x_{j,n}^* - \alpha_j} = \alpha_j$, for every $j \in \{1,3\}$.  

In a similar way, we can derive $\Theta$-asymptotics for sT-LLSQ and RM. Firstly, based on the sT-LLSQ analysis (cf.~Theorem~\ref{thm:The_sT-LLSQ_method} and \eqref{eq:y_n_star_sT-LLSQ_rearranged}) and by using a symbolic computation toolbox, we obtain for $m=k=3$ and $\mu=1$    
\begin{align*}
	\mathbf{y}_n^* - \boldsymbol{\gamma} = \mathbf{C} \mathbf{w} - \mu \mathbf{D} \boldsymbol{\gamma} & = \transp{\left[ -\log(\alpha_1), -\alpha_2,  \frac{\alpha_2 \log(n)}{n} \right]} + \mathbf{l.o.t.}  \\
	& = \transp{\left[ \Theta(1), \Theta(1), \Theta\left( \frac{\log(n)}{n} \right) \right]} ,
\end{align*}
as $n \to \infty$. Here, we have used the fact that $\log(\alpha_1) = \log(7) \neq 0$ and $\alpha_2 = \frac{1}{3} \neq 0$. Due to the convergence-rate invariant~\eqref{eq:Convergence-rate_invariant_y_to_x_with_h}, the same estimates hold for $\mathbf{x}_n^* - \boldsymbol{\alpha}$. Secondly, for RM (by using a symbolic computation toolbox) we get the limits 
\begin{align*}
	\lim_{\substack{n \to \infty \\ n \: \mathrm{even}}} \frac{\kappa_n - \alpha_2}{n^{3/2}} = 4 ,  & \quad  \lim_{\substack{n \to \infty \\ n \: \mathrm{odd}}} \frac{\kappa_n - \alpha_2}{n^{3/2}} = -4 , \\
	\lim_{\substack{n \to \infty \\ n \: \mathrm{even}}} \frac{\zeta_n - \alpha_3}{\sqrt{n}} = -4 \alpha_3,  & \quad  \lim_{\substack{n \to \infty \\ n \: \mathrm{odd}}} \frac{\zeta_n - \alpha_3}{\sqrt{n}} = 4 \alpha_3,  \\
	\lim_{\substack{n \to \infty \\ n \: \mathrm{even}}} (r_n - \alpha_3) \sqrt{n} = -2 \alpha_3,  & \quad  \lim_{\substack{n \to \infty \\ n \: \mathrm{odd}}} (r_n - \alpha_3) \sqrt{n} = 2 \alpha_3  . 
\end{align*}
Therefore, by the continuity of absolute-value function and by the limit definition, 
\begin{align*}
	\lim_{n \to \infty} \frac{\abs{\kappa_n - \alpha_2}}{n^{3/2}}  = 4  & \implies  \kappa_n - \alpha_2 = \Theta\left( n^{3/2} \right) \neq o(1) ,  \\
	\lim_{n \to \infty} \frac{\abs{\zeta_n - \alpha_3}}{\sqrt{n}}  = 4 \abs{\alpha_3} = 4 \alpha_3  & \implies  \zeta_n - \alpha_3 = \Theta\left( \sqrt{n} \right) \neq o(1) , \\
	\lim_{n \to \infty} \abs{r_n - \alpha_3} \sqrt{n} = 2 \abs{\alpha_3} = 2 \alpha_3  & \implies  r_n - \alpha_3 = \Theta\left( \frac{1}{\sqrt{n}} \right) = o(1) .
\end{align*}
The last two implications stem from the fact that $\alpha_3 = \frac{5}{2} \neq 0$.

Furthermore, Tables~\ref{tbl:Fundamental_results_for_common_asymptotic_forms_sT-LLSQ}~and~\ref{tbl:Comparison_between_computational_methods} yield the asymptotic estimates 
\begin{equation*}
	x_{1,n}^* - \alpha_1 = O(1) , \quad x_{2,n}^* - \alpha_2 = O(1) , \quad x_{3,n}^* - \alpha_3 = O\left( \frac{\log(n)}{n} \right) = o(1)
\end{equation*} 
for sT-LLSQ, and
\begin{align*}
	\kappa_n - \alpha_2 & = O\left( \max\left( \frac{1}{n^2}, n^2 \abs{g_1(n)} \right) \right) = O\left( n^{3/2} \right) , \\
	\zeta_n - \alpha_3 & = O\left( \max\left( \frac{1}{n^2}, n \abs{g_1(n)} \right) \right) = O\left( \sqrt{n} \right) , \\
	r_n - \alpha_3 & = O\left( \max\left( \frac{1}{n}, \abs{g_1(n)} \right) \right) = O\left( \frac{1}{\sqrt{n}} \right) = o(1) 
\end{align*}
for RM. Notice that the convergence of $x_{3,n}^*$ is relatively fast in comparison with $r_n$ whose convergence is rather slow. For both methods, we can also observe that the $O$-estimates are in line with their $\Theta$-counterparts; actually, they are sharp. Figure~\ref{fig:Counterexample_Error_x_star_sTikhonov_LLSQ_and_RM} presents the absolute errors attained by sT-LLSQ and RM, thus confirming the preceding analysis.

\begin{figure}[t!]
	\centering
	\begin{subfigure}[t]{0.5\textwidth}
		\centering
		\includegraphics[width=\textwidth]{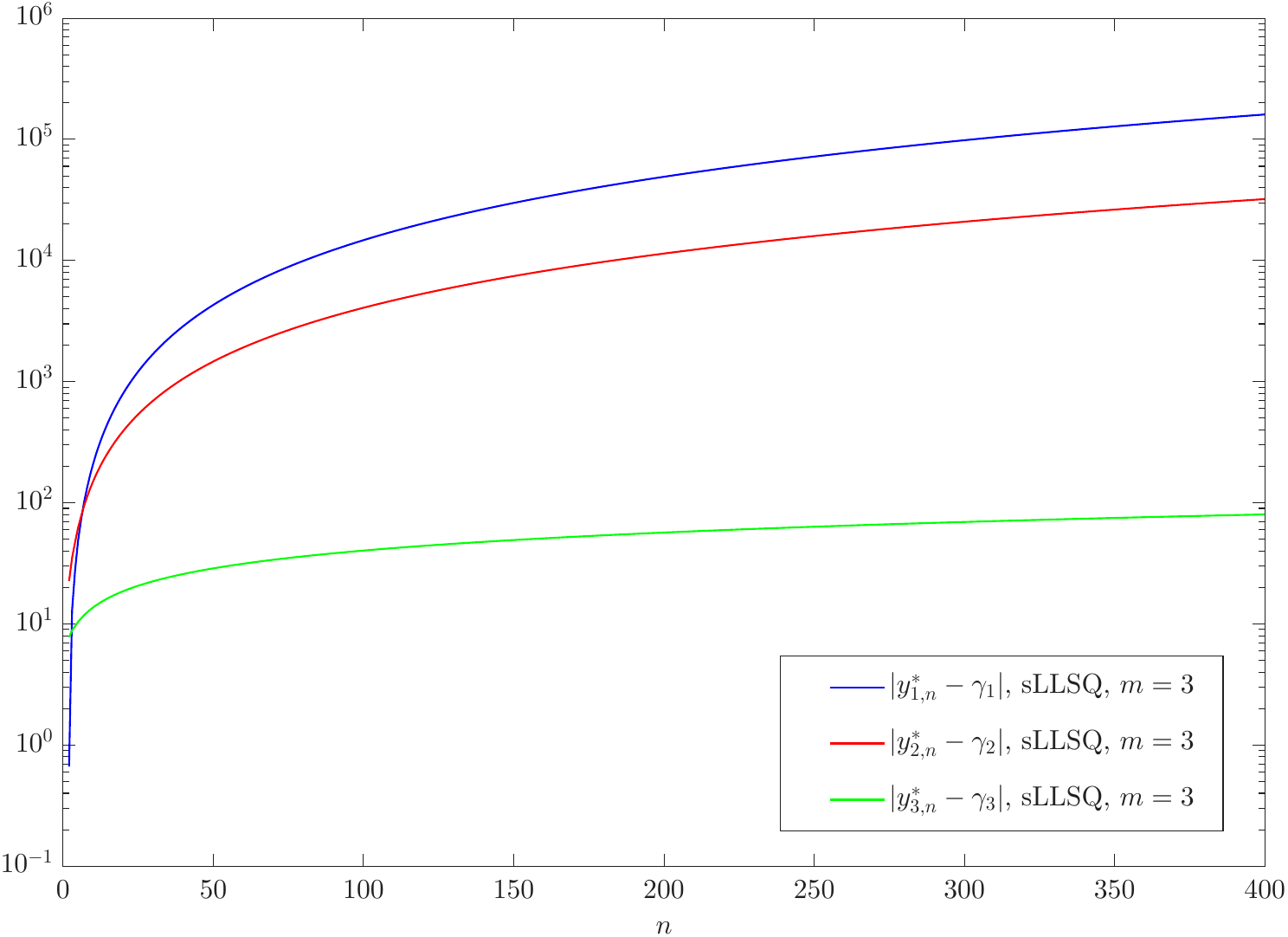}
		\caption{Absolute errors of sLLSQ.}
		\label{fig:Counterexample_Error_y_star_sLLSQ}
	\end{subfigure}
	\\
	\begin{subfigure}[t]{0.75\textwidth}
		\centering
		\includegraphics[width=\textwidth]{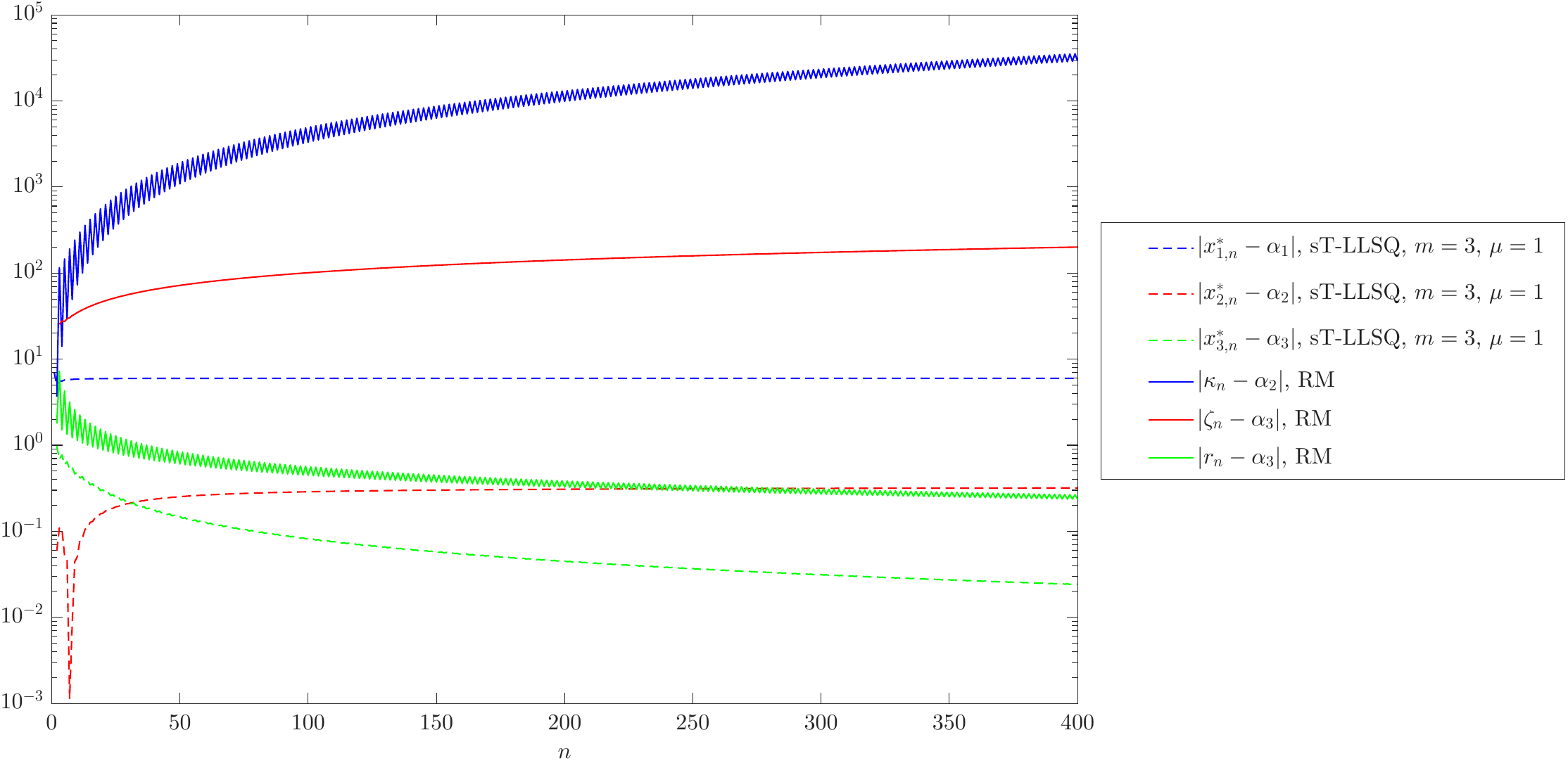}
		\caption{Absolute errors of sT-LLSQ and RM.}
		\label{fig:Counterexample_Error_x_star_sTikhonov_LLSQ_and_RM}
	\end{subfigure}
	
	\caption{An example of complete divergence of sLLSQ ($k=3$).}
	\label{fig:Counterexample_complete_sLLSQ_divergence}
\end{figure}

\begin{remark}
	If the function $h$ is \emph{explicitly known} (although this is not so common in ALT), then it is possible to achieve and prove the convergence of sLLSQ and RM by selecting/sampling \emph{only the even-indexed terms} of the sequence $\{f(n)\}_{n \in \NN}$. Specifically, we define the new functions for each $n \in \NN$:
	\begin{align*}
		& f_{\mathrm{new}}(n) \defeq f(2n) = \alpha_1 (2n)^{\alpha_2} (\alpha_3)^{2n} \left( 1 + h(2n) \right) = \alpha_1 (2n)^{\alpha_2} (\alpha_3)^{2n} \left( 1 + h_{\mathrm{new}}(n) \right)  ,  \\
		& [\varphi_{1,\mathrm{new}}(n), \varphi_{2,\mathrm{new}}(n), \varphi_{3,\mathrm{new}}(n)] \defeq [\varphi_1(2n), \varphi_2(2n), \varphi_3(2n)] =  [1,\log(2n),2n] , \\
		& \mathbf{A}_{\mathrm{new}} = \mathbf{A}_{\mathrm{new}}(n) \defeq 
		\begin{bmatrix}
			\varphi_{1,\mathrm{new}}(n) & \varphi_{2,\mathrm{new}}(n) & \varphi_{3,\mathrm{new}}(n) \\ 
			\vdots & \vdots & \vdots \\
			\varphi_{1,\mathrm{new}}(n+m-1) & \varphi_{2,\mathrm{new}}(n+m-1) & \varphi_{3,\mathrm{new}}(n+m-1)
		\end{bmatrix} , \\
		& \mathbf{w}_{\mathrm{new}} = \mathbf{w}_{\mathrm{new}}(n) \defeq \transp{[\log(1+h_{\mathrm{new}}(n)),\dots,\log(1+h_{\mathrm{new}}(n+m-1))]}  ,
	\end{align*}   
	where $h_{\mathrm{new}}(n) \defeq h(2n) = \frac{1}{\sqrt{2n}}$, and we use the sampled sequence $\{f_{\mathrm{new}}(n)\}_{n \in \NN}$. 
	
	The convergence rates achieved by sLLSQ with $m=k=3$ are given by\footnote{Note that $\determ{\transp{\mathbf{A}_{\mathrm{new}}} \mathbf{A}_{\mathrm{new}}} = \Theta\left( \frac{1}{n^4} \right)$, hence Assumption~\ref{assum:Asymptotic_rank_assumption}/\ref{assum:Equivalent_asymptotic_rank_assumption} is satisfied.} 
	\begin{align*}
		\norm{\mathbf{y}_n^* - \boldsymbol{\gamma}} & = \norm{\inv{\mathbf{A}_{\mathrm{new}}} \mathbf{w}_{\mathrm{new}}} = \Theta\left( \frac{\log(n)}{\sqrt{n}} \right) = o(1) ,  \\
		y_{1,n}^* - \gamma_1 & = \inv{\mathbf{A}_{\mathrm{new}}}(1,:) \mathbf{w}_{\mathrm{new}} = \Theta\left( \frac{\log(n)}{\sqrt{n}} \right) = o(1) , \\
		y_{2,n}^* - \gamma_2 & = \inv{\mathbf{A}_{\mathrm{new}}}(2,:) \mathbf{w}_{\mathrm{new}} = \Theta\left( \frac{1}{\sqrt{n}} \right) = o(1) , \\
		y_{3,n}^* - \gamma_3 & = \inv{\mathbf{A}_{\mathrm{new}}}(3,:) \mathbf{w}_{\mathrm{new}} = \Theta\left( \frac{1}{n^{3/2}} \right) = o(1)  .
	\end{align*}
	However, $y_{1,n}^*$ and $y_{2,n}^*$ exhibit very and quite slow convergence, respectively. Due to the convergence-rate invariant~\eqref{eq:Convergence-rate_invariant_y_to_x_with_h}, the same estimates apply to $\mathbf{x}_n^* - \boldsymbol{\alpha}$.
	
	Furthermore, the sT-LLSQ method (with $m=k=3$ and $\mu=1$) attains the asymptotic estimates
	\begin{align*}
		\mathbf{y}_n^* - \boldsymbol{\gamma} = \mathbf{C}_{\mathrm{new}} \mathbf{w}_{\mathrm{new}} - \mu \mathbf{D}_{\mathrm{new}} \boldsymbol{\gamma} & = \transp{\left[ -\log(\alpha_1), -\alpha_2, \frac{\alpha_2 \log(n)}{2n} \right]} + \mathbf{l.o.t.}  \\
		& = \transp{\left[ \Theta(1), \Theta(1), \Theta\left( \frac{\log(n)}{n} \right) \right]} ,
	\end{align*}
	since $\log(\alpha_1) = \log(7) \neq 0$ and $\alpha_2 = \frac{1}{3} \neq 0$, where $\mathbf{C}_{\mathrm{new}} \defeq \mathbf{D}_{\mathrm{new}} \transp{\mathbf{A}_{\mathrm{new}}}$ and $\mathbf{D}_{\mathrm{new}} \defeq \inv{(\transp{\mathbf{A}_{\mathrm{new}}} \mathbf{A}_{\mathrm{new}} + \mu \mathbf{I})}$.
	
	Finally, the modified RM, with $r_n \defeq \frac{f_{\mathrm{new}}(n+1)}{f_{\mathrm{new}}(n)}$, $\zeta_n \defeq n r_n - (n-1) r_{n-1}$ and $\kappa_n \defeq n^2 \left( 1 - \frac{r_n}{r_{n-1}} \right)$, achieves\footnote{Observe that $\alpha_1 (2n)^{\alpha_2} (\alpha_3)^{2n} = \left( \alpha_1 2^{\alpha_2} \right) n^{\alpha_2} \left( \alpha_3^2 \right)^n$, which implies that $\kappa_n \to \alpha_2$ and $\zeta_n, r_n \to \alpha_3^2$ as $n \to \infty$. Therefore, both $\sqrt{\zeta_n}$ and $\sqrt{r_n}$ converge to $\alpha_3$, since the square-root function is continuous.} 
	\begin{align*}
		\lim_{n \to \infty} \left( \kappa_n - \alpha_2 \right) \sqrt{n} = - \frac{3 \sqrt{2}}{8} & \implies \kappa_n - \alpha_2 = \Theta\left( \frac{1}{\sqrt{n}} \right) = o(1) , \\
		\lim_{n \to \infty} \left( \sqrt{\zeta_n} - \alpha_3 \right) n^{3/2} = \frac{\sqrt{2} \alpha_3}{16} & \implies \sqrt{\zeta_n} - \alpha_3 = \Theta\left( \frac{1}{n^{3/2}} \right) = o(1)  ,  \\
		\lim_{n \to \infty} \left( \sqrt{r_n} - \alpha_3 \right) n = \frac{\alpha_2 \alpha_3}{2} & \implies \sqrt{r_n} - \alpha_3 = \Theta\left( \frac{1}{n} \right) = o(1) .
	\end{align*}
	The last two implications are valid since $\alpha_2 = \frac{1}{3} \neq 0$ and $\alpha_3 = \frac{5}{2} \neq 0$. Observe that the convergence rates of $\kappa_n$ and $\sqrt{\zeta_n}$ are (up to multiplicative constants) identical to those of sLLSQ. All theoretical estimates are in full agreement with the numerical results shown in Figure~\ref{fig:Counterexample_Error_x_star_sLLSQ_sTikhonov_LLSQ_and_RM_only_even_terms}.  	
\end{remark}

\begin{figure}[t!]
	\centering
	\includegraphics[width=0.75\textwidth]{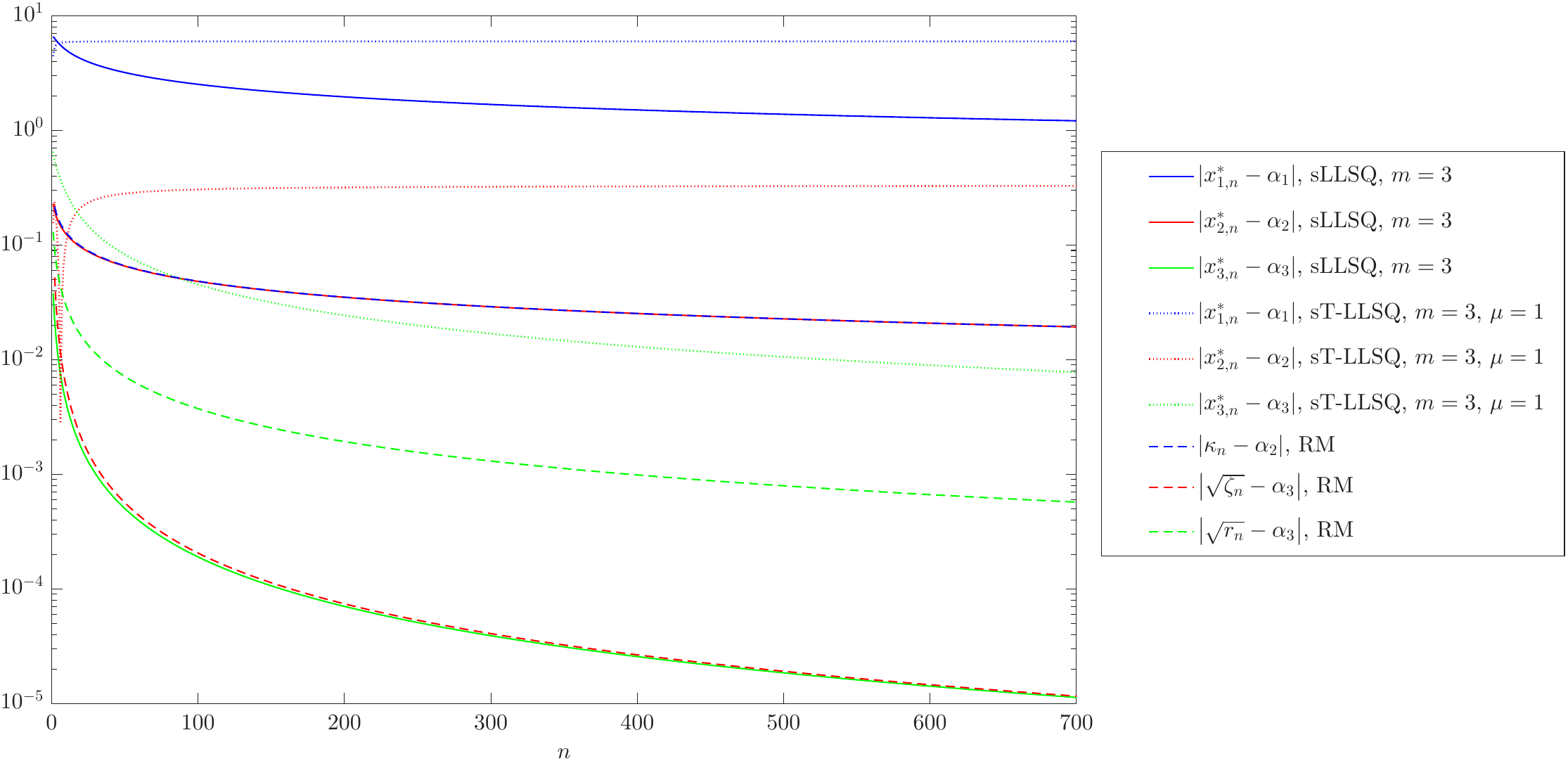}
	\caption{Turning the complete divergence of sLLSQ into convergence by using the sampled sequence $\{f(2n)\}_{n \in \NN}$. Absolute errors of sLLSQ, sT-LLSQ and RM ($k=3$).}
	\label{fig:Counterexample_Error_x_star_sLLSQ_sTikhonov_LLSQ_and_RM_only_even_terms}
\end{figure}

\section{Final Remarks and Open Problems}

ALT is an exciting research area at the intersection of optimization theory and asymptotic analysis. Potential applications span diverse and seemingly different fields where asymptotics plays a central role, e.g., analytic combinatorics (asymptotic enumeration), analysis of algorithms (time/space complexity described by recurrence relations), probability theory (limiting distributions), statistical physics (phase transitions and critical phenomena), fluid mechanics (singularity analysis, perturbation theory), and asymptotic properties of solutions to partial differential equations (long-time behavior). 

In this paper, we have mainly focused on two fundamental numerical techniques, namely, sLLSQ and sT-LLSQ. Both methods are useful and complement existing techniques, such as the ratio method and its variants. Nevertheless, the proposed approaches suffer from \emph{inherent weaknesses}, e.g., slow convergence in some cases, or even divergence (especially when \emph{oscillating/fluctuating behavior} is hidden in the asymptotics of the considered sequence). In particular, the sLLSQ technique should be applied \emph{cautiously}, since its convergence---contrary to naive intuition---is \emph{not} always guaranteed, but only under appropriate conditions. In addition, knowledge of higher-order asymptotics (whenever possible) is crucial to derive tighter asymptotic bounds, or even to ensure the convergence of sLLSQ.  

Furthermore, there exist challenging research questions in ALT that remain to be answered. Some interesting directions include:
\begin{itemize}
	\item Design and analyze efficient algorithms (numerical methods) that are suitable for other asymptotic forms $\widehat{f}$ in the asymptotic expansion~\eqref{eq:Proven_asymptotic_expansion}, i.e., different from~\eqref{eq:Proven_asymptotic_form}. This requires the selection of an appropriate loss function and then the solution of an optimization problem. The choice of loss function affects the difficulty of finding optimal solutions (i.e., the complexity of optimization problem) as well as the convergence analysis. 
		
	\item If only approximate/sub-optimal solutions to the optimization problems can be obtained (e.g., due to non-convexity), then can we still guarantee convergence to the unknown parameters?  
	
	\item Numerical stability/sensitivity: Do small errors/perturbations in the input data (e.g., due to numerical inaccuracies of finite precision) result in small changes in the final solution? In other words, how does the quality of input data affect the (asymptotic) convergence rate? Note that ALT involves both numerical and asymptotic features, therefore classical (non-asymptotic) error analysis is probably not appropriate in this case. 
	
	\item In general, error estimation remains an important and, at the same time, wide-open problem. According to Guttmann~\cite[\S{10}]{Guttmann_1989}, error estimation (from a purely-numerical point of view) has enormous value for the field and would lead to a conventional branch of numerical analysis. Nevertheless, it is not yet clear how to deal with the asymptotic nature of the problem. 	
\end{itemize}

Concerning the last point, we should first distinguish between (and then combine the benefits of) asymptotic and numerical analysis. On the one hand, asymptotic analysis deals with convergence/growth/decay rates in the limit as $n \to \infty$, without considering explicit constants (these quantities are hidden in the asymptotic notation), and therefore it is more theoretical. On the other hand, numerical (non-asymptotic) analysis focuses on explicit error bounds for particular values/ranges of the variable $n$, thus being more practical and suitable for real-world applications. As a result, ALT poses a \emph{unique challenge}: how can we exploit the advantages of both worlds? There is no easy answer according to de Bruijn~\cite[\S{1.7}]{deBruijn_1958} who discussed, in a simple and beautiful way, the basic difference between asymptotic and numerical analysis: 
\begin{displayquote}
	$[\dots]$ neither $O$-formulas nor $o$-formulas have, as they stand, any direct value for numerical purposes. However, in almost all cases where such formulas have been derived, it is possible to retrace the proof, replacing all $O$-formulas by definite estimates involving explicit numerical constants. $[\dots]$ 
	
	In most cases, the final estimates obtained in this way are rather weak, with constants a thousand times, say, greater than they could be. The reason is, of course, that such estimates are obtained by means of a considerable number of steps, and in each step a factor~$2$ or so is easily lost. Quite often it is possible to reduce such errors by a more careful examination. 
	
	But even if the asymptotic result is presented in its best possible explicit form, it need not be satisfactory from the numerical point of view.  
\end{displayquote}

Nevertheless, the following idea could be useful. Suppose that the convergence of a numerical method  is theoretically guaranteed, i.e., $\norm{\mathbf{x}_n^* - \boldsymbol{\alpha}} = O(h(n)) = o(1)$ for some function $h(n) = o(1)$. If all the hidden constants within $O$-terms can be made \emph{explicit} by a more diligent investigation, including the proven asymptotic expansion~\eqref{eq:Proven_asymptotic_expansion} and the convergence analysis of the numerical method, then we eventually obtain $\norm{\mathbf{x}_n^* - \boldsymbol{\alpha}} \leq c \abs{h(n)}$ for all integers $n \geq n_0$, where $c \in \RR_{>0}$ and $n_0 \in \NN$ are explicitly known. Note that the integer $n_0$ can usually be reduced by increasing $c$ accordingly.\footnote{In particular, if $\abs{f(n)} \leq c_1 \abs{g(n)}$ for all integers $n \geq n_1$, and $g(n) \neq 0$ for all integers $n \geq n_2$, with $n_1 > n_2$, then $\abs{f(n)} \leq c_2 \abs{g(n)}$ for all integers $n \geq n_2$, where $c_2 \defeq \max(c_1,c_3)$ and $c_3 \defeq \max\limits_{n_2 \leq n \leq n_1} \frac{\abs{f(n)}}{\abs{g(n)}}$. Observe that $c_2 \geq c_1$.} Therefore, given any numerical precision $\varepsilon > 0$, by the limit definition (since $\norm{\mathbf{x}_n^* - \boldsymbol{\alpha}} = o(1)$) there exists an $n_{\varepsilon} \in \NN$ such that $\norm{\mathbf{x}_n^* - \boldsymbol{\alpha}} \leq \varepsilon$ for all integers $n \geq n_{\varepsilon}$. Specifically, the integer $n_{\varepsilon}$ can be determined by solving the inequality $c \abs{h(n)} \leq \varepsilon \iff \abs{h(n)} \leq \frac{\varepsilon}{c}$, under the constraint $n \geq n_0$. Note that there exists an  $n'_0 = n'_0(\varepsilon,c) \in \NN$ such that $\abs{h(n)} \leq \frac{\varepsilon}{c}$ for all integers $n \geq n'_0$, because $h(n) = o(1)$. If the integer $n'_0$ can be easily found (e.g., when $\abs{h}$ is a simple function), then we just select $n_{\varepsilon} \defeq \max(n_0,n'_0)$ for which we have $\norm{\mathbf{x}_{n_{\varepsilon}}^* - \boldsymbol{\alpha}} \leq \varepsilon$. In this way, we are able to determine \emph{in advance} a sufficient number of iterations in order to achieve the desired accuracy $\varepsilon > 0$. In fact, a \emph{single iteration} for $n = n_{\varepsilon}$ is required, since each iteration is independent of the previous ones. Observe that there is a \emph{trade-off} between $n_0$ and $n'_0$: the decrease of $n_0$ results in the increase of $c$, which normally leads to the increase of $n'_0$. In order to minimize $n_{\varepsilon}$ for a given $\varepsilon > 0$, we should find an optimal pair $(n_0^*,c^*)$; this constitutes an optimization problem on its own. The above discussion motivates another research direction, i.e., \emph{automatic tuning/adjustment of the $(n_0,c)$-pair to minimize $n_{\varepsilon}$}. In other words, does there exist an efficient algorithm for this task, or at least for obtaining a sub-optimal (but relatively small) $n_{\varepsilon}$?   

Finally, by providing a new perspective through the lens of optimization, it is hoped that this work will stimulate further interest and research activity in ALT.

\appendix{

\section{Mathematical Background}
\label{sec:Mathematical_background}

\subsection{Vector and Matrix Norms}

The notion of vector/matrix norm can be defined on finite/infinite-dimensional vector spaces over the real or complex numbers. Nevertheless, in this paper we focus on real vector spaces of finite dimension. 

\begin{definition}
	Let $m,k \in \NN$. A function $f \colon \RR^m \to \RR$ (resp. $g \colon \RR^{m \times k} \to \RR$) is called a \emph{vector norm} on $\RR^m$ (resp. a \emph{matrix norm} on $\RR^{m \times k}$) if, and only if, it satisfies the following properties, for all vectors $\mathbf{x},\mathbf{y} \in \RR^m$ (resp. for all matrices $\mathbf{A},\mathbf{B} \in \RR^{m \times k}$) and all scalars $\alpha \in \RR$: 
	\begin{enumerate}[label={\arabic*.}]
		\item $f(\mathbf{x}) \geq 0$ $\quad$ (resp. $g(\mathbf{A}) \geq 0$),
		
		\item $f(\mathbf{x}) = 0 \iff \mathbf{x} = \mathbf{0}$ $\quad$ (resp. $g(\mathbf{A}) = 0 \iff \mathbf{A} = \mathbf{O}$),
		
		\item $f(\alpha \mathbf{x}) = \abs{\alpha} f(\mathbf{x})$ $\quad$ (resp. $g(\alpha \mathbf{A}) = \abs{\alpha} g(\mathbf{A})$), and 
		
		\item $f(\mathbf{x} + \mathbf{y}) \leq f(\mathbf{x}) + f(\mathbf{y})$ $\quad$ (resp. $g(\mathbf{A} + \mathbf{B}) \leq g(\mathbf{A}) + g(\mathbf{B})$). 
	\end{enumerate}
	Such a function is denoted by $\norm{\cdot}$, usually with a subscript.  
\end{definition}

For example, given a real number $1 \leq p \leq \infty$, the vector $p$-norm is defined by
\begin{equation*}
	\norm{\mathbf{x}}_p \defeq \left( \sum_{i=1}^m \abs{x_i}^p \right)^{1/p} ,
\end{equation*}
for every $\mathbf{x} \in \RR^m$, where $m \in \NN$. More specifically, for $p = 1,2,\infty$ we obtain the most useful norms, that is, the absolute-sum/Manhattan norm 
\begin{equation*}
	\onenorm{\mathbf{x}} \defeq \sum_{i=1}^m \abs{x_i}  ,
\end{equation*}
the Euclidean norm
\begin{equation*}
	\twonorm{\mathbf{x}} \defeq \left( \sum_{i=1}^m x_i^2 \right)^{1/2} = \sqrt{\transp{\mathbf{x}} \mathbf{x}} 
\end{equation*}
and the maximum/infinity norm 
\begin{equation*}
	\maxnorm{\mathbf{x}} = \infnorm{\mathbf{x}} \defeq \max\limits_{1 \leq i \leq m} {\abs{x_i}} .
\end{equation*}

The most common matrix norms (for each $\mathbf{A} \in \RR^{m \times k}$, where $m,k \in \NN$) are the $p$-norm induced by the vector $p$-norm, with $1 \leq p \leq \infty$, and defined by
\begin{equation*}
	\norm{\mathbf{A}}_p \defeq \sup_{\mathbf{x} \in \RR^k,\, \mathbf{x} \ne \mathbf{0}} \frac{\norm{\mathbf{A} \mathbf{x}}_p}{\norm{\mathbf{x}}_p} = \max_{\mathbf{x} \in \RR^k,\, \norm{\mathbf{x}}_p = 1} {\norm{\mathbf{A} \mathbf{x}}_p}  ,   
\end{equation*}   
the maximum norm 
\begin{equation*}
	\maxnorm{\mathbf{A}} \defeq \max_{\substack{1 \leq i \leq m, \\ 1 \leq j \leq k}} {\abs{a_{i,j}}}  ,
\end{equation*}
and the Frobenius norm 
\begin{equation*}
	\Fnorm{\mathbf{A}} \defeq \sqrt{\sum_{i=1}^m \sum_{j=1}^k {\abs{a_{i,j}}}^2} = \sqrt{\tr{\transp{\mathbf{A}} \mathbf{A}}} = \sqrt{\sum_{j=1}^k \lambda_j(\transp{\mathbf{A}} \mathbf{A})} = \sqrt{\sum_{j=1}^k \sigma_j^2(\mathbf{A})}   ,
\end{equation*}
where $\lambda_j(\cdot)$ and $\sigma_j(\cdot)$ are the $j$-th eigenvalue and singular value, respectively.\footnote{For every index $j \in \{1,\dots,k\}$ it holds that $\sigma_j(\mathbf{A}) = \sqrt{\lambda_j(\transp{\mathbf{A}} \mathbf{A})}$, with $\RR \ni \lambda_j(\transp{\mathbf{A}} \mathbf{A}) \geq 0$ since $\transp{\mathbf{A}} \mathbf{A}$ is (symmetric) positive semi-definite: $\transp{\mathbf{x}} (\transp{\mathbf{A}} \mathbf{A}) \mathbf{x} = \transp{(\mathbf{A} \mathbf{x})} (\mathbf{A} \mathbf{x}) = \twonorm{\mathbf{A} \mathbf{x}}^2 \geq 0$ for all $\mathbf{x} \in \RR^k$. It is also known that the number of nonzero singular values of a matrix is equal to its rank, i.e., $\card{ \left\{ j \in \{1,\dots,k\} : \sigma_j(\mathbf{A}) \neq 0 \right\} } = \rank{\mathbf{A}}$.} In particular, the matrix $p$-norm for $p = 1,2,\infty$ reduces to the maximum absolute column-sum norm
\begin{equation*}
	\onenorm{\mathbf{A}} \defeq \max_{1 \leq j \leq k} {\sum_{i=1}^m \abs{a_{i,j}}} ,
\end{equation*}
the spectral norm
\begin{equation*}
	\twonorm{\mathbf{A}} \defeq \sigma_{\max} (\mathbf{A}) = \sqrt{\lambda_{\max}(\transp{\mathbf{A}} \mathbf{A})} = \sqrt{\lambda_{\max}(\mathbf{A} \transp{\mathbf{A}})}  ,
\end{equation*}
where $\sigma_{\max} (\cdot)$ (resp. $\lambda_{\max} (\cdot)$) represents the largest singular value (resp. eigenvalue), and finally the maximum absolute row-sum norm
\begin{equation*}
	\infnorm{\mathbf{A}} \defeq \max_{1 \leq i \leq m} {\sum_{j=1}^k \abs{a_{i,j}}}  .
\end{equation*}

For all matrices $\mathbf{A} \in \RR^{m \times k}$ and $\mathbf{B} \in \RR^{k \times r}$, where $m,k,r \in \NN$, we have the following (up-to-a-constant) sub-multiplicative properties:
\begin{equation} \label{eq:Sub-multiplicative_properties}
	\begin{gathered} 
		\norm{\mathbf{A} \mathbf{B}}_p \leq \norm{\mathbf{A}}_p  \norm{\mathbf{B}}_p , \\ 
		\Fnorm{\mathbf{A} \mathbf{B}} \leq \Fnorm{\mathbf{A}} \Fnorm{\mathbf{B}} ,  \\
		\maxnorm{\mathbf{A} \mathbf{B}} \leq k \maxnorm{\mathbf{A}} \maxnorm{\mathbf{B}} .
	\end{gathered}
\end{equation}

An important result for finite-dimensional real vector spaces is the following. 

\begin{lemma}[Norm equivalence] \label{lem:Norm_equivalence}
	Let $m,k \in \NN$. Then, all vector (resp. matrix) norms on $\RR^m$ (resp. $\RR^{m \times k}$) are \emph{equivalent}, that is, for every two such norms $\norm{\cdot}$ and $\norm{\cdot}'$ there exist constants $c_1, c_2 \in \RR_{>0}$, which may depend on $m$ (resp. $m$ and $k$), such that $c_1 \norm{\mathbf{x}} \leq \norm{\mathbf{x}}' \leq c_2 \norm{\mathbf{x}}$ for all $\mathbf{x} \in \RR^m$ (resp. $c_1 \norm{\mathbf{A}} \leq \norm{\mathbf{A}}' \leq c_2 \norm{\mathbf{A}}$ for all $\mathbf{A} \in \RR^{m \times k}$). 
\end{lemma}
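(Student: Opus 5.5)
The plan is to deduce the equivalence from the compactness of the unit sphere. Vector and matrix norms are handled together: $\RR^{m \times k}$ is linearly isomorphic to $\RR^{mk}$ by stacking the columns, and this isomorphism preserves the four norm axioms. Moreover, norm equivalence is an equivalence relation: it is reflexive, symmetric (invert the constants), and transitive (multiply the constants). So it suffices to show that an arbitrary norm $\norm{\cdot}$ on $\RR^m$ is equivalent to the Euclidean norm $\twonorm{\cdot}$.

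\textbf{Upper bound.} Write $\mathbf{x} = \sum_{i=1}^m x_i \mathbf{e}_i$ in the standard basis. The triangle inequality and homogeneity give
\begin{equation*}
\norm{\mathbf{x}} \leq \sum_{i=1}^m \abs{x_i} \norm{\mathbf{e}_i} \leq \Big( \sum_{i=1}^m \norm{\mathbf{e}_i}^2 \Big)^{1/2} \twonorm{\mathbf{x}},
\end{equation*}
where the second step is Cauchy--Schwarz. Call this constant $c_2$; it is positive because each $\mathbf{e}_i \neq \mathbf{0}$. By the reverse triangle inequality, $\abs{\norm{\mathbf{x}} - \norm{\mathbf{y}}} \leq \norm{\mathbf{x} - \mathbf{y}} \leq c_2 \twonorm{\mathbf{x}-\mathbf{y}}$. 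Hence $\norm{\cdot}$ is Lipschitz, and in particular continuous, with respect to the Euclidean topology.

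\textbf{Lower bound.} This is the only substantive step. The Euclidean unit sphere $\{\mathbf{x} : \twonorm{\mathbf{x}} = 1\}$ is closed and bounded in $\RR^m$, so it is compact by Heine--Borel. By the extreme value theorem, which is the multivariate analogue of the Weierstrass theorem used earlier in the proof of Proposition~\ref{prop:Asymptotic_invariants}, the continuous function $\norm{\cdot}$ attains its minimum $c_1$ on this sphere. The minimum is positive because it is attained at some $\mathbf{x}_0 \neq \mathbf{0}$, and $\norm{\mathbf{x}_0} > 0$ by the definiteness axiom. For any $\mathbf{x} \neq \mathbf{0}$, apply this to $\mathbf{x}/\twonorm{\mathbf{x}}$ and use homogeneity to get $\norm{\mathbf{x}} \geq c_1 \twonorm{\mathbf{x}}$. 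The same inequality holds trivially for $\mathbf{x} = \mathbf{0}$.

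Combining the two bounds gives $c_1 \twonorm{\mathbf{x}} \leq \norm{\mathbf{x}} \leq c_2 \twonorm{\mathbf{x}}$. By transitivity and symmetry, any two norms on $\RR^m$ are equivalent, and the constants depend only on the two norms and on the dimension ($m$, or $mk$ in the matrix case). The main obstacle is the lower bound, since it genuinely needs compactness, hence finite dimensionality. The upper bound is what supplies the continuity of $\norm{\cdot}$ that the compactness argument requires.
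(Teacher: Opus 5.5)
Your proof is correct and complete, and it is the standard argument: continuity of an arbitrary norm from the triangle inequality, then compactness of the Euclidean unit sphere for the positive lower bound, with the matrix case reduced to $\RR^{mk}$. The paper gives no proof of its own; it states the lemma as background, defers to Golub--Van Loan and Horn--Johnson (the latter proves it by this compactness argument), and only lists explicit constants for specific pairs of norms.
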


For example, for each vector $\mathbf{x} \in \RR^m$, the following inequalities hold 
\begin{gather*} 
	\twonorm{\mathbf{x}} \leq \onenorm{\mathbf{x}} \leq \sqrt{m} \twonorm{\mathbf{x}} , \\
	\maxnorm{\mathbf{x}} \leq \onenorm{\mathbf{x}} \leq m \maxnorm{\mathbf{x}} , \\
	\maxnorm{\mathbf{x}} \leq \twonorm{\mathbf{x}} \leq \sqrt{m} \maxnorm{\mathbf{x}} .
\end{gather*}
In addition, for every matrix $\mathbf{A} \in \RR^{m \times k}$, we have
\begin{gather*} 
	\twonorm{\mathbf{A}} \leq \Fnorm{\mathbf{A}} \leq \sqrt{\rank{\mathbf{A}}} \twonorm{\mathbf{A}} \leq \sqrt{\min(m,k)} \twonorm{\mathbf{A}} , \\ 
	\maxnorm{\mathbf{A}} \leq \Fnorm{\mathbf{A}} \leq \sqrt{m k} \maxnorm{\mathbf{A}} , \\
	\maxnorm{\mathbf{A}} \leq \twonorm{\mathbf{A}} \leq \sqrt{m k} \maxnorm{\mathbf{A}} , \\
	\frac{1}{\sqrt{k}} \twonorm{\mathbf{A}} \leq \onenorm{\mathbf{A}} \leq \sqrt{m} \twonorm{\mathbf{A}} , \\
	\frac{1}{\sqrt{m}} \twonorm{\mathbf{A}} \leq \infnorm{\mathbf{A}} \leq \sqrt{k} \twonorm{\mathbf{A}}  .
\end{gather*}

In the case of infinite-dimensional vector spaces, however, there exist norms that are \emph{not} equivalent. See the books by Golub~and~Van~Loan~\cite[\S{2.2--2.3}]{Golub-VanLoan_1996} as well as Horn~and~Johnson~\cite[\S{5}]{Horn-Johnson_1985} for further details about norms.

Furthermore, by virtue of Lemma~\ref{lem:Norm_equivalence}, the (up-to-a-constant) sub-multiplicative property in \eqref{eq:Sub-multiplicative_properties} can be extended to \emph{any} matrix norm on finite-dimensional real vector spaces.  

\begin{lemma}[Universal sub-multiplicative property] \label{lem:Universal_sub-multiplicative_property}
	Let $\norm{\cdot}$ be a matrix norm on $\RR^{m \times k}$, $\RR^{k \times r}$ and $\RR^{m \times r}$, where $m,k,r \in \NN$. Then, there exists a constant $c = c(m,k,r) \in \RR_{>0}$ such that $\norm{\mathbf{A} \mathbf{B}} \leq c \norm{\mathbf{A}} \norm{\mathbf{B}}$, for all matrices $\mathbf{A} \in \RR^{m \times k}$ and $\mathbf{B} \in \RR^{k \times r}$.
\end{lemma}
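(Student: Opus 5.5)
The plan is to reduce the statement to the sub-multiplicativity of a fixed reference norm and then transfer it to $\norm{\cdot}$ by Lemma~\ref{lem:Norm_equivalence}. The Frobenius norm $\Fnorm{\cdot}$ is convenient as the reference, since it is defined on $\RR^{m \times k}$, $\RR^{k \times r}$ and $\RR^{m \times r}$ simultaneously and satisfies $\Fnorm{\mathbf{A}\mathbf{B}} \leq \Fnorm{\mathbf{A}} \Fnorm{\mathbf{B}}$ by~\eqref{eq:Sub-multiplicative_properties}. This inequality is itself a direct consequence of the Cauchy--Schwarz inequality applied to each entry $\sum_{l} a_{i,l} b_{l,j}$, summed over $i,j$.

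First, we apply Lemma~\ref{lem:Norm_equivalence} separately on each of the three finite-dimensional spaces. On $\RR^{m \times r}$ it gives a constant $c_{m,r} > 0$ with $\norm{\mathbf{M}} \leq c_{m,r} \Fnorm{\mathbf{M}}$. On $\RR^{m \times k}$ it gives a constant $d_{m,k} > 0$ with $\Fnorm{\mathbf{A}} \leq d_{m,k} \norm{\mathbf{A}}$. On $\RR^{k \times r}$ it gives a constant $d_{k,r} > 0$ with $\Fnorm{\mathbf{B}} \leq d_{k,r} \norm{\mathbf{B}}$. Each constant depends only on the dimensions of its space and on the norm restricted to that space; the norm need not be the "same formula" on the three spaces.

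Second, we chain the inequalities:
\begin{equation*}
\norm{\mathbf{A}\mathbf{B}} \leq c_{m,r} \Fnorm{\mathbf{A}\mathbf{B}} \leq c_{m,r} \Fnorm{\mathbf{A}} \Fnorm{\mathbf{B}} \leq c_{m,r}\, d_{m,k}\, d_{k,r} \norm{\mathbf{A}} \norm{\mathbf{B}} .
\end{equation*}
Setting $c \defeq c_{m,r} d_{m,k} d_{k,r}$ gives the claim, and $c$ depends only on $(m,k,r)$ and the chosen norm, not on $\mathbf{A}$ or $\mathbf{B}$.

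There is no genuine obstacle here. The only point needing care is to invoke norm equivalence on each space separately, and in the correct direction on each: an upper bound of $\norm{\cdot}$ by $\Fnorm{\cdot}$ on the product space, and upper bounds of $\Fnorm{\cdot}$ by $\norm{\cdot}$ on the factor spaces.
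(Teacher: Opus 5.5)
Your proposal is correct and follows the paper's proof essentially step for step: you bound $\norm{\mathbf{A}\mathbf{B}}$ by $c_1 \Fnorm{\mathbf{A}\mathbf{B}}$, apply Frobenius sub-multiplicativity, and use norm equivalence on the two factor spaces to get $c = c_1 c_2 c_3$, exactly as the paper does. You also note that the constant depends on the chosen norm and not only on $(m,k,r)$, which correctly makes explicit what the paper's notation $c = c(m,k,r)$ leaves implicit.
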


\begin{proof}
	By Lemma~\ref{lem:Norm_equivalence}, there are constants $c_1 = c_1(m,r) \in \RR_{>0}$, $c_2 = c_2(m,k) \in \RR_{>0}$ and $c_3 = c_3(k,r) \in \RR_{>0}$ such that 
	\begin{equation*}
		\norm{\mathbf{A} \mathbf{B}} \leq c_1 \Fnorm{\mathbf{A} \mathbf{B}} \overset{\eqref{eq:Sub-multiplicative_properties}}{\leq} c_1 \Fnorm{\mathbf{A}} \Fnorm{\mathbf{B}} \leq c_1 (c_2 \norm{\mathbf{A}}) (c_3 \norm{\mathbf{B}})    .
	\end{equation*}
	Finally, we choose $c = c(m,k,r) \defeq c_1 c_2 c_3 \in \RR_{>0}$. 
\end{proof}

\begin{remark}
	In asymptotic notation we do \emph{not} need to specify the norm involved, as long as the matrix dimensions are \emph{independent} of the considered variable. For example, given a matrix $\mathbf{A} = \mathbf{A}(n) \in \RR^{m \times k}$ whose elements depend on the variable $n \in \NN$, it holds that $\norm{\mathbf{A}}' = \Theta(\norm{\mathbf{A}})$ as $n \to \infty$, for all matrix norms $\norm{\cdot}$ and $\norm{\cdot}'$ on $\RR^{m \times k}$. This is because of the norm equivalence in $\RR^{m \times k}$ (Lemma~\ref{lem:Norm_equivalence}). Moreover, due to Lemma~\ref{lem:Universal_sub-multiplicative_property}, we have $\norm{\mathbf{A} \mathbf{B}} = O(\norm{\mathbf{A}} \norm{\mathbf{B}})$ as $n \to \infty$, whenever the dimensions of matrices $\mathbf{A} = \mathbf{A}(n)$ and $\mathbf{B} = \mathbf{B}(n)$ are independent of $n$. 
\end{remark}

\subsection{Convergence of Vector Sequences}

We begin with the following definition, which is very useful throughout the paper. 

\begin{definition}[Componentwise convergence and convergence in norm] \label{def:Convergence_of_sequence_of_vectors}
	Let $m \in \NN$ and $\norm{\cdot}$ be a vector norm on $\RR^m$. We say that the sequence $\{\mathbf{x}_n\}_{n \in \NN_0}$ of vectors in $\RR^m$ \emph{converges componentwise} (resp. \emph{converges in $\norm{\cdot}$-norm}) to $\overline{\mathbf{x}} \in \RR^m$ as $n \to \infty$ if, and only if, $\lim_{n \to \infty} x_{i,n} = \overline{x}_i$ for all $i \in \{1,\dots,m\}$ (resp. $\lim_{n \to \infty} \norm{\mathbf{x}_n - \overline{\mathbf{x}}} = 0$). 
\end{definition}

Now, we can state an immediate consequence of the norm equivalence in the vector space $\RR^m$ according to Lemma~\ref{lem:Norm_equivalence}. 
\begin{lemma} \label{lem:Choice_of_vector_norm}
	Let $\norm{\cdot}$ and $\norm{\cdot}'$ be two vector norms on $\RR^m$. If a sequence of \linebreak $m$-dimensional real vectors converges in $\norm{\cdot}$-norm to a vector $\overline{\mathbf{x}} \in \RR^m$, then the sequence also converges in $\norm{\cdot}'$-norm to the same vector and vice versa. 
\end{lemma}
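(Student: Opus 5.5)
The lemma has two directions, and both follow from Lemma~\ref{lem:Norm_equivalence}. Suppose $\{\mathbf{x}_n\}$ converges in $\norm{\cdot}$-norm to $\overline{\mathbf{x}}$, i.e., $\lim_{n \to \infty} \norm{\mathbf{x}_n - \overline{\mathbf{x}}} = 0$.

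Lemma~\ref{lem:Norm_equivalence} gives constants $c_1, c_2 \in \RR_{>0}$, depending only on $m$ and in particular independent of $n$, such that
\begin{equation*}
	c_1 \norm{\mathbf{v}} \leq \norm{\mathbf{v}}' \leq c_2 \norm{\mathbf{v}}
\end{equation*}
for every $\mathbf{v} \in \RR^m$.

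Apply this with $\mathbf{v} = \mathbf{x}_n - \overline{\mathbf{x}}$. This yields
\begin{equation*}
	0 \leq \norm{\mathbf{x}_n - \overline{\mathbf{x}}}' \leq c_2 \norm{\mathbf{x}_n - \overline{\mathbf{x}}}
\end{equation*}
for all $n$. The right-hand side tends to zero, so the squeeze theorem gives $\lim_{n \to \infty} \norm{\mathbf{x}_n - \overline{\mathbf{x}}}' = 0$. In asymptotic notation, this says $\norm{\mathbf{x}_n - \overline{\mathbf{x}}}' = O(\norm{\mathbf{x}_n - \overline{\mathbf{x}}}) = o(1)$.

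For the converse, swap the roles of the two norms and use the lower inequality instead: $\norm{\mathbf{v}} \leq c_1^{-1} \norm{\mathbf{v}}'$. The same squeeze argument then applies.

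The limit vector $\overline{\mathbf{x}}$ is the same in both directions, since the difference $\mathbf{x}_n - \overline{\mathbf{x}}$ is never changed. No step is a real obstacle. The only point needing care is that the equivalence constants must not depend on $n$, and this holds because the dimension $m$ is fixed.
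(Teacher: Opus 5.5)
Your proof is correct and takes the paper's approach: the paper gives no separate proof and states this lemma as an immediate consequence of the norm equivalence in Lemma~\ref{lem:Norm_equivalence}. Your argument writes that out explicitly, combining the $n$-independent bounds $c_1\norm{\mathbf{v}} \leq \norm{\mathbf{v}}' \leq c_2\norm{\mathbf{v}}$ with a squeeze in each direction.
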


Due to Lemma~\ref{lem:Choice_of_vector_norm}, the choice of vector norm in Definition~\ref{def:Convergence_of_sequence_of_vectors} is \emph{irrelevant}, thus $\norm{\cdot}$ can be replaced by \emph{any} norm on $\RR^m$. Since all norms on $\RR^m$ are equivalent to the maximum/infinity vector norm $\maxnorm{\cdot}$, we have the following result.  

\begin{lemma} \label{lem:Convergence_equivalence}
	A sequence of real vectors in $\RR^m$ converges in (any) norm to a vector $\overline{\mathbf{x}} \in \RR^m$ if and only if the sequence converges componentwise to $\overline{\mathbf{x}}$.   
\end{lemma}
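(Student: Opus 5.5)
The statement is Lemma~\ref{lem:Convergence_equivalence}. I will reduce it to the maximum norm and then compare that norm with the individual components.

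\textbf{Step 1 (reduction to $\maxnorm{\cdot}$).} By Lemma~\ref{lem:Choice_of_vector_norm}, a sequence $\{\mathbf{x}_n\}$ converges to $\overline{\mathbf{x}}$ in some norm on $\RR^m$ if and only if it converges to $\overline{\mathbf{x}}$ in every norm on $\RR^m$. In particular this holds for $\maxnorm{\cdot}$. It therefore suffices to show
\begin{equation*}
\lim_{n\to\infty}\maxnorm{\mathbf{x}_n-\overline{\mathbf{x}}}=0 \iff \lim_{n\to\infty} x_{i,n}=\overline{x}_i \ \text{ for all } i\in\{1,\dots,m\}.
\end{equation*}

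\textbf{Step 2 (norm convergence implies componentwise convergence).} For each fixed $i$ we have the elementary bound
\begin{equation*}
0\le \abs{x_{i,n}-\overline{x}_i}\le \maxnorm{\mathbf{x}_n-\overline{\mathbf{x}}}.
\end{equation*}
If the right-hand side tends to zero, the squeeze theorem gives $x_{i,n}\to\overline{x}_i$.

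\textbf{Step 3 (componentwise convergence implies norm convergence).} Fix $\varepsilon>0$. For each $i$, componentwise convergence gives an index $N_i$ such that $\abs{x_{i,n}-\overline{x}_i}<\varepsilon$ for all $n\ge N_i$. Set $N\defeq\max_i N_i$; this is finite because $m$ is fixed. Then $\maxnorm{\mathbf{x}_n-\overline{\mathbf{x}}}<\varepsilon$ for all $n\ge N$, which is convergence in $\maxnorm{\cdot}$. Equivalently, one can use $\maxnorm{\mathbf{x}_n-\overline{\mathbf{x}}}\le\sum_i\abs{x_{i,n}-\overline{x}_i}$, a finite sum of null sequences.

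\textbf{Main obstacle.} There is no real difficulty. The one point that needs care is that $m$ is finite and independent of $n$. This is what makes $\max_i N_i$ finite, and it is also what makes Lemma~\ref{lem:Norm_equivalence} (and hence Lemma~\ref{lem:Choice_of_vector_norm}) available.
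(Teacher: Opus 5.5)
Your proposal is correct and follows essentially the same route as the paper. The paper uses Lemma~\ref{lem:Choice_of_vector_norm} (norm equivalence) to reduce to the maximum norm and leaves implicit that convergence in $\maxnorm{\cdot}$ is the same as componentwise convergence; you spell out that elementary step via the bound $\abs{x_{i,n}-\overline{x}_i}\le\maxnorm{\mathbf{x}_n-\overline{\mathbf{x}}}$ and the finite choice $N=\max_i N_i$.
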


\begin{remark}
	In view of Lemma~\ref{lem:Convergence_equivalence}, when a sequence $\{\mathbf{x}_n\}_{n \in \NN_0}$ of finite-dimensional real vectors converges componentwise (or, equivalently, with respect to any vector norm), we just say that $\mathbf{x}_n$ \emph{converges} to $\overline{\mathbf{x}}$ as $n \to \infty$ and write $\lim_{n \to \infty} {\mathbf{x}_n} = \overline{\mathbf{x}}$, without ambiguity regarding the type of convergence. 
\end{remark}

\subsection{Linear Least Squares and Tikhonov Regularization}

Let $m,k \in \NN$, $\mathbf{A} \in \RR^{m \times k}$ and $\mathbf{b} \in \RR^m$. The \emph{Linear Least Squares (LLSQ)} problem is formulated as follows
\begin{equation} \label{eq:LLSQ_problem} 
	\minimize_{\mathbf{x} \in \RR^k} \; F(\mathbf{x}) \defeq \twonorm{\mathbf{A} \mathbf{x} - \mathbf{b}}^2 = \sum_{i=1}^m \left( \sum_{j=1}^k {a_{i,j} x_j } - b_i \right)^2  . 
\end{equation}
This is an (unconstrained) convex optimization problem whose set of (globally) optimal solutions coincides with the set of solutions to the so-called \emph{normal equations}\footnote{For each convex optimization problem (constrained or unconstrained), every local optimum is a global optimum. In addition, given an unconstrained convex optimization problem with objective function $f(\mathbf{x})$ (convex if minimization, and concave if maximization problem), a necessary and sufficient condition for a vector $\mathbf{x} \in \RR^k$ to be a (globally) optimal solution is that it satisfies the condition $\nabla f(\mathbf{x}) \defeq \transp{[{\partial f(\mathbf{x})}/{\partial x_1},\dots,{\partial f(\mathbf{x})}/{\partial x_k}]} = \mathbf{0}$. The latter condition for the LLSQ problem~\eqref{eq:LLSQ_problem} can be written in matrix form as shown in \eqref{eq:Normal_equations}.}  
\begin{equation} \label{eq:Normal_equations}
	(\transp{\mathbf{A}} \mathbf{A}) \mathbf{x} = \transp{\mathbf{A}} \mathbf{b} .
\end{equation}
Note that there is always at least one solution that satisfies the normal equations. Specifically, such a system of linear equations has either a unique solution or infinitely many solutions.\footnote{A system of linear equations $\mathbf{B} \mathbf{x} = \mathbf{d}$ can only have: (i) an infinite number of solutions, (ii) a unique solution, or (iii) no solution. A necessary and sufficient condition for the existence of a solution is that $\mathbf{B} \pseudoinv{\mathbf{B}} \mathbf{d} = \mathbf{d}$. In the case of normal equations \eqref{eq:Normal_equations}, we have $\mathbf{B} \defeq \transp{\mathbf{A}} \mathbf{A}$ and $\mathbf{d} \defeq \transp{\mathbf{A}} \mathbf{b}$. By the properties of the pseudoinverse for real matrices: $\pseudoinv{(\transp{\mathbf{A}} \mathbf{A})} \transp{\mathbf{A}} = \pseudoinv{\mathbf{A}}$ and $\transp{\mathbf{A}} \mathbf{A} \pseudoinv{\mathbf{A}} = \transp{\mathbf{A}}$, we obtain $\mathbf{B} \pseudoinv{\mathbf{B}} \mathbf{d} = \transp{\mathbf{A}} \mathbf{A} \pseudoinv{(\transp{\mathbf{A}} \mathbf{A})} \transp{\mathbf{A}} \mathbf{b} = \transp{\mathbf{A}} \mathbf{A}  \pseudoinv{\mathbf{A}} \mathbf{b} = \transp{\mathbf{A}} \mathbf{b} = \mathbf{d}$. Since the condition holds, there exists at least one solution to \eqref{eq:Normal_equations}; thus only cases (i) and (ii) are possible.} The uniqueness of a solution is guaranteed when the columns of $\mathbf{A}$ are \emph{linearly independent} (equivalently, $\rank{\mathbf{A}} = k$) and so $m \geq k$. 

\begin{remark}
	The condition $\rank{\mathbf{A}} = k$ is equivalent to each of the following: \linebreak i) $\rank{\transp{\mathbf{A}} \mathbf{A}} = k$, ii) $\determ{\transp{\mathbf{A}} \mathbf{A}} \neq 0$ (equivalently, $\determ{\transp{\mathbf{A}} \mathbf{A}} > 0$ since $\transp{\mathbf{A}} \mathbf{A}$ is symmetric and positive semi-definite), iii) $\transp{\mathbf{A}} \mathbf{A}$ is positive definite, and iv) $\transp{\mathbf{A}} \mathbf{A}$ is invertible, i.e., $\inv{(\transp{\mathbf{A}} \mathbf{A})}$ exists.
\end{remark}

\begin{lemma}[Sufficient condition for unique solution to LLSQ] \label{lem:Sufficient_condition_for_unique_solution_LLSQ}
	Assume that the matrix $\mathbf{A} \in \RR^{m \times k}$ has full column rank, i.e., $\rank{\mathbf{A}} = k$. Then, there exists a unique (globally) optimal solution to the LLSQ problem \eqref{eq:LLSQ_problem} that is given by 
	\begin{equation} \label{eq:Unique_optimal_solution_to_LLSQ}
		\mathbf{x}^* = \inv{(\transp{\mathbf{A}} \mathbf{A})} \transp{\mathbf{A}} \mathbf{b} = \pseudoinv{\mathbf{A}} \mathbf{b}. 
	\end{equation}
	Moreover, the global minimum (i.e., the optimal objective value) is given by 
	\begin{equation} \label{eq:Global_minimum_LLSQ}
		F^* \defeq F(\mathbf{x}^*) = \twonorm{\left( \mathbf{A} \pseudoinv{\mathbf{A}} - \mathbf{I} \right) \mathbf{b}}^2   .
	\end{equation}
	If $m = k$,\footnote{In this case, the condition $\rank{\mathbf{A}} = k$ is equivalent to $\determ{\mathbf{A}} \neq 0$.} then \eqref{eq:Unique_optimal_solution_to_LLSQ} and \eqref{eq:Global_minimum_LLSQ} reduce to 
	\begin{equation*}
		\mathbf{x}^* = \inv{\mathbf{A}} \mathbf{b} \quad \mathrm{and} \quad F^* = 0  .
	\end{equation*}
\end{lemma}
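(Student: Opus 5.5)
The plan is to reduce existence and uniqueness to a statement about the gradient of $F$, and then compute the optimal value by substitution. Since $F(\mathbf{x}) = \twonorm{\mathbf{A}\mathbf{x}-\mathbf{b}}^2$ is a convex quadratic, its global minimizers are exactly the solutions of the normal equations \eqref{eq:Normal_equations}. This follows from the footnote on convex unconstrained problems: expanding gives $F(\mathbf{x}) = \transp{\mathbf{x}}\transp{\mathbf{A}}\mathbf{A}\mathbf{x} - 2\transp{\mathbf{b}}\mathbf{A}\mathbf{x} + \transp{\mathbf{b}}\mathbf{b}$, so $\nabla F(\mathbf{x}) = 2(\transp{\mathbf{A}}\mathbf{A}\mathbf{x} - \transp{\mathbf{A}}\mathbf{b})$.

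Under $\rank{\mathbf{A}} = k$, the preceding remark shows that $\transp{\mathbf{A}}\mathbf{A}$ is positive definite and invertible. The normal equations therefore have the single solution $\mathbf{x}^* = \inv{(\transp{\mathbf{A}}\mathbf{A})}\transp{\mathbf{A}}\mathbf{b} = \pseudoinv{\mathbf{A}}\mathbf{b}$. Positive definiteness also makes $F$ strictly convex, which independently confirms uniqueness.

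For the optimal value, substitute $\mathbf{x}^*$ into $F$ to get $\mathbf{A}\mathbf{x}^* - \mathbf{b} = (\mathbf{A}\pseudoinv{\mathbf{A}} - \mathbf{I})\mathbf{b}$, which is exactly \eqref{eq:Global_minimum_LLSQ}.

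When $m=k$, the square matrix $\mathbf{A}$ has full rank, so $\inv{\mathbf{A}}$ exists. Then $\inv{(\transp{\mathbf{A}}\mathbf{A})}\transp{\mathbf{A}} = \inv{\mathbf{A}}\inv{(\transp{\mathbf{A}})}\transp{\mathbf{A}} = \inv{\mathbf{A}}$. Hence $\mathbf{x}^* = \inv{\mathbf{A}}\mathbf{b}$ and $\mathbf{A}\pseudoinv{\mathbf{A}} = \mathbf{I}$, which gives $F^* = 0$.

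There is no real obstacle here. The only step needing care is justifying that a stationary point is a global minimizer. If I want to avoid relying on the footnote, I would instead verify directly the identity $F(\mathbf{x}) = F(\mathbf{x}^*) + \twonorm{\mathbf{A}(\mathbf{x}-\mathbf{x}^*)}^2$. Its cross term vanishes by the normal equations, and the remainder is strictly positive for $\mathbf{x}\neq\mathbf{x}^*$ by the injectivity of $\mathbf{A}$.
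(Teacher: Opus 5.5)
Your proposal is correct and follows essentially the same route as the paper, which justifies the lemma through the preceding discussion. There, the LLSQ problem is convex, its global minimizers are exactly the solutions of the normal equations \eqref{eq:Normal_equations}, full column rank makes $\transp{\mathbf{A}}\mathbf{A}$ invertible, and $F^*$ and the case $m=k$ then follow by substitution. Your identity $F(\mathbf{x}) = F(\mathbf{x}^*) + \twonorm{\mathbf{A}(\mathbf{x}-\mathbf{x}^*)}^2$ is a neat self-contained way to avoid relying on the first-order optimality footnote, but it is not needed.
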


The overall time complexity for the computation of $\mathbf{x}^*$ in \eqref{eq:Unique_optimal_solution_to_LLSQ} is $O(k^2 m + k^3 + k^2 m + k m) = O(k^2 m + k^3) = O(k^2 m)$, since $m \geq k$. In particular, the product $\transp{\mathbf{A}} \mathbf{A}$ requires $O(k^2 m)$ time, the inverse $\mathbf{B} \defeq \inv{(\transp{\mathbf{A}} \mathbf{A})}$ needs $O(k^3)$ time, the pseudoinverse $\pseudoinv{\mathbf{A}} = \mathbf{B} \transp{\mathbf{A}}$ can be computed in $O(k^2 m)$ time, and the final product $\pseudoinv{\mathbf{A}} \mathbf{b}$ has complexity $O(k m)$. Here we have assumed a \emph{uniform/unit-cost model} of computation, where basic arithmetic operations (e.g., addition and multiplication) take $O(1)$ time, regardless of the size of the numbers involved (i.e., the number of bits in their representation). In practice, this is the case for standard-precision floating-point numbers.\footnote{In other cases (e.g., arbitrary/variable-precision and symbolic arithmetic), the assumption of uniform cost is no longer true and more suitable models should be used. For example, in the \emph{logarithmic-cost model}, every basic arithmetic operation is assigned a cost that is proportional to the size of the numbers involved. Although the latter model of computation is more accurate, it is more difficult to use in general.}   

A method that is closely related to LLSQ is the \emph{Tikhonov regularization}~\cite{Tikhonov_1963}, also known as \emph{ridge regression} in statistics. The \emph{Tikhonov LLSQ (T-LLSQ)} problem  is formulated as follows 
\begin{equation} \label{eq:T-LLSQ_problem} 
	\minimize_{\mathbf{x} \in \RR^k} \; G(\mathbf{x}) \defeq \twonorm{\mathbf{A} \mathbf{x} - \mathbf{b}}^2 + \mu \twonorm{\mathbf{x}}^2  , 
\end{equation}
where $\mu \in \RR_{\geq 0}$ is called the regularization parameter; note that $\mu = 0$ corresponds to the standard LLSQ problem \eqref{eq:LLSQ_problem}. This (unconstrained) convex optimization problem can be reformulated as a classical LLSQ problem, that is,  
\begin{equation*}
	\minimize_{\mathbf{x} \in \RR^k} \; \twonorm{\mathbf{B} \mathbf{x} - \mathbf{d}}^2 ,
\end{equation*}
where $\mathbf{B} \defeq \big[ \begin{smallmatrix} \mathbf{A} \\ \sqrt{\mu} \;\! \mathbf{I} \end{smallmatrix} \big] \in \RR^{(m+k) \times k}$ and $\mathbf{d} \defeq \big[ \begin{smallmatrix} \mathbf{b} \\ \mathbf{0} \end{smallmatrix} \big] \in \RR^{m+k}$. Based on \eqref{eq:Normal_equations}, the \emph{regularized normal equations} are expressed as   
\begin{equation*}
	(\transp{\mathbf{B}} \mathbf{B}) \mathbf{x} = \transp{\mathbf{B}} \mathbf{d} \iff (\transp{\mathbf{A}} \mathbf{A} + \mu \mathbf{I}) \mathbf{x} = \transp{\mathbf{A}} \mathbf{b}   .
\end{equation*}
Observe that the matrix $\transp{\mathbf{A}} \mathbf{A} + \mu \mathbf{I}$ is symmetric and positive definite whenever $\mu > 0$, because $\transp{\mathbf{x}} (\transp{\mathbf{A}} \mathbf{A} + \mu \mathbf{I}) \mathbf{x} = \twonorm{\mathbf{A} \mathbf{x}}^2 + \mu \twonorm{\mathbf{x}}^2 > 0$ for all $\mathbf{x} \in \RR^k \setminus \{\mathbf{0}\}$; note that $\twonorm{\mathbf{x}} > 0$ if and only if $\mathbf{x} \neq \mathbf{0}$. Equivalently, all its eigenvalues are positive and so the matrix is invertible.\footnote{Recall that the determinant of a matrix equals the product of all its eigenvalues, and therefore a matrix is invertible if and only if all its eigenvalues are nonzero.} As a consequence, we have the following useful result.  

\begin{lemma}[Sufficient condition for unique solution to T-LLSQ] \label{lem:Sufficient_condition_for_unique_solution_T-LLSQ}
	Suppose that the regularization parameter $\mu$ is positive, that is, $\mu \in \RR_{>0}$. Then, the matrix $\transp{\mathbf{A}} \mathbf{A} + \mu \mathbf{I}$ is invertible and there exists a unique (globally) optimal solution to the T-LLSQ problem \eqref{eq:T-LLSQ_problem} which is given by 
	\begin{equation} \label{eq:Unique_optimal_solution_to_T-LLSQ}
		\mathbf{x}^* = \inv{(\transp{\mathbf{A}} \mathbf{A} + \mu \mathbf{I})} \transp{\mathbf{A}} \mathbf{b} = \mathbf{C} \mathbf{b} , 
	\end{equation}
	where $\mathbf{C} \defeq \inv{(\transp{\mathbf{A}} \mathbf{A} + \mu \mathbf{I})} \transp{\mathbf{A}} = \mathbf{D} \transp{\mathbf{A}} \in \RR^{k \times m}$ and $\mathbf{D} \defeq \inv{(\transp{\mathbf{A}} \mathbf{A} + \mu \mathbf{I})} \in \RR^{k \times k}$. Furthermore, the global minimum is expressed as 
	\begin{equation*}
		G^* \defeq G(\mathbf{x}^*) = \twonorm{\left( \mathbf{A} \mathbf{C} - \mathbf{I} \right) \mathbf{b}}^2 + \mu \twonorm{\mathbf{C} \mathbf{b}}^2 .
	\end{equation*}
\end{lemma}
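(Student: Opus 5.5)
The approach is to reduce the T-LLSQ problem \eqref{eq:T-LLSQ_problem} to the classical LLSQ setting and then invoke Lemma~\ref{lem:Sufficient_condition_for_unique_solution_LLSQ}. The stacked matrix $\mathbf{B} = \big[ \begin{smallmatrix} \mathbf{A} \\ \sqrt{\mu}\,\mathbf{I} \end{smallmatrix} \big]$ and vector $\mathbf{d} = \big[ \begin{smallmatrix} \mathbf{b} \\ \mathbf{0} \end{smallmatrix} \big]$ satisfy $\twonorm{\mathbf{B}\mathbf{x}-\mathbf{d}}^2 = \twonorm{\mathbf{A}\mathbf{x}-\mathbf{b}}^2 + \mu\twonorm{\mathbf{x}}^2 = G(\mathbf{x})$ for every $\mathbf{x}$. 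Hence the two problems have identical objective functions and therefore identical sets of optimal solutions.

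Next I check the rank hypothesis of Lemma~\ref{lem:Sufficient_condition_for_unique_solution_LLSQ} for $\mathbf{B}$.
\begin{itemize}
\item Because $\mu>0$, the lower block $\sqrt{\mu}\,\mathbf{I}$ is invertible. So $\mathbf{B}\mathbf{x}=\mathbf{0}$ forces $\mathbf{x}=\mathbf{0}$, which gives $\rank{\mathbf{B}}=k$ regardless of $\mathbf{A}$.
\item Equivalently, $\transp{\mathbf{B}}\mathbf{B} = \transp{\mathbf{A}}\mathbf{A}+\mu\mathbf{I}$ is positive definite, since $\transp{\mathbf{x}}(\transp{\mathbf{A}}\mathbf{A}+\mu\mathbf{I})\mathbf{x} = \twonorm{\mathbf{A}\mathbf{x}}^2+\mu\twonorm{\mathbf{x}}^2 > 0$ for $\mathbf{x}\neq\mathbf{0}$. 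All its eigenvalues are then positive, so it is invertible.
\end{itemize}

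Lemma~\ref{lem:Sufficient_condition_for_unique_solution_LLSQ} then yields a unique minimizer $\mathbf{x}^* = \inv{(\transp{\mathbf{B}}\mathbf{B})}\transp{\mathbf{B}}\mathbf{d}$. Computing $\transp{\mathbf{B}}\mathbf{d} = \transp{\mathbf{A}}\mathbf{b}$ gives $\mathbf{x}^* = \mathbf{D}\transp{\mathbf{A}}\mathbf{b} = \mathbf{C}\mathbf{b}$, which is \eqref{eq:Unique_optimal_solution_to_T-LLSQ}.

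For the global minimum, I substitute $\mathbf{x}^*=\mathbf{C}\mathbf{b}$ directly into $G$. This gives $G^* = \twonorm{\mathbf{A}\mathbf{C}\mathbf{b}-\mathbf{b}}^2 + \mu\twonorm{\mathbf{C}\mathbf{b}}^2 = \twonorm{(\mathbf{A}\mathbf{C}-\mathbf{I})\mathbf{b}}^2 + \mu\twonorm{\mathbf{C}\mathbf{b}}^2$, which is the stated formula.

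There is no real obstacle here. The only point needing care is that full column rank of $\mathbf{B}$ comes from $\mu>0$ alone, with no assumption on $\mathbf{A}$, so the conclusion holds for every $m,k$ and every $\mathbf{A}$.
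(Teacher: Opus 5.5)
Your proposal is correct and follows essentially the same route as the paper: the stacked reformulation with $\mathbf{B}$ and $\mathbf{d}$, positive definiteness (hence invertibility) of $\transp{\mathbf{A}}\mathbf{A}+\mu\mathbf{I}$ via $\twonorm{\mathbf{A}\mathbf{x}}^2+\mu\twonorm{\mathbf{x}}^2>0$, and then the regularized normal equations, equivalently Lemma~\ref{lem:Sufficient_condition_for_unique_solution_LLSQ} applied to $\mathbf{B}$, to obtain the unique minimizer $\mathbf{x}^*=\mathbf{C}\mathbf{b}$. The expression for $G^*$ is, as you say, just direct substitution of $\mathbf{x}^*$ into $G$.
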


As in the case of LLSQ, the overall time complexity to compute $\mathbf{x}^*$ in \eqref{eq:Unique_optimal_solution_to_T-LLSQ} is $O(k^2 m + k^3 + k^2 m + k m) = O(k^2 m + k^3) = O(k^2 \max(m,k))$, since the matrix $\transp{\mathbf{A}} \mathbf{A} + \mu \mathbf{I}$ requires $O(k^2 m + k^2) = O(k^2 m)$ time. 

\begin{remark}
	The uniqueness of optimal solution in Lemma~\ref{lem:Sufficient_condition_for_unique_solution_T-LLSQ} does \emph{not} require any assumption on the matrix $\mathbf{A}$, in contrast to Lemma~\ref{lem:Sufficient_condition_for_unique_solution_LLSQ} that requires linear independence for the columns of $\mathbf{A}$ (i.e., a rank assumption). In essence, the problem reformulation via Tikhonov regularization achieves uniqueness of the optimal solution, even when the original LLSQ problem (without regularization) has infinitely many optimal solutions. 
\end{remark}

Additional details on LLSQ can be found in the books of Boyd~and~Vandenberghe~\cite[\S{1.2.1},~\S{6.1.1}]{Boyd-Vandenberghe_2004}, Golub~and~Van~Loan~\cite[\S{5}]{Golub-VanLoan_1996}, and Higham~\cite[\S{20}]{Higham_2002}. See also \cite[\S{6.3.2}]{Boyd-Vandenberghe_2004} and \cite[\S{12.1}]{Golub-VanLoan_1996} about regularization techniques.

\subsection{Asymptotic Expansions}

In order to define an asymptotic expansion for a given function, we first need the concept of asymptotic scale. The following notions are based on the books by de~Bruijn~\cite[\S{1.5}]{deBruijn_1958}, Temme~\cite[\S{1.4}, Definitions~1.2--1.3]{Temme_2015}, and Wong~\cite[\S{I.3}, Definitions~2--3]{Wong_2001}. 

\begin{definition}[Asymptotic scale]
	Let $\{g_l\}_{l \in \NN_0}$ be a sequence of real-valued functions defined on $\NN$. We say that such a sequence is a \emph{decreasing} (resp. an \emph{increasing}) \emph{asymptotic scale} as $n \to \infty$ if, and only if, it satisfies
	\begin{equation} \label{eq:Asymptotic_scale}
		g_{l+1}(n) = o(g_l(n)) \quad \left( \mathrm{resp.}\ g_l(n) = o(g_{l+1}(n)) \right),  \quad \mathrm{as}\ n \to \infty ,
	\end{equation}
	for every $l \in \NN_0$.
\end{definition}

\begin{definition}[Poincar{\'e}-type asymptotic expansion/series]
	Let $f \colon \NN \to \RR$, \linebreak $\{g_l\}_{l \in \NN_0}$ be a decreasing asymptotic scale, and $\{\beta_l\}_{l \in \NN_0}$ be a sequence of real numbers. Given a $p \in \NN_0$, we say that $\sum_{l=0}^p {\beta_l g_l(n)}$ is a \emph{$p$-order asymptotic expansion/series} of $f$ with respect to the decreasing asymptotic scale $\{g_l\}_{l \in \NN_0}$ as $n \to \infty$ if, and only if,
	\begin{equation} \label{eq:p-order_asymptotic_expansion}
		f(n) = \sum_{l=0}^p {\beta_l g_l(n)} + o(g_p(n)) ,   \quad \mathrm{as}\ n \to \infty .
	\end{equation}
	If the above condition holds for every fixed $p \in \NN_0$, then we write 
	\begin{equation} \label{eq:Complete_asymptotic_expansion}
		f(n) \simeq \sum_{l=0}^{\infty} {\beta_l g_l(n)} , \quad \mathrm{as}\ n \to \infty ,
	\end{equation}
	where the right-hand side is called a \emph{complete asymptotic expansion/series} for the function $f$.
\end{definition}

Note that \eqref{eq:p-order_asymptotic_expansion} is sometimes (e.g., in \cite[\S{1.5}]{deBruijn_1958}) replaced by  
\begin{equation} \label{eq:p-order_asymptotic_expansion_stronger_condition}
	f(n) = \sum_{l=0}^p {\beta_l g_l(n)} + O(g_{p+1}(n)), \quad \mathrm{as}\ n \to \infty ,
\end{equation}
which is a stronger condition than \eqref{eq:p-order_asymptotic_expansion}, because if $h(n) = O(g_{p+1}(n))$ then \linebreak $h(n) = o(g_p(n))$ due to the decreasing asymptotic scale \eqref{eq:Asymptotic_scale} for $l = p$.

\begin{remark}
	If $\sum_{l=0}^p {\beta_l g_l(n)}$ is a $p$-order asymptotic expansion of a function $f$, for some $p \in \NN_0$, then $\sum_{l=0}^{p'} {\beta_l g_l(n)}$ is a $p'$-order asymptotic expansion of the same function, for all $p' \in \{0,\dots,p\}$. This can be easily proved by induction on $p'$, starting from $p' = p$ and working backwards until $p' = 0$, since the decreasing asymptotic scale $\eqref{eq:Asymptotic_scale}$ yields: $\beta_{p'} g_{p'}(n) + o(g_{p'}(n)) = o(g_{p'-1}(n))$, for all $p' \in \{1,\dots,p\}$. The same conclusion can be drawn if we use the stronger condition~\eqref{eq:p-order_asymptotic_expansion_stronger_condition}, because $\beta_{p'} g_{p'}(n) + O(g_{p'+1}(n)) = O(g_{p'}(n))$, again for all $p' \in \{1,\dots,p\}$. 
\end{remark}

\begin{lemma}[Uniqueness of coefficients in asymptotic expansions]
	Given a decreasing asymptotic scale $\{g_l\}_{l \in \NN_0}$, if \eqref{eq:p-order_asymptotic_expansion} or \eqref{eq:p-order_asymptotic_expansion_stronger_condition} holds for some $p \in \NN_0$, then the real coefficients $\{\beta_l\}_{l=0}^p$ are uniquely determined by 
	\begin{equation*}
		\beta_l = \lim_{n \to \infty} \frac{f(n) - \sum_{i=0}^{l-1} {\beta_i g_i(n)}}{g_l(n)}  ,
	\end{equation*}
	for all $l \in \{0,\dots,p\}$. If \eqref{eq:Complete_asymptotic_expansion} is also true, then the above expression holds for all $l \in \NN_0$. In particular, for $l=0$ we have $\beta_0 = \lim_{n \to \infty} \frac{f(n)}{g_0(n)}$. 
\end{lemma}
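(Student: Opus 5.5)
The plan is to prove the two claims in order: first uniqueness of the coefficients in the stronger sense that any two admissible coefficient sequences must coincide, and then the explicit limit formula, which will follow from the same induction.

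Suppose \eqref{eq:p-order_asymptotic_expansion} holds with coefficients $\{\beta_l\}_{l=0}^p$ and also with $\{\beta'_l\}_{l=0}^p$. If \eqref{eq:p-order_asymptotic_expansion_stronger_condition} is assumed instead, we first reduce it to \eqref{eq:p-order_asymptotic_expansion}, using $O(g_{p+1}(n)) = o(g_p(n))$ from the decreasing scale \eqref{eq:Asymptotic_scale}. So it suffices to treat the $o$-version. Subtracting the two expansions gives
\begin{equation*}
\sum_{l=0}^p (\beta_l - \beta'_l) g_l(n) = o(g_p(n)).
\end{equation*}
Assume the coefficient sequences differ, and let $l^*$ be the smallest index with $\beta_{l^*} \neq \beta'_{l^*}$. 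Divide by $g_{l^*}(n)$; this is legitimate for large $n$, because $g_{l+1} = o(g_l)$ forces $g_l(n) \neq 0$ eventually. The terms with $l > l^*$ are $o(1)$ after division, by transitivity of $o$ along the scale, and the right-hand side $o(g_p(n))/g_{l^*}(n)$ is also $o(1)$, since $p \geq l^*$. Letting $n \to \infty$ then gives $\beta_{l^*} - \beta'_{l^*} = 0$, a contradiction.

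For the explicit formula, fix $l \in \{0,\dots,p\}$. By the Remark preceding the lemma, the truncation $\sum_{i=0}^{l}\beta_i g_i(n)$ is an $l$-order expansion of $f$, so
\begin{equation*}
f(n) - \sum_{i=0}^{l-1}\beta_i g_i(n) = \beta_l g_l(n) + o(g_l(n)).
\end{equation*}
Dividing by $g_l(n)$ (nonzero for large $n$) and letting $n\to\infty$ yields the stated limit, and the case $l=0$ is just the empty sum. If the complete expansion \eqref{eq:Complete_asymptotic_expansion} holds, then the finite statement holds for every $p$. Taking $p = l$ for arbitrary $l \in \NN_0$ gives the formula for all $l$, and the coefficients are consistent across different $p$ by the uniqueness just proved.

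The only delicate point is the nonvanishing of $g_l(n)$ for large $n$, which is needed to divide. I would justify it from the definition of $o$ as a limit of the ratio $g_{l+1}/g_l$, which presupposes $g_l(n)\ne 0$ eventually. Without this the formula would not even be defined, so I would state it explicitly as the standing convention of the asymptotic scale.
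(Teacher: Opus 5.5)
Your proof is correct. The paper gives no proof of this lemma: it appears in the background appendix as a standard fact attributed to de~Bruijn, Temme, and Wong. So there is no argument in the paper to compare against, and what you wrote is the standard proof.

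There are two small points of economy. First, the separate contradiction argument for uniqueness is not needed. Once you have shown that any admissible coefficient sequence satisfies
$\beta_l = \lim_{n\to\infty} \bigl(f(n) - \sum_{i=0}^{l-1}\beta_i g_i(n)\bigr)/g_l(n)$,
strong induction on $l$ shows that two admissible sequences coincide. The induction starts from $\beta_0 = \lim_{n\to\infty} f(n)/g_0(n)$, which depends only on $f$ and $g_0$. Your minimal-index contradiction is the same computation in different packaging. Second, in the complete-expansion case you need no consistency argument across different $p$. The definition of \eqref{eq:Complete_asymptotic_expansion} already uses a single fixed sequence $\{\beta_l\}_{l\in\NN_0}$ for every $p$.

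Your treatment of the nonvanishing of $g_l(n)$ is right and matches the paper's ratio-based definition of $o$. Every $g_l$ in the infinite scale has a successor with $g_{l+1}(n) = o(g_l(n))$, so each $g_l$ is eventually nonzero. This follows from the hypotheses, so you do not need to add it as an extra convention.
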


\section{Omitted Proofs}

\subsection{Proof of Theorem~\ref{thm:The_sLLSQ_method}} 
\label{subsec:Proof_of_the_sLLSQ_theorem}

\subsubsection*{Proof of Statement 1}

Given Assumption~\ref{assum:Asymptotic_rank_assumption}, the uniqueness of optimal solution $\mathbf{y}_n^*$ and its closed-form expression~\eqref{eq:y_n_star_sLLSQ}, for large enough $n$, result from Lemma~\ref{lem:Sufficient_condition_for_unique_solution_LLSQ}. Also, the uniqueness of $\mathbf{x}_n^*$ and \eqref{eq:x_n_star} follow from Proposition~\ref{prop:Equivalence_of_optimization_problems}.

\subsubsection*{Proof of Statement 2}

Regarding the global minimum $F_n^* \defeq F_n(\mathbf{x}_n^*) = F_n(\mathbf{u}^{-1}(\mathbf{y}_n^*))$, which satisfies $F_n^* \leq F_n(\mathbf{x})$ for all $\mathbf{x} \in \mathcal{S}$, we have 
\begin{equation*}
	0 \leq F_n^* \leq F_n(\boldsymbol{\alpha}) = \sum_{i=0}^{m-1} \left( \log\left( \frac{\widehat{f}(n+i;\boldsymbol{\alpha})}{f(n+i)} \right) \right)^2 = \sum_{i=0}^{m-1} \left( \log\left( \frac{f(n+i)}{\widehat{f}(n+i;\boldsymbol{\alpha})} \right) \right)^2 ,
\end{equation*}
because $\boldsymbol{\alpha} \in \mathcal{S}$ and due to~\eqref{eq:Loss_function_F_n}. Based on~\eqref{eq:Proven_asymptotic_expansion}, we deduce that 
\begin{equation*}
	\log\left( \frac{f(n+i)}{\widehat{f}(n+i;\boldsymbol{\alpha})} \right) = \log\left( 1 + O(g_1(n+i)) \right) =  O(g_1(n+i)) 
\end{equation*}  
for all $i \in \{0,\dots,m-1\}$, since $g_1(n) = o(1)$, as $n \to \infty$, and $\log(1+x) = O(x)$, as $x \to 0$. Therefore, 
\begin{equation*}
	F_n(\boldsymbol{\alpha}) = \sum_{i=0}^{m-1} O(g_1^2(n+i)) = O\left( \sum_{i=0}^{m-1} g_1^2(n+i) \right) = O(\twonorm{\mathbf{g}_1}^2) = O(\norm{\mathbf{g}_1}^2) ,
\end{equation*}
due to the norm equivalence (Lemma~\ref{lem:Norm_equivalence}). By the continuity of vector norms and the square function, it also holds that $\lim_{n \to \infty} \norm{\mathbf{g}_1}^2 = \norm{\mathbf{0}}^2 = 0$, i.e., $\norm{\mathbf{g}_1}^2 = o(1)$, thus yielding \eqref{eq:F_n_star_asymptotic_estimate_sLLSQ}.\footnote{In fact, the global minimum is explicitly given by $F_n^* = \twonorm{\mathbf{A} \mathbf{y}_n^* - \mathbf{b}}^2 = \twonorm{\left( \mathbf{A} \pseudoinv{\mathbf{A}} - \mathbf{I} \right) \mathbf{b}}^2$. Based on this expression, however, it is \emph{not} easy to prove the asymptotic property of $F_n^*$.}  

In order to prove \eqref{eq:y_n_star_asymptotic_estimate_sLLSQ} and \eqref{eq:y_jn_star_asymptotic_estimate_sLLSQ}, recall that the vector $\boldsymbol{\gamma}$ is \emph{unique} by Proposition~\ref{prop:Uniqueness_of_alpha_and_gamma}, given Assumptions~\labelcref{assum:Asymptotic_expansion_and_form,assum:Functions_varphi_j,assum:Functions_u_j}. Based on the asymptotic expansion~\eqref{eq:Proven_asymptotic_expansion} and the asymptotic form~\eqref{eq:Proven_asymptotic_form_with_gamma}, we can write  
\begin{equation*}
	f(n) = \exp\left( \sum_{j=1}^k \varphi_j(n) \gamma_j \right) \left(1 + \sum_{l=1}^p \beta_l g_l(n) + O(g_{p+1}(n)) \right) .
\end{equation*}
Moreover, because $\log(1+x) = \Theta(x) = O(x)$ as $x \to 0$,\footnote{A complete asymptotic expansion of $\log(1+x)$ is given by: $\log(1+x) \simeq \sum_{\nu=1}^{\infty} {\frac{(-1)^{\nu+1}}{\nu} x^{\nu}}$, as $x \to 0$. In particular, its $\sigma$-order ($\sigma \in \NN_0$) asymptotic expansion is $\log(1+x) = \sum_{\nu=1}^{\sigma} {\frac{(-1)^{\nu+1}}{\nu} x^{\nu}} + O(x^{\sigma+1})$, as $x \to 0$.} we obtain 
\begin{align*}
	& \log\left(1 + \sum_{l=1}^p \beta_l g_l(n) + O(g_{p+1}(n)) \right)  \\
	= & \log\left(1 + \sum_{l=1}^p \beta_l g_l(n) \right) + \log\left(1 + \frac{O(g_{p+1}(n))}{1 + \sum_{l=1}^p \beta_l g_l(n)} \right)  \\
	= & \log\left(1 + \sum_{l=1}^p \beta_l g_l(n) \right) +  \log\left(1 + \frac{O(g_{p+1}(n))}{\Omega(1)} \right)     \\
	= & \log\left(1 + \sum_{l=1}^p \beta_l g_l(n) \right) + \log\left(1 + O(g_{p+1}(n)) \right)     \\
	= & \log\left(1 + \sum_{l=1}^p \beta_l g_l(n) \right) +  O(g_{p+1}(n))   ,
\end{align*}
as $n \to \infty$. Here, we have used the fact that $\lim_{n \to \infty} \left( 1 + \sum_{l=1}^p \beta_l g_l(n) \right) = 1$, which implies that $1 + \sum_{l=1}^p \beta_l g_l(n) = \Theta(1) = \Omega(1)$ by the limit definition. Therefore, the vector $\mathbf{b}$ in~\eqref{eq:Vector_b} can be written as 
\begin{equation} \label{eq:Vector_b_asymptotics}
	\mathbf{b} = \mathbf{A} \boldsymbol{\gamma} + \log\left( 1 + \sum_{l=1}^p {\beta_l \mathbf{g}_l} \right) + \mathbf{z}   ,
\end{equation}
where $\mathbf{g}_l$ is given by~\eqref{eq:Vector_g_l} and  
\begin{equation*} 
	\mathbf{z} = \mathbf{z}(n) = \transp{\left[ O( g_{p+1}(n) ),\dots,O( g_{p+1}(n+m-1) ) \right]}  \in \RR^m  
\end{equation*} 
or, equivalently, $z_i = z_i(n) = O(g_{p+1}(n+i-1))$ for all $i \in \{1,\dots,m\}$. As a result, due to the norm equivalence (Lemma~\ref{lem:Norm_equivalence}), we obtain 
\begin{align} \label{eq:Vector_z_asymptotics}
	\begin{split}
		\norm{\mathbf{z}} & = O(\maxnorm{\mathbf{z}}) = O\left( \max\limits_{0 \leq i \leq m-1} {O(g_{p+1}(n+i))} \right)  \\
		& = O\left( \max\limits_{0 \leq i \leq m-1} {\abs{g_{p+1}(n+i)}} \right) = O( \maxnorm{\mathbf{g}_{p+1}} )  \\
		& = O( \norm{\mathbf{g}_{p+1}} )  = o(1).
	\end{split}
\end{align} 	
By using \eqref{eq:Vector_b_asymptotics}, the optimal solution $\mathbf{y}_n^*$ is expressed as follows 
\begin{equation*}
	\mathbf{y}_n^* = \pseudoinv{\mathbf{A}} \mathbf{b} = \boldsymbol{\gamma} + \pseudoinv{\mathbf{A}} \log\left( 1 + \sum_{l=1}^p {\beta_l \mathbf{g}_l} \right) + \pseudoinv{\mathbf{A}} \mathbf{z} , 
\end{equation*}
since $\pseudoinv{\mathbf{A}} \mathbf{A} = \inv{(\transp{\mathbf{A}} \mathbf{A})} \transp{\mathbf{A}} \mathbf{A} = \mathbf{I}$. Due to the triangle inequality, Lemma~\ref{lem:Universal_sub-multiplicative_property}, and~\eqref{eq:Vector_z_asymptotics}, we conclude that 
\begin{align*}
	\norm{\mathbf{y}_n^* - \boldsymbol{\gamma}} & = \norm{\pseudoinv{\mathbf{A}} \log\left( 1 + \sum_{l=1}^p {\beta_l \mathbf{g}_l} \right) + \pseudoinv{\mathbf{A}} \mathbf{z}}   \\
	& \leq \norm{\pseudoinv{\mathbf{A}} \log\left( 1 + \sum_{l=1}^p {\beta_l \mathbf{g}_l} \right)} + \norm{\pseudoinv{\mathbf{A}} \mathbf{z}}  \\
	& = \norm{\pseudoinv{\mathbf{A}} \log\left( 1 + \sum_{l=1}^p {\beta_l \mathbf{g}_l} \right)} + O\left( \norm{\pseudoinv{\mathbf{A}}} \norm{\mathbf{z}} \right) \\
	& = \norm{\pseudoinv{\mathbf{A}} \log\left( 1 + \sum_{l=1}^p {\beta_l \mathbf{g}_l} \right)} + O\left( \norm{\pseudoinv{\mathbf{A}}} \norm{\mathbf{g}_{p+1}} \right)    \\
	& = O\left( \max\left( \norm{\pseudoinv{\mathbf{A}} \log\left( 1 + \sum_{l=1}^p {\beta_l \mathbf{g}_l} \right)}, \norm{\pseudoinv{\mathbf{A}}} \norm{\mathbf{g}_{p+1}} \right) \right)  \\
	& = O( \eta(n) )  ,
\end{align*}
thus establishing the overall asymptotic estimate~\eqref{eq:y_n_star_asymptotic_estimate_sLLSQ}. Note that all the hidden constants within $O$-terms are \emph{independent} of $n$ (they depend only on $m$ and $k$).

Since $y_{j,n}^* = \pseudoinv{\mathbf{A}}(j,:) \mathbf{b}$, we similarly deduce that  
\begin{align*}
	\abs{y_{j,n}^* - \gamma_j} & = O\left( \max\left( \abs{\pseudoinv{\mathbf{A}}(j,:) \log\left( 1 + \sum_{l=1}^p {\beta_l \mathbf{g}_l} \right)}, \norm{\pseudoinv{\mathbf{A}}(j,:)} \norm{\mathbf{g}_{p+1}} \right) \right)   \\
	& = O( \vartheta_j(n) )  ,  
\end{align*}
for all $j \in \{1,\dots,k\}$, thus proving the per-component asymptotic estimate~\eqref{eq:y_jn_star_asymptotic_estimate_sLLSQ}. The absolute value in $\abs{y_{j,n}^* - \gamma_j}$ can be omitted, because it is implicitly included in the $O$-notation.

\subsubsection*{Proof of Statement 3}

Recall that the vector $\boldsymbol{\alpha}$ is \emph{unique} according to Proposition~\ref{prop:Uniqueness_of_alpha_and_gamma}. Based on~\eqref{eq:y_jn_star_asymptotic_estimate_sLLSQ} and Proposition~\ref{prop:Asymptotic_invariants}, the implication~\eqref{eq:Implication_o_sLLSQ} follows from the convergence-rate invariant~\eqref{eq:Convergence-rate_invariant_y_to_x_with_h}, while the implication~\eqref{eq:Implication_O_sLLSQ} is a straightforward consequence of the boundedness invariant~\eqref{eq:Boundedness_invariant}. This concludes the proof.

\subsection{Proof of Proposition~\ref{prop:Impact_of_the_sliding_window_length}}
\label{subsec:Proof_of_proposition_for_sliding_window_length}

In order to prove this result, we first need a definition and two useful lemmas.  

\begin{definition}[Loewner's partial order of real matrices~{\cite[p.~469, Definition~7.7.1]{Horn-Johnson_1985}}]
	Let $k \in \NN$ and $\mathbf{A}, \mathbf{B} \in \RR^{k \times k}$ be symmetric matrices. We write $\mathbf{A} \succeq \mathbf{B}$ (resp. $\mathbf{A} \succ \mathbf{B}$) if, and only if, the matrix $\mathbf{A} - \mathbf{B}$ is positive semi-definite (resp. positive definite). 
\end{definition}

\begin{lemma}[{\cite[p.~398, Observation~7.1.3]{Horn-Johnson_1985}}] \label{lem:Sum_of_positive_definite_and_positive_semi-definite_matrices}
	Let $\mathbf{A}, \mathbf{B} \in \RR^{k \times k}$ be symmetric matrices. If $\mathbf{A} \succ \mathbf{O}$ and $\mathbf{B} \succeq \mathbf{O}$, then $\mathbf{A} + \mathbf{B} \succ \mathbf{O}$. 
\end{lemma}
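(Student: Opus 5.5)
The statement is Lemma~\ref{lem:Sum_of_positive_definite_and_positive_semi-definite_matrices}: for symmetric $\mathbf{A} \succ \mathbf{O}$ and $\mathbf{B} \succeq \mathbf{O}$, we have $\mathbf{A} + \mathbf{B} \succ \mathbf{O}$. I will argue directly from the quadratic-form definitions underlying Loewner's partial order.

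First, $\mathbf{A} + \mathbf{B}$ is symmetric, since $\transp{(\mathbf{A}+\mathbf{B})} = \transp{\mathbf{A}} + \transp{\mathbf{B}} = \mathbf{A} + \mathbf{B}$. So it is a legitimate candidate for positive definiteness.

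Next, fix an arbitrary $\mathbf{x} \in \RR^k \setminus \{\mathbf{0}\}$. Bilinearity gives
\begin{equation*}
	\transp{\mathbf{x}} (\mathbf{A} + \mathbf{B}) \mathbf{x} = \transp{\mathbf{x}} \mathbf{A} \mathbf{x} + \transp{\mathbf{x}} \mathbf{B} \mathbf{x} .
\end{equation*}
Because $\mathbf{A} \succ \mathbf{O}$ means $\mathbf{A} - \mathbf{O} = \mathbf{A}$ is positive definite, the first term is strictly positive. Because $\mathbf{B} \succeq \mathbf{O}$, the second term is nonnegative. Their sum is therefore strictly positive.

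Since $\mathbf{x} \neq \mathbf{0}$ was arbitrary, $\mathbf{A} + \mathbf{B}$ is positive definite, that is, $\mathbf{A} + \mathbf{B} \succ \mathbf{O}$.

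There is no substantive obstacle. The only point needing care is unpacking the Loewner notation $\succ \mathbf{O}$ and $\succeq \mathbf{O}$ as statements about the quadratic forms. An alternative route through eigenvalues (for instance Weyl's inequality $\lambda_{\min}(\mathbf{A}+\mathbf{B}) \geq \lambda_{\min}(\mathbf{A}) + \lambda_{\min}(\mathbf{B}) > 0$) would also work, but the quadratic-form argument is more elementary.
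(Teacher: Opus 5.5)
Your proof is correct. The paper does not prove this lemma; it cites Horn--Johnson (Observation~7.1.3), whose argument is the same quadratic-form decomposition $\transp{\mathbf{x}}(\mathbf{A}+\mathbf{B})\mathbf{x} = \transp{\mathbf{x}}\mathbf{A}\mathbf{x} + \transp{\mathbf{x}}\mathbf{B}\mathbf{x} > 0$ for $\mathbf{x} \neq \mathbf{0}$. The paper uses the same reasoning elsewhere, e.g., to show $\transp{\mathbf{A}}\mathbf{A} + \mu\mathbf{I} \succ \mathbf{O}$.
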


\begin{lemma}[{\cite[p.~471, Corollary~7.7.4]{Horn-Johnson_1985}}] \label{lem:Properties_of positive_definite_matrices}
	Let $\mathbf{A}, \mathbf{B} \in \RR^{k \times k}$ be (symmetric) positive definite matrices, i.e., $\mathbf{A}, \mathbf{B} \succ \mathbf{O}$. Then $\inv{\mathbf{A}}$ and $\inv{\mathbf{B}}$ exist, and also $\inv{\mathbf{A}}, \inv{\mathbf{B}} \succ \mathbf{O}$.\footnote{Recall that if $\{\lambda_1,\dots,\lambda_k\}$ are the eigenvalues of an invertible matrix $\mathbf{A} \in \RR^{k \times k}$ (so $\lambda_j \neq 0$ for all $j \in \{1,\dots,k\}$), then the eigenvalues of its inverse, $\inv{\mathbf{A}}$, are $\left\{\frac{1}{\lambda_1},\dots,\frac{1}{\lambda_k}\right\}$.} In addition,  
	\begin{enumerate}[label={\arabic*.}]
		\item $\mathbf{A} \succeq \mathbf{B}$ $\iff$ $\inv{\mathbf{B}} \succeq \inv{\mathbf{A}}$.
		
		\item $\mathbf{A} \succeq \mathbf{B}$ $\implies$ $\lambda_j(\mathbf{A}) \geq \lambda_j(\mathbf{B})$, $\forall j \in \{1,\dots,k\}$ $\implies$ $\determ{\mathbf{A}} \geq \determ{\mathbf{B}}$ and $\tr{\mathbf{A}} \geq \tr{\mathbf{B}}$, where $\lambda_j(\cdot)$ is the $j$-th largest eigenvalue of the respective matrix, i.e., $\lambda_1(\cdot) \geq \cdots \geq \lambda_k(\cdot)$. 
	\end{enumerate}
\end{lemma}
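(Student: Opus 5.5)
The plan is to reduce everything to spectral facts about symmetric matrices: the spectral theorem, congruence invariance of the Loewner order, and the Courant--Fischer min-max characterization. First, since $\mathbf{A} \succ \mathbf{O}$ is symmetric, write $\mathbf{A} = \mathbf{Q} \boldsymbol{\Lambda} \transp{\mathbf{Q}}$ with $\mathbf{Q}$ orthogonal and $\boldsymbol{\Lambda}$ diagonal with positive entries. Then $\inv{\mathbf{A}} = \mathbf{Q} \inv{\boldsymbol{\Lambda}} \transp{\mathbf{Q}}$ exists and is symmetric, and its eigenvalues $1/\lambda_j > 0$ show $\inv{\mathbf{A}} \succ \mathbf{O}$; the same holds for $\mathbf{B}$. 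The same decomposition gives the symmetric positive definite square root $\mathbf{A}^{1/2} \defeq \mathbf{Q} \boldsymbol{\Lambda}^{1/2} \transp{\mathbf{Q}}$, together with its inverse $\mathbf{A}^{-1/2}$.

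For statement 1, I will use the elementary fact that congruence preserves the Loewner order: for any invertible $\mathbf{S}$, $\mathbf{X} \succeq \mathbf{Y} \iff \transp{\mathbf{S}} \mathbf{X} \mathbf{S} \succeq \transp{\mathbf{S}} \mathbf{Y} \mathbf{S}$. This follows from $\transp{\mathbf{x}} \transp{\mathbf{S}} (\mathbf{X}-\mathbf{Y}) \mathbf{S} \mathbf{x} = \transp{(\mathbf{S}\mathbf{x})} (\mathbf{X}-\mathbf{Y}) (\mathbf{S}\mathbf{x})$ and the bijectivity of $\mathbf{x} \mapsto \mathbf{S}\mathbf{x}$.
\begin{itemize}
\item Apply it with $\mathbf{S} = \mathbf{A}^{-1/2}$: $\mathbf{A} \succeq \mathbf{B}$ becomes $\mathbf{I} \succeq \mathbf{M} \defeq \mathbf{A}^{-1/2} \mathbf{B} \mathbf{A}^{-1/2}$.
\item $\mathbf{M}$ is symmetric positive definite, and $\mathbf{I} \succeq \mathbf{M}$ means all its eigenvalues lie in $(0,1]$. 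Diagonalizing $\mathbf{M}$, the eigenvalues of $\inv{\mathbf{M}}$ are their reciprocals, all $\geq 1$, so $\inv{\mathbf{M}} = \mathbf{A}^{1/2} \inv{\mathbf{B}} \mathbf{A}^{1/2} \succeq \mathbf{I}$.
\item Apply congruence by $\mathbf{A}^{-1/2}$ once more to obtain $\inv{\mathbf{B}} \succeq \inv{\mathbf{A}}$.
\end{itemize}
For the converse, run the same argument on the positive definite pair $\inv{\mathbf{B}}, \inv{\mathbf{A}}$, using $\inv{(\inv{\mathbf{A}})} = \mathbf{A}$.

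For statement 2, I will use the Courant--Fischer characterization
\[
\lambda_j(\mathbf{A}) = \max_{\dim \mathcal{V} = j} \; \min_{\mathbf{x} \in \mathcal{V},\, \mathbf{x} \neq \mathbf{0}} \frac{\transp{\mathbf{x}} \mathbf{A} \mathbf{x}}{\transp{\mathbf{x}} \mathbf{x}} .
\]
Since $\transp{\mathbf{x}} \mathbf{A} \mathbf{x} \geq \transp{\mathbf{x}} \mathbf{B} \mathbf{x}$ for every $\mathbf{x}$, each inner minimum for $\mathbf{A}$ dominates the corresponding one for $\mathbf{B}$, hence $\lambda_j(\mathbf{A}) \geq \lambda_j(\mathbf{B})$ for every $j$. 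Summing over $j$ gives $\tr{\mathbf{A}} \geq \tr{\mathbf{B}}$. Multiplying over $j$ gives $\determ{\mathbf{A}} \geq \determ{\mathbf{B}}$, which is legitimate because all $\lambda_j(\mathbf{B}) > 0$, so the inequalities between positive numbers multiply.

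The main obstacle is statement 1: the inverse map is not monotone term by term, and the proof hinges on the congruence-by-square-root reduction to comparison with $\mathbf{I}$. Once that reduction is in place, everything else is routine spectral calculus.
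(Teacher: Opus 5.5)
Your proof is correct and complete. The paper gives no proof of its own and simply cites Horn--Johnson, Corollary~7.7.4, whose argument is essentially yours: congruence by $\mathbf{A}^{-1/2}$ reduces $\mathbf{A} \succeq \mathbf{B}$ to comparing $\mathbf{A}^{-1/2}\mathbf{B}\mathbf{A}^{-1/2}$ with $\mathbf{I}$ (stated there through the spectral radius of the similar matrix $\mathbf{B}\inv{\mathbf{A}}$), and the eigenvalue, trace and determinant inequalities follow from Courant--Fischer (Weyl) monotonicity.
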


\noindent Based on~\eqref{eq:Matrix_A}, the matrix $\mathbf{A}_{\mathrm{new}}$ can be written as
\begin{equation*}
	\mathbf{A}_{\mathrm{new}} = \mathbf{A}_{\mathrm{new}}(n) \defeq 
	\begin{bmatrix}
		\varphi_1(n) & \cdots & \varphi_k(n) \\ 
		\vdots & \ddots & \vdots \\
		\varphi_1(n+m'-1) & \cdots & \varphi_k(n+m'-1)
	\end{bmatrix}
	= \begin{bmatrix} \mathbf{A} \\ \mathbf{A}' \end{bmatrix} \in \RR^{m' \times k} ,
\end{equation*}
where 
\begin{equation*}
	\mathbf{A}' = \mathbf{A}'(n) \defeq 
	\begin{bmatrix}
		\varphi_1(n+m) & \cdots & \varphi_k(n+m) \\ 
		\vdots & \ddots & \vdots \\
		\varphi_1(n+m'-1) & \cdots & \varphi_k(n+m'-1)
	\end{bmatrix}
	\in \RR^{\nu \times k} 
\end{equation*} 
and $\NN \ni \nu \defeq m'-m$. Moreover, Assumption~\ref{assum:Asymptotic_rank_assumption} on the matrix $\mathbf{A}$ implies that the same is true for the matrix $\mathbf{A}_{\mathrm{new}}$, i.e., $\rank{\mathbf{A}} = \rank{\mathbf{A}_{\mathrm{new}}} = k$, for sufficiently large $n$. For the sake of brevity, we will omit the phrase ``for sufficiently large $n$'' in the rest of this proof; all statements hold under this condition.

\subsubsection*{Proof of \eqref{eq:O-estimate_on_A_new}} 

For convenience, we will first use the spectral norm $\twonorm{\cdot}$ for matrices, and then generalize the result for arbitrary matrix norms. In order to compare the largest singular values of $\pseudoinv{\mathbf{A}}$ and $\pseudoinv{\mathbf{A}_{\mathrm{new}}}$, observe that 
\begin{equation*}
	\pseudoinv{\mathbf{A}} \transp{(\pseudoinv{\mathbf{A}})} = \inv{(\transp{\mathbf{A}} \mathbf{A})} \transp{\mathbf{A}} \mathbf{A} \transp{(\inv{(\transp{\mathbf{A}} \mathbf{A})})}  = \mathbf{I} \inv{(\transp{(\transp{\mathbf{A}} \mathbf{A})})} = \inv{(\transp{\mathbf{A}} \mathbf{A})} .
\end{equation*}
In a similar way,  
\begin{equation*}
	\pseudoinv{\mathbf{A}_{\mathrm{new}}} \transp{(\pseudoinv{\mathbf{A}_{\mathrm{new}}})} = \inv{(\transp{\mathbf{A}_{\mathrm{new}}} \mathbf{A}_{\mathrm{new}})} .
\end{equation*}

Since $\transp{\mathbf{A}_{\mathrm{new}}} = \left[ \transp{\mathbf{A}}, \transp{(\mathbf{A}')} \right]$, we can write $\transp{\mathbf{A}_{\mathrm{new}}} \mathbf{A}_{\mathrm{new}} = \transp{\mathbf{A}} \mathbf{A} + \transp{(\mathbf{A}')} \mathbf{A}'$. Furthermore, it holds that $\transp{\mathbf{A}} \mathbf{A} \succ \mathbf{O}$ due to Assumption~\ref{assum:Asymptotic_rank_assumption}, while $\transp{(\mathbf{A}')} \mathbf{A}' \succeq \mathbf{O}$ because $\transp{\mathbf{x}} (\transp{(\mathbf{A}')} \mathbf{A}') \mathbf{x} = \transp{(\mathbf{A}' \mathbf{x})} (\mathbf{A}' \mathbf{x}) = \twonorm{\mathbf{A}' \mathbf{x}}^2 \geq 0$ for all $\mathbf{x} \in \RR^k$. By Lemma~\ref{lem:Sum_of_positive_definite_and_positive_semi-definite_matrices}, we conclude that $\transp{\mathbf{A}_{\mathrm{new}}} \mathbf{A}_{\mathrm{new}} \succ \mathbf{O}$. Now, by the first part of Lemma~\ref{lem:Properties_of positive_definite_matrices}, $\transp{\mathbf{A}} \mathbf{A}$ and $\transp{\mathbf{A}_{\mathrm{new}}} \mathbf{A}_{\mathrm{new}}$ are invertible, $\inv{(\transp{\mathbf{A}} \mathbf{A})}, \inv{(\transp{\mathbf{A}_{\mathrm{new}}} \mathbf{A}_{\mathrm{new}})} \succ \mathbf{O}$, and since $\transp{\mathbf{A}_{\mathrm{new}}} \mathbf{A}_{\mathrm{new}} \succeq \transp{\mathbf{A}} \mathbf{A}$, we have $\inv{(\transp{\mathbf{A}} \mathbf{A})} \succeq \inv{(\transp{\mathbf{A}_{\mathrm{new}}} \mathbf{A}_{\mathrm{new}})}$. Therefore, by the second part of Lemma~\ref{lem:Properties_of positive_definite_matrices}, we obtain 
\begin{equation*}
	\sigma_{\max}^2(\pseudoinv{\mathbf{A}}) = \lambda_{\max}(\inv{(\transp{\mathbf{A}} \mathbf{A})}) \geq \lambda_{\max}(\inv{(\transp{\mathbf{A}_{\mathrm{new}}} \mathbf{A}_{\mathrm{new}})}) = \sigma_{\max}^2(\pseudoinv{\mathbf{A}_{\mathrm{new}}})  
\end{equation*}
or, equivalently, 
\begin{equation*}
	\twonorm{\pseudoinv{\mathbf{A}}} \geq \twonorm{\pseudoinv{\mathbf{A}_{\mathrm{new}}}} .
\end{equation*}
Finally, by combining the above inequality with the norm equivalence in Lemma~\ref{lem:Norm_equivalence}, we deduce that  
\begin{equation*}
	\norm{\pseudoinv{\mathbf{A}_{\mathrm{new}}}} = O\left( \twonorm{\pseudoinv{\mathbf{A}_{\mathrm{new}}}} \right) = O\left( \twonorm{\pseudoinv{\mathbf{A}}} \right) = O\left( \norm{\pseudoinv{\mathbf{A}}} \right) ,
\end{equation*}  
for each matrix norm $\norm{\cdot}$ on $\RR^{k \times m'}$ and $\RR^{k \times m}$. It is emphasized that all the hidden constants in $O$-estimates are independent of $n$; they depend only on $k$, $m$ and $m'$.

\subsubsection*{Proof of \eqref{eq:Implication_on_y_n_new_star}} 

Let the vector $\mathbf{g}_{1,\mathrm{new}} = \mathbf{g}_{1,\mathrm{new}}(n) \in \RR^{m'}$ be similarly defined as $\mathbf{g}_1 = \mathbf{g}_1(n) \in \RR^m$ in~\eqref{eq:Vector_g_l}, that is, $\mathbf{g}_{1,\mathrm{new}} \defeq \transp{[g_1(n),\dots,g_1(n+m'-1)]}$. By the asymptotic regularity of $g_1$ and Lemma~\ref{lem:Asymptotic_regularity}, we obtain $\norm{\mathbf{g}_1} = \Theta(g_1(n))$ and $\norm{\mathbf{g}_{1,\mathrm{new}}} = \Theta(g_1(n))$ as $n \to \infty$.

Under Assumptions~\labelcref{assum:Asymptotic_expansion_and_form,assum:Functions_varphi_j,assum:Functions_u_j} and \ref{assum:Asymptotic_rank_assumption}/\ref{assum:Equivalent_asymptotic_rank_assumption}, the sLLSQ method is applicable according to Theorem~\ref{thm:The_sLLSQ_method}. In particular, its optimal solutions $\mathbf{y}_n^*$ and $\mathbf{y}_{n,\mathrm{new}}^*$, corresponding to $m$ and $m'$, satisfy the following estimates 
\begin{align*}
	\norm{\mathbf{y}_n^* - \boldsymbol{\gamma}} & = O\left( \norm{\pseudoinv{\mathbf{A}}} \norm{\mathbf{g}_1} \right) = O\left( \norm{\pseudoinv{\mathbf{A}}} g_1(n) \right)  ,  \\
	\norm{\mathbf{y}_{n,\mathrm{new}}^* - \boldsymbol{\gamma}} & = O\left( \norm{\pseudoinv{\mathbf{A}_{\mathrm{new}}}} \norm{\mathbf{g}_{1,\mathrm{new}}} \right) = O\left( \norm{\pseudoinv{\mathbf{A}_{\mathrm{new}}}} g_1(n) \right)  ,
\end{align*}
based on~\eqref{eq:y_n_star_asymptotic_estimate_sLLSQ} and Proposition~\ref{prop:Impact_of_the_asymptotics_order}. Consequently, if $\norm{\pseudoinv{\mathbf{A}}} g_1(n) = o(1)$, then $\norm{\mathbf{y}_n^* - \boldsymbol{\gamma}} = O\left( \norm{\pseudoinv{\mathbf{A}}} g_1(n) \right) = o(1)$ and $\norm{\mathbf{y}_{n,\mathrm{new}}^* - \boldsymbol{\gamma}} = O\left( \norm{\pseudoinv{\mathbf{A}_{\mathrm{new}}}} g_1(n) \right) = O\left( \norm{\pseudoinv{\mathbf{A}}} g_1(n) \right) = o(1)$ as $n \to \infty$, because of~\eqref{eq:O-estimate_on_A_new}. This completes the proof.

\subsection{Proof of Lemma~\ref{lem:Regularization_with_arbitrary_norm}}
\label{subsec:Proof_of_the_sT-LLSQ_lemma}

\subsubsection*{Proof of Statement 1}

Recall that every vector norm is continuous and convex. The composition of continuous functions is also continuous, provided that such a composition is well-defined. Every composition with an affine mapping preserves (continuity and) convexity \cite[\S{3.2.2}]{Boyd-Vandenberghe_2004}, thus $\twonorm{\mathbf{A} \mathbf{y} - \widetilde{\mathbf{b}}}$ is (continuous and) convex. Moreover, if $h$ is a convex and nonnegative function (e.g., a vector norm) and $\nu' \geq 1$, then $h^{\nu'}$ is also convex \cite[\S{3.2.4}, Example~3.13]{Boyd-Vandenberghe_2004}. Therefore, the objective function $R_n$ is \emph{continuous} and \emph{convex} as the sum of two such functions. Since $R_n$ is a convex function (which should be minimized) and $\RR^k$ is a convex set, \eqref{eq:Regularized_optimization_problem} is a \emph{convex optimization problem}. 

In addition, the objective function $R_n$ is \emph{coercive}, i.e., $\lim_{\norm{\mathbf{y}} \to \infty} R_n(\mathbf{y}) = \infty$, since it holds that
\begin{align*}
	& R_n(\mathbf{y}) \geq \mu \norm{\mathbf{y}}^{\nu} \implies  \infty = \liminf_{\norm{\mathbf{y}} \to \infty} \mu \norm{\mathbf{y}}^{\nu} \leq \liminf_{\norm{\mathbf{y}} \to \infty} R_n(\mathbf{y}) \left( \leq \limsup_{\norm{\mathbf{y}} \to \infty} R_n(\mathbf{y}) \leq \infty \right)   \\
	& \implies  \liminf_{\norm{\mathbf{y}} \to \infty} R_n(\mathbf{y}) = \limsup_{\norm{\mathbf{y}} \to \infty} R_n(\mathbf{y}) = \infty  \implies  \lim_{\norm{\mathbf{y}} \to \infty} R_n(\mathbf{y}) = \infty .
\end{align*}
Note that the particular norm in the limit (i.e., $\norm{\mathbf{y}} \to \infty$) does not matter because of the norm equivalence (Lemma~\ref{lem:Norm_equivalence}). For each $n \in \NN$, the existence of a global minimizer ($\mathbf{y}_n^*$) for problem~\eqref{eq:Regularized_optimization_problem} is guaranteed by the \emph{Weierstrass theorem}~\cite[Proposition~A.8]{Bertsekas_1999},\footnote{Let $\mathcal{D}$ be a nonempty and closed subset of $\RR^k$ and let $f \colon \mathcal{D} \to \RR$ be lower semicontinuous and coercive. Then, the optimization problem 
	\begin{equation*}
		\minimize_{\mathbf{y} \in \mathcal{D}} \; f(\mathbf{y})
	\end{equation*}
admits a globally optimal solution, i.e., there exists a vector $\mathbf{y}^* \in \mathcal{D}$ such that $f(\mathbf{y}^*) \leq f(\mathbf{y})$, for all $\mathbf{y} \in \mathcal{D}$.} because the objective function ($R_n$) is continuous---thus lower semicontinuous---and coercive, and the feasible set ($\RR^k$) is nonempty and closed.

Furthermore, the global minimum $R_n^* \defeq R_n(\mathbf{y}_n^*)$ has the property: $R_n^* \leq R_n(\mathbf{y})$, for all $\mathbf{y} \in \RR^k$. In particular, $\widetilde{\boldsymbol{\gamma}} \in \RR^k$ and therefore 
\begin{align*}
	0 \leq R_n^* \leq R_n(\widetilde{\boldsymbol{\gamma}}) & = \twonorm{\mathbf{A} \widetilde{\boldsymbol{\gamma}} - \widetilde{\mathbf{b}}}^2 + \mu \norm{\widetilde{\boldsymbol{\gamma}}}^{\nu} = \twonorm{\mathbf{w}}^2 + \mu \norm{\widetilde{\boldsymbol{\gamma}}}^{\nu}  \\
	& =  O(1) + \mu \norm{\widetilde{\boldsymbol{\gamma}}}^{\nu} = O(1) , \quad \mathrm{as}\ n \to \infty .
\end{align*}
Moreover, since $R_n^* = R_n(\mathbf{y}_n^*) \geq \mu \norm{\mathbf{y}_n^*}^{\nu}$, we obtain $\norm{\mathbf{y}_n^*} \leq \left( \frac{1}{\mu} R_n^* \right)^{1/\nu} = O(1)$; note that the function $h(x) \defeq x^{1/\nu}$, with $\nu \geq 1$, is increasing for $x \geq 0$. As a result, by the triangle inequality, 
\begin{equation*}
	\norm{\mathbf{y}_n^* - \widetilde{\boldsymbol{\gamma}}} \leq \norm{\mathbf{y}_n^*} + \norm{\widetilde{\boldsymbol{\gamma}}} = O(1) 
\end{equation*}
and, by the norm equivalence (Lemma~\ref{lem:Norm_equivalence}),
\begin{equation*}
	\abs{y_{j,n}^* - \widetilde{\gamma}_j} \leq \maxnorm{\mathbf{y}_n^* - \widetilde{\boldsymbol{\gamma}}} = O(\norm{\mathbf{y}_n^* - \widetilde{\boldsymbol{\gamma}}}) = O(1) , \quad \forall j \in \{1,\dots,k\} .
\end{equation*}
We can also deduce that 
\begin{align*}
	R_n^* = R_n(\mathbf{y}_n^*) & \geq \twonorm{\mathbf{A} \mathbf{y}_n^* - \widetilde{\mathbf{b}}}^2 = \twonorm{\mathbf{A} (\mathbf{y}_n^* - \widetilde{\boldsymbol{\gamma}}) - \mathbf{w}}^2   \\
	& \geq \left( \mathbf{A}(1,:) (\mathbf{y}_n^* - \widetilde{\boldsymbol{\gamma}}) - w_1 \right)^2   \\ 
	& = \left( \sum_{j=1}^k {\varphi_j(n) (y_{j,n}^* - \widetilde{\gamma}_j)} - w_1 \right)^2  . 
\end{align*}
Due to the fact that $R_n^* = O(1)$ and the square-root function is increasing, we conclude that 
\begin{equation*}
	\sum_{j=1}^k {\varphi_j(n) (y_{j,n}^* - \widetilde{\gamma}_j)} - w_1 = O(1)  \implies  \sum_{j=1}^k {\varphi_j(n) (y_{j,n}^* - \widetilde{\gamma}_j)} = O(1) . 
\end{equation*}
Here, we have used the property: $\abs{w_1} \leq \maxnorm{\mathbf{w}} = O(\norm{\mathbf{w}}) = O(1)$, that is, $w_1 = O(1)$. The above expression can be rearranged as follows\footnote{Division by $\varphi_k(n)$ is asymptotically possible, since $\varphi_k(n) = \Omega(\varphi_1(n)) = \Omega(1)$, as $n \to \infty$, by Assumption~\ref{assum:Functions_varphi_j}, and therefore $\varphi_k(n)$ is nonzero for sufficiently large $n$.} 
\begin{align*}
	y_{k,n}^* - \widetilde{\gamma}_k & = \sum_{j=1}^{k-1} {\frac{\varphi_j(n)}{\varphi_k(n)} (\widetilde{\gamma}_j - y_{j,n}^*)} + O\left( \frac{1}{\varphi_k(n)} \right)   \\
	& = \sum_{j=1}^{k-1} {O\left( \frac{\varphi_j(n)}{\varphi_k(n)} \right)} + O\left( \frac{1}{\varphi_k(n)} \right)  ,
\end{align*}
due to the fact $y_{j,n}^* - \widetilde{\gamma}_j = O(1)$ for all $j \in \{1,\dots,k-1\}$. Now, we distinguish two cases for the above expression. Firstly, when $k = 1$ the sum $\sum_{j=1}^{k-1} = \sum_{j=1}^0 = 0$, thus yielding 
\begin{equation*}
	y_{1,n}^* - \widetilde{\gamma}_1 = O\left( \frac{1}{\varphi_1(n)} \right) = o(1) ,
\end{equation*}
where the last equality follows from Assumption~\ref{assum:sT-LLSQ_when_k_equals_1}. Secondly, when $k \geq 2$ we get 
\begin{align*}
	y_{k,n}^* - \widetilde{\gamma}_k & = O\left( \frac{\varphi_{k-1}(n)}{\varphi_k(n)} \right) + O\left( \frac{1}{\varphi_k(n)} \right)  \\
	& = O\left( \frac{\varphi_{k-1}(n)}{\varphi_k(n)} \right) = o(1)  ,
\end{align*}
because of the increasing asymptotic scale $\{\varphi_j\}_{j=1}^k$ and the fact that $1 = O(\varphi_1(n)) = O(\varphi_{k-1}(n))$, according to Assumption~\ref{assum:Functions_varphi_j}. Finally, by combining these two cases with the convention $\varphi_0(n) \defeq 1$, we obtain the universal convergence rate \eqref{eq:Universal_convergence_rate_for_index_k}.

\subsubsection*{Proof of Statement 2}

The uniqueness and closed-form expression of the optimal solution follows from 	Lemma~\ref{lem:Sufficient_condition_for_unique_solution_T-LLSQ}. Moreover, concerning the asymptotics of matrices $\mathbf{C}$ and $\mathbf{D}$, a direct approach is to use their explicit definitions in \eqref{eq:Matrix_C} and \eqref{eq:Matrix_D}, which involve the inverse of a matrix. However, such an approach is quite cumbersome. In order to overcome this difficulty, we will instead give an \emph{indirect}~proof, based on the asymptotics of the optimal solution $\mathbf{y}_n^*$. 

In particular, using \eqref{eq:Matrix_C} and \eqref{eq:Matrix_D}, the optimal solution can be written as 
\begin{align*}
	\mathbf{y}_n^* & = \mathbf{C} \widetilde{\mathbf{b}} = \mathbf{C} \left( \mathbf{A} \widetilde{\boldsymbol{\gamma}} + \mathbf{w} \right) = \mathbf{D} \transp{\mathbf{A}} \mathbf{A} \widetilde{\boldsymbol{\gamma}} + \mathbf{C} \mathbf{w}  \\
	& = \mathbf{D} ( \transp{\mathbf{A}} \mathbf{A} + \mu \mathbf{I} ) \widetilde{\boldsymbol{\gamma}} + \mathbf{C} \mathbf{w} - \mu \mathbf{D} \widetilde{\boldsymbol{\gamma}}   \\
	& = \widetilde{\boldsymbol{\gamma}} + \mathbf{C} \mathbf{w} - \mu \mathbf{D} \widetilde{\boldsymbol{\gamma}}  ,
\end{align*}
which implies that 
\begin{equation} \label{eq:y_n_star_minus_gamma_with_C_and_D}
	\mathbf{y}_n^* - \widetilde{\boldsymbol{\gamma}} = \mathbf{C} \mathbf{w} - \mu \mathbf{D} \widetilde{\boldsymbol{\gamma}} .
\end{equation}
From the established asymptotics of $\mathbf{y}_n^*$, we know that $y_{j,n}^* - \widetilde{\gamma}_j = O(1)$, for all $j \in \{1,\dots,k\}$, and $y_{k,n}^* - \widetilde{\gamma}_k = O\left( \frac{\varphi_{k-1}(n)}{\varphi_k(n)} \right) = o(1)$. Therefore, by exploiting~\eqref{eq:y_n_star_minus_gamma_with_C_and_D}, we have 
\begin{align} \label{eq:C_and_D_asymptotic_relations}
	\begin{split}
		\mathbf{C}(j,:) \mathbf{w} - \mu \mathbf{D}(j,:) \widetilde{\boldsymbol{\gamma}} = O(1) , \quad \forall j \in \{1,\dots,k\}  ,  \\
		\mathbf{C}(k,:) \mathbf{w} - \mu \mathbf{D}(k,:) \widetilde{\boldsymbol{\gamma}} = O\left( \frac{\varphi_{k-1}(n)}{\varphi_k(n)} \right) = o(1)  .
	\end{split}
\end{align}

A crucial observation here is that $\widetilde{\boldsymbol{\gamma}}$ and $\mathbf{w}$ are particular but \emph{arbitrarily chosen} vectors. As a consequence, all the results hold for \emph{every} $\widetilde{\boldsymbol{\gamma}} \in \RR^k$ and \emph{every} $\mathbf{w} = \mathbf{w}(n) \in \RR^m$ such that $\norm{\mathbf{w}} = O(1)$ as $n \to \infty$. Observe that the matrices $\mathbf{C}$ and~$\mathbf{D}$ are \emph{independent} of $\widetilde{\boldsymbol{\gamma}}$ and $\mathbf{w}$; they depend only on the matrix $\mathbf{A}$ (which contains the functions $\{\varphi_j\}_{j=1}^k$) and the regularization parameter $\mu$. Next we proceed as follows, where we denote by $\boldsymbol{\varepsilon}_i$ the standard unit vector (of appropriate dimension), i.e., the vector that has one in its $i$-th position and zeros elsewhere. Note that $\norm{\boldsymbol{\varepsilon}_i} = \Theta(\twonorm{\boldsymbol{\varepsilon}_i}) = \Theta(1)$, due to the norm equivalence (Lemma~\ref{lem:Norm_equivalence}). 

Firstly, by setting $\widetilde{\boldsymbol{\gamma}} = \mathbf{0}$ and successively $\mathbf{w} = \boldsymbol{\varepsilon}_i$, for each $i \in \{1,\dots,m\}$, in~\eqref{eq:C_and_D_asymptotic_relations}, we obtain 
\begin{align*}
	c_{j,i} = c_{j,i}(n) = O(1) , \quad \forall j \in \{1,\dots,k\}, \ \forall i \in \{1,\dots,m\} ,  \\
	c_{k,i} = c_{k,i}(n) = O\left( \frac{\varphi_{k-1}(n)}{\varphi_k(n)} \right) = o(1) , \quad \forall i \in \{1,\dots,m\} .  
\end{align*}
Consequently, 
\begin{equation*}
	\norm{\mathbf{C}} = O(\maxnorm{\mathbf{C}}) = O(1) , \quad \norm{\mathbf{C}(j,:)} = O(\maxnorm{\mathbf{C}(j,:)}) = O(1) ,
\end{equation*}
for all $j \in \{1,\dots,k\}$, and 
\begin{equation*}
	\norm{\mathbf{C}(k,:)} = O(\maxnorm{\mathbf{C}(k,:)}) = O\left( \frac{\varphi_{k-1}(n)}{\varphi_k(n)} \right) = o(1) .
\end{equation*}

Secondly, by choosing $\mathbf{w} = \mathbf{0}$ and successively $\widetilde{\boldsymbol{\gamma}} = \boldsymbol{\varepsilon}_{j'}$, for each $j' \in \{1,\dots,k\}$, \eqref{eq:C_and_D_asymptotic_relations} gives (recall that $\mu \neq 0$) 
\begin{align*}
	d_{j,j'} = d_{j,j'}(n) = O(1) , \quad \forall j \in \{1,\dots,k\}, \ \forall j' \in \{1,\dots,k\} ,  \\
	d_{k,j'} = d_{k,j'}(n) = O\left( \frac{\varphi_{k-1}(n)}{\varphi_k(n)} \right) = o(1) , \quad \forall j' \in \{1,\dots,k\} .  
\end{align*}
As a result, 
\begin{equation*}
	\norm{\mathbf{D}} = O(\maxnorm{\mathbf{D}}) = O(1) , \quad \norm{\mathbf{D}(j,:)} = O(\maxnorm{\mathbf{D}(j,:)}) = O(1) ,
\end{equation*}
for all $j \in \{1,\dots,k\}$, and 
\begin{equation*}
	\norm{\mathbf{D}(k,:)} = O(\maxnorm{\mathbf{D}(k,:)}) = O\left( \frac{\varphi_{k-1}(n)}{\varphi_k(n)} \right) = o(1) .
\end{equation*} 
This concludes the proof.

\subsection{Proof of Theorem~\ref{thm:The_sT-LLSQ_method}} 
\label{subsec:Proof_of_the_sT-LLSQ_theorem}

\subsubsection*{Proof of Statement 1}

Direct consequence of Lemma~\ref{lem:Sufficient_condition_for_unique_solution_T-LLSQ}.

\subsubsection*{Proof of Statement 2}

Remember that the vector $\boldsymbol{\gamma}$ is unique by Proposition~\ref{prop:Uniqueness_of_alpha_and_gamma}. Now, given $\boldsymbol{\gamma}$, $\mathbf{A}$ and $\mathbf{b}$, let us first choose $\widetilde{\boldsymbol{\gamma}} \defeq \boldsymbol{\gamma}$ and $\mathbf{w} \defeq \mathbf{b} - \mathbf{A} \boldsymbol{\gamma}$, and then $\widetilde{\mathbf{b}} \defeq \mathbf{A} \widetilde{\boldsymbol{\gamma}} + \mathbf{w} = \mathbf{b}$. According to \eqref{eq:Vector_b_asymptotics}, the vector $\mathbf{w}$ is expressed as $\mathbf{w} = \log\left( 1 + \sum_{l=1}^p {\beta_l \mathbf{g}_l} \right) + \mathbf{z}$, thus satisfying $\norm{\mathbf{w}} = o(1) = O(1)$ as $n \to \infty$. Under these choices of $\widetilde{\boldsymbol{\gamma}}$, $\mathbf{w}$ and $\widetilde{\mathbf{b}}$, problem~\eqref{eq:Regularized_optimization_problem} with the substitution $\norm{\mathbf{y}}^{\nu} \mapsto  \twonorm{\mathbf{y}}^2$ coincides with problem~\eqref{eq:sT-LLSQ_problem}. Therefore, Lemma~\ref{lem:Regularization_with_arbitrary_norm} implies that the global minimum is asymptotically bounded, i.e., $G_n^* = O(1)$ as $n \to \infty$, thus proving~\eqref{eq:G_n_star_asymptotic_estimate_sT-LLSQ}.

In addition, based on~\eqref{eq:Vector_b_asymptotics}, the optimal solution in~\eqref{eq:y_n_star_sT-LLSQ} can be written as 
\begin{align} \label{eq:y_n_star_sT-LLSQ_rearranged}
	\begin{split}
		\mathbf{y}_n^* & = \mathbf{C} \mathbf{b} = \mathbf{C} \left( \mathbf{A} \boldsymbol{\gamma} + \log\left( 1 + \sum_{l=1}^p {\beta_l \mathbf{g}_l} \right) + \mathbf{z} \right)  \\
		& = \mathbf{D} ( \transp{\mathbf{A}} \mathbf{A} + \mu \mathbf{I} ) \boldsymbol{\gamma} - \mu \mathbf{D} \boldsymbol{\gamma} + \mathbf{C} \log\left( 1 + \sum_{l=1}^p {\beta_l \mathbf{g}_l} \right) + \mathbf{C} \mathbf{z}   \\
		& = \boldsymbol{\gamma} + \mathbf{C} \log\left( 1 + \sum_{l=1}^p {\beta_l \mathbf{g}_l} \right) + \mathbf{C} \mathbf{z} - \mu \mathbf{D} \boldsymbol{\gamma} . 
	\end{split} 		
\end{align}
By the triangle inequality, Lemma~\ref{lem:Universal_sub-multiplicative_property}, and~\eqref{eq:Vector_z_asymptotics}, we get 
\begin{align*}
	\norm{\mathbf{y}_n^* - \boldsymbol{\gamma}} & = \norm{\mathbf{C} \log\left( 1 + \sum_{l=1}^p {\beta_l \mathbf{g}_l} \right) + \mathbf{C} \mathbf{z} - \mu \mathbf{D} \boldsymbol{\gamma}}   \\
	& \leq \norm{\mathbf{C} \log\left( 1 + \sum_{l=1}^p {\beta_l \mathbf{g}_l} \right)} + \norm{\mathbf{C} \mathbf{z}} + \mu \norm{\mathbf{D} \boldsymbol{\gamma}}   \\ 
	& = \norm{\mathbf{C} \log\left( 1 + \sum_{l=1}^p {\beta_l \mathbf{g}_l} \right)} + O(\norm{\mathbf{C}} \norm{\mathbf{z}})  + O(\norm{\mathbf{D}} \norm{\boldsymbol{\gamma}})  \\
	& = \norm{\mathbf{C} \log\left( 1 + \sum_{l=1}^p {\beta_l \mathbf{g}_l} \right)} + O(\norm{\mathbf{C}}  \norm{\mathbf{g}_{p+1}}) + O(\norm{\mathbf{D}})   \\
	& = O\left( \max\left( \norm{\mathbf{C} \log\left( 1 + \sum_{l=1}^p {\beta_l \mathbf{g}_l} \right)}, \norm{\mathbf{C}} \norm{\mathbf{g}_{p+1}} , \norm{\mathbf{D}}  \right) \right)  \\
	& = O(\chi(n))  . 
\end{align*} 
Note that $\norm{\boldsymbol{\gamma}} = O(1)$, since $\boldsymbol{\gamma} \in \RR^k$. Similarly, for each $j \in \{1,\dots,k\}$, we have 
\begin{align*}
	\abs{y_{j,n}^* - \gamma_j} & = O\left( \max\left( \abs{\mathbf{C}(j,:) \log\left( 1 + \sum_{l=1}^p {\beta_l \mathbf{g}_l} \right)}, \norm{\mathbf{C}(j,:)}  \norm{\mathbf{g}_{p+1}} , \norm{\mathbf{D}(j,:)}  \right) \right)  \\
	& = O(\psi_j(n))  , 
\end{align*}
because $y_{j,n}^* = \mathbf{C}(j,:) \mathbf{b} = \mathbf{D}(j,:) \transp{\mathbf{A}} \mathbf{b}$. The absolute value in $\abs{y_{j,n}^* - \gamma_j}$ can be omitted, since it is implicitly included in the $O$-notation.  

Furthermore, observe that $\norm{\log\left( 1 + \sum_{l=1}^p {\beta_l \mathbf{g}_l} \right)} = o(1) = O(1)$ and $\norm{\mathbf{g}_{p+1}} = o(1) = O(1)$ due to the continuity of vector norms. Therefore, by Lemma~\ref{lem:Universal_sub-multiplicative_property} and the asymptotics of matrices $\mathbf{C}$ and $\mathbf{D}$ in Lemma~\ref{lem:Regularization_with_arbitrary_norm}, we obtain  
\begin{align*}
	\chi(n) & = \max\left( \norm{\mathbf{C} \log\left( 1 + \sum_{l=1}^p {\beta_l \mathbf{g}_l} \right)}, \norm{\mathbf{C}} \norm{\mathbf{g}_{p+1}} , \norm{\mathbf{D}}  \right)  \\
	& =  O\left( \max\left( \norm{\mathbf{C}} \norm{\log\left( 1 + \sum_{l=1}^p {\beta_l \mathbf{g}_l} \right)}, \norm{\mathbf{C}} \norm{\mathbf{g}_{p+1}} , \norm{\mathbf{D}}  \right) \right)  \\
	& = O\left( \max\left( \norm{\mathbf{C}}, \norm{\mathbf{D}}  \right) \right) = O(1)   ,
\end{align*}
and 	
\begin{align*}
	\psi_j(n) & =  \max\left( \abs{\mathbf{C}(j,:) \log\left( 1 + \sum_{l=1}^p {\beta_l \mathbf{g}_l} \right)}, \norm{\mathbf{C}(j,:)} \norm{\mathbf{g}_{p+1}} , \norm{\mathbf{D}(j,:)}  \right)  \\ 
	& = O\left( \max\left( \norm{\mathbf{C}(j,:)} \norm{\log\left( 1 + \sum_{l=1}^p {\beta_l \mathbf{g}_l} \right)}, \norm{\mathbf{C}(j,:)} \norm{\mathbf{g}_{p+1}} , \norm{\mathbf{D}(j,:)}  \right) \right)  \\
	& = O\left( \max\left( \norm{\mathbf{C}(j,:)}, \norm{\mathbf{D}(j,:)}  \right) \right) = O(1) ,
\end{align*}
for all $j \in \{1,\dots,k\}$. In particular, for $j=k$,
\begin{equation*}
	\psi_k(n) = O\left( \max\left( \norm{\mathbf{C}(k,:)}, \norm{\mathbf{D}(k,:)}  \right) \right) = O\left( \frac{\varphi_{k-1}(n)}{\varphi_k(n)} \right) = o(1)  . 
\end{equation*}
The combination of the above results gives the asymptotic estimates~\labelcref{eq:y_n_star_asymptotic_estimate_sT-LLSQ,eq:Function_chi,eq:y_jn_star_asymptotic_estimate_sT-LLSQ,eq:Function_psi_j,eq:y_kn_star_asymptotic_estimate_sT-LLSQ}.

\subsubsection*{Proof of Statement 3}

Recall that the vector $\boldsymbol{\alpha}$ is unique by Proposition~\ref{prop:Uniqueness_of_alpha_and_gamma}. Based on~\eqref{eq:y_jn_star_asymptotic_estimate_sT-LLSQ},~\eqref{eq:y_kn_star_asymptotic_estimate_sT-LLSQ} and Proposition~\ref{prop:Asymptotic_invariants}, the estimate~\eqref{eq:x_jn_star_asymptotic_estimate_sT-LLSQ} follows from the boundedness invariant~\eqref{eq:Boundedness_invariant}, while the implication~\eqref{eq:Implication_o_sT-LLSQ} and the estimate~\eqref{eq:x_kn_star_asymptotic_estimate_sT-LLSQ} result from the convergence-rate invariant~\eqref{eq:Convergence-rate_invariant_y_to_x_with_h}. This completes the proof.

}

\section*{Acknowledgments}

The author acknowledges the symbolic computation toolbox in MATLAB, which was very useful in several parts of this paper.

\vspace{-2mm}
\bibliographystyle{amsplain}
\bibliography{References} 

\vspace{7mm}

\end{document}